\def\Figref#1{Figure~\ref{#1}}
\def\secref#1{section~\ref{#1}}
\def\Secref#1{Section~\ref{#1}}
\def\eqref#1{equation~\ref{#1}}
\def\Algref#1{Algorithm~\ref{#1}}
\def\tableref#1{table~\ref{#1}}
\def\Tableref#1{Table~\ref{#1}}
\def\1{\bm{1}}
\def\eps{{\epsilon}}
\def\rva{{\mathbf{a}}}
\def\rvx{{\mathbf{x}}}
\def\rvy{{\mathbf{y}}}
\def\rmU{{\mathbf{U}}}
\def\va{{\bm{a}}}
\def\vw{{\bm{w}}}
\def\eva{{a}}
\DeclareMathAlphabet{\mathsfit}{\encodingdefault}{\sfdefault}{m}{sl}
\SetMathAlphabet{\mathsfit}{bold}{\encodingdefault}{\sfdefault}{bx}{n}
\def\sM{{\mathbb{M}}}
\def\sP{{\mathbb{P}}}
\def\sR{{\mathbb{R}}}
\newcommand{\inner}[2]{\left \langle #1 \right \rangle #2}
\theoremstyle{plain}
\newtheorem{thm}{\protect\theoremname}
\theoremstyle{remark}
\theoremstyle{definition}
\theoremstyle{plain}
\theoremstyle{plain}
\theoremstyle{definition}
\newtheorem{definition}{Def.}[section]
\providecommand{\corollaryname}{Corollary}
\providecommand{\lemmaname}{Lemma}
\providecommand{\problemname}{Problem}
\providecommand{\remarkname}{Remark}
\providecommand{\theoremname}{Theorem}
\newcommand{\etal}{\emph{et al.}}
\pgfplotsset{compat=1.14}
\begin{document}

\title{Semantic Adversarial Attacks: Parametric Transformations That Fool Deep Classifiers}

\author{Ameya Joshi \qquad Amitangshu Mukherjee \qquad Soumik Sarkar \qquad Chinmay Hegde\thanks{This work was supported in part by NSF grants CCF-1750920, CNS-1845969, DARPA AIRA grant PA-18-02-02, AFOSR YIP Grant FA9550-17-1-0220, an ERP grant from Iowa State University, a GPU gift grant from NVIDIA corporation, and faculty fellowships from the Black and Veatch Foundation.}
\\
Iowa State University\\
{\tt\small \{ameya, amimukh, soumiks, chinmay\}@iastate.edu}
}

\maketitle

\begin{abstract}
Deep neural networks have been shown to exhibit an intriguing vulnerability to adversarial input images corrupted with imperceptible perturbations. However, the majority of adversarial attacks assume global, fine-grained control over the image pixel space. In this paper, we consider a different setting: what happens if the adversary could only alter specific attributes of the input image? These would generate inputs that might be perceptibly different, but still natural-looking and enough to fool a classifier. We propose a novel approach to generate such ``semantic'' adversarial examples by optimizing a particular adversarial loss over the range-space of a parametric conditional generative model. We demonstrate implementations of our attacks on binary classifiers trained on face images, and show that such natural-looking semantic adversarial examples exist. We evaluate the effectiveness of our attack on synthetic and real data, and present detailed comparisons with existing attack methods. We supplement our empirical results with theoretical bounds that demonstrate the existence of such parametric adversarial examples. 
\end{abstract}
\section{Introduction}
\label{sec:intro}

\begin{figure}[htp]
    \centering
    \setlength{\tabcolsep}{1pt}
    \renewcommand{\arraystretch}{0.2}
    \resizebox{\linewidth}{!}{
    \begin{tabular}{c c  c  c c c c c }
    \includegraphics[width=0.25\linewidth]{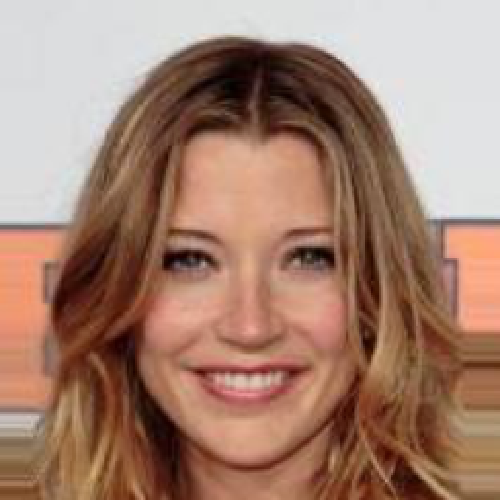} & \includegraphics[width=0.25\linewidth]{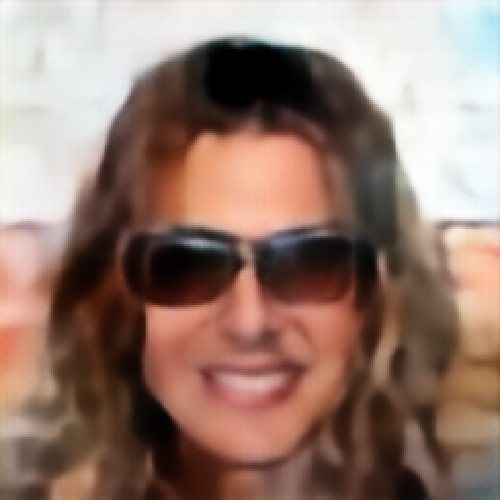} & & 
    \includegraphics[width=0.25\linewidth]{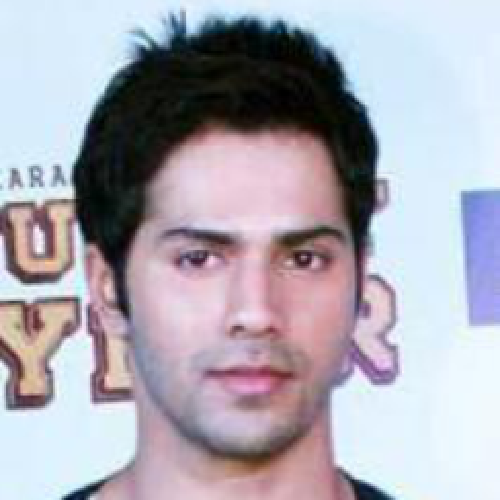} & \includegraphics[width=0.25\linewidth]{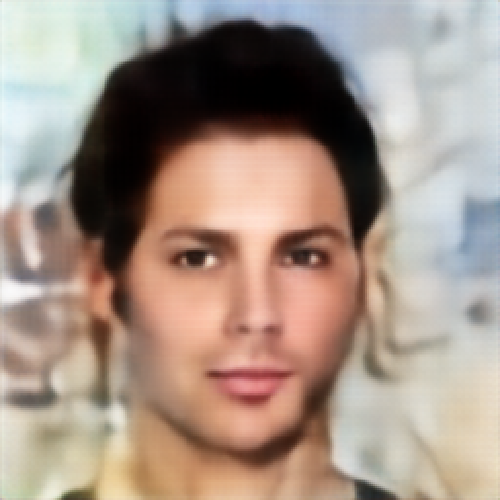} & &
    \includegraphics[width=0.25\linewidth]{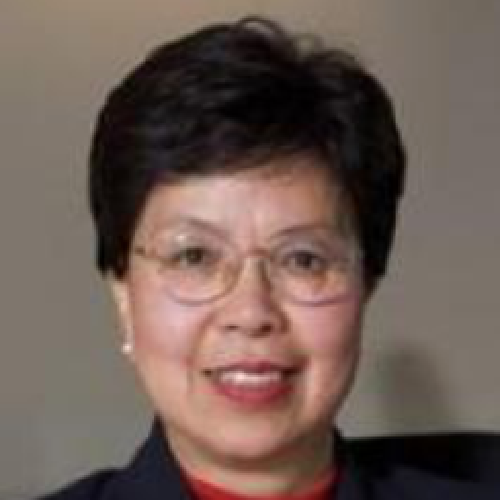} &  \includegraphics[width=0.25\linewidth]{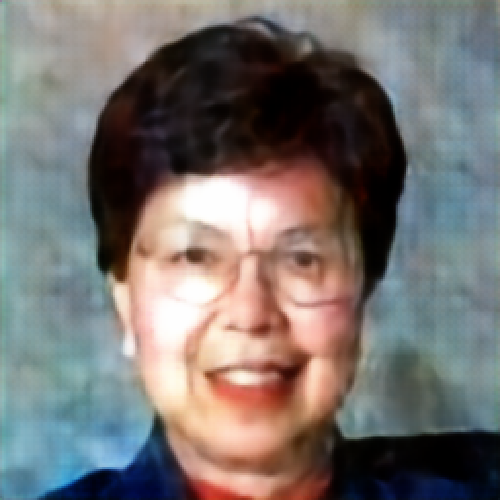} \\
    \includegraphics[width=0.25\linewidth]{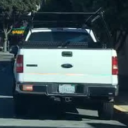} &
    \includegraphics[width=0.25\linewidth]{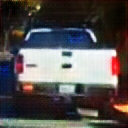} & &
    \includegraphics[width=0.25\linewidth]{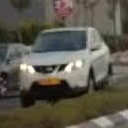} & \includegraphics[width=0.25\linewidth]{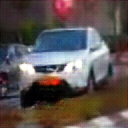} & &
     \includegraphics[width=0.25\linewidth]{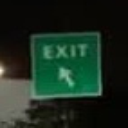} &
    \includegraphics[width=0.25\linewidth]{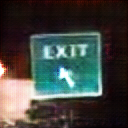} \\
    \end{tabular}
    }
    \caption{\sl Examples of semantic adversarial attacks with a single modifiable attribute. The first and third columns are original images. Semantic adversarial examples (Columns 2 and 4) are generated by optimizing over the manifold of a parametric generative model to fool a binary gender classifier.}
    \label{fig:single_attrib}
\end{figure}

The existence of adversarial inputs for deep neural network-based classifiers has been well established by several recent works~\cite{Carlini2017cwl2, Dathathri2017minadvexample, adversarialexamples2015, Goodfellow2018existence, Szegedy2014intriguing, MoosaviDezfooli2017UniversalAP}. The adversary typically confounds the classifier by adding an \emph{imperceptible} perturbation to a given input image, where the range of the perturbation is defined in terms of bounded pixel-space $\ell_p$-norm balls. Such adversarial ``attacks'' appear to catastrophically affect the performance of state-of-the-art classifiers~\cite{Athalye2018obfuscated,Ilyas2018alimitedqueries,Ilyas2018blackbox, Shafahi2017inevitable}.

Pixel-space norm-constrained attacks reveal interesting insights about  generalization properties of deep neural networks. However, imperceptible attacks are certainly not the only means available to an adversary. Consider an input example that comprises salient, invariant features along with modifiable attributes. An example would be an image of a face, which consists of invariant features relevant to the identity of the person, and variable attributes such as hair color and presence/absence of eyeglasses. Such adversarial examples, though perceptually distinct from the original input, appear natural and acceptable to an oracle or a human observer but would still be able to subvert the classifier. Unfortunately, the large majority of adversarial attack methods do not port over to such {natural} settings. 


A systematic study of such attacks is paramount in safety-critical applications that deploy neural classifiers, such as face-recognition systems or vision modules of autonomous vehicles. These systems are required to be immune to a limited amount of variability in input data, particularly when these variations are achieved through natural means. Therefore, a method to generate adversarial examples using natural perturbations, such as facial attributes in the case of face images, or different weather conditions for autonomous navigation systems, would shed further insights into the real-world robustness of such systems. We refer to such perceptible attacks as ``semantic'' attacks.

This setting fundamentally differs from existing attack approaches and has been (largely) unexplored thus far. Semantic attacks utilize nonlinear generative transformations of an input image instead of linear, additive techniques (such as image blending). 
Such complicated generative transformations display higher degrees of nonlinearity in corresponding attacks, the effects of which warrant further investigation. In addition, the role of the number of modifiable attributes (parameters in the generative models) in the given input is also an important point of consideration.  

\textbf{Contributions:} We propose and rigorously analyze a framework for generating adversarial examples for a deep neural classifier by modifying \emph{semantic} attributes. 

We leverage generative models such as Fader Networks~\cite{lample2017fader} that have semantically meaningful, tunable attributes corresponding to parameters into a continuous bounded space that implicitly define the space of ``natural'' input data. Our approach exploits this property by treating the range space of these attribute models as a manifold of semantic transformations of an image. 

We pose the search for adversarial examples on this \emph{semantic manifold} as an optimization problem over the parameters conditioning the generative model. Using face image classification as a running test case, we train a variety of parametric models (including Fader Networks and Attribute GANs), and demonstrate the ability to generate semantically meaningful adversarial examples using each of these models. In addition to our empirical evaluations, we also provide a theoretical analysis of a simplified semantic attack model to understand the capacity of parametric attacks that typically exploit a significantly lower dimensional attack space compared to the classical pixel-space attacks.


Our specific contributions are as follows: 
\begin{enumerate}[wide,nolistsep, parsep=1pt, labelindent=0pt]
    \item We propose a novel optimization based framework to generate semantically valid adversarial examples using parametric generative transformations.
    \item We explore realizations of our approach using variants of multi-attribute transformation models: Fader Networks~\cite{lample2017fader} and Attribute GANs~\cite{He2017AttGANFA} to generate adversarial face images for a binary classifier trained on the CelebA dataset~\cite{liu2015faceattributes}. Some of our modified multi-attribute models are non-trivial and may be of independent interest.
    \item We present an empirical analysis of our approach and show that increasing the \emph{dimensionality} of the attack space results in more effective attacks. In addition, we investigate a sequence of increasingly nonlinear attacks, and demonstrate that a higher degree of \emph{nonlinearity} (surprisingly) leads to weaker attacks.
    \item Finally, we provide a preliminary theoretical analysis by providing upper bounds for the \textit{classification error} for a simplified surrogate model under adversarial condition~\cite{schmidt2018adversarially}. This analysis supports our empirical observations regarding the dimensionality of the attack space.
\end{enumerate}

We demonstrate the effectiveness of our attacks on simple deep classifiers trained over complex image datasets; hence, our empirical comparisons are significantly more realistic than popular attack methods such as FGSM~\cite{adversarialexamples2015} and PGD~\cite{Kurakin2017,madry2018towards} that primarily have focused on simpler datasets such as MNIST~\cite{mnist} and CIFAR. 
Our approach also presents an interesting use-case for multi-attribute generative models which have been used solely as visualization tools thus far. 

\noindent\textbf{Outline:} We begin with a review of relevant literature in \Secref{sec:relatedwork}. We describe our proposed framework, \emph{Semantic Adversarial Generation}, in \secref{sec:semadv}. In \Secref{sec:implementation} we describe two variants of our framework to show different methods of ensuring the semantic constraint. We provide empirical analysis of our work in \Secref{sec:expts}. We further present empirical analysis and theoretical qualification on the dimensionality of the parametric attack space in \Secref{sec:dimparams}, and conclude with possible extensions in \Secref{sec:discussion}.

\begin{figure*}
    \centering
    \includegraphics[width=0.85\textwidth]{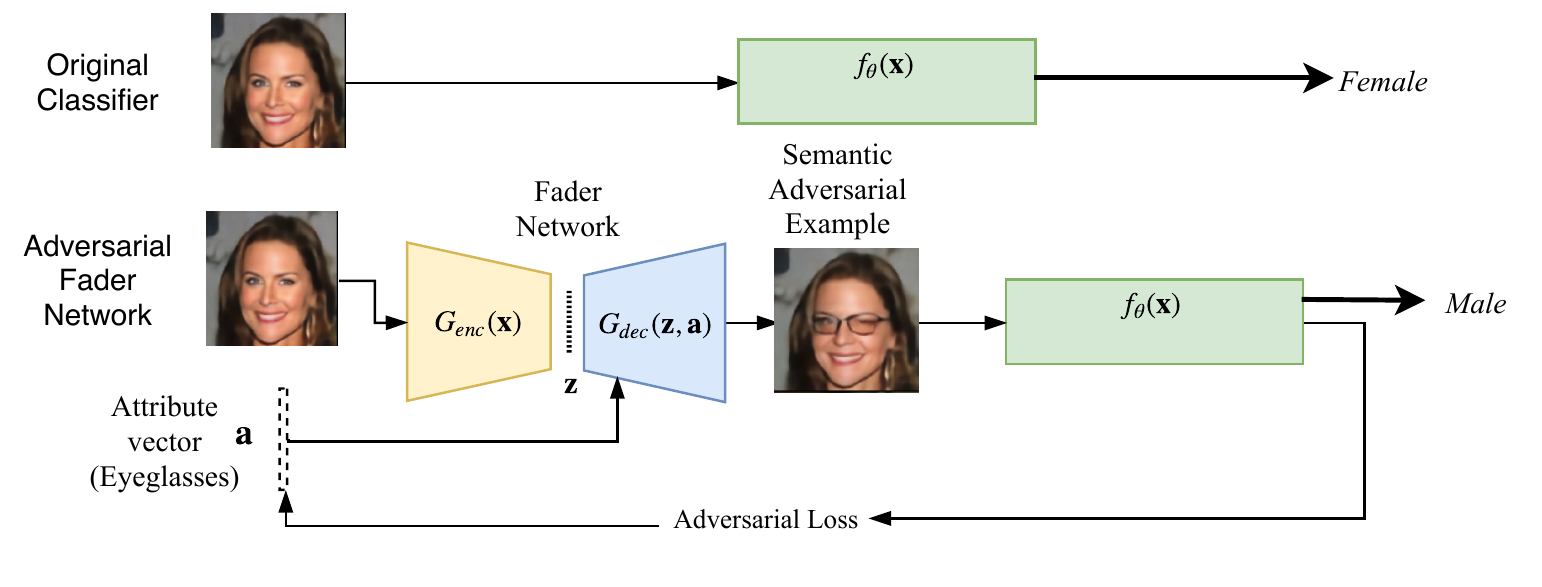}
    \caption{\sl An single-attribute \textbf{Adversarial} Fader Network. The semantic adversarial attack framework optimizes an adversarial loss to generate an adversarial direction. Backpropagating the adversarial direction through the Fader Network with respect to the attribute vector, $\va$, ensures that the adversarial example is only generated for that specific attribute. Here, the adversarial algorithm generates eyeglasses on a face of a Female by optimizing $\va$, thus forcing the gender classifier to misclassify the image as Male.}
    \label{fig:attackfader}
\end{figure*}

\section{Related Work}
\label{sec:relatedwork}
Due to space constraints coupled with the large amount of recent progress in the area of adversarial machine learning, our discussion of related work is necessarily incomplete. We defer a more detailed discussion to the appendix. 

Our focus is on \emph{white box, test-time} attacks on deep classification systems; other families of attacks (such as backdoor attacks, data poisoning schemes, and black-box attacks) are not directly relevant to our setting, and we do not discuss those methods here. 

\noindent\textbf{Adversarial Attacks:} Evidence that deep classifiers are susceptible to imperceptible adversarial examples can be attributed to Szegedy \etal~\cite{Szegedy2014intriguing}. Goodfellow~\etal~\cite{adversarialexamples2015} and Kurakin~\etal~\cite{Kurakin2017} extend this line of work using the Fast Gradient Sign Method (FGSM) and its iterative variants. Carlini and Wagner~\cite{Carlini2017cwl2} devise state-of-the-art attacks under various pixel-space $l_p$ norm-ball constraints by proposing multiple adversarial loss functions. Athalye~\etal~\cite{Athalye2018obfuscated} further analyze several defense approaches against pixel-space adversarial attacks, and demonstrate that most existing defenses can be surpassed by approximating gradients over defensively trained models.

Such attacks perturb the pixel-space under an imperceptibility constraint. On the contrary, we approach the problem of generating adversarial examples that have perceptible yet semantically valid modifications. Our method considers a smaller `parametric' space of modifiable attributes that have physical significance. 

\noindent\textbf{Parametric Adversarial Attacks:}
Parametric attacks are a recently introduced class of attacks in which the attack space is defined by a set of parameters rather than the pixel space. Such approaches result in more ``natural'' adversarial examples as they target the image formation process instead of the pixel space. Recent works by Athalye~\etal~\cite{Athalye2018SynthesizingRA} and Liu~\etal~\cite{liu2018beyond} use optimization over geometric surfaces in 3D space to create adversarial examples. Zhang~\etal~\cite{zhang2019camou} demonstrate the existence of adversarially designed textures that can camouflage vehicles. Zhao~\etal~\cite{zhao2018generating} generate adversarial examples by using the parametric input latent space of GANs\cite{GAN}. Xiao~\etal~\cite{Xiao2018SpatiallyTA} employ spatial transforms to perturb image geometry for creating adversarial examples. 
Sharif~\etal~\cite{Sharif2018advgennets} propose a generative model to alter images of faces with eyeglasses in order to confound a face recognition classifier. Contrary to these methods, we consider the inverse approach of using a pre-trained multi-attribute generative model to  transform inputs over multiple attributes for generating adversarial examples. 

Song~\etal~\cite{song2018constructing} optimize over the latent space of a conditional GAN to generate unrestricted adversarial examples for a gender classifier. While our approach is thematically similar, we fundamentally differ in the context of being able to generate adversarial counterparts for given test samples while providing a finer degree of control using multi-attribute generative models.
We discuss relevant literature regarding such multi-attribute generative models below.

\noindent\textbf{Attribute-Based Conditional Generative Models:} Generative Adversarial Networks (GAN)~\cite{GAN} are a popular approach for the generation of samples from a real-world data distribution. Recent advancements~\cite{Radford2016UnsupervisedRL, UnsupervisedIT, wang2016generative, chen2016infogan} in GANs allow for creation of high quality realistic images. Chen~\etal~\cite{chen2016infogan} introduce the concept of a attribute learning generative model where visual features are parametrized by an input vector. 

Perarnaue \etal~\cite{InvertibleCG} use a Conditional Generative Adversarial Network~\cite{Mirza2014ConditionalGA} and an encoder to learn the attribute invariant latent representation for attribute editing. Fader Networks~\cite{lample2017fader} improve upon this using an auto-encoder with a latent discriminator. He~\etal~\cite{He2017AttGANFA} argue that such an attribute invariant constraint is too constrictive and replace it an attribute classification constraint and a reconstruction loss instead to alter only the desired attributes preserving attribute-excluding features. 
These models are primarily used for generation of a large variety of facial images. We provide a secondary (and perhaps practical) use case for such attribute models in the context of understanding generalization properties of neural networks.

\section{Semantic Attacks}
\label{sec:semadv}

Conceptually, producing an adversarial semantic (``natural'') perturbation of a given input depends on two algorithmic components: (i) the ability to navigate the manifold of parametric transformations of an input image,  and (ii) the ability to perform optimization over this manifold that maximizes the classification loss with respect to a given target model. We describe each component in detail below. 


\textbf{Notation:}
We assume a white-box threat model, where the adversary has access to a target model $f(\rvx):\sR^d\to\{0,1\}^c$ and the gradients associated with it. The model classifies an input image, $\rvx$ into one of $c$ classes, represented by a one-hot output label, $\rvy$. In this paper, we focus on binary classification models ($c=2$) while noting that our framework transparently extends to multi-class models. Let $G(\rvx, \va):\sR^d\times\sR^k \to \sR^d$ denote parametric transformations, conditioned on a parameter vector, $\va$. Here, each element of $\va$ (say, $\eva_i$) is a real number that corresponds to a specific semantic attribute. For example, $\eva_0$ may correspond to facial hair, with a value of zero (or negative) denoting absence and a positive value denoting presence of hair on a given face example.
We define a semantic adversarial attack as the deliberate process of transforming an input image, $\rvx$ via a parametric model to produce a new example $\tilde{\rvx} = G(\rvx, \rva)$ such that $f(\tilde{\rvx}) \neq f(\rvx)$. 

\subsection{Parametric Transformation Models}
\label{subsec:paramgen}
First, let us consider the problem of generating semantic transformations of a given input example. In order to create semantically transformed examples, the defined parametric generative model $G(.)$ should satisfy two properties:
$G(.)$ should reconstruct the invariant data in an image, and
$G(.)$ should be able to independently perturb the semantic attributes while minimally changing the invariant data.

The parametric transformation model therefore, is trained to reconstruct the original example while disentangling the semantic attributes. This involves conditioning the generative model on a set of parameters corresponding to the modifiable attributes. The \emph{semantic parameter vector} consists of these parameters and is input to the parametric model to control the expression of semantic attributes.

We argue that the range-space of such a model approximates the manifold of the semantic transformations of input images. Therefore, the transformation model can be used a projection operator to ensure that a solution to an optimization problem will lie in the set of semantic transformations of an input image. We also observe that the semantic parameter vectors will be much lower in dimension than the original image.
 
In this paper, we consider two variants of such conditional generative models: Fader Networks~\cite{lample2017fader} and AttributeGANs (AttGAN)~\cite{He2017AttGANFA}. 
 
\subsection{Adversarial Parameter Optimization}
\label{subsec:advparam_opt}

\begin{algorithm}[!t]
\begin{small}
\caption{Adversarial Parameter Optimization}
\label{alg:attack}
\begin{algorithmic}[1]
\Require \small $\rvx_0$:Input image, $\rva_0$:Initial attribute vector, $E(.)$: Attribute encoder, $G(., .)$:Pre-trained parametric transformation model, $f(.)$: Target classifier, $\rvy:$ Original label

\State $h_{0} \gets f(\rvx)$, 
 $l_{\mathrm{adv}} \gets \infty$, 
 $i \gets 0$
\State success = 0
\While $\ l_{\mathrm{adv}} \neq 0\ \text{ and } \ i \leq \text{MaxIter}$
    \State $\bar{\rva} \gets E(\rva)$
    \State $h_i \gets f(G(\rvx_i, \bar{\rva_i}))$
    \State $l_{\mathrm{adv}} \gets L_{\mathrm{adv}}(\rvy, h_i)$
    \State $\rva_{i+1} \gets \text{BackProp}\left\{\rva_i, \nabla l_{\mathrm{adv}}\left(f(G(\rvx_i, E(\rva_i)))\right)\right\}$ 
    \State $ \tilde{\rvx} \gets G(\rvx, E(\rva_{i+1}))$
    \If {$f(\rvx) \neq f(\tilde{\rvx})$}
        \State \text{return success,}~$\tilde{\rvx}$
    \EndIf
    \State $i \gets i+1$
\EndWhile
\end{algorithmic}
\end{small}
\end{algorithm}

The problem of generating a semantic adversarial example essentially can be thought of as finding the right set of attributes that a classifier is adversarially susceptible to. In our approach, we model this as an optimization problem over the semantic parameters.

The generation of adversarial examples is generally modelled as an optimization problem that can be broken down into two sub problems:
(1) Optimization of an adversarial loss over the target network to find the direction of an adversarial perturbation. (2) Projection of the adversarial vector on the viable solution-space.

In the first step, we optimize over an adversarial loss, $L_{\mathrm{adv}}$. 
We model the second step as a projection of the adversarial vector onto the range space of a parametric transformation model. This is achieved by cascading the output of the transformation function to the input of our target network. The optimization problem can then be solved by back-propagating over both the network and the transform.
We also modify the Carlini-Wagner untargeted adversarial loss~\cite{Carlini2017cwl2} as shown in~\eqref{eq:cwloss} to include our semantic constraint:
\begin{eqnarray}
\label{eq:cwloss}
   & \max\left(0, \max\limits_{t \neq i} \left(f(\tilde{\rvx})_t\right) - f(\tilde{\rvx})_i\right)  \\
   & \text{s.t.}~~~\tilde{\rvx} = G(\rvx, \rva) \nonumber
\end{eqnarray}
where $i$ is the original label index and $t$ are the class label indices for any of the other classes. 

In comparison to the grid search method presented in Zhao~\etal~\cite{zhao2018generating} and Engstrom~\etal~\cite{engstrom2019a}, our optimization algorithm scales better. In addition, we create semantic adversarial transformations with multiple attributes for a specific input allowing for a fine-grained analysis of the generalization capacities of the target model. 

\section{Semantic Transformations}
\label{sec:implementation}

While our semantic attack framework is applicable to any parametric transformation model that enables gradient computations, we instantiate it by constructing adversarial variants of two recently proposed generative models: Fader networks~\cite{lample2017fader} and AttributeGANs (AttGAN)~\cite{He2017AttGANFA}. 

\subsection{Adversarial Fader Network}
\label{subsec:advfn}

A Fader Network~\cite{lample2017fader} is an encoder-decoder architecture trained to modify images with continuously parameterized attributes. They achieve this by learning an invariance over the encoded latent representation while disentangling the semantic information of the images and attributes. 
The invariance of the attributes is learnt by an adversarial training step in the latent space with the help of a latent discriminator which is trained to identify correct attributes corresponding to each training sample. 

Using our framework, we can adapt any pre-trained Fader Network to model the manifold of semantic perturbations of a given input. We note that minor adjustments are needed in our setting, since the parameter vector required by the approach of~\cite{lample2017fader} requires each scalar attribute, $a_i$, to be represented by a tuple, $(1-\eva_i, \eva_i)$. Since there is a one-to-one mapping between the two representations, we can project any real-valued parameter vector $\rva$ into this tuple form via an additional, fixed affine transformation layer. Given this extra ``attribute encoding'' step, all gradient computations proceed as before.  
We quantitatively study the effect of allowing the attacker access to single or multiple semantic attributes. In particular, we construct three approaches for generating semantic adversarial examples:
(i)    A single attribute Fader Network; 
(ii)    A multi-attribute Fader Network; and
(iii)    A cascaded sequence of single attribute Fader Networks.

\noindent\textbf{Single Attribute Attack:}
For the single attribute attack, we use the range-space of a pre-trained, single attribute  Fader Network to constrain our adversarial attack. The single attribute attack constrains an attacker to only modify a specified attribute for all the images. In the case of face images, such attributes might include presence/absence of eyeglasses, hair color, and nose shape.

In our experiments, we present examples of attacks on a gender classifier using three separate single attributes: eyeglasses, age, and skin complexion. \Figref{fig:attackfader} describes the mechanism of a single-attribute adversarial Fader Network that generates an adversarial example by adding eyeglasses. 

\noindent\textbf{Multi-Attribute Attack:}
Similar to the single-attribute case, we may also use pre-trained multi-attribute Fader Networks to model cases where the adversary has access to multiple modifiable traits. 

A limitation of multi-attribute Fader Networks lies in the difficulty of their training. This is because a Fader Network is required to learn disentangled representations of the attributes while in practice, semantic attributes cannot be perfectly decoupled. We resolve this using a novel conditional generative model described as follows.

\noindent\textbf{Cascaded Attribute Attack:}
We propose a novel method to simulate multi-attribute attacks by stage-wise concatenation pre-trained single attribute Fader networks. The benefit is that the computational burden of learning disentangled representations is now removed.

Each single-attribute model exposes a attribute latent vector. During execution of Alg. \ref{alg:attack} we jointly optimize over all the attribute vectors. The optimal adversarial vector is then segmented into corresponding attributes for each Fader Network to generate an adversarial example.


\subsection{Adversarial AttGAN}
\label{subsec:advattgan}
A second encoder-decoder architecture ~\cite{He2017AttGANFA}, known as AttGAN, achieves a similar goal as Fader Networks of editing attributes by manipulating the encoded latent representation; however, AttGAN disentangles the semantic attributes from the underlying invariances of the data by considering both the original and the flipped labels while training. This is achieved by training a latent discriminator and classifier pair to classify both the original and the transformed image to ensure invariance.

In order to generate semantic adversarial examples using AttGAN, we use a pretrained generator conditioned on $13$ attributes. The attribute vector in this case, is encoded to be a perturbation of the original sequence of attributes for the image. 
We consider the two sets of attributes listed in \Tableref{t:fader_perf} to generate adversarial examples. In our experience, the AttGAN architecture provides a more stable reconstruction, thus allowing for more modifiable parameters. 


\section{Experimental Results}
\label{sec:expts}

\begingroup
\begin{figure*}
    \centering
    \setlength{\tabcolsep}{1pt}
    \renewcommand{\arraystretch}{0.2}
    \begin{tabular}{c c c c  p{0.4cm}  c c  p{0.4cm}  c c c }
    \includegraphics[width=0.1\textwidth]{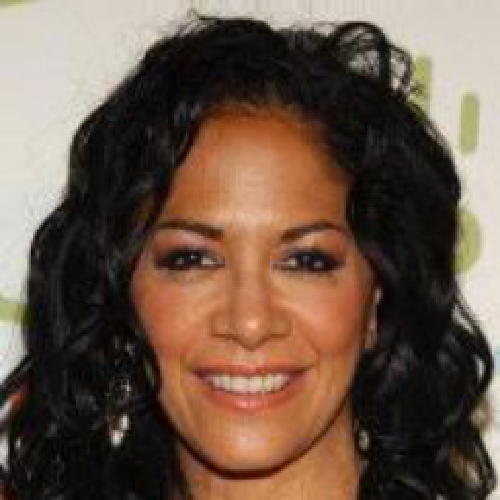} & \includegraphics[width=0.1\textwidth]{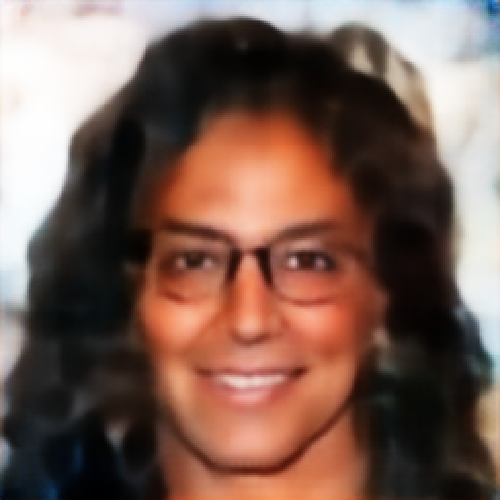} & 
    \includegraphics[width=0.1\textwidth]{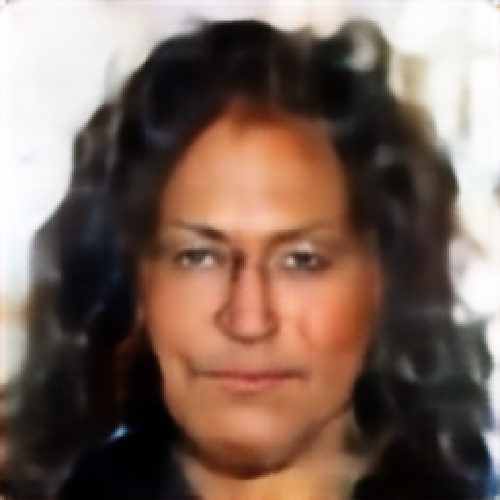} &
    \includegraphics[width=0.1\textwidth]{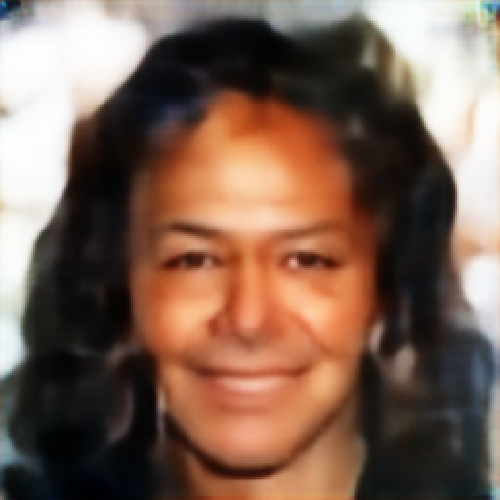} &\ &
     \includegraphics[width=0.1\textwidth]{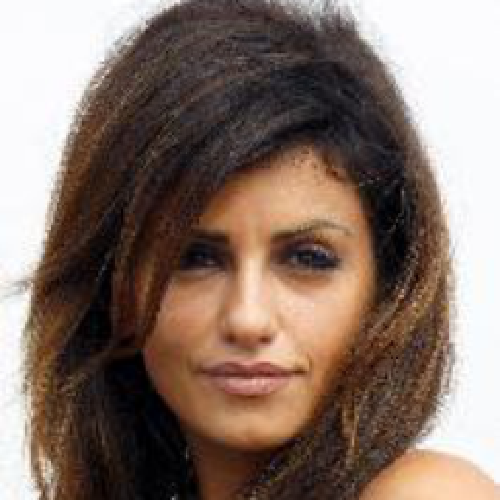} &
    \includegraphics[width=0.1\textwidth]{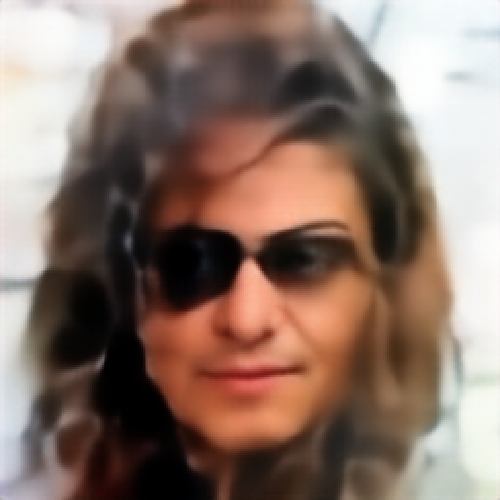} &\ &
    \includegraphics[width=0.1\textwidth]{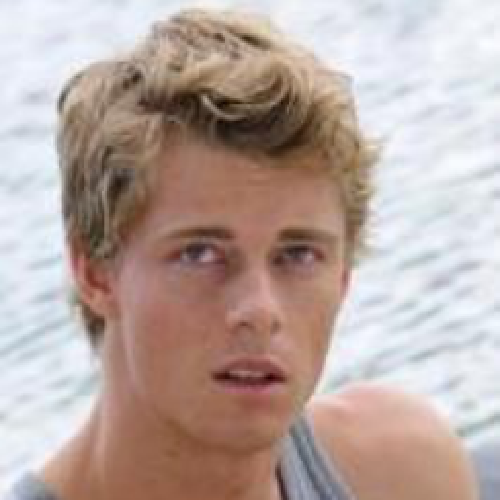} & \includegraphics[width=0.1\textwidth]{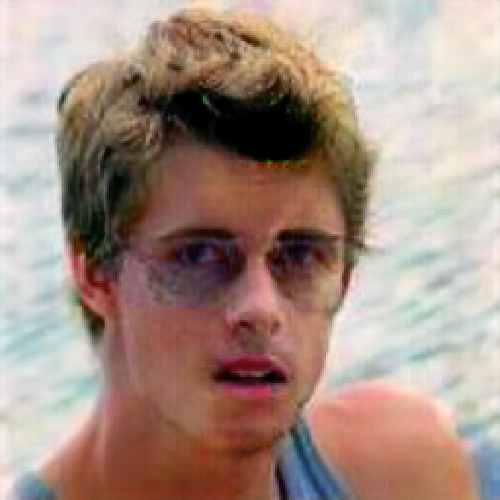} & 
    \includegraphics[width=0.1\textwidth]{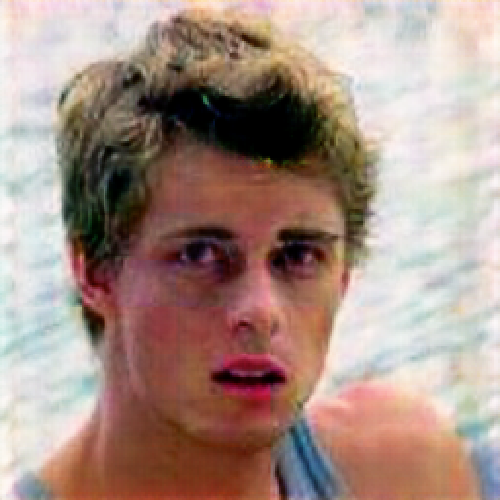}\\
     \includegraphics[width=0.1\textwidth]{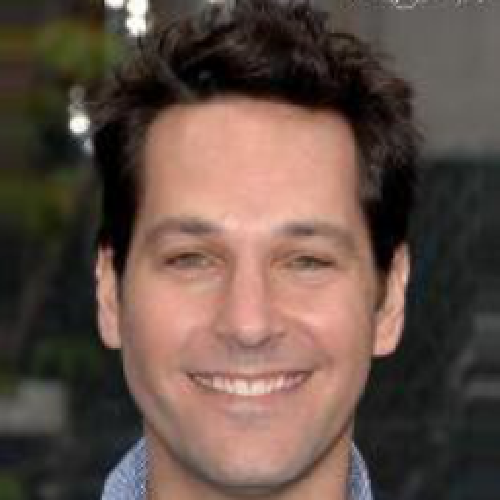} & \includegraphics[width=0.1\textwidth]{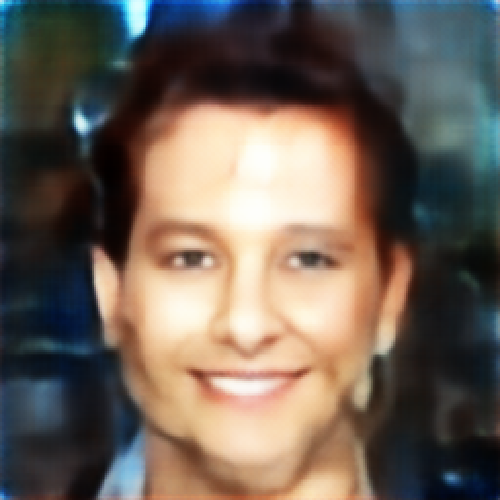} & 
    \includegraphics[width=0.1\textwidth]{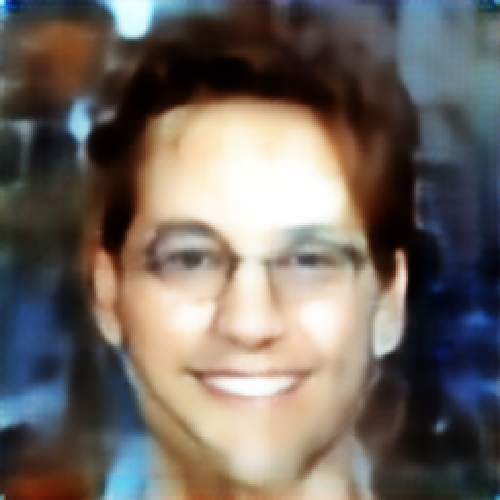} &
    \includegraphics[width=0.1\textwidth]{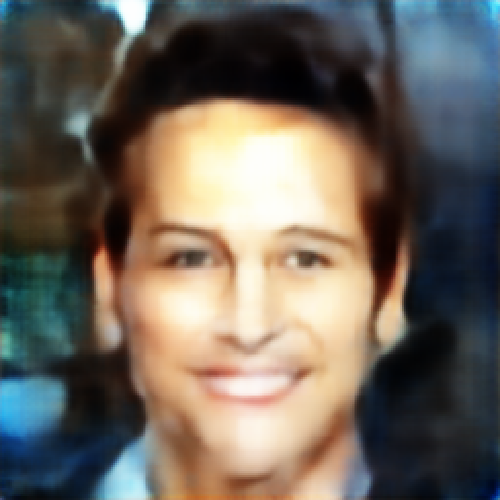} &\ &
      \includegraphics[width=0.1\textwidth]{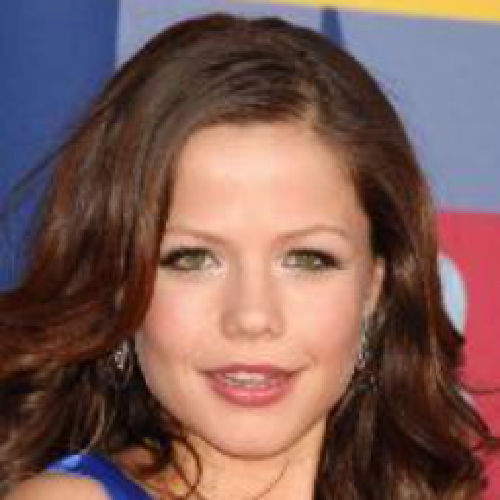} &
    \includegraphics[width=0.1\textwidth]{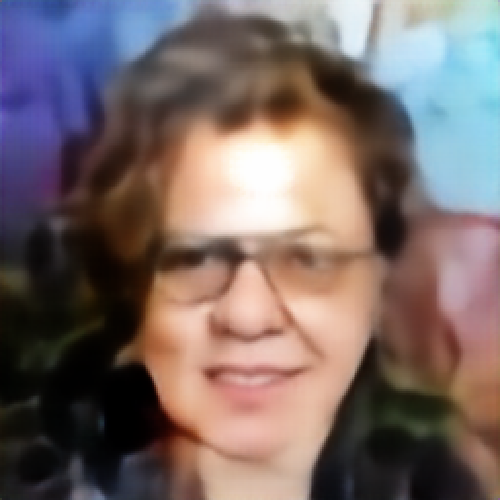} &\ &
    \includegraphics[width=0.1\textwidth]{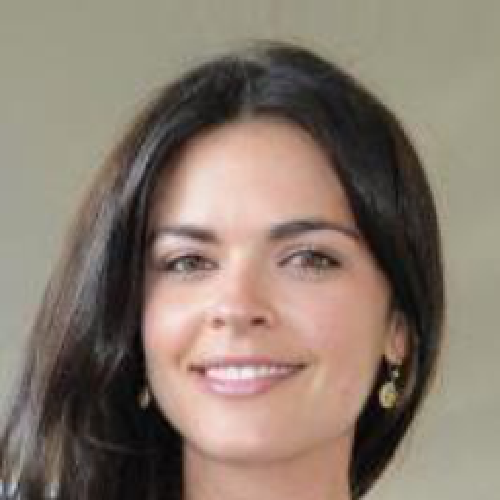} & \includegraphics[width=0.1\textwidth]{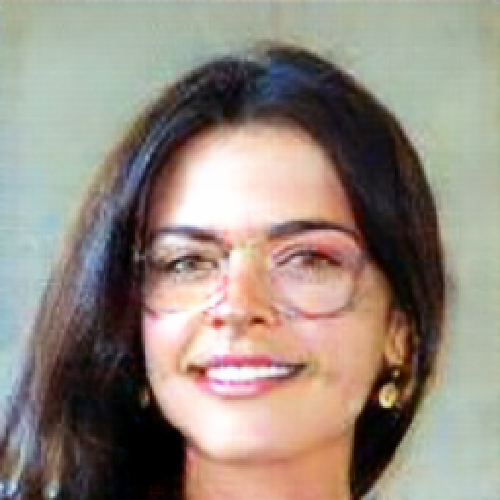} & 
    \includegraphics[width=0.1\textwidth]{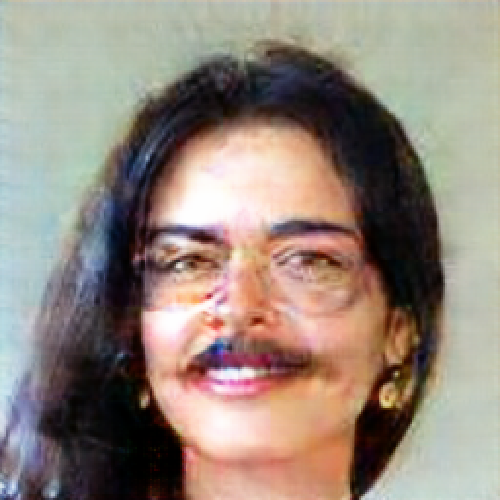}\\
     \includegraphics[width=0.1\textwidth]{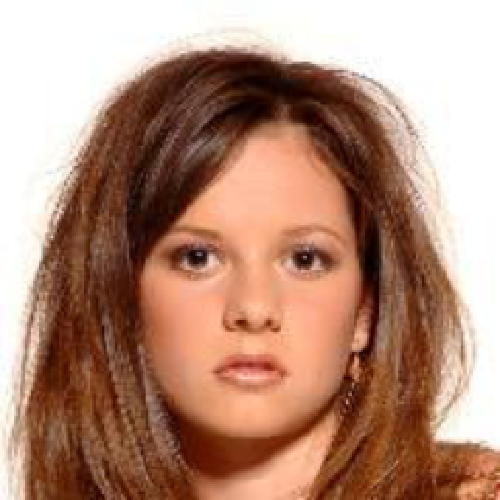} & \includegraphics[width=0.1\textwidth]{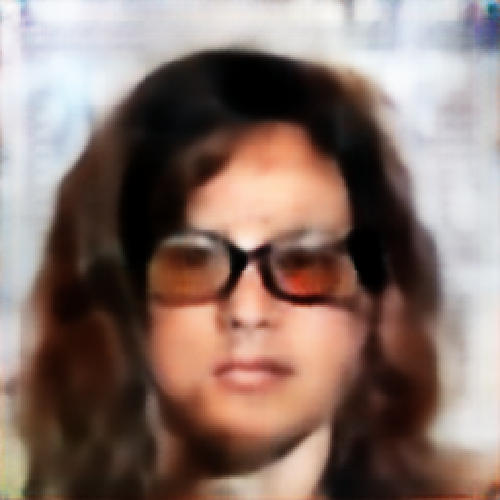} & 
    \includegraphics[width=0.1\textwidth]{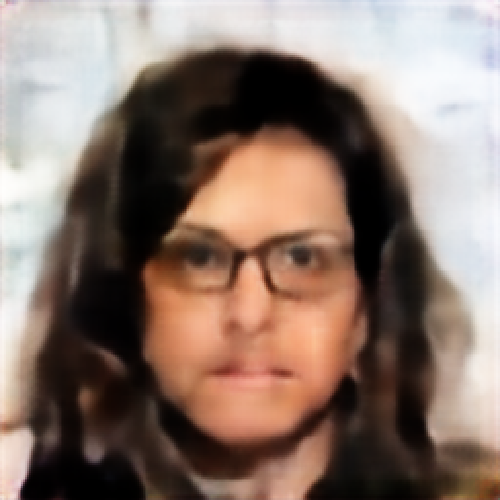} &
    \includegraphics[width=0.1\textwidth]{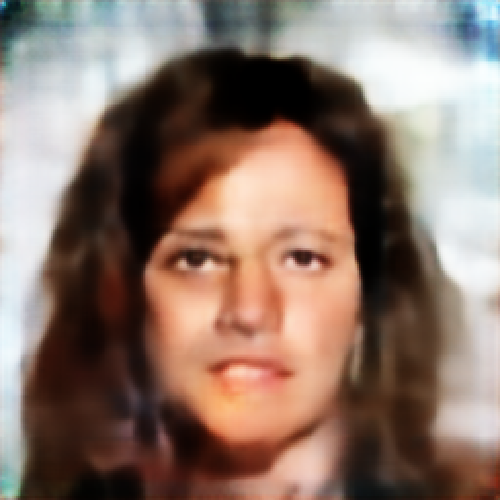} &\ &
      \includegraphics[width=0.1\textwidth]{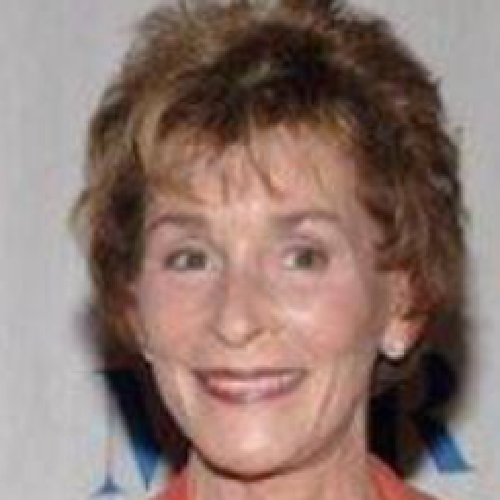} &
    \includegraphics[width=0.1\textwidth]{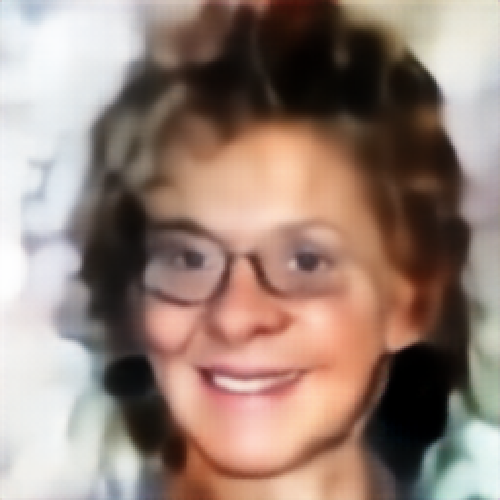} &\ &
    \includegraphics[width=0.1\textwidth]{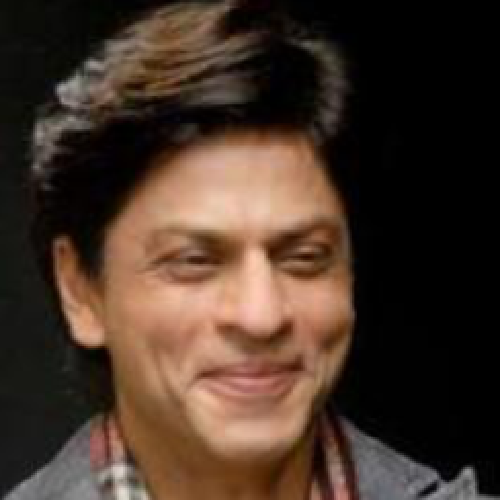} & \includegraphics[width=0.1\textwidth]{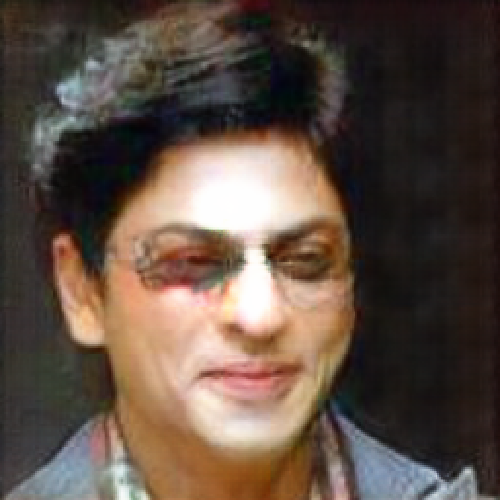} & 
    \includegraphics[width=0.1\textwidth]{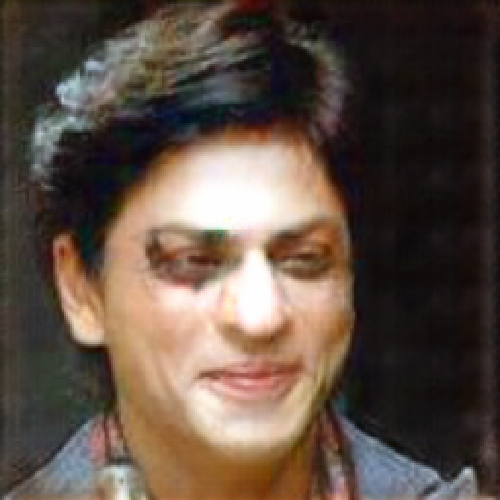} \\
     \includegraphics[width=0.1\textwidth]{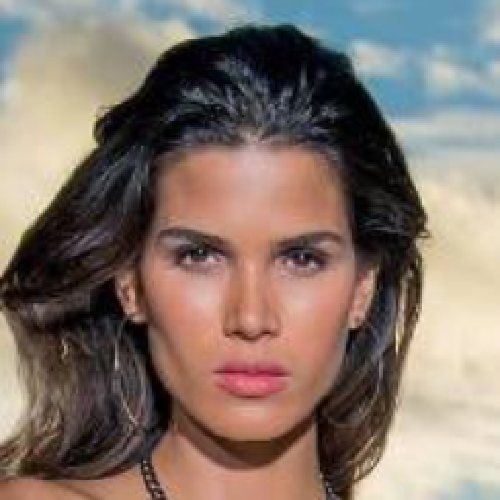} & \includegraphics[width=0.1\textwidth]{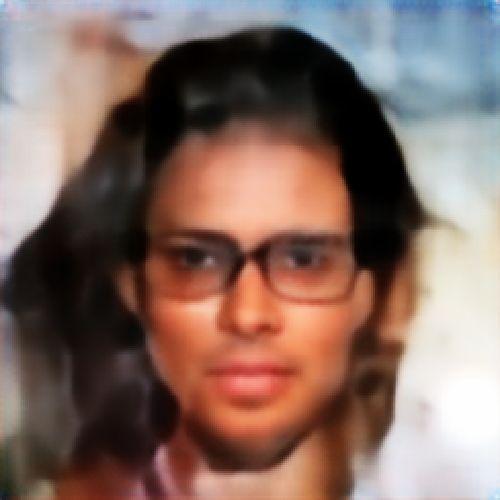} & 
    \includegraphics[width=0.1\textwidth]{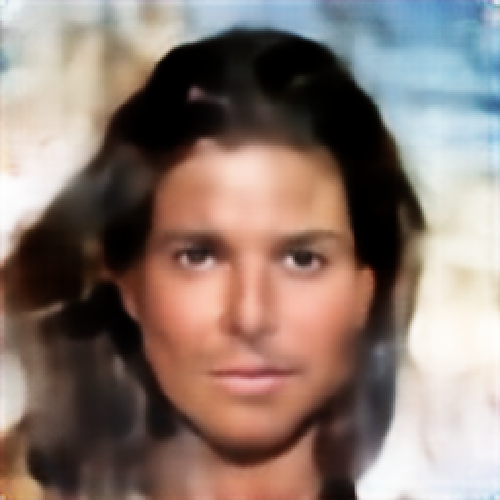} &
    \includegraphics[width=0.1\textwidth]{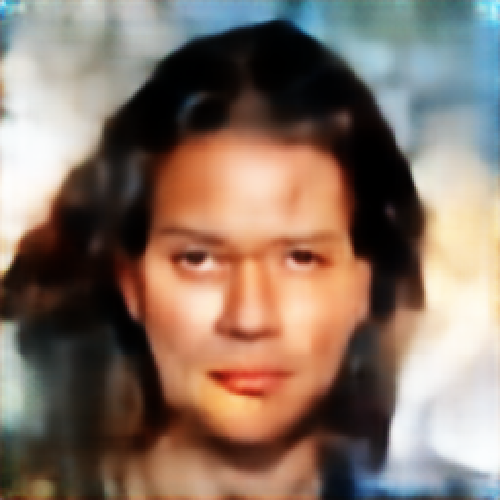} &\ &
      \includegraphics[width=0.1\textwidth]{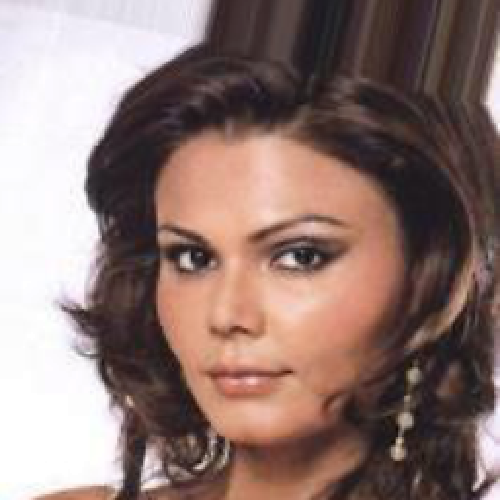} &
    \includegraphics[width=0.1\textwidth]{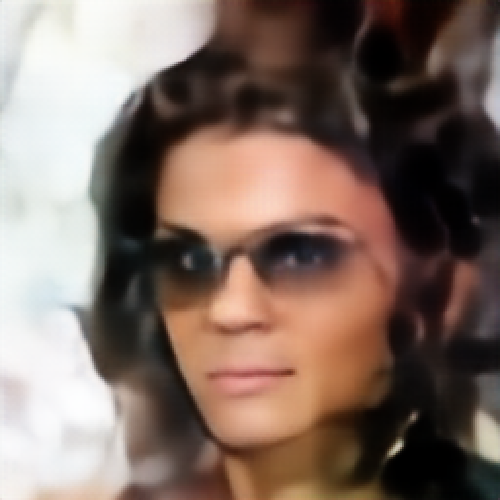} &\ &
    \includegraphics[width=0.1\textwidth]{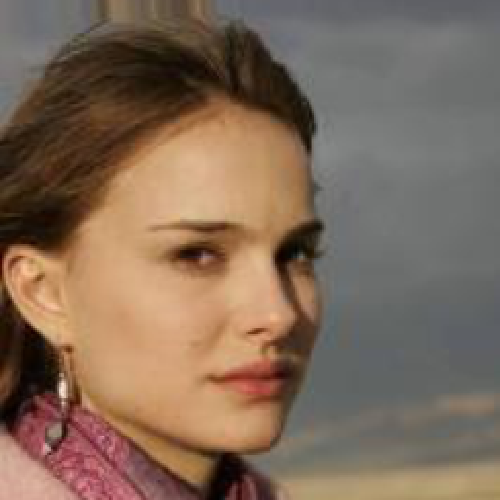} & \includegraphics[width=0.1\textwidth]{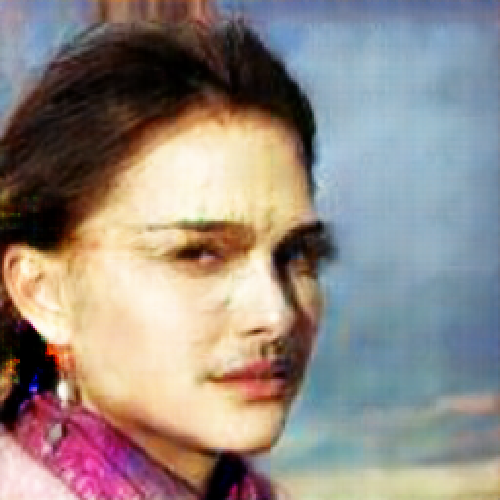} & 
    \includegraphics[width=0.1\textwidth]{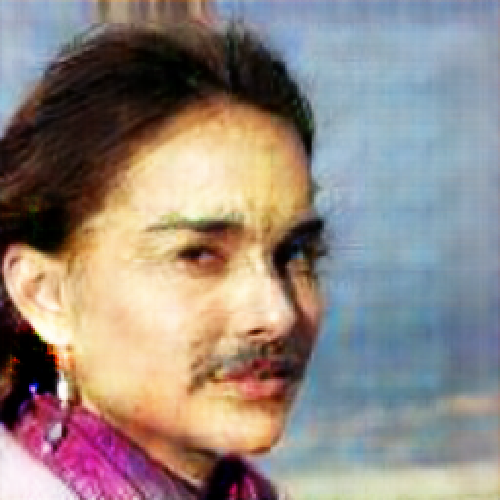}\\
     (a) & (b) & (c) & \multicolumn{2}{c}{(d)}   & (e) & \multicolumn{2}{c}{(f)} & (g) & (h) & (i)
    \end{tabular}
    \caption{\sl Semantic adversarial examples generated with multiple attribute semantic models as in \tableref{t:fader_perf}. Columns (a), (e) and (g) are original images. Columns (b)\{Attribute category: A1,A5,A6\} (c)\{Attribute category: A1,A2,A7\} and (d)\{Attribute category: A2,A5,A6\} show examples generated using multi-attribute Fader Networks as semantic transforms. Examples in (f)\{Attribute category: A1-A2-A3\} were generated using cascaded single attribute Fader Network. Columns (h)\{Attribute category: A1,A2,A6,A8,A10\} and (i)\{Attribute category: A1,A2,A6,A8,A9,A10\} are images transformed using an AttGAN with 5 and 6 attributes respectively. 
    Additional results of semantic attacks on multi-class classifiers for traffic scenes are provided in the appendix.}
    \label{fig:sem_adv_examples}
\end{figure*}
\endgroup

We showcase our semantic adversarial attack framework using a binary (gender) classifier as the target model trained on the CelebA dataset~\cite{liu2015faceattributes}. While we restrict ourselves to results on binary classifers on faces in this paper, additional results with multi-class classifiers on the Berkeley Deep Drive dataset~\cite{bdd100k} can be found in the appendix (refer Fig. 8). 
All experiments were performed on a single workstation equipped with an NVidia Titan Xp GPU in PyTorch~\cite{pytorch} v1.0.0.\footnote{Code\;and\;models:\;\url{https://github.com/ameya005/Semantic_Adversarial_Attacks}}
We train the classifier using the ADAM optimizer~\cite{kingma2015adam} over the categorical cross-entropy loss.
The training data is augmented with random horizontal flipping to ensure that the classifier does not overfit. The target model achieves a (standard) accuracy of 99.7\% on the test set (10\% of the dataset). 

\begin{table}[!tp]
    \centering
   \resizebox{\columnwidth}{!}{
    \begin{tabular}{p{0.3\columnwidth} p{0.4\columnwidth} p{0.25\columnwidth} p{0.25\columnwidth}}
    \toprule[2pt]
   Attack Type &  Attributes & Accuracy of target model (\%) & Random Sampling (\%)\\
    \midrule[1.5pt]
    \multirow{3}{0.3\columnwidth}{Single Attribute Attack}        & A1 & 52.0 & 89.00\\
                                                    & A2 & 35.0 & 96.00\\
                                                    & A3 & 14.0 & 90.00\\
    \midrule
    \multirow{3}{0.3\columnwidth}{Multi Attribute Attack}         & A1,A5,A6 & 3.00 & 89.00 \\
                                                    & A2,A5,A6 & 1.00 & 81.00\\
                                                    & A1,A2,A7 & 3.00 & 87.00\\
    \midrule
    \multirow{2}{0.3\columnwidth}{Cascaded Multi Attribute Attack} & A1-A2-A3 & 18.00 & 55.6 \\
                                                    & A2-A3-A4 & 20.00 & 93.00 \\
    \midrule
    \multirow{2}{0.3\columnwidth}{Multi Attribute AttGAN Attack} & A1,A2,A6,A8,A10 &  70.40 & 32.80 \\
                                                & A1,A2,A6,A8,A9,A10 & 39.40 & 40.40 \\
    \bottomrule[1.5pt]
    \end{tabular}
    }
    \caption{\sl Performance of the Semantic Adversarial Example under multiple implementations. Legend for attributes: A1-Eyeglasses, A2-Age, A3-Nose shape, A4-Eye shape, A5-Chubbiness, A6-Pale Skin, A7-Smiling, A8-Mustache, A9-Eyebrows, A10-Hair color. As the number of attributes increase, semantic attacks are more effective. Our optimization-based attack fares better as compared to worst-of-10 random sampling~\cite{engstrom2019a}, showing the former's efficacy at finding semantic adversarial examples. 
    }
    
    \label{t:fader_perf}
\end{table}
Our goal is to break this classifier model using semantic attacks. To do so, we use a subset of {500} randomly selected images from the test set. Each image is transformed by our algorithm using the various parametric transformation families described in~\Secref{sec:implementation}. 
Our metric of comparison for all adversarial attacks is the target model accuracy on the generated adversarial test set. 

\noindent\textbf{Adversarial Fader Networks:} We consider the three approaches documented in \secref{subsec:advfn}. For every image in our original test set, we generate adversarial examples by optimizing the adversarial loss in \eqref{eq:cwloss} with respect to the corresponding attribute parameters. 

In the cases of single-attribute and cascaded sequential attacks, we use the pre-trained single-attribute models provided by Lample~\etal~\cite{lample2017fader} to represent the manifold of semantic transformations. For the multi-attribute attack, we train 3 multi-attribute Fader Networks with the attributes presented in ~\Tableref{t:fader_perf}. We create an adversarial test set for each our approaches as described in \Secref{subsec:advfn} using our algorithm as defined in \Algref{alg:attack}. 

Our experiments show that Adversarial Fader Networks successfully generate examples that confound the binary classifier in all cases; see~\Tableref{t:fader_perf}. Visual adversarial examples are displayed in~\Figref{fig:single_attrib} and~\Figref{fig:sem_adv_examples}. We also observe that multi-attribute attacks outperform single-attribute attacks, which conforms with intuition; a more systematic analysis of the effect of the number of semantic attributes on attack performance is provided below in \Secref{sec:dimparams}.


\noindent\textbf{Adversarial AttGAN:} 
We perform a similar set of experiments using the multi-attribute AttGAN implementation of He~\etal\cite{He2017AttGANFA}. 
We record the performance over two experiments: one using 5 attributes, and the second using 6 attributes, as seen in \Tableref{t:fader_perf}. We observe a significant improvement in performance as the number of semantic attributes increases (in particular, adding the eyebrows attribute results in nearly a 30\% drop in model accuracy). 


\noindent\textbf{Comparison with parameter-space sampling:} We compare our method with a previously-proposed approach that investigates parametric attacks ~\etal~\cite{engstrom2019a}. They propose picking $s$ random samples from the parameter space and choose the adversarial example generated by the sample giving the worst cross entropy loss (we use $s=10$). 

We showcase the results in \Tableref{t:fader_perf}, and observe that in all cases (but one), our semantic adversarial attack algorithm outperforms random sampling. In addition, the table also reveals that random examples in the range of Fader Networks or AttGANs are mostly classified correctly. This suggests that the target model is generally invariant to the low reconstruction error incurred by the parametric transformation models\footnote{We do not compare our work with other approaches such as the Differentiable Renderer~\cite{liu2018beyond} and 3D adversarial attacks~\cite{Zeng20173dattack}, since these papers expect oracle access to a 3D rendering environment. We also do not compare with Song \etal\cite{song2018constructing} since they generate adversarial examples from scratch, whereas our attack targets specific inputs.}. 


\begin{table}[htp]
	\begin{minipage}{0.5\linewidth}
    \centering
    \begin{tabular}{c c}
    \toprule[1.5pt]
    Attack ($\epsilon=1.74$) & Accuracy(\%) \\
    \midrule
    \textbf{Single Att. Semantic Attack} & \textbf{14.01} \\
    \textbf{Multi Att. Semantic Attack} & \textbf{1.00} \\
    FGSM~\cite{adversarialexamples2015} & 91.6  \\
    PGD~\cite{madry2018towards, Kurakin2017} &  26.2 \\
    CW-$l_\infty$~\cite{Carlini2017cwl2} & 0.00  \\
    Spatial~\cite{engstrom2019a} &  41.00 \\
    \bottomrule
    \end{tabular}
    \caption{\sl Comparison of adversarial attacks with other attack strategies. A lower target accuracy corresponds to a better attack. The pixel space attacks are allowed to generate adversarial examples under the $l_\infty$ distance corresponding to our best performing multi-attribute attack model. Observe that semantic attacks are comparable to the state of the art pixel-space attack.}
    \label{t:advcomparison}
    \end{minipage}
	\hfill
	\begin{minipage}{0.5\linewidth}
			\centering
			\includegraphics[width=0.65\columnwidth]{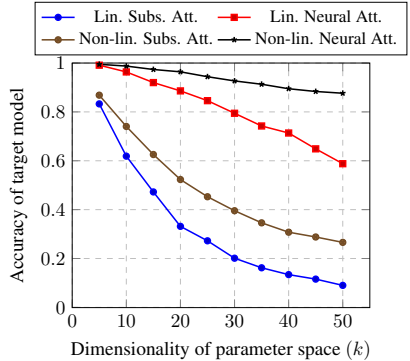} 
			\captionof{figure}{\sl Effect of dimensionality of the parametric attack space. Considering subspace and rank constrained transforms to generate adversarial examples, note that the target model accuracy decreases as the dimensionality of the attack space increases. The additive attack  (surrogate to PGD) is more effective than multiplicative attack(similar to our approach) over all values of $k$. 
			}
			\label{fig:synth_plots}
	\end{minipage}
\end{table}

\noindent\textbf{Comparison with pixel-space attacks:} In addition to our analyses described above, we also compare our attacks with the state-of-the-art Carlini-Wagner(CW) $l_\infty$-attack~\cite{Carlini2017cwl2} as well as several other attack techniques~\cite{adversarialexamples2015, Kurakin2017, engstrom2019a} in  \Tableref{t:advcomparison}. To ensure fair comparison, we consider the maximum $l_\infty$ distance over our multi-attribute attacks as the bound parameter $\epsilon$ for all pixel-norm based attacks. From the table, we observe that the CW attack is extremely effective; on the other hand, our semantic attacks are able to outperform other methods such as FGSM~\cite{Goodfellow2018existence} and PGD~\cite{madry2018towards}. 

We also compare our approach to \emph{Spatial Attacks} of~\cite{engstrom2019a}, which uses a grid search over affine transformations of an input to generate adversarial examples;  $\ell_\infty$ constraints do not apply here, and instead we use default  parameters provided in~\cite{engstrom2019a}. 
Our proposed attack methods are considerably more successful. We provide additional detailed experiments on binary and multi-class classifiers for other attributes as well as other datasets in the appendix.

\section{Analysis: Impact of Dimensionality}
\label{sec:dimparams}
From our experiments, we observe that limiting the adversary to a low-dimensional, semantic parametric transformation of the input leads to less-effective attacks than pixel-space attacks (at least when the same loss is optimized). Moreover, single-attribute semantic attacks are more powerful than multi-attribute attacks. This observation makes intuitive sense: the dimension of the manifold of perturbed inputs effectively represents the \emph{capacity} of the adversary, and hence a greater number of degrees of freedom in the perturbation should result in more effective attacks. In pixel-space attacks, the adversary is free to search over a high-dimensional $\ell_p$-ball centered around the input example, which is perhaps why $\ell_p$-norm attacks are so hard to defend against~\cite{Athalye2018obfuscated}. 

In this section, we provide experimental and theoretical analysis that precisely exposes the impact of the dimensionality of the attribute parameters. While our analysis is stylized and not directly applicable to deep neural classifiers, it constitutes a systematic first attempt towards upper bounds on what a semantically constrained adversary can possibly hope to achieve.


\subsection{Synthetic Experiments}
\label{subsec:rank}

We propose and analyze the following synthetic setup which enables explicit control over the dimension of the semantic perturbations. 

\textbf{Data:} We construct a dataset of $n = 500$ samples from a mixture of Gaussians (MoG) with 10 components (denoted by $\sP_d$) defined over $(\rvx, y) \in \sR^d \times \{ \pm 1 \}$. Each data sample is obtained by uniformly sampling one of the mixture component means, and then adding random Gaussian noise with standard deviation $\sigma \leq \sqrt{d}$. The component means are chosen as 10 randomly selected images (1 for each digit) from the MNIST dataset~\cite{mnist} rescaled to $10\times10$ (i.e., the ambient dimension is $d = 100$). 

\textbf{Target Model:} We artificially define two classes: the first class containing images generated from digits 0-4 and the second class containing images from samples 5-9. 
We train a simple two-layer fully connected network, $f(\rvx):\sR^d \to \{\pm1\}$ as the target model. The classifier is trained by optimizing cross-entropy using ADAM~\cite{kingma2015adam} for 50 epochs, resulting in training accuracy of 100\%, validation accuracy of 99.8\%, and test accuracy of 99.6\%.



\textbf{Parametric Transformations:} We consider a stylized transformation  function, $G(\rvx, \delta):\sR^d \times \sR^k \to \sR^d$. We study the effect of varying $k$ for two specific parametric transformation models.

\noindent\emph{Subspace attacks}: We first consider an additive (linear) attack model. Here, the manifold of semantic perturbations is constrained to lie a $k$-dimensional subspace spanned by an arbitrary matrix $\rmU \in \sR^{d \times k}$, whose columns are assumed to be orthonormal, and $\delta \in \sR^k$
    \begin{equation}
        \label{eq:synth_add}
        G(\rvx, \delta) := \tilde{\rvx} = \rvx + \rmU\rmU^T\delta
    \end{equation}

\noindent\emph{Neural attacks}: 
Next, we consider a \emph{multiplicative} attack model. Here the manifold of perturbations corresponds to a rank-$k$ transformation of the input. \begin{equation}
    \label{eq:synth_mul}
    \small
    G(\rvx, \delta) := \tilde{\rvx} = \rmU.diag(\delta).\rmU^T\rvx
\end{equation}
Here, $\rmU$ and $\delta$ follow the definition presented earlier. This transformation can be interpreted as the action of a shallow (two-layer) auto-encoder network with $k$ hidden neurons with scalar activations parameterized by $\delta$.

\noindent\emph{Nonlinear ReLU variants}: We also consider each of the above two attacks in the \emph{rectified} setting where the transformation is passed through a rectified linear unit: 
\(\tilde{\rvx} = \textrm{ReLU}\left(G(\rvx, \delta)\right).\)


\textbf{Results:} We analyse the effect of the dimensionality of the attack space($k$) by considering the performance of the subspace and neural attacks on the target binary classifier. \Figref{fig:synth_plots} shows the comparison of the constrained attacks for the linear and non-linear cases.

We infer the following: (i) as expected, increasing dimensionality of the semantic attack space leads to less accurate target models; (ii) adding a non-linearity to the transformation function \emph{reduces} the viability of both subspace- and rank-constrained attacks; (iii) subspace-constrained attacks are more powerful than neural attacks. In general, the degree of ``nonlinearity'' in the transformation model appears to be inversely proportional to the power of the corresponding semantic attack. We believe this phenomenon is somewhat surprising, and defer further analysis to future work.


\subsection{Theory}
\label{subsec:theory}

In the case of subspace attacks, we explicitly derive upper bounds on the generalization behavior of target models. Our derivation follows the recent approach of Schmidt \etal ~\cite{schmidt2018adversarially}, who consider a simplified version of the data model defined in \Secref{subsec:rank} and bound the performance of a linear classifier in terms of its \emph{robust classification error}.

\begin{definition}[Robust Classification Error]
Let $\sP_d:\sR^d\times \{\pm1\} \to \sR$ be a distribution and let $\mathcal{S}$ be any set containing $\rvx$. Then the $\mathcal{S}$-robust classification error of any classifier $f:\sR^d \to \{\pm 1\}$ is defined as $\beta = P_{(\rvx, y)\sim\sP_d}\left[ \exists \tilde{\rvx} \in \mathcal{S}: f(\tilde{\rvx}) \neq y\right]$.  
\end{definition}

Using this definition, we analyze the efficacy of subspace attacks on a simplified linear classifier trained using a mixture of two spherical Gaussians. 
Consider a dataset with samples $(\rvx, y) \in \sR^d \times \{\pm 1\}$ sampled from a mixture of two Gaussians with component means $\pm \theta^{\star}$ and standard deviation $\sigma \leq \sqrt{d}$. We assume a linear classifier $f_{\hat{\vw}}$, defined by the unit vector $\hat{w}$, as $f_{\hat{\vw}}(\rvx) = \text{sign}(\inner{\hat{\vw}, \rvx)}$. 

Let $\mathcal{S}_\epsilon = \{ \tilde{\rvx}~|~\tilde{\rvx} = \rvx + \rmU\rmU^T\delta,~|| \rvx - \tilde{\rvx} ||_\infty \leq \epsilon \}$. Assuming that the target classifier is well-trained (i.e., $\hat{\vw}$ is sufficiently well-correlated with the true component mean $\theta^{\star}$), we can upper bound the probability of error incurred by the classifier when subjected to any subspace attack.

\begingroup
\begin{figure}[!t]
    \centering
    \setlength{\tabcolsep}{1.5pt}
    \begin{tabular}{c c c c c}
    \includegraphics[width=0.15\columnwidth]{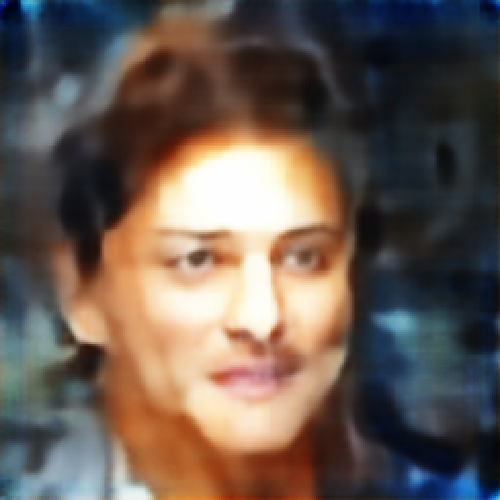} &
    \includegraphics[width=0.15\columnwidth]{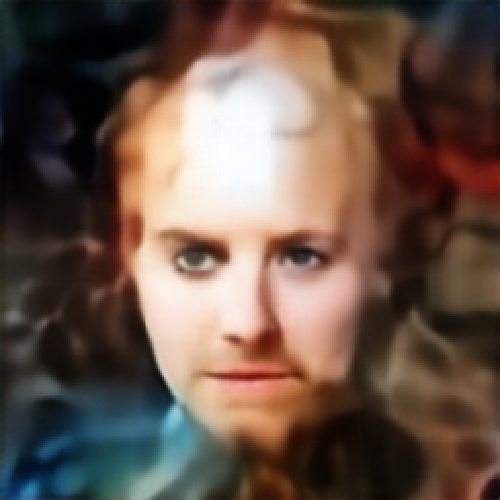} &
    \includegraphics[width=0.15\columnwidth]{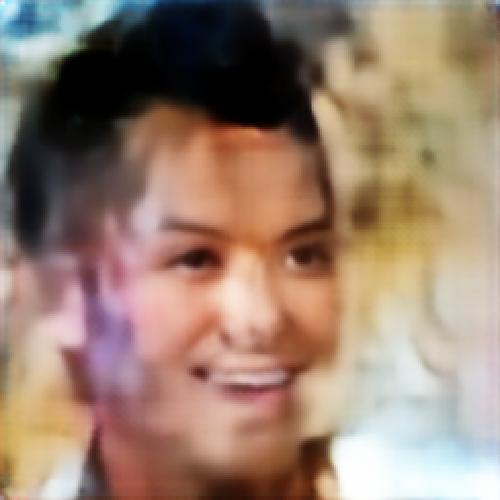} &
    \includegraphics[width=0.15\columnwidth]{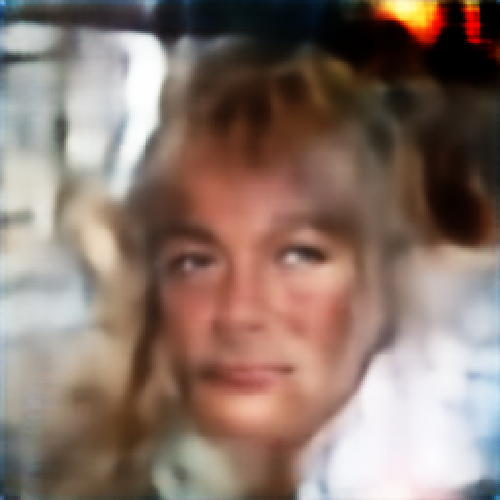} &
    \includegraphics[width=0.15\columnwidth]{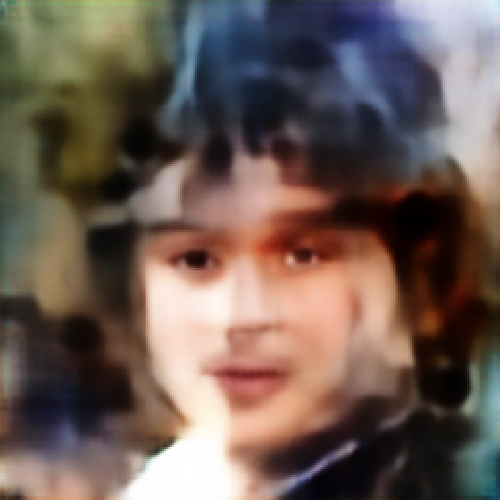}
    \end{tabular}
    \caption{\sl Semantically transformed single-attribute examples which are classified correctly by the target model but show severe artifacts. This shows that neural networks are immune to significant changes in the semantic domain unlike the pixel domain.}
    \label{fig:failures}
\end{figure}
\endgroup

\begin{thm}[Robust classification error for subspace attacks]
Let $\hat{\vw}$ be such that $\inner{\hat{\vw}, \theta^{\star}} \geq k ||\rmU||_{\infty,1} ||\hat{\vw}^T\rmU||_\infty \epsilon$.
Then, the linear classifier $f_{\hat{\vw}}$ has a $\mathcal{S}_\epsilon$-robust classification error upper bounded as:
\begin{equation}
\small
\label{eq:r_upperbound}
\beta \leq \exp\left(-\frac{\left(\inner{\hat{\vw}, \theta^{\star}} - k ||\rmU||_{\infty,1} ||\hat{\vw}^T\rmU||_\infty \epsilon \right)^2}{2\sigma^2}\right)
\end{equation}
\end{thm}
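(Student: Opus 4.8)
The plan is to follow the template of Schmidt \etal: exploit the symmetry of the two-component Gaussian mixture to reduce to a single Gaussian, control the worst-case adversarial shift along the classifier direction $\hat{\vw}$ by a deterministic quantity, and then bound the remaining probability by a standard sub-Gaussian tail estimate.

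First I would condition on the label. By symmetry of the mixture (means $\pm\theta^{\star}$, common covariance $\sigma^2\rmI_d$), the robust error conditioned on $y=+1$ equals that conditioned on $y=-1$, so it suffices to treat $y=+1$, i.e. $\rvx=\theta^{\star}+\sigma\rvg$ with $\rvg\sim\mathcal{N}(\vzero,\rmI_d)$. A robust misclassification occurs iff some $\tilde{\rvx}\in\mathcal{S}_\epsilon$ has $\langle\hat{\vw},\tilde{\rvx}\rangle\le 0$; since $\tilde{\rvx}=\rvx+\rmU\rmU^T\delta$, this forces $\langle\hat{\vw},\rvx\rangle+\min_{\delta}\langle\hat{\vw},\rmU\rmU^T\delta\rangle\le 0$ over the feasible $\delta$. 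Hence the robust-error event is contained in $\{\langle\hat{\vw},\rvx\rangle\le\Delta\}$ once we show the worst-case shift is bounded as $|\langle\hat{\vw},\rmU\rmU^T\delta\rangle|\le\Delta:=k\,\|\rmU\|_{\infty,1}\,\|\hat{\vw}^T\rmU\|_\infty\,\epsilon$.

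The crux is this deterministic bound, which I would establish via Hölder's inequality together with the orthonormality $\rmU^T\rmU=\rmI_k$. Writing the perturbation as $\vp=\rmU\rmU^T\delta$ (so $\|\vp\|_\infty\le\epsilon$ by the definition of $\mathcal{S}_\epsilon$) and $\vc:=\rmU^T\vp=\rmU^T\rmU\rmU^T\delta=\rmU^T\delta\in\sR^k$, one gets $\langle\hat{\vw},\vp\rangle=(\hat{\vw}^T\rmU)\vc$. Hölder then gives $|\langle\hat{\vw},\vp\rangle|\le\|\hat{\vw}^T\rmU\|_\infty\,\|\vc\|_1\le k\,\|\hat{\vw}^T\rmU\|_\infty\,\|\vc\|_\infty$, where the last step uses $\vc\in\sR^k$. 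Finally $\|\vc\|_\infty=\|\rmU^T\vp\|_\infty$ is controlled by the $\ell_\infty\!\to\!\ell_\infty$ operator norm of $\rmU^T$, namely the largest $\ell_1$ column norm $\|\rmU\|_{\infty,1}$, so $\|\vc\|_\infty\le\|\rmU\|_{\infty,1}\|\vp\|_\infty\le\|\rmU\|_{\infty,1}\,\epsilon$, yielding the claimed $\Delta$.

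With this reduction in place, the final step would be a Gaussian tail bound. Since $\hat{\vw}$ is a unit vector, $\langle\hat{\vw},\rvx\rangle\sim\mathcal{N}(\langle\hat{\vw},\theta^{\star}\rangle,\sigma^2)$, so $\beta\le P[\langle\hat{\vw},\rvx\rangle\le\Delta]=P[Z\le(\Delta-\langle\hat{\vw},\theta^{\star}\rangle)/\sigma]$ for a standard normal $Z$. The well-trained hypothesis $\langle\hat{\vw},\theta^{\star}\rangle\ge\Delta$ is exactly what guarantees this is a lower-tail event with nonnegative argument $t:=(\langle\hat{\vw},\theta^{\star}\rangle-\Delta)/\sigma\ge 0$; applying $P[Z\le -t]\le\exp(-t^2/2)$ then gives the stated inequality. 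I expect the main obstacle to be the deterministic step: pinning down the precise combination of the mixed norm $\|\rmU\|_{\infty,1}$, the operator norm $\|\hat{\vw}^T\rmU\|_\infty$, and the factor $k$, and checking that the feasible perturbation set is exactly $\{\vp=\rmU\rmU^T\delta:\|\vp\|_\infty\le\epsilon\}$ so that the Hölder chain is faithful to the hypothesis appearing in the theorem.
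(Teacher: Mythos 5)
Your proposal is correct and takes essentially the same route as the paper's proof: both reduce the robust-error event to a deterministic bound of $k\,\|\rmU\|_{\infty,1}\,\|\hat{\vw}^T\rmU\|_\infty\,\epsilon$ on the worst-case shift $|\langle\hat{\vw},\rmU\rmU^T\delta\rangle|$ (using $\rmU^T\rmU=\rmI_k$, H\"older, and the $\ell_1$-versus-$\ell_\infty$ comparison in $\sR^k$), and then conclude with the same Gaussian tail estimate invoked from Schmidt \etal\ (their Lemmas 17 and 20). The only difference is bookkeeping in the deterministic step---the paper passes through $\|\delta'\|_\infty\le\|\delta'\|_1=\|\rmU^T\rmU\delta'\|_1\le\|\rmU^T\|_{1,\infty}\|\rmU\delta'\|_\infty$ and then uses $\|\hat{\vw}^T\rmU\|_1\le k\|\hat{\vw}^T\rmU\|_\infty$ via duality, whereas you apply H\"older in the opposite pairing and control $\|\rmU^T\vp\|_\infty$ by an $\ell_\infty\!\to\!\ell_\infty$ operator norm, which is if anything slightly tighter and yields the same stated bound.
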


The proof is deferred to the appendix, but we provide some intuition. Lemma 20 of \cite{schmidt2018adversarially} recovers a similar result, albeit with the $\sqrt{k}$ term in the exponent being replaced by $\sqrt{d}$. This is because they only consider bounded $\ell_\infty$-perturbations in pixel-space, and hence their bound on the robust classification error scales exponentially according to the \emph{ambient dimension} $d$, while our bound is expressed in terms of the \emph{number of semantic attributes} $k \ll d$. A natural next step would be to derive \emph{sample complexity} bounds analogous to \cite{schmidt2018adversarially} but we do not pursue that direction here. 

\section{Discussion and Conclusions}
\label{sec:discussion}
We conclude with possible obstacles facing our approach and directions for future work.
We have provided evidence that there exist adversarial examples for a deep neural classifier that may be perceptible, yet are semantically meaningful and hence difficult to detect. A key obstacle is that parameters associated with semantic attributes are often difficult to decouple. 
This poses a practical challenge, as it is difficult to train a conditional generative model with independent latent semantic dimensions.
However, the success of recent efforts in this direction, including Fader Networks~\cite{lample2017fader}, AttGans~\cite{He2017AttGANFA}, and StarGANs~\cite{StarGAN2018} demonstrate promise of our approach: any newly developed conditional generative models can be used to mount a semantic attack using our framework. 

Despite the existence of semantic adversarial examples, we have found that enforcing semantic validity confounds the adversary's task, and that 
target models are generally able to classify a significant subset of the examples generated under our semantic constraint. \Figref{fig:failures} are examples of images generated with severe artifacts, yet that are successfully classified. This presents the question: is ``naturalness'' a strong defense? 

This intuition is the premise of a recent defense strategy called DefenseGAN~\cite{samangouei2018defensegan}. Indeed, our approach can be viewed as converse of this strategy: DefenseGAN uses the range-space of a generative model (specifically, a GAN) to \emph{defend} against pixel-space attacks, while conversely, we use the same principle to \emph{attack} trained target models. A closer look into the interplay between the two approaches is worthy of future study.  

\section*{Acknowledgements}

We thank Gauri Jagatap, Mohammedreza Soltani, and Anuj Sharma for helpful discussions.

\nocite{*}
{\small
\bibliographystyle{ieee}
\bibliography{egbib}
}

\appendix

\section{Related Work}

\noindent\textbf{Adversarial Examples and Attacks.} \\
In 2014, Szegedy \etal~\cite{Szegedy2014intriguing} shows that deep neural networks had mainly two counter intuitive properties, stating that the space described by higher layers of neural networks captures semantic information and there exists adversarial examples which questioned the generalization ability of a neural network. They generate such adversarial examples under the \textit{$L_2$} distance constraint which look similar to the original images but are classified with a different label by the classifier using a box constrained L-BFGS attack. 

Goodfellow et. al~\cite{adversarialexamples2015} and Kurakin~\etal~\cite{Kurakin2017} generate adversarial examples using Fast Gradient Sign method and its iterative variant under the \textit{$l_{\infty}$} constraint in less computation time. Other methods similar to FGSM have been mentioned in ~\cite{Tramr2017EnsembleAT}.  

Papernot \etal~\cite{Papernot2016} implements an attack under the $l_0$ constraint where they modify the pixel having the most significant contribution in changing the classification of the model to the target class. 
Moosavi-Dezfooli \etal~\cite{DeepFool} describe an untargeted attack algorithm under the $L_2$ constraint with the assumption that neural networks are linear in nature which they further extend to non-linear neural networks. Another family of attacks relates to a single universal adversarial direction for a dataset. Moosavi-Dezfooli~\etal~\cite{MoosaviDezfooli2017UniversalAP} prove the existence of an image-agnostic adversarial perturbation. Fawzi~\etal~\cite{Fawzi2018advrobustness} extend this to theoretically show that every classifier is vulnerable to adversarial attacks. Moosavi-Dezfooli~\etal further consider the effect of the curvature of the decision boundaries on the existence of adversarial examples in \cite{MoosaviDezfooli2018RobustnessVC}.    

Carlini and Wagner~\cite{Carlini2017cwl2} propose three attacks for adversarial image generation and shows that defensive distillation is not an effective defence mechanism. They devise attacks under the three norms in literature $l_1$, $l_2$ and $l_{\infty}$ to measure the deviation of adversarial perturbation from the original sample over seven different surrogate loss functions and finally selecting one of them which we use in our attack algorithm as well. The attack that they implement in this work is proven to be the most effective attack in literature and is a benchmark for comparison.

The primary difference between the aforementioned attacks and our attack is that these attacks perturb the image and make imperceptible changes in the pixel space and thereby not modifying the image in a semantic way. On the other hand, our attack focuses making naturalistic perceptible changes to the image which are semantic in nature and realistic.  

\noindent\textbf{Parametric adversarial attacks.}\\
The use of parametric transformations to generate adversarial examples has been tackled by several previous works. Most of these parametric attacks target the image formation process to create adversarial example. A recent work by Liu~\etal perturbs geometrical surfaces or lighting by optimizing over the relevant parameters for a 3D environment. They show convincing results with realistic looking adversarial examples. Zeng~\etal~\cite{Zeng20173dattack} use FGSM to perturb 3D models of objects to create adversarial examples. The primary caveat to such approaches is that they require precise 3D models of the objects that they create adversarial examples.
Athalye~\etal~\cite{Athalye2018SynthesizingRA} demonstrate the creation of a real-world adversarial 3D model using optimization over affine transformations corresponding to real-world realizations. Eykhol~\etal~\cite{Eykholt2018RobustPA} also provide mechanisms for real-world realizable adversarial examples for stop signs using designed adversarial stickers.

Mopuri~\etal~\cite{Mopuri2018NAGNF} train a generative adversarial network to generate adversarial attacks for classifiers. 
Zhao~\etal~\cite{zhao2018generating} show an interesting use of a GAN and an inverter network where they search over the input space of the GAN to generate semantically valid adversarial examples. These approaches are morally similar to our approach though we focus on specific physically perturbed attributes of images rather than imperceptible perturbations. CAMOU~\cite{zhang2019camou} is a more recent work that learns a neural approximator for physical camouflage and then optimizes over the same to generate an adversarial version to fool object detectors.

The space of generating adversarial examples using GANs for face recognition systems has also been touched upon by Dabouei~\etal~\cite{Dabouei2019FastGA} and Sharif~\etal~\cite{Sharif2018advgennets} which train generative networks for the specific purpose of creating adversarial examples. Sharif ~\etal especially show a realizable attack by adding glasses using a generative network to fool a face recognition classifier. We, in comparison, provide a more diverse attack space allowing for various semantic attributes. In addition, since our attack involves physically realizable perceptible attributes, it can be used to characterize a classifier's performance against physical adversarial attacks as well. 

Song~\etal~\cite{song2018constructing} uses an Auxiliary Class Generative Adversarial Network (AC-GAN)~\cite{Odena2017} to generate unrestricted adversarial examples from noise and then optimizes over the latent space of the conditional GAN to find such adversarial examples which get missclassified by a gender classifier. The paper describes the use of \textit{Mechanical Turk} as a checker for naturalness and validation for the generated images belonging to the desired class. We approach the more complex problem of finding an adversarial transformation for an input image instead of generating a random semantic adversarial example.   

\noindent\textbf{Attribute based generative models.} \\

Our approach relies on the use of attribute based generative models for enforcing the semantic constraint and representing attributes as a real-valued semantic variable. we discuss a few relevant approaches published recently.

As mentioned in ~\cite{He2017AttGANFA}, the literature related to facial attribute editing can be broadly divided into two sections, optimization based approaches and learning based approaches. Optimization approaches include Li~\etal~\cite{Li2016} and Gardner\etal~\cite{2017DeepFI} where the former optimizes the CNN feature difference between the input face image and the face images with the desired attributes with respect to the input face while the latter optimizes the input face in order to match the deep feature along the direction vector between the faces with and without the attributes. 

Li \etal~\cite{Li2016DeepIT} describe a method to optimize over an adversarial attribute loss and a deep identity feature loss in order to train a deep identity aware transfer model to add or remove facial attributes to/from a face. Shen \etal~\cite{Shen2017LearningRI} learn the difference between images before and after manipulation to simultaneously train two networks for respectively adding and removing a specific attribute. 

Generative Adversarial Networks(GAN)~\cite{GAN} are a popular approach for the generation of samples from a real-world data distribution. Recent advancements~\cite{Radford2016UnsupervisedRL, UnsupervisedIT, wang2016generative, chen2016infogan} in GANs allow for creation of high dimensional, high quality realistic images. These have been incorporated into the several attribute swapping generative models. Zhou \etal~\cite{Zhou2017GeneGANLO} recombine the information of the latent information of two images to swap a specific attribute between the given images. Liu \etal~\cite{UnsupervisedIT} generate high quality images by coupling GANs in order to learn a shared latent representation in order to tackle several unsupervised image translation tasks including domain adaptation and face image translation.

For multiple attribute swapping, models based on Kingma \etal~\cite{Kingma2014}, Goodfellow \etal~\cite{GAN},Larsen~\etal~\cite{AutoencodingBP}, Mirza \etal~\cite{Mirza2014ConditionalGA}, Radford \etal~\cite{Radford2016UnsupervisedRL} have become quite popular recently. 
Perarnaue \etal~\cite{InvertibleCG} uses a Conditional Generative Adversarial Network~\cite{Mirza2014ConditionalGA} and encoder to learn the attribute invariant latent representation for attribute editing. Similar work has been seen in Fader Networks~\cite{lample2017fader} where the model learns the attribute invariant latent space in order to identify a face as one and the same with or without a specific attribute. On the other hand, AttGAN~\cite{He2017AttGANFA} argues that such attribute invariant constraint is a bit too excessive and imposes an attribute classification constraint and a reconstruction loss instead to alter only the desired attributes preserving attribute-excluding features. StarGAN ~\cite{StarGAN2018} uses a cyclic consistency loss to preserve information and instead of learning a latent representation, it trains a conditional attribute transfer network to modify attributes. 
Chen \etal~\cite{chen2016infogan} and Odena \etal~\cite{Odena2017} map the generated images back to the conditional signals with the help of an auxiliary classifier to learn this conditional generation of the images. Kaneko \etal~\cite{Kaneko2017GenerativeAC} uses a conditional filtered generative adversarial network to present a generative attribute controller to edit attributes of an image while preserving the variations of an attribute. 

Xiao \etal~\cite{Xiao2018DNAGANLD} swaps blocks of the latent distribution containing relevant attributes between a given pair of images. A similar approach has been seen in Kim\etal~\cite{Kim2017UnsupervisedVA} where the latent representation is divided in blocks corresponding different attributes and these latent blocks are swapped in order to achieve multiple attribute swapping.

\noindent\textbf{Data poisoning.}\\
Much of the prior work mentioned discuss about adversarial attacks during inference. Data poisoning is a technique where the adversary injects false data to hinder the generalization capability of a deep neural network. Koh~\etal ~\cite{koh2017understanding} present the seminal work on data poisoning for deep neural networks where they construct approximate upper bounds to provide certificates to a large class of attacks. Xiao~\etal~\cite{Xiao2012} and Xiao~\etal~\cite{Xiao2015} also present a similar approach but on shallow learning models. Another class of data poisoning attack is referred to as a \textit{backdoor attack}, where an adversary corrupts the model to misclassify either a specific input or a group of inputs to a target label thus engineering a \textit{backdoor} that can be used to corrupt the learned model. Gu~\etal~\cite{BadNets} demonstrate a method to train a network maliciously with good performance on training and validation datasets but persistent poor performance on inputs associated with backdoor triggers.

These attacks can be realistic in nature, for e.g., a stop sign can be identified by the classifier as a speed limit sign in the presence of backdoor triggers which are mainly special markers added to the inputs by the adversary. Turner~\etal~\cite{turner2019} show that an adversary is able to gain whole control over the target model during inference, by training with samples generated with a GAN. More recently Tran~\etal~\cite{tran2018} identify a property related to all backdoor attacks known as spectral signatures with which poisoned examples from real image datasets can be detected and removed effectively. Chen~\etal~\cite{Chen2017backdoor} demonstrate an application of such backdoor attacks on a visual recognition system where they were able to break a weak threat model with a limited number of poisoned data examples with semantic attribute changes. This is perhaps the first attempt at considering the effect of semantic changes.


\section{Theoretical Results}

\subsection*{Robust classification error for subspace attacks}

We present a proof for the upper bound of the robust classification error in the case of subspace attacks. Recall the data model we use; a Mixture of Gaussians data model, $\sP_d(\theta^{\star}, \sigma)~\sim~\sR^d \times {\pm1}$ with two components and $\sigma \le \sqrt{d}$. Each of the components are regarded as classes. We additionally  assume a linear classifier, $f_{\vw}:\sR^d \to \{\pm1\}$ defined by the unit vector, $\hat{\vw} = sgn(\inner{\hat{\vw}, \rvx})$. 

Let $\mathcal{S}_\epsilon = \{\tilde{\rvx}~|~\tilde{\rvx} = \rvx+\rmU\rmU^T\delta,~|| \tilde{\rvx} - \rvx||_\infty \leq \epsilon \} $ and $rank\ \rmU = k$.

Under the assumption that the linear classifier is well trained, \textit{i.e.}, $\hat{\vw}$ is sufficiently correlated with the true component mean, $\theta^{\star}$, we upper bound the robust classification error. This involves considering the sample generalization error of a linear classifier on Gaussian data. We adapt arguments from  Schmidt~\etal~\cite{schmidt2018adversarially} for the case of subspace attacks. The theorem statement is repeated here for convenience. 

\setcounter{thm}{0}
\begin{thm}
Let $\hat{\vw}$ be such that $\inner{\hat{\vw}, \theta^{\star}} \geq k ||U||_{\infty,1} ||\hat{\vw}^T\rmU||_\infty \epsilon$.
Then, the linear classifier $f_{\hat{\vw}}$ has a $\mathcal{S}_\epsilon$-robust classification error upper bounded as:
\begin{equation}
\small
\label{eq:r_upperbound_apdx}
\beta \leq \exp\left(-\frac{\left(\inner{\hat{\vw}, \theta^{\star}} - k ||U||_{\infty,1} ||\hat{\vw}^T\rmU||_\infty \epsilon \right)^2}{2\sigma^2}\right)
\end{equation}
\end{thm}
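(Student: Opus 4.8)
The plan is to collapse the robust classification error into a one-dimensional Gaussian tail probability, and then control the adversary's effective budget with a chain of two Hölder inequalities.

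\textbf{Reduction to a scalar event.} By symmetry of the two-component mixture it suffices to bound the error on the $y=+1$ component, where I would write $\rvx=\theta^\star+\sigma\rvg$ with $\rvg\sim\mathcal N(\vzero,\rmI_d)$. The classifier $f_{\hat{\vw}}$ is fooled by some $\tilde\rvx\in\mathcal S_\epsilon$ precisely when the adversary can push the margin $\langle\hat{\vw},\tilde\rvx\rangle$ to be nonpositive. Writing $\tilde\rvx=\rvx+p$ with $p=\rmU\rmU^T\delta$, the margin splits as $\langle\hat{\vw},\rvx\rangle+\langle\hat{\vw},p\rangle$; since the feasible perturbation set $\{p\in\mathrm{col}(\rmU):\|p\|_\infty\le\epsilon\}$ is symmetric in $p$, the smallest attainable margin is $\langle\hat{\vw},\rvx\rangle-\gamma$, where $\gamma:=\max_{p\in\mathrm{col}(\rmU),\,\|p\|_\infty\le\epsilon}|\langle\hat{\vw},p\rangle|$. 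Thus a sample is robustly misclassified iff $\langle\hat{\vw},\rvx\rangle\le\gamma$, a one-dimensional event.

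\textbf{Bounding the budget $\gamma$.} This is the crux, and the hard part will be getting the two mixed norms and the factor $k$ to appear with the right pairing. Any feasible $p$ can be written $p=\rmU z$ with $z\in\sR^k$, and orthonormality of the columns of $\rmU$ gives the identity $z=\rmU^T p$. Writing $\rmU_i$ for the $i$-th column, I would bound $\|z\|_1=\sum_{i=1}^k|\langle\rmU_i,p\rangle|\le\sum_{i=1}^k\|\rmU_i\|_1\|p\|_\infty\le k\,\|\rmU\|_{\infty,1}\,\epsilon$, using Hölder column-wise, $\|p\|_\infty\le\epsilon$, and $\|\rmU\|_{\infty,1}=\max_i\|\rmU_i\|_1$. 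A second Hölder step gives $|\langle\hat{\vw},p\rangle|=|(\hat{\vw}^T\rmU)z|\le\|\hat{\vw}^T\rmU\|_\infty\|z\|_1$, and chaining the two bounds yields $\gamma\le k\,\|\rmU\|_{\infty,1}\,\|\hat{\vw}^T\rmU\|_\infty\,\epsilon$, exactly the quantity subtracted inside the exponent.

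\textbf{Gaussian tail.} Because $\hat{\vw}$ is a unit vector, $\langle\hat{\vw},\rvx\rangle\sim\mathcal N(\langle\hat{\vw},\theta^\star\rangle,\sigma^2)$, so $\beta=\Pr[\langle\hat{\vw},\rvx\rangle\le\gamma]=\Phi\big((\gamma-\langle\hat{\vw},\theta^\star\rangle)/\sigma\big)$. The hypothesis $\langle\hat{\vw},\theta^\star\rangle\ge k\|\rmU\|_{\infty,1}\|\hat{\vw}^T\rmU\|_\infty\epsilon\ge\gamma$ guarantees the argument is nonpositive, so I would set $t=(\langle\hat{\vw},\theta^\star\rangle-\gamma)/\sigma\ge0$ and apply the standard sub-Gaussian tail bound $\Phi(-t)\le e^{-t^2/2}$. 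This gives $\beta\le\exp(-(\langle\hat{\vw},\theta^\star\rangle-\gamma)^2/(2\sigma^2))$, and substituting the bound on $\gamma$ recovers the stated bound. The entire delicacy lives in the budget step: a careless pairing of norms in the two Hölder applications would inflate the constant and fail to reproduce the stated $k\,\|\rmU\|_{\infty,1}\,\|\hat{\vw}^T\rmU\|_\infty$ factor, and one must also verify that the well-trained hypothesis is exactly what keeps us in the valid left tail of $\Phi$.
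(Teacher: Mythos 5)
Your proof is correct and follows essentially the same route as the paper's: both reduce the robust error to a one-dimensional Gaussian tail event, bound the adversary's effective budget by $k\|\rmU\|_{\infty,1}\|\hat{\vw}^T\rmU\|_\infty\epsilon$ via a chain of H\"older inequalities exploiting $\rmU^T\rmU=\rmI_k$, and finish with the tail bound $\Phi(-t)\le e^{-t^2/2}$ (the paper's Lemma~17 invocation). The only cosmetic difference is where the factor $k$ is absorbed: you place it in the bound $\|z\|_1\le k\,\|\rmU\|_{\infty,1}\epsilon$ and pair with $\|\hat{\vw}^T\rmU\|_\infty$ directly, whereas the paper first passes to the enlarged ball $\|\delta'\|_\infty\le\|\rmU\|_{\infty,1}\epsilon$, evaluates the minimum by $\ell_1$--$\ell_\infty$ duality, and then relaxes $\|\hat{\vw}^T\rmU\|_1\le k\|\hat{\vw}^T\rmU\|_\infty$, yielding the identical constant.
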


\begin{proof}
For proving the above statement, we consider the probability of adversarial misclassification under a rank constrained attack.

Given $(\rvx, y) \in \sR^d \times {\pm 1}$ where $\rvx \sim MoG(\theta^\star, \sigma$; $\sigma \leq \sqrt{d}$, we consider a linear additive attack under a rank constraint,
\begin{equation}
    \label{eq:attack_add}
    \tilde{\rvx} = \rvx + \rmU\rmU^T\delta
\end{equation}
Here, $\rmU \in \sM_{d,k}$ is a random matrix with the columns forming an orthonormal basis of dimensionality $k$.
In addition, we consider that the adversarial example thus created is constrained to be in the norm ball, $\mathcal{B}_\infty^\eps$, which implies that 
\begin{equation}
    \label{eq:normconstraint}
    ||\tilde{\rvx} - \rvx||_\infty \leq \epsilon
\end{equation}

We attempt to bound the probability that a rank constrained adversarial example, $\tilde{\rvx}$, created using \eqref{eq:attack_add}, exists under the constraint defined by ~\eqref{eq:normconstraint}.
\begin{align*}
    \beta_\infty^\epsilon = &  \sP \left(\exists \tilde{\rvx}, \tilde{\rvx}\in\mathcal{B}_\infty^\epsilon s.t. \inner{y\tilde{\rvx},\hat{\vw}} \leq 0 \right) \\
    = & \sP(\exists \delta\ : \ ||\rmU\rmU^T \delta||_\infty \leq \epsilon, \inner{y (\rvx + \rmU\rmU^T \delta) , \hat{\vw}} \leq 0) \\
    = & \sP(\inner{y\rvx, \hat{\vw}} + \min\limits_{||\rmU\rmU^T \delta||_\infty \leq \epsilon} \inner{y\rmU\rmU^T\delta, \hat{\vw}} \leq 0)
\end{align*}

Let $\delta'\triangleq \rmU^T\delta$.

Now,
\begin{equation}
  \label{eq:minprob}
  \beta_\infty^\epsilon = \sP\left(\inner{y\rvx, \hat{\vw}} + \min\limits_{||\rmU\delta'|| \leq \epsilon} \inner{y\rmU\delta', \hat{\vw}} \leq 0\right)
\end{equation}

Consider the domain of the minimization, 
\begin{align*}
    & ||\rmU\delta'||_\infty \leq \epsilon \\
\end{align*}
Now using the definition of the $(1,\infty)$ operator norm for rectangular matrices~(See \cite{boyd2004}, Sec A.1.5) and the fact that $\rmU$ is orthonormal, 


\begin{align*}
    ||\delta'||_\infty \leq ||\delta'||_1 = ||\rmU^T \rmU \delta'||_1 \leq ||\rmU^T||_{1,\infty}||\rmU \delta'||_\infty \leq ||\rmU||_{\infty, 1}\epsilon
\end{align*}


Let set $A \triangleq \{\delta' : ||\delta'||_\infty \leq ||\rmU||_{\infty, 1}\epsilon \}$ and set $B \triangleq \{\delta': ||\rmU\delta'||_\infty \leq \epsilon\}$. We can clearly see that $B \subseteq A$. 
Now considering the $f=\inner{y\rmU\delta',\hat{w}}$, as 
\begin{equation}
    \label{eq:minim_incl}
    \min_A f \leq \min_B f \nonumber
\end{equation}
Thus we show that,
\begin{equation}
\inner{y\rvx, \hat{\vw}} + \min_A f \leq \inner{y\rvx, \hat{\vw}} + \min_B f
\end{equation}

From the above inequality, $RHS \leq 0 \implies LHS \leq 0$ but not vice versa.

By using the inclusion argument of probability measure, we can therefore show that, 
\begin{equation}
    \label{eq:minim_inclusion}
    \sP\left(\inner{y\rvx, \hat{\vw}} + \min_B f \leq 0 \right) \leq \sP\left(\inner{y\rvx, \hat{\vw}} + \min_A f \leq 0\right)
\end{equation}

We now upper bound the $RHS$ term using the same argument as that of Lemma 20 in \cite{schmidt2018adversarially}.

\begin{align*}
    & \sP\left( \inner{y\rvx, \hat{\vw}} + \min\limits_{||\delta'||_\infty \leq ||\rmU||_{\infty, 1}\epsilon} \inner{y\rmU\delta', \hat{\vw}} \leq 0 \right) \\
    = & \sP\left( \inner{y\rvx, \hat{\vw}} + \min\limits_{||\delta'||_\infty \leq ||\rmU||_{\infty, 1}\epsilon} y\hat{\vw}^T\rmU\delta' \leq 0 \right) \\
    & \text{Let $\bar{\vw} \triangleq \hat{\vw}^T\rmU$}\\
    = & \sP\left( \inner{y\rvx, \hat{\vw}} + \min\limits_{||\delta'||_\infty \leq ||\rmU||_{\infty, 1}\epsilon} y\bar{\vw}\delta' \leq 0 \right)
\end{align*}
We now drop $y$ as the constraint is symmetric and use definition of dual norm,
\begin{align*}
    & \sP\left(\inner{y\rvx, \hat{\vw}} - ||\rmU||_{\infty, 1}||\bar{\vw}||_{\infty}^\star\epsilon \leq 0\right) \\
    = & \sP\left(\inner{y\rvx, \hat{\vw}} \leq ||\rmU||_{\infty, 1}||\bar{\vw}||^\star_\infty \epsilon\right) \\
    = & \sP\left(\inner{y\rvx, \hat{\vw}} \leq k ||\rmU||_{\infty, 1}||\bar{\vw}||_\infty \epsilon\right)
\end{align*}

We now invoke Lemma 17 from \cite{schmidt2018adversarially} with $\mu=\theta^\star$ and $\rho=||\rmU||_{\infty, 1}\epsilon||\bar{\vw}||_\infty$ to bound the $RHS$, 
\begin{equation}
\label{eq:proof_upperbound}
\beta \leq \exp\left(-\frac{\left(\inner{\hat{\vw}, \theta^{\star}} - k ||\rmU||_{\infty,1} ||\hat{\vw}^T\rmU||_\infty \epsilon \right)^2}{2\sigma^2}\right)
\end{equation}

\end{proof}

\section{Details of Experiments}

\noindent\textbf{Dataset:} For our experiments, we use the CelebA dataset~\cite{liu2015faceattributes}. The dataset has approximately 200k images of faces. Each image is annotated with $65$ binary attributes. Examples of these attributes are gender, age and skin complexion. We preprocess the images by cropping the central $178\times178$ sub-image and resizing each crop to $256\times256$. The resized images are then normalized to be between $-1$ and $1$.\\ 

\noindent\textbf{Target Binary Classifier:} We attack a pre-trained gender binary classifier using our approach. The architecture used for the classifier is shown in \Tableref{t:arch}. We train the classifier with 70\% of the CelebA dataset~\cite{liu2015faceattributes} as training data and 20\% as validation data using categorical cross-entropy. We use ADAM~\cite{kingma2015adam} as our optimizer. Our model is 95.6\% accurate on the test set (10\% of the dataset).  We additionally train a binary \emph{age} classifier with the same architecture.

\begin{table}[htp]
    \centering
    \begin{tabular}{c c}
        \toprule 
        \textbf{Layers} & \textbf{Size} \\
        \midrule 
        Convolutional Layer with Relu & 32x3x3 \\
        Maxpooling Layer & 2x2 \\
        Convolutional Layer with Relu & 64x3x3 \\
        Maxpooling Layer & 2x2 \\
        Convolutional Layer with Relu & 128x3x3 \\
        Maxpooling Layer & 2x2 \\
        Fully Connected Layer & 1024 \\
        Fully Connected Layer & 2  \\
        \bottomrule
    \end{tabular}
    \caption{\sl Architecture for the binary classifier. }
     \label{t:arch}
\end{table}

\subsection*{Adversarial Fader Networks}

\noindent\textbf{Architecture of Fader Networks.} Fader Networks are an encoder-decoder architecture that disentangles semantic attributes during the reconstruction process. This is achieved by training a discriminator on the encoded latent vector while simultaneously reconstructing the original image from the concatenated latent vector and the semantic attribute vector. \Figref{fig:arch_attrib} shows the architecture of the Fader Networks. An intriguing effect of the training process is that the attribute vector space can be treated as a continuous and bounded space. We further can optimize over this space to generate adversarial examples. \\

\begin{figure}[ht]
    \centering
    \begin{minipage}{0.48\linewidth}
        \includegraphics[width=\linewidth]{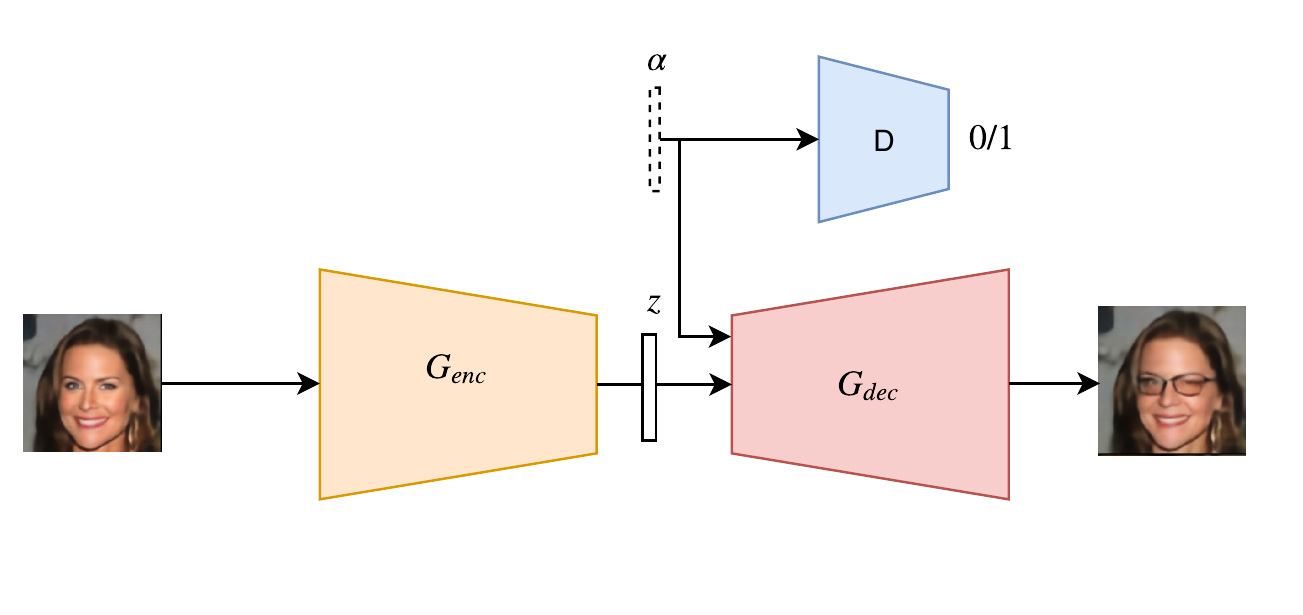} 
    \caption{\sl Architecture of Fader Networks. The encoder converts the input image to a latent vector. The decoder takes as input the latent and the attribute vectors to generate the transformed image. Here, the discriminative classifier acts as an adversarial network to decouple the underlying invariant data from the semantic attributes.}
    \label{fig:arch_attrib}
    \end{minipage}
    \begin{minipage}{0.5\linewidth}
     \centering
            \includegraphics[width=0.95\columnwidth]{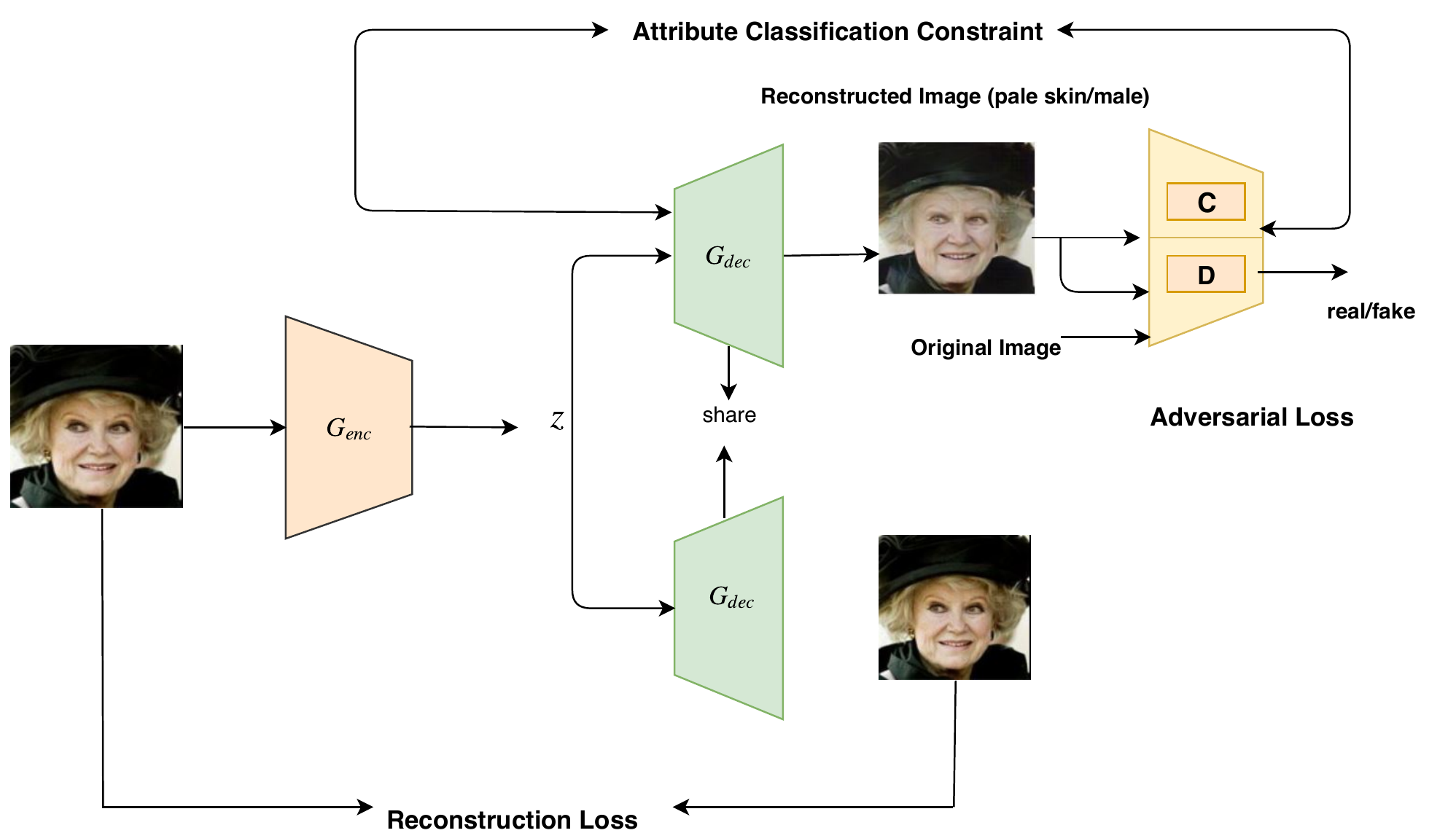} 
    \caption{\sl Architecture of AttGAN Networks. As compared to Fader Networks, the discriminator/classifier pair is used to analyse the reconstruction of the image with the original semantic attributes. Enforcing the decoder to construct both the original and the semantically transformed image results in the decoupling of the semantic and the invariant data.}
     \label{fig:attgan_arch}
     \end{minipage}
\end{figure}

\begin{table*}
    \centering
   \resizebox{\linewidth}{!}{
    \begin{tabular}{p{0.3\columnwidth} p{0.4\columnwidth} p{0.25\columnwidth} p{0.25\columnwidth}}
    \toprule[2pt]
   Attack Type &  Attributes & Accuracy of target model (\%) & Random Sampling (\%)\\
    \midrule[1.5pt]
    \multirow{3}{0.3\columnwidth}{Single Attribute Attack}        & A1 & 70.0 & 87.0 \\
                                                    & A2 & 61.0 & 93.0 \\
                                                    & A3 & 48.0 & 88.0 \\
    \midrule
    \multirow{3}{0.3\columnwidth}{Multi Attribute Attack}         & A1,A5,A6 & 12.0 & 86.0 \\
                                                    & A2,A5,A6 & 7.00 & 85.0 \\
                                                    & A1,A2,A7 & 28.0 & 84.0 \\
    \midrule
    \multirow{2}{0.3\columnwidth}{Cascaded Multi Attribute Attack} & A1-A2-A3 & 30.0 & 68.0 \\
                                                    & A1-A3-A4 & 31.0 & 80.0 \\ 
                                                    & A2-A3-A4 & 42.0 & 68.0 \\
                                                    
    \bottomrule[1.5pt]
    \end{tabular}
    }
    \caption{\sl Performance of the Semantic Adversarial Example under various Adversarial Fader Networks implementations for the binary age classifier. Legend for attributes: A1-Eyeglasses, A2-Gender, A3-Nose shape, A4-Eye shape, A5-Chubbiness, A6-Pale Skin, A7-Smiling. Observe that as the number of perturbed attributes increase, the semantic attacks become more effective. 
    In comparison to worst-of-10 random sampling~\cite{engstrom2019a} of the attribute space, our optimization framework is more effective at finding semantic adversarial examples. 
    Note that the performance of our semantic attacks fare very well to decrease the accuracy of the age classifier as well like the gender classifier.  
    }
    \label{t:fader_perf_apdx}
\end{table*}

 \noindent\textbf{Single and Multi Attribute Attacks.} We train three multi-attribute Fader networks with attributes presented in~\tableref{t:fader_perf_apdx}. The pre-trained Fader networks are used as semantic constraints with the attribute vectors as the optimization variables. We then process examples from the CelebA test set with the semantic attack algorithm to generate adversarial examples. In order to make our optimization algorithm compatible with Fader Networks, we create a non-parametric forward model to convert the attribute vector to a compatible form. We call this forward model ``Attribute Encoding''. 
 
 We generate semantic adversarial images by optimizing over a modified Carlini-Wagner loss~\cite{Carlini2017cwl2} with respect to the attribute vectors using ADAM~\cite{kingma2015adam} with a learning rate of $0.01$. We also experimented with various other optimizers including stochastic gradient descent, RMSProp~\cite{rmsprop}, but find that ADAM generates sharper images as well is the most successful. 

Our experiments show that successful multi-attribute models tend to be deeper and wider. In addition, these networks are extremely susceptible to mode collapse unless the hyperparameters are carefully tuned. We hypothesize that this is an effect of the strong coupling of facial attributes, thus making the generator-discriminator optimization difficult. An unconditioned generative neural network generally learns to associate these entangled representations to a latent vector space where dimension represents some combination of attributes. In order to get past this, we model the multi-attribute perturbation problem as a sequential perturbation of single attributes.\\

\noindent\textbf{Cascaded Attribute Attack.}
For the cascaded attribute attack, we cascade several smaller single attribute models one after the other to sequentially transform the input image. In this case, the problem of decoupling facial features from the underlying invariant data is divided among multiple models. The transformed image is then input to the target model. We generate adversarial examples as in the previous two cases for the CelebA test set by optimizing the Carlini-Wagner loss. In this case, we also modify the attribute encoding module to treat each attribute tuple separately.

We find that the semantically transformed images tend to be less sharp as compared to the ones generated single or multi-attribute attacks. This can be attributed to the concatenation of several reconstruction steps. Sequential reconstruction leads to loss of information and the reconstruction error compounding.\\

\noindent\textbf{Attribute Encoding.} Each attribute is represented by a tuple of real numbers that sum up to one. These tuples are concatenated into an attribute vector.  To ensure that this structure is preserved over the optimization framework, we use a non-parametric forward model to algebraically manipulate our optimization variables to this specific representation. The encoding module also implements the box constraint for the optimized attribute values to lie between $-3.0$ and $3.0$ in order to ensure that the generated images are valid.

\subsection*{Adversarial Attribute GANs}

\noindent\textbf{Architecture of Attribute GANs.} Attribute GANs~\cite{He2017AttGANFA} improve upon Fader Networks by using a discriminator-classifier pair to analyse the reconstructed images (Refer \Figref{fig:attgan_arch} for the architecture). They optimize over a combination of a reconstruction loss, an adversarial loss and an attribute constraint loss to ensure the editing of the exact desired attribute while preserving the attribute excluding details at the same time. The encoded latent vector is conditioned on the attribute vector during the decoding process. This results in the decoupling of semantic attributes from the underlying identity data. AttGAN takes as input an image and an attribute vector where each element represents an attribute. We select $k$ attributes to perturb for our semantic attack. 

\begin{figure}[ht]
   
\end{figure}

We use a pretrained AttGAN model with $13$ semantic attributes. For our experiments we consider $5$ and $6$ attributes respectively for transforming input images. 

\noindent\textbf{Attacks.} We adapt our adversarial Fader Network approach to the AttGANs by modifying the ``Attribute Encoding'' module to mask attributes that we do not perturb. The encoding module also constrains the elements to lie between $-1.0$ and $1.0$ as required by our algorithm to generate valid images.


\section{Results}

Our additional experiments on the binary age classifier show that our approach is able to generate adversarial examples for other classifiers trained on the CelebA dataset (See \Tableref{t:fader_perf_apdx}). Note that our observations regarding the increasing effectiveness of our attack approach as the number of attributes we perturb increase, holds even for a new classifier. We also compare the performance of our attack with worst-of-10 random sampling (similar to the approach in~\cite{engstrom2019a}.) This proves that our approach is successful at generating semantic adversarial examples. 

\begingroup
\begin{figure}[htp]
    \centering
    \setlength{\tabcolsep}{1pt}
    \renewcommand{\arraystretch}{0.5}
    \begin{tabular}{c c c c || c c c c ||  c c c }
    \includegraphics[width=0.090\textwidth]{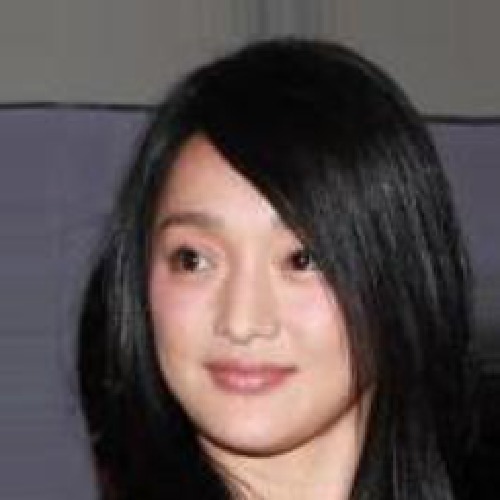} &
    \includegraphics[width=0.090\textwidth]{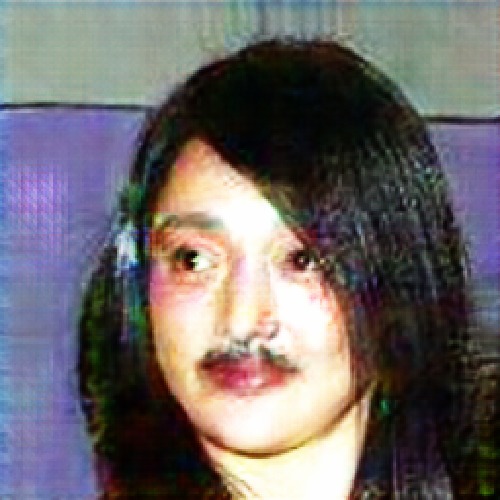} & 
    \includegraphics[width=0.090\textwidth]{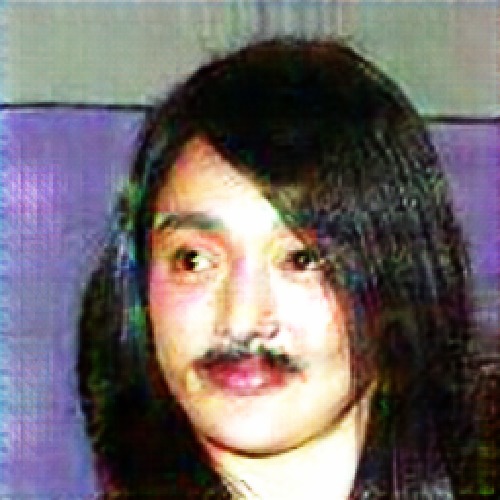} & &
    \includegraphics[width=0.090\textwidth]{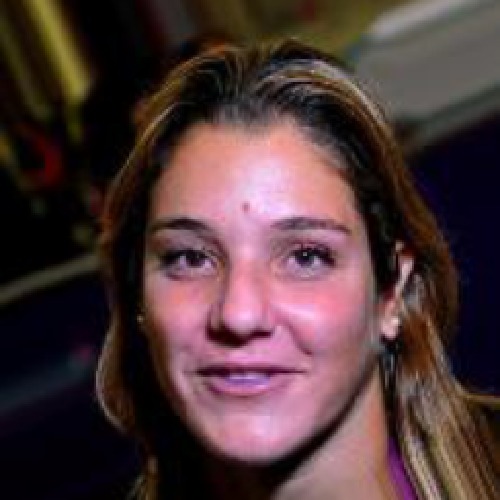} & 
    \includegraphics[width=0.090\textwidth]{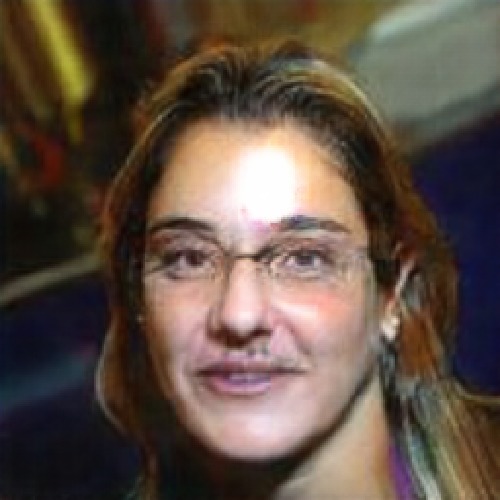} &
    \includegraphics[width=0.090\textwidth]{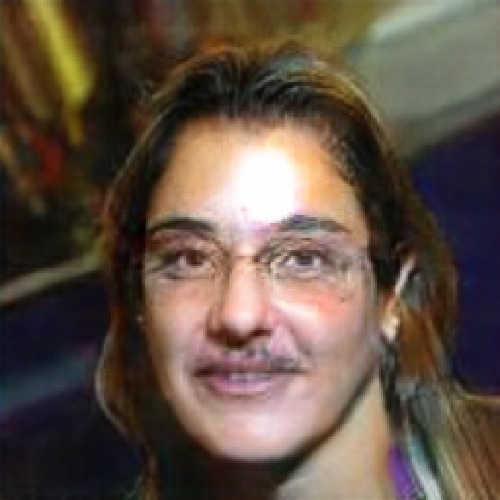} & &
    \includegraphics[width=0.090\textwidth]{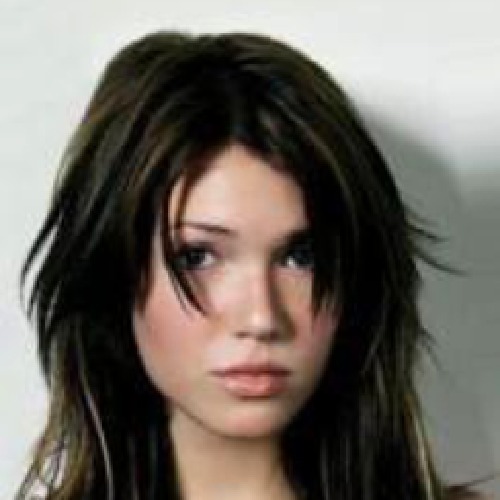} &
    \includegraphics[width=0.090\textwidth]{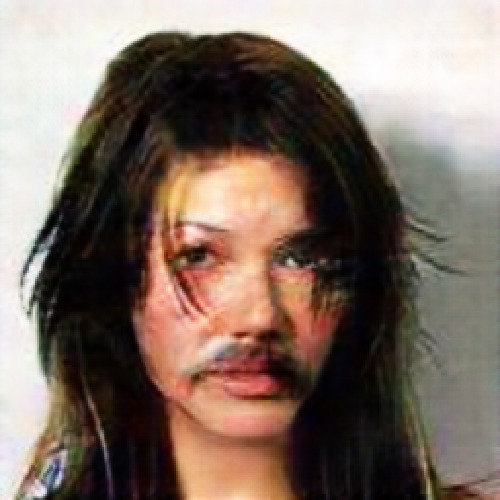} &
    \includegraphics[width=0.090\textwidth]{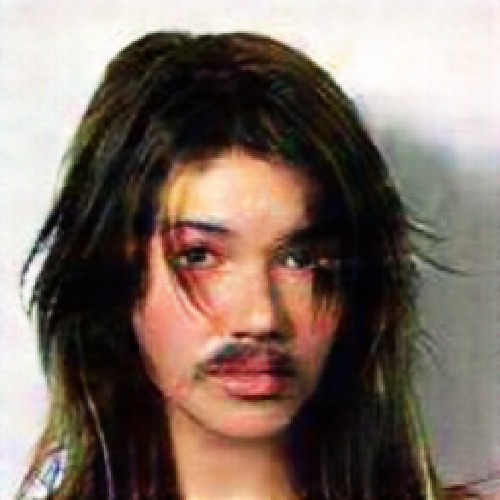} \\ 
    \includegraphics[width=0.090\textwidth]{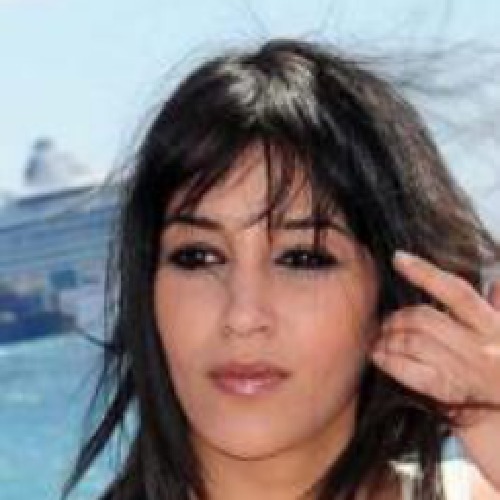} &
    \includegraphics[width=0.090\textwidth]{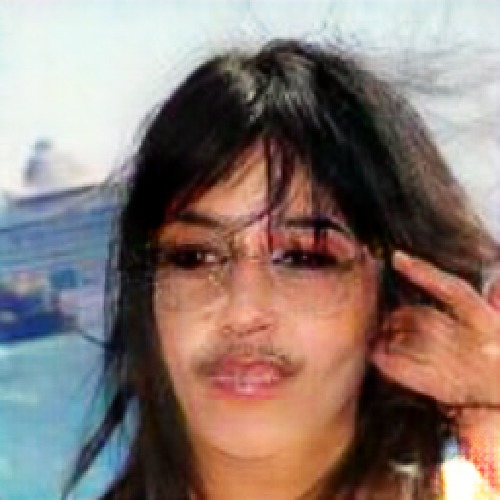} & 
    \includegraphics[width=0.090\textwidth]{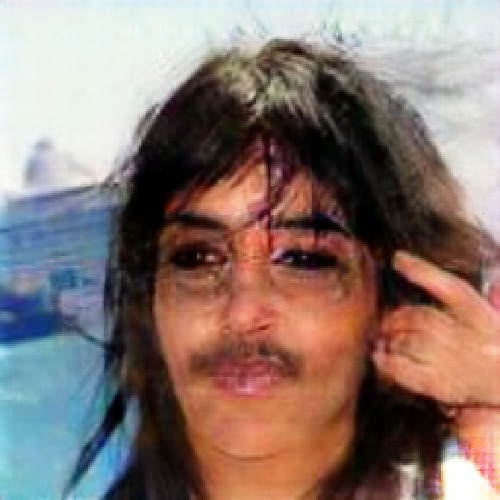} & &
    \includegraphics[width=0.090\textwidth]{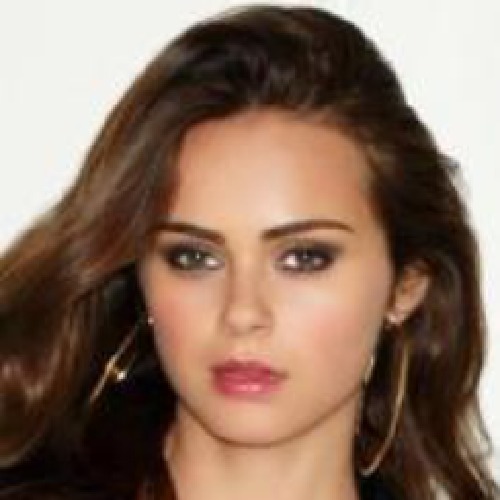} & 
    \includegraphics[width=0.090\textwidth]{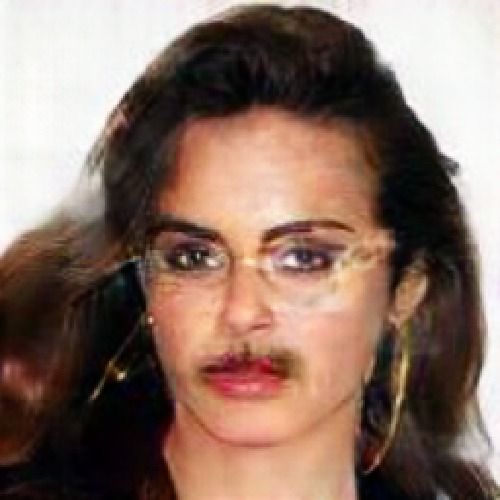} &
    \includegraphics[width=0.090\textwidth]{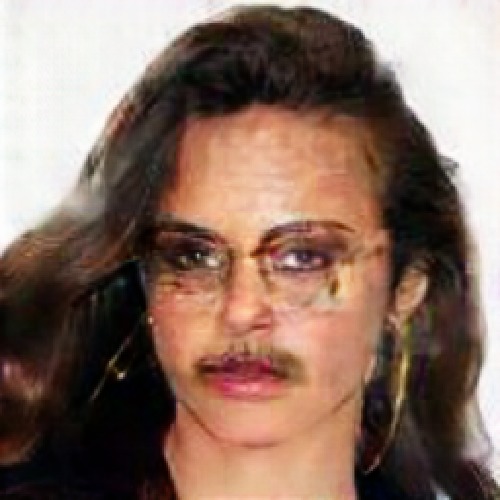} & &
    \includegraphics[width=0.090\textwidth]{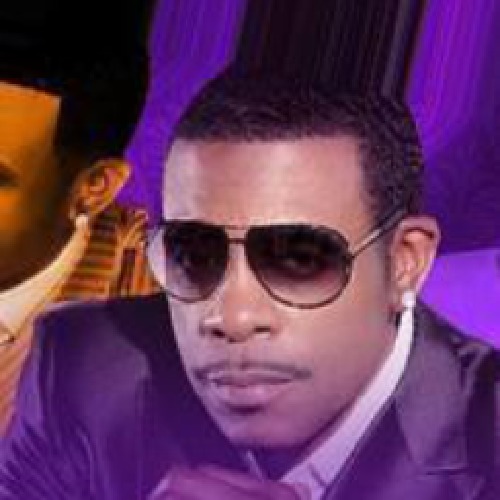} &
    \includegraphics[width=0.090\textwidth]{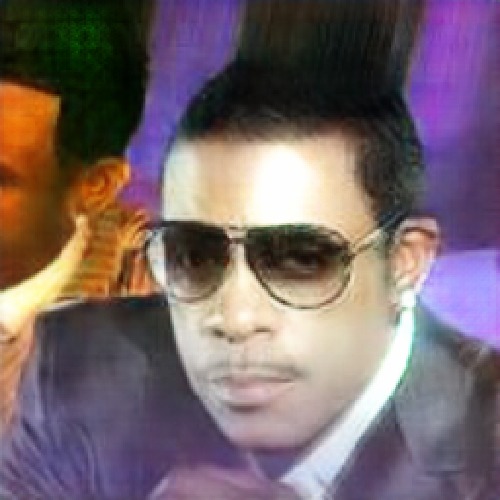} &
    \includegraphics[width=0.090\textwidth]{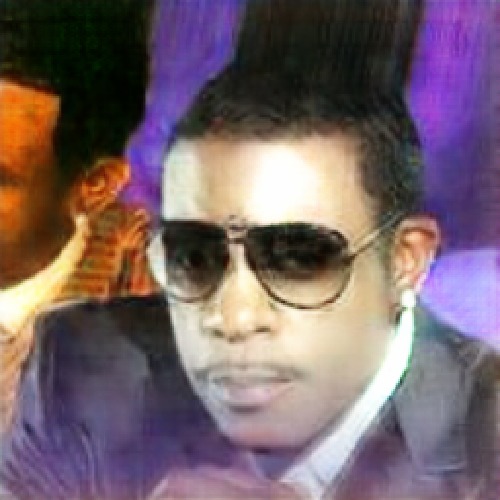} \\ 
    \includegraphics[width=0.090\textwidth]{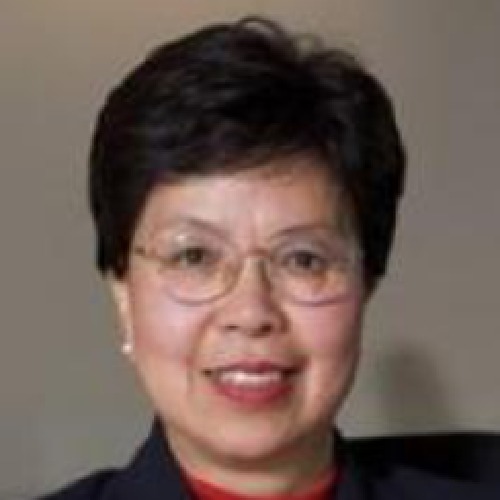} &
    \includegraphics[width=0.090\textwidth]{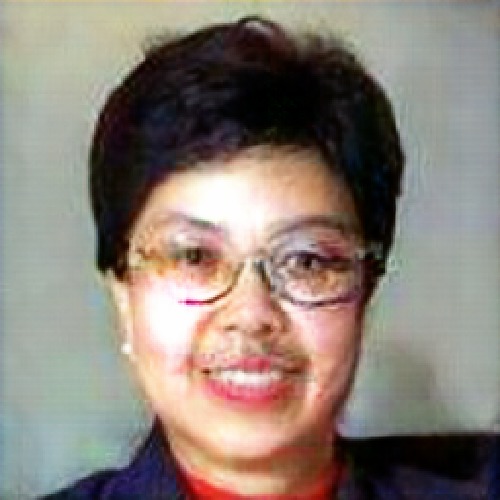} & 
    \includegraphics[width=0.090\textwidth]{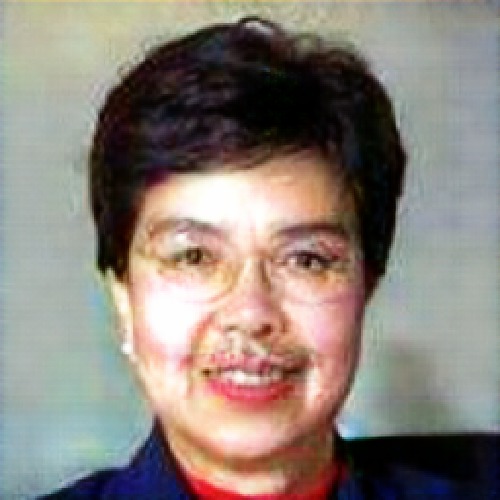} & &
    \includegraphics[width=0.090\textwidth]{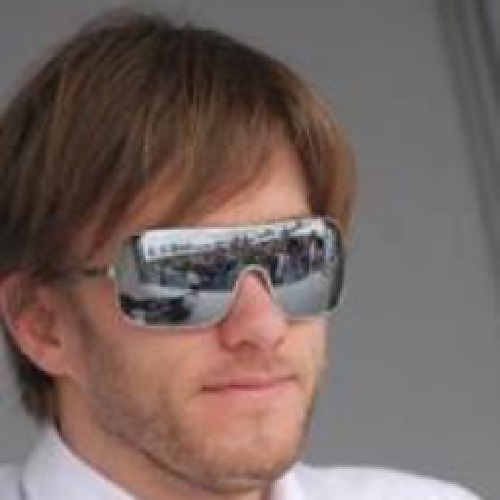} & 
    \includegraphics[width=0.090\textwidth]{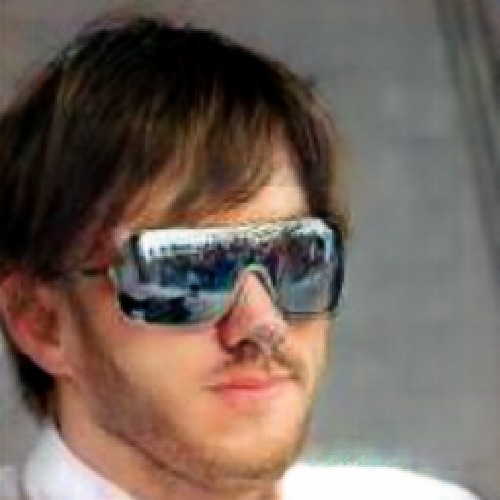} &
    \includegraphics[width=0.090\textwidth]{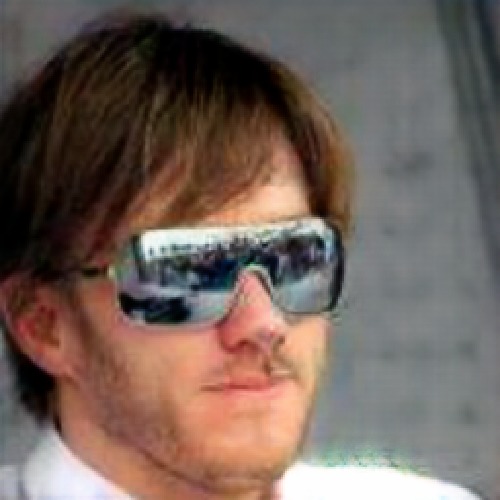} & &
    \includegraphics[width=0.090\textwidth]{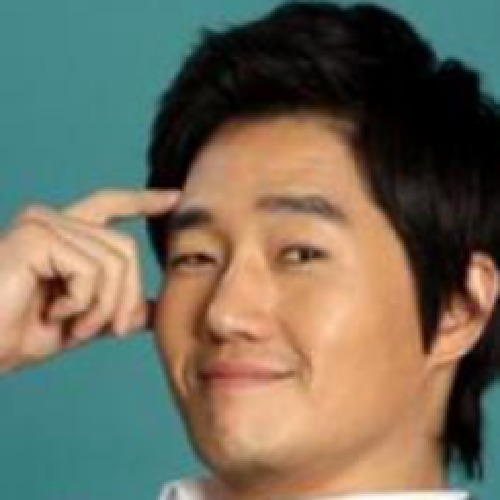} &
    \includegraphics[width=0.090\textwidth]{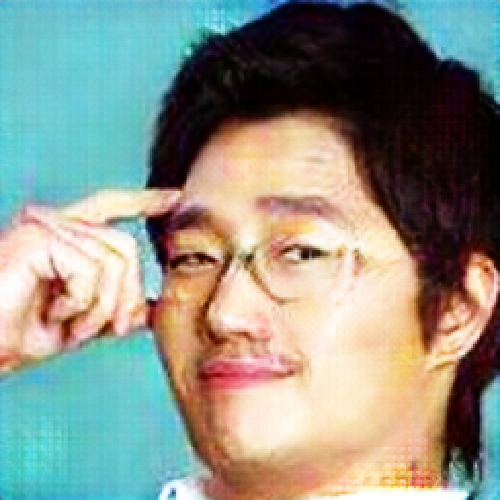} &
    \includegraphics[width=0.090\textwidth]{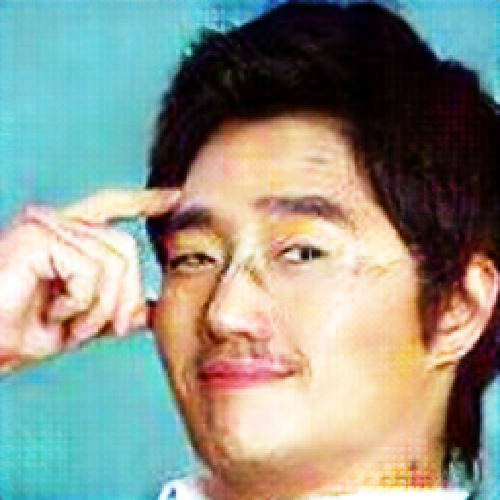} \\ 
    \includegraphics[width=0.090\textwidth]{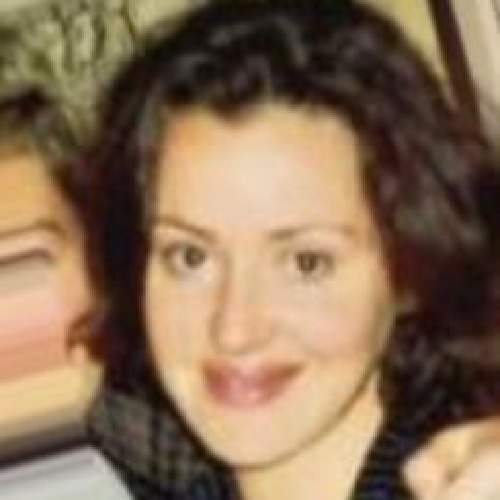} &
    \includegraphics[width=0.090\textwidth]{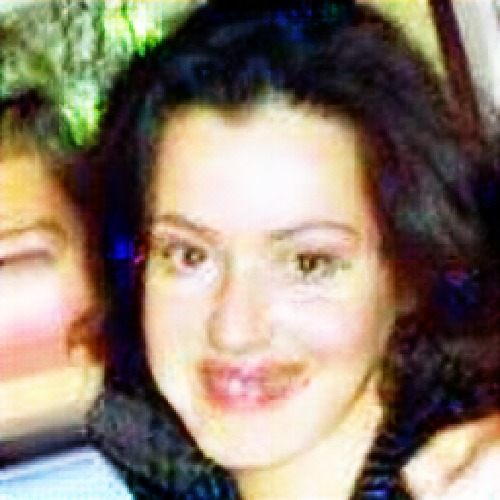} & 
    \includegraphics[width=0.090\textwidth]{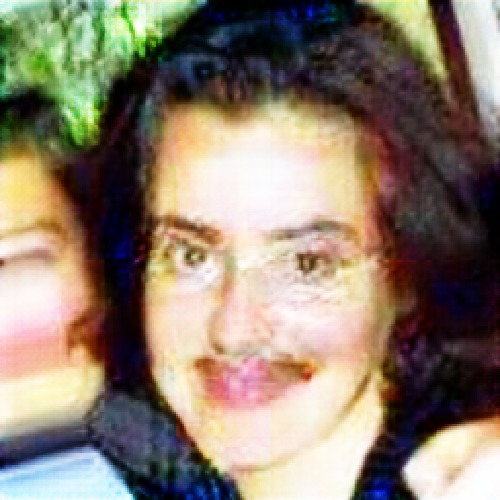} & &
    \includegraphics[width=0.090\textwidth]{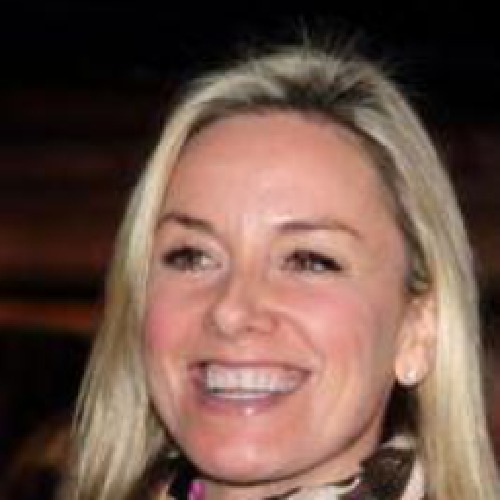} & 
    \includegraphics[width=0.090\textwidth]{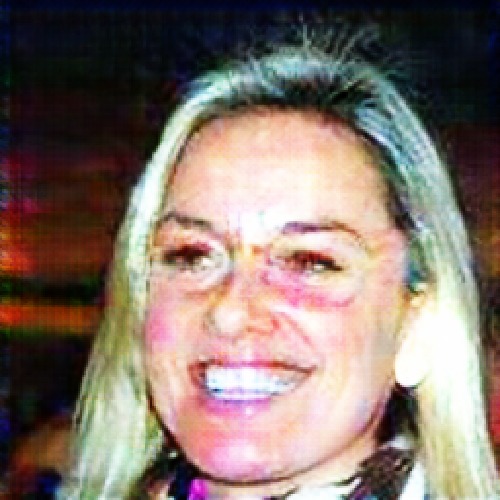} &
    \includegraphics[width=0.090\textwidth]{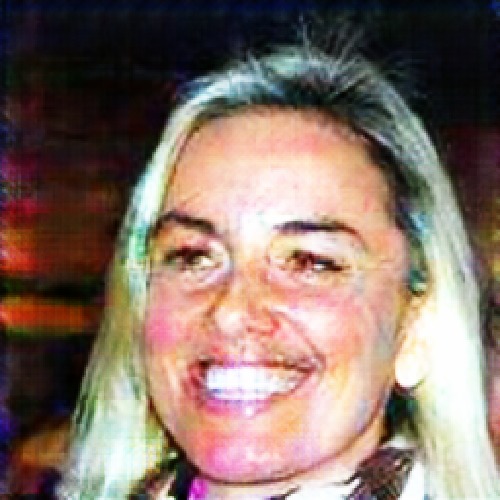} & &
    \includegraphics[width=0.090\textwidth]{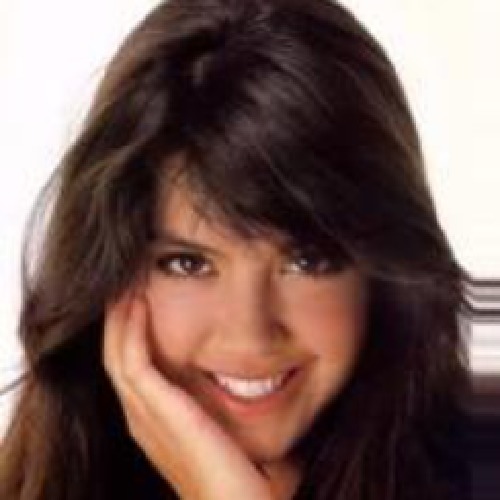} &
    \includegraphics[width=0.090\textwidth]{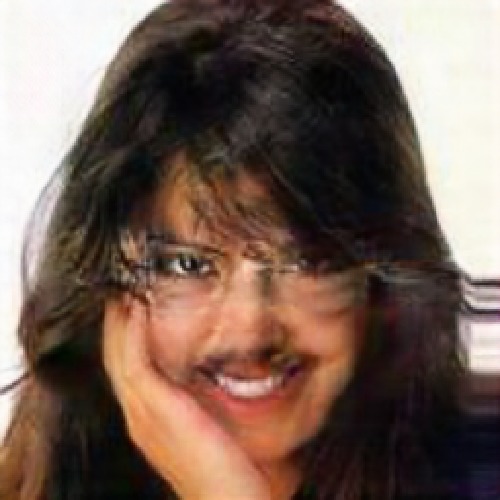} &
    \includegraphics[width=0.090\textwidth]{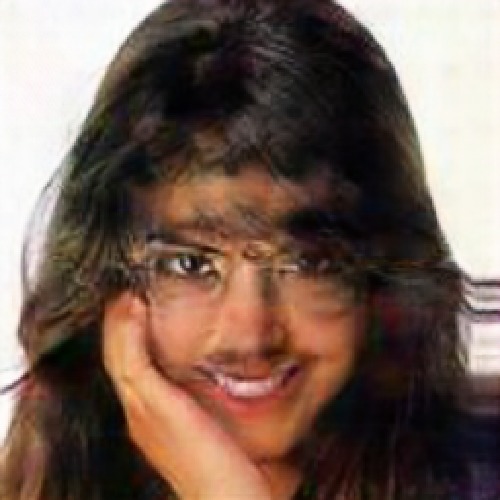} \\ 
    \includegraphics[width=0.090\textwidth]{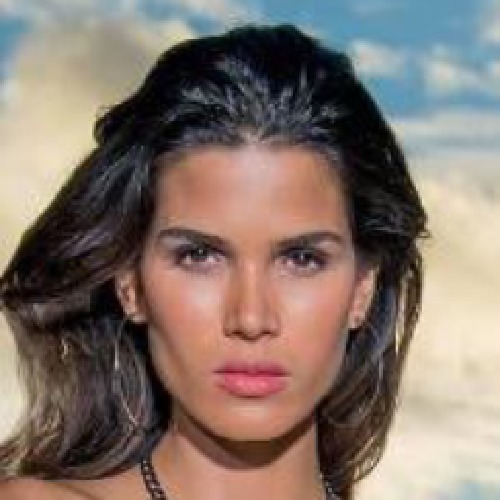} &
    \includegraphics[width=0.090\textwidth]{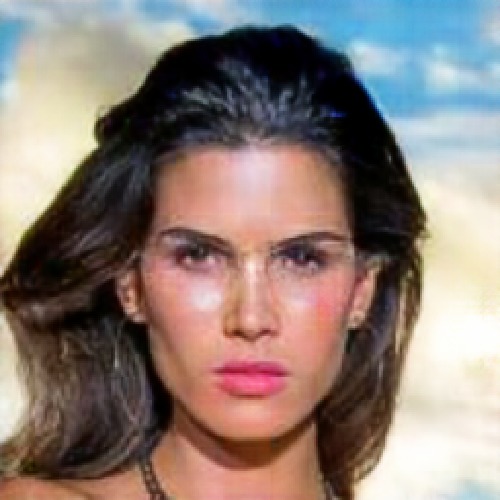} & 
    \includegraphics[width=0.090\textwidth]{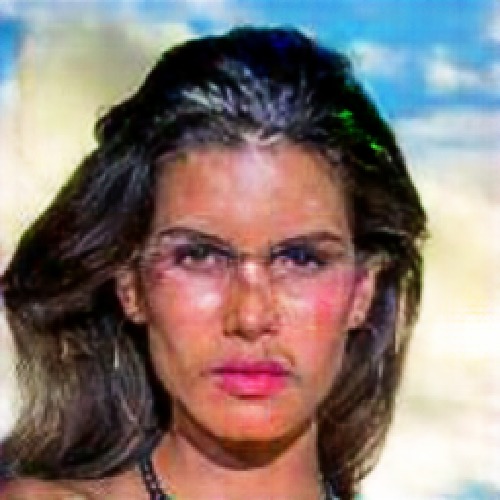} & &
    \includegraphics[width=0.090\textwidth]{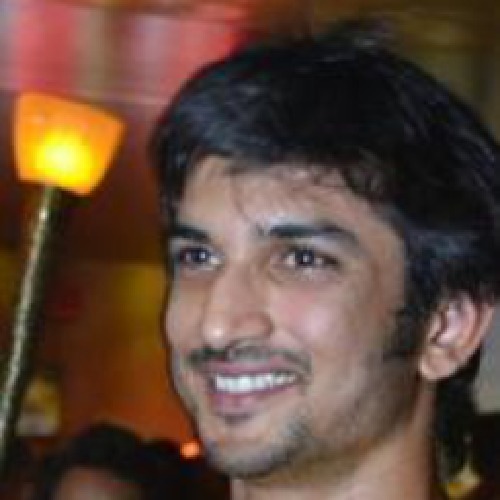} & 
    \includegraphics[width=0.090\textwidth]{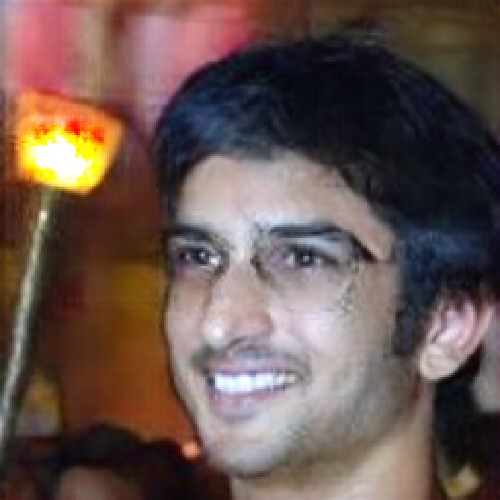} &
    \includegraphics[width=0.090\textwidth]{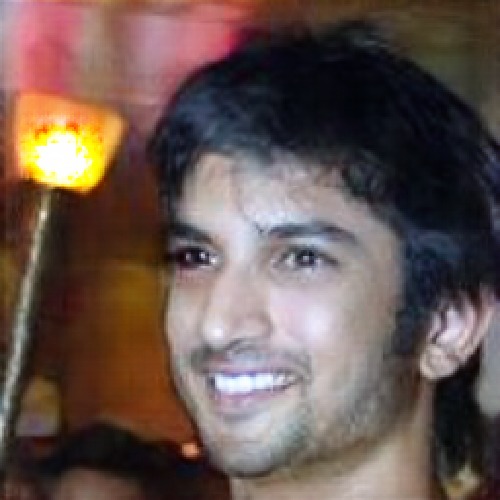} & &
    \includegraphics[width=0.090\textwidth]{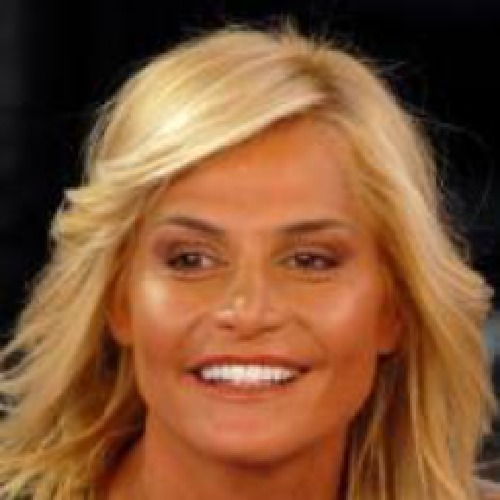} &
    \includegraphics[width=0.090\textwidth]{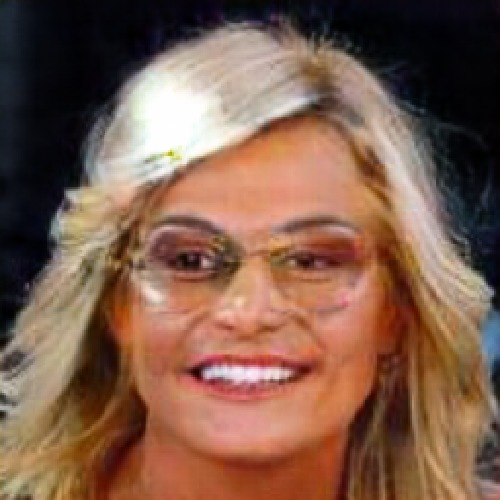} &
    \includegraphics[width=0.090\textwidth]{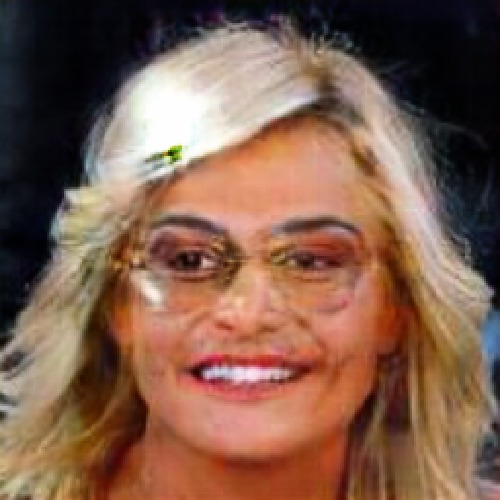} \\ 
    \includegraphics[width=0.090\textwidth]{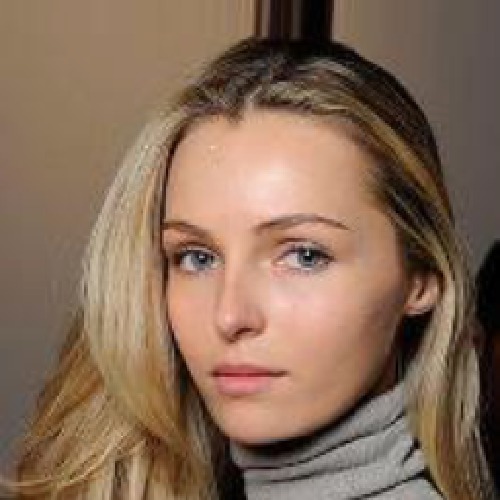} &
    \includegraphics[width=0.090\textwidth]{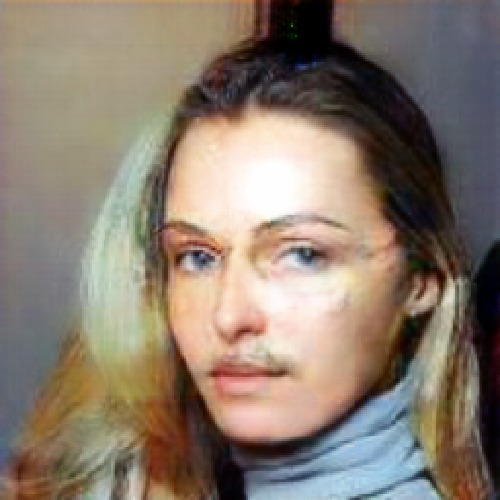} & 
    \includegraphics[width=0.090\textwidth]{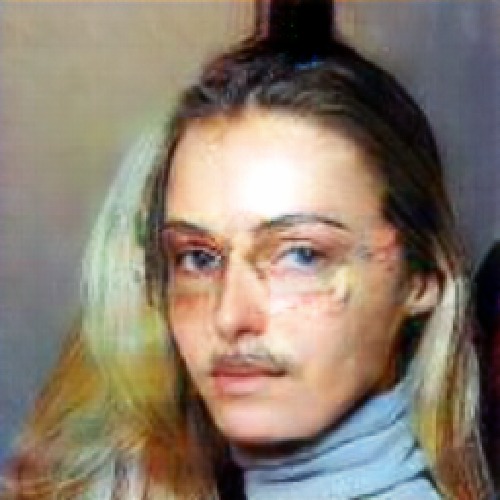} & &
    \includegraphics[width=0.090\textwidth]{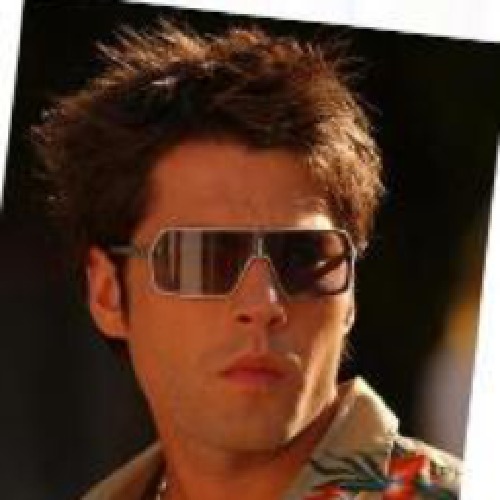} & 
    \includegraphics[width=0.090\textwidth]{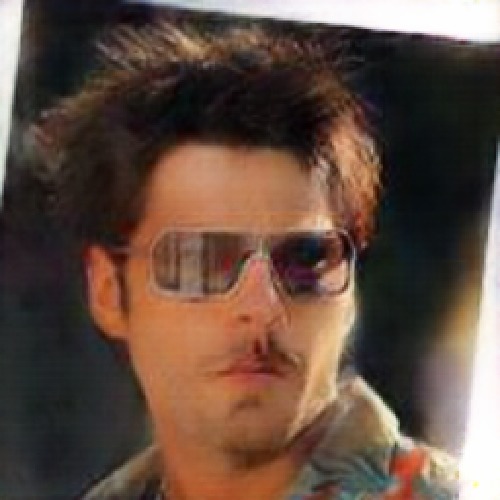} &
    \includegraphics[width=0.090\textwidth]{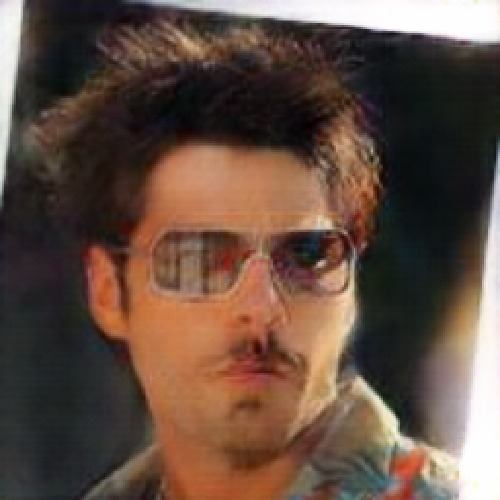} & &
    \includegraphics[width=0.090\textwidth]{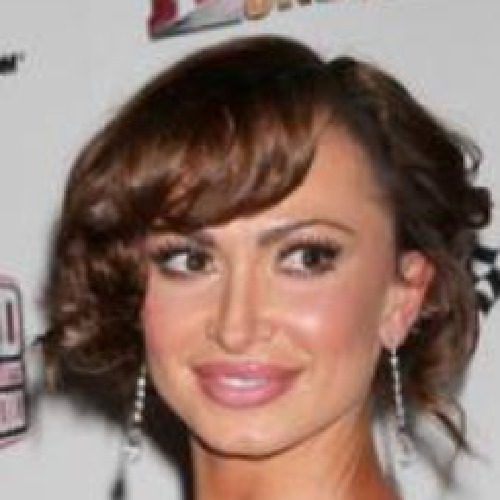} &
    \includegraphics[width=0.090\textwidth]{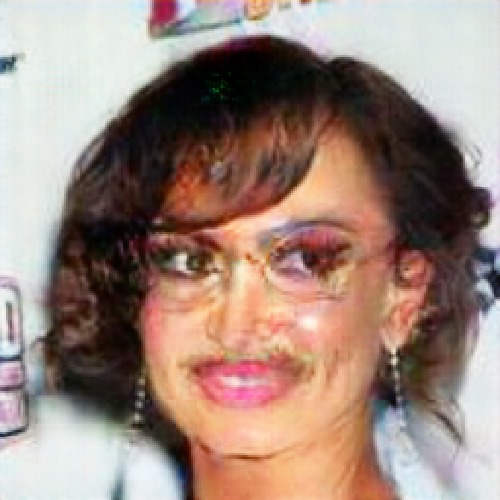} &
    \includegraphics[width=0.090\textwidth]{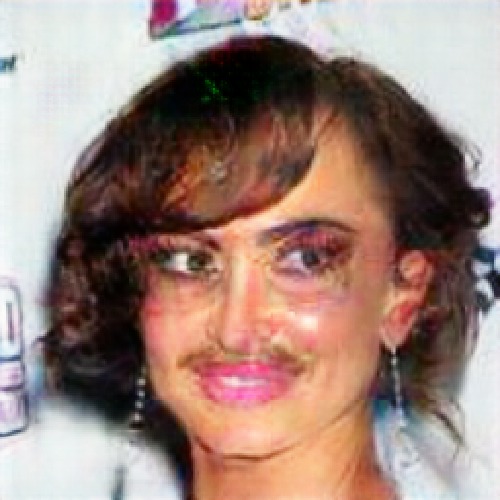} \\ 
    \includegraphics[width=0.090\textwidth]{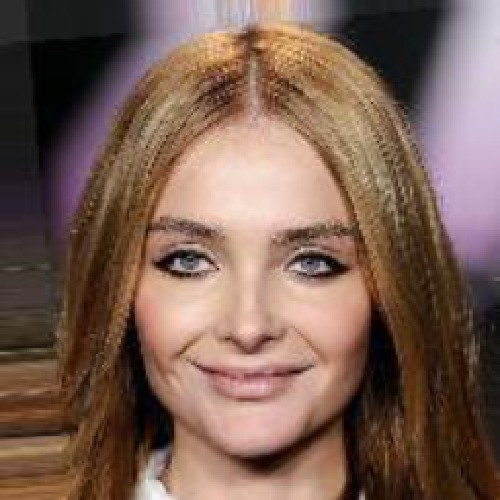} &
    \includegraphics[width=0.090\textwidth]{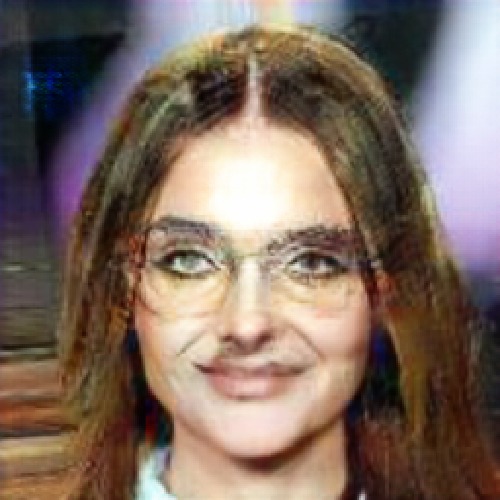} & 
    \includegraphics[width=0.090\textwidth]{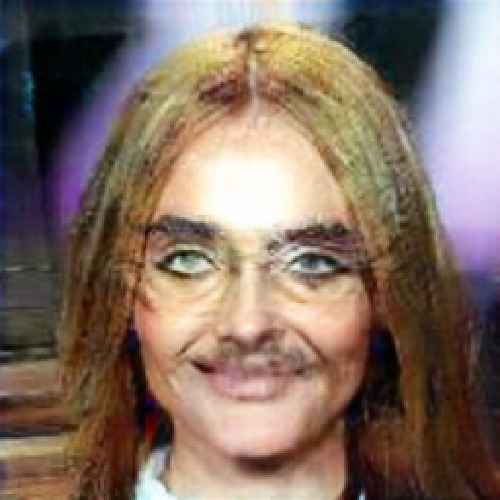} & &
    \includegraphics[width=0.090\textwidth]{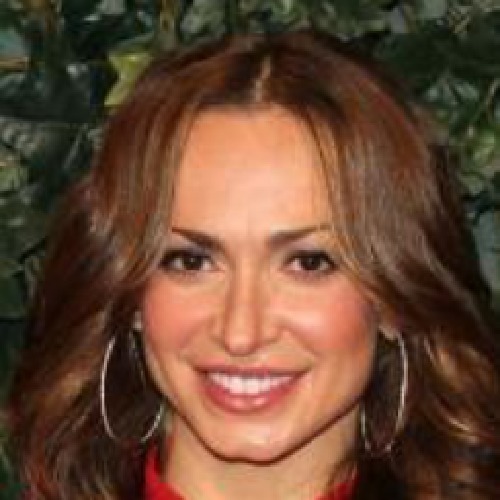} & 
    \includegraphics[width=0.090\textwidth]{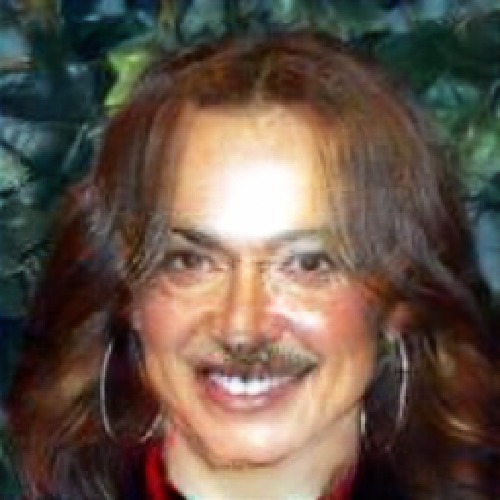} &
    \includegraphics[width=0.090\textwidth]{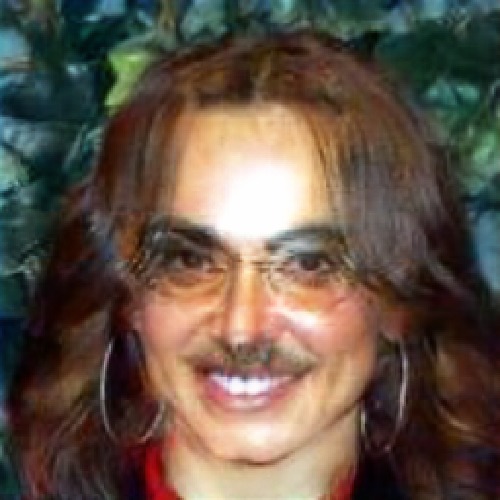} & &
    \includegraphics[width=0.090\textwidth]{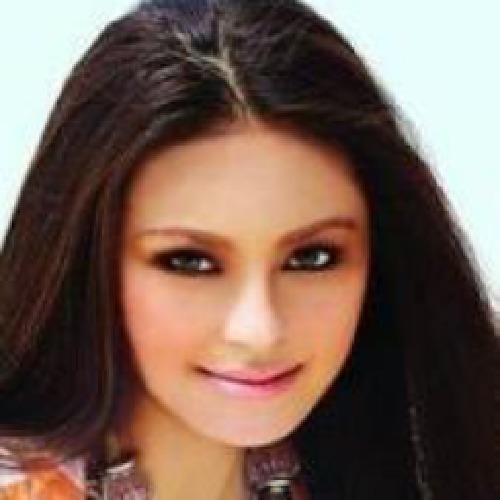} &
    \includegraphics[width=0.090\textwidth]{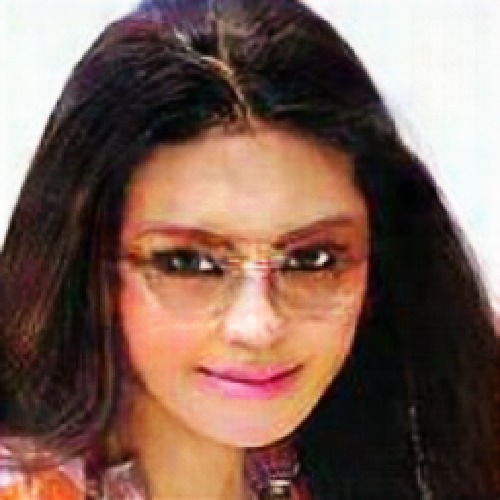} &
    \includegraphics[width=0.090\textwidth]{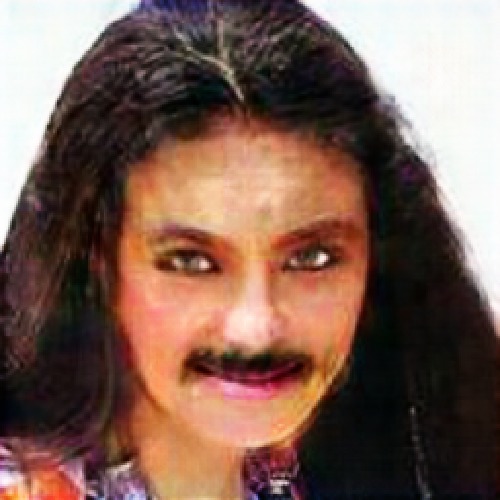} \\ 
    \includegraphics[width=0.090\textwidth]{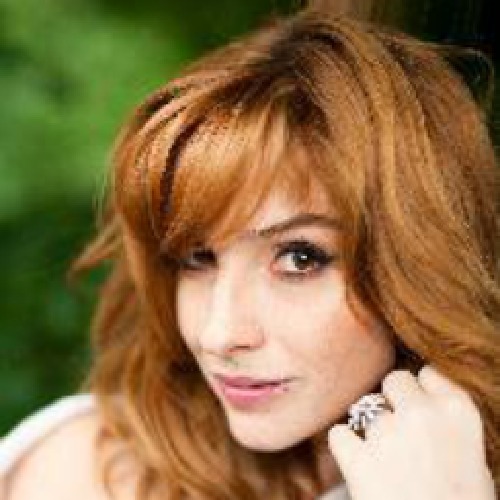} &
    \includegraphics[width=0.090\textwidth]{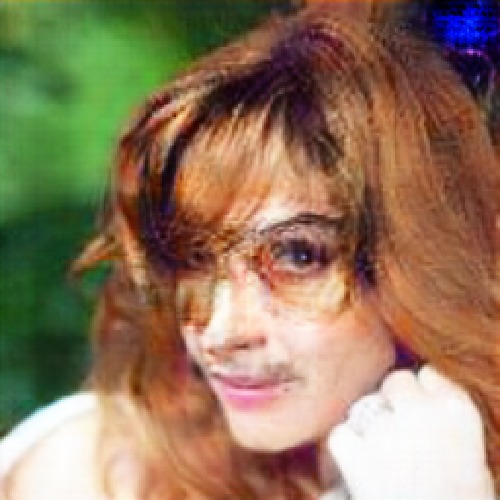} & 
    \includegraphics[width=0.090\textwidth]{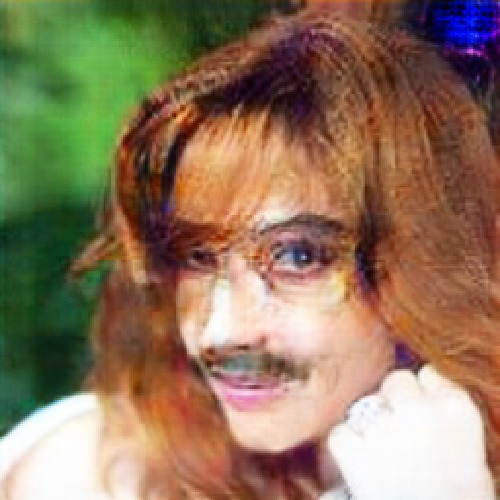} & &
    \includegraphics[width=0.090\textwidth]{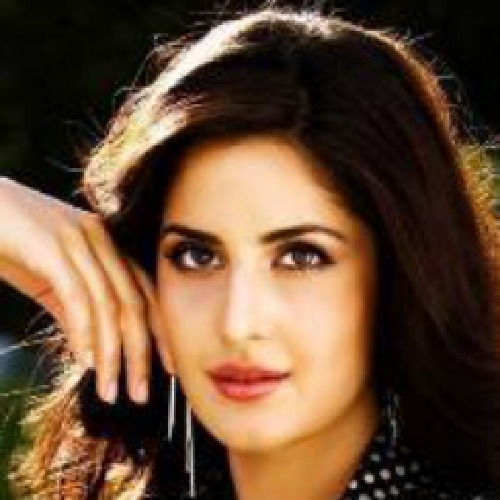} & 
    \includegraphics[width=0.090\textwidth]{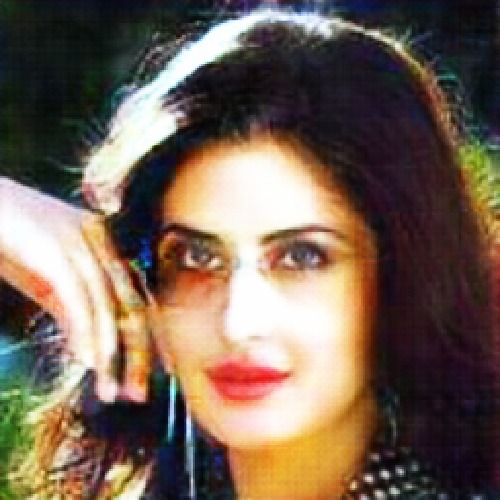} &
    \includegraphics[width=0.090\textwidth]{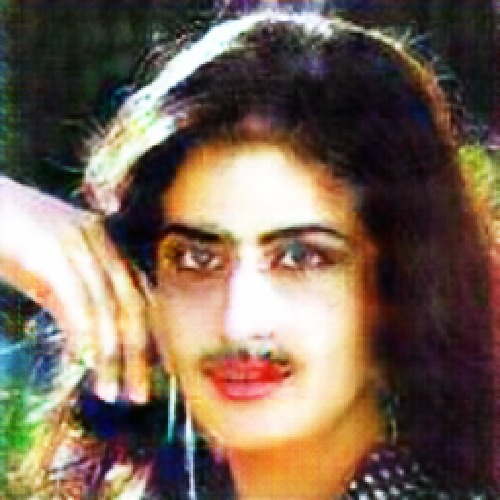} & &
    \includegraphics[width=0.090\textwidth]{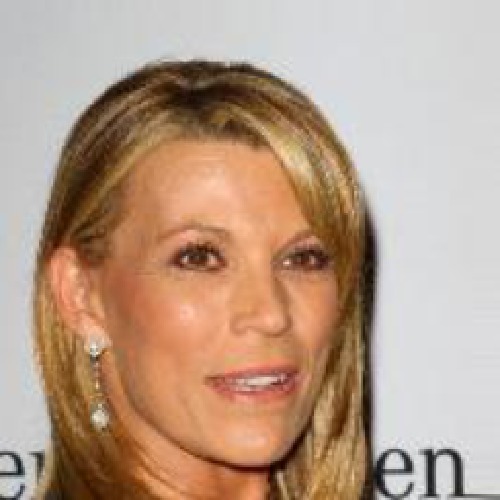} &
    \includegraphics[width=0.090\textwidth]{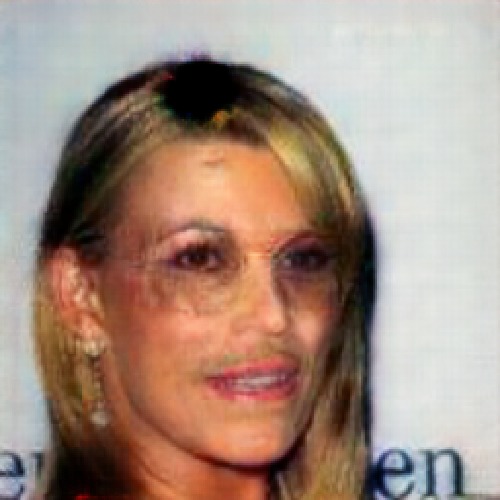} &
    \includegraphics[width=0.090\textwidth]{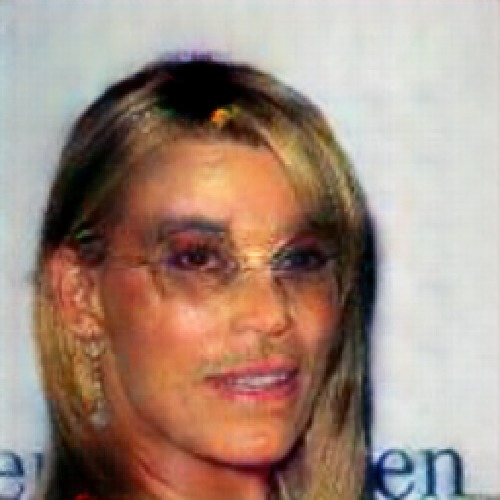} \\ 
    \includegraphics[width=0.090\textwidth]{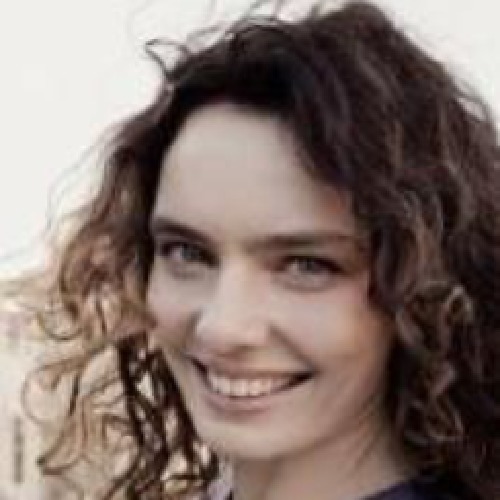} &
    \includegraphics[width=0.090\textwidth]{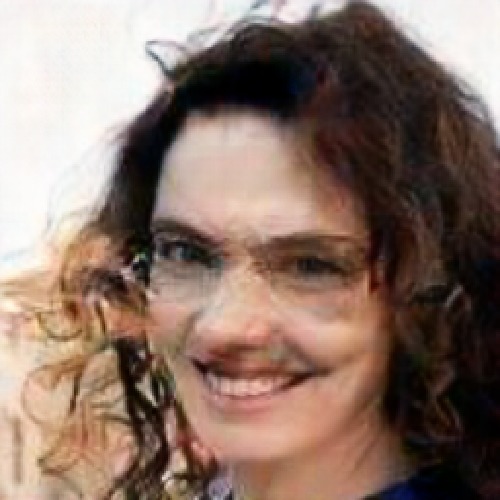} & 
    \includegraphics[width=0.090\textwidth]{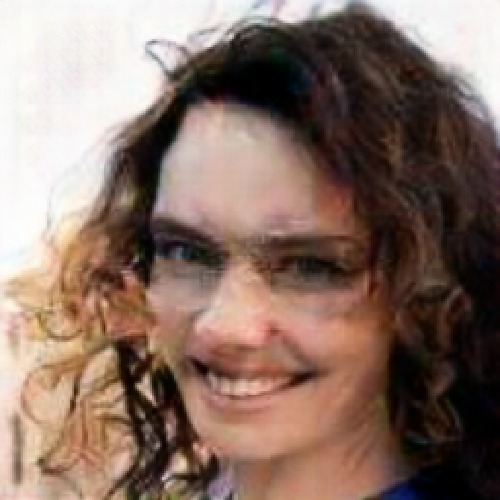} & &
    \includegraphics[width=0.090\textwidth]{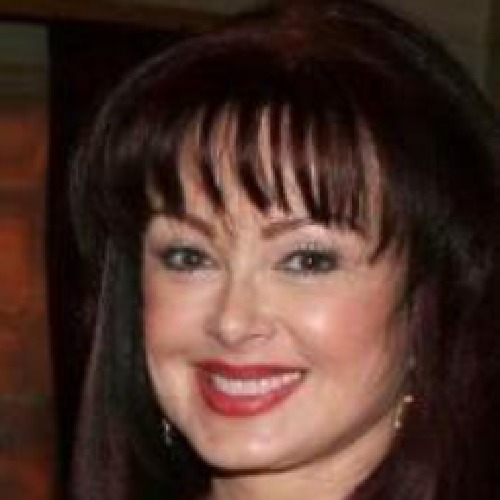} & 
    \includegraphics[width=0.090\textwidth]{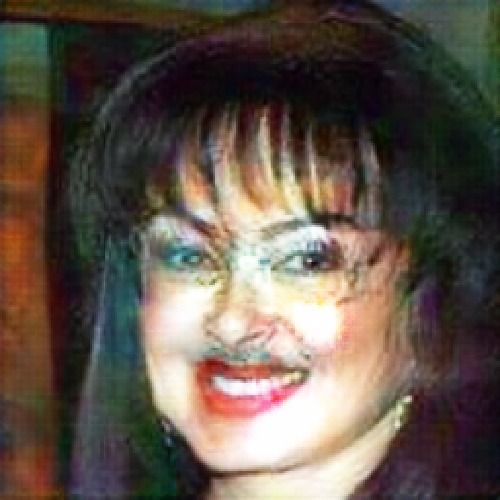} &
    \includegraphics[width=0.090\textwidth]{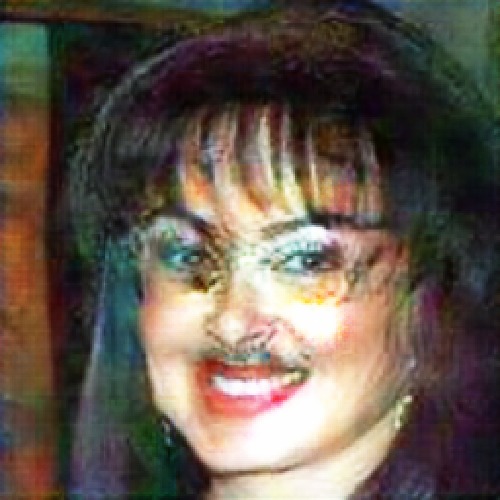} & &
    \includegraphics[width=0.090\textwidth]{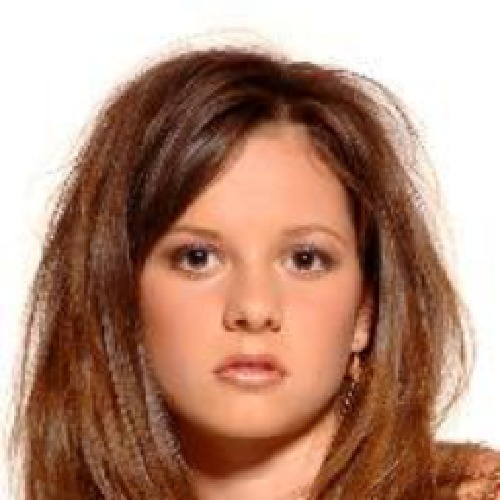} &
    \includegraphics[width=0.090\textwidth]{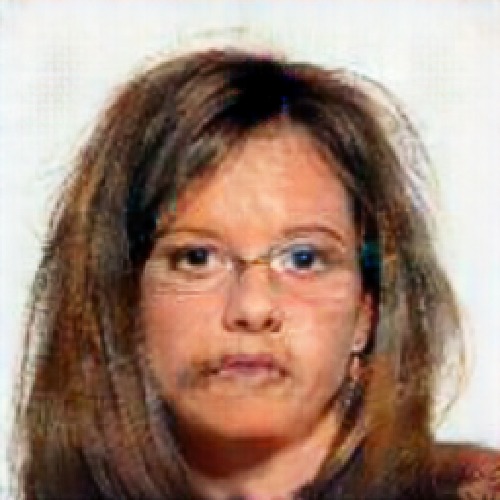} &
    \includegraphics[width=0.090\textwidth]{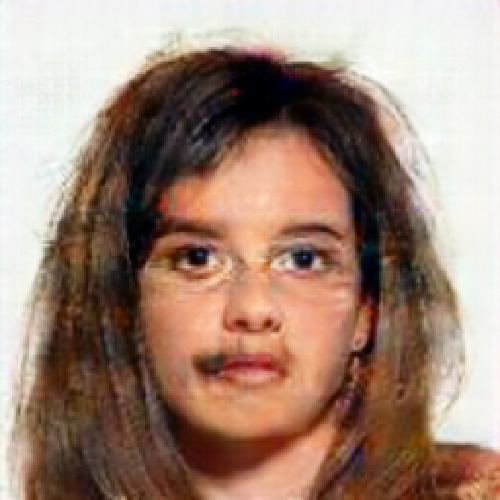} \\ 
    \includegraphics[width=0.090\textwidth]{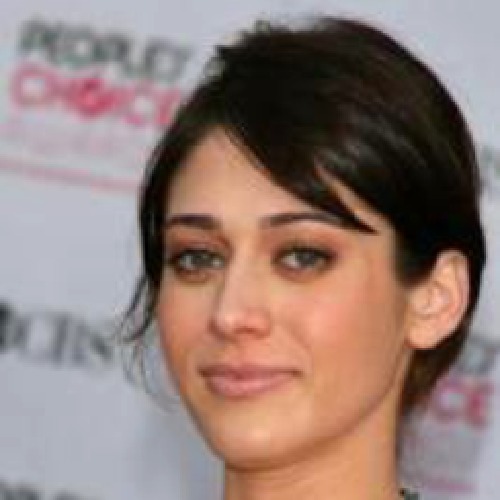} &
    \includegraphics[width=0.090\textwidth]{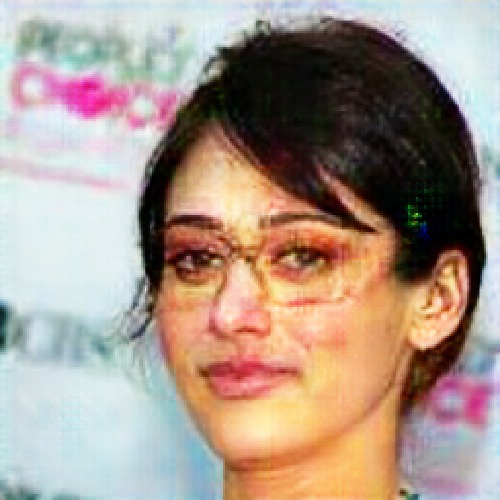} & 
    \includegraphics[width=0.090\textwidth]{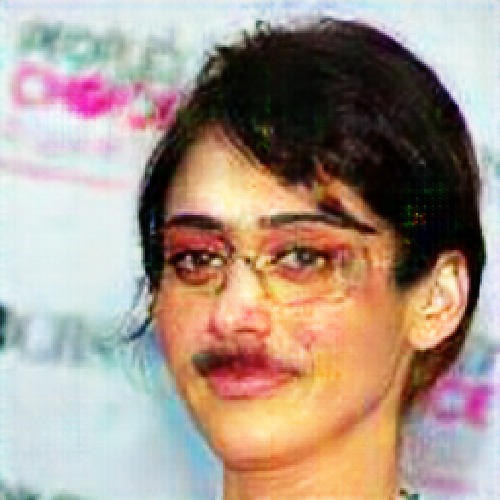} & &
    \includegraphics[width=0.090\textwidth]{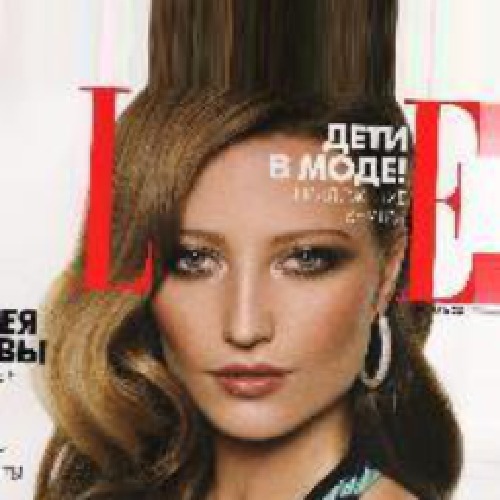} & 
    \includegraphics[width=0.090\textwidth]{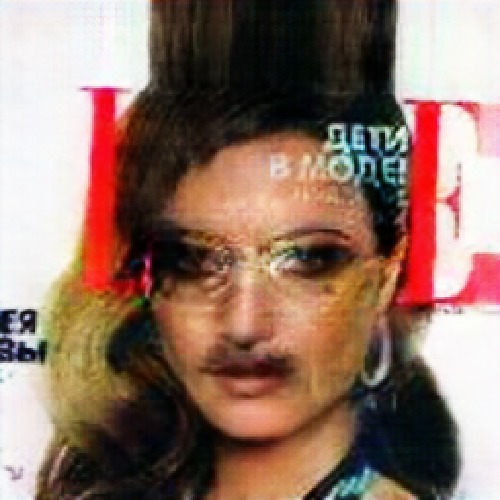} &
    \includegraphics[width=0.090\textwidth]{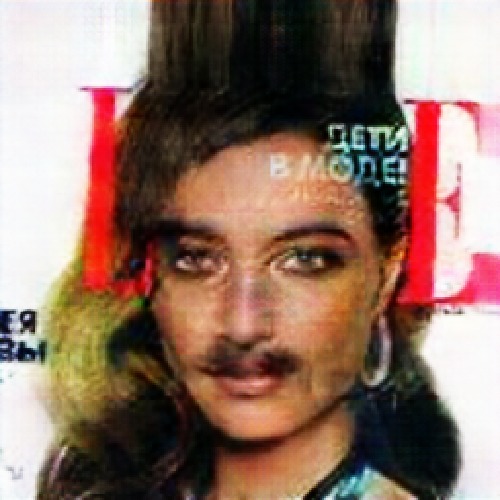} & &
    \includegraphics[width=0.090\textwidth]{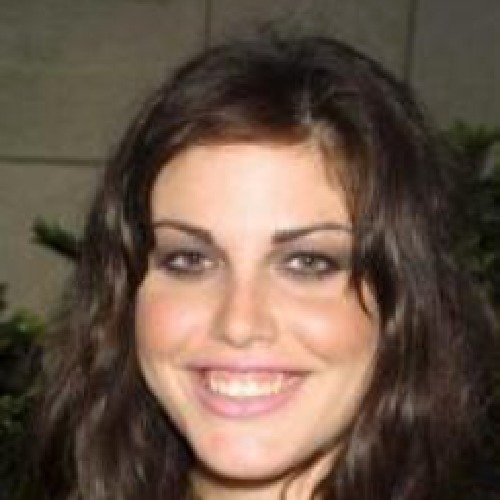} &
    \includegraphics[width=0.090\textwidth]{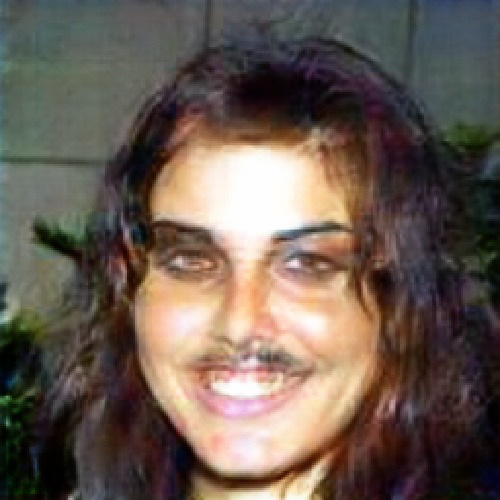} &
    \includegraphics[width=0.090\textwidth]{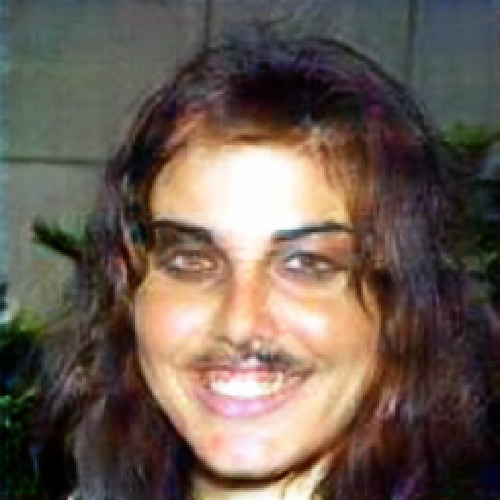} \\ 
    \includegraphics[width=0.090\textwidth]{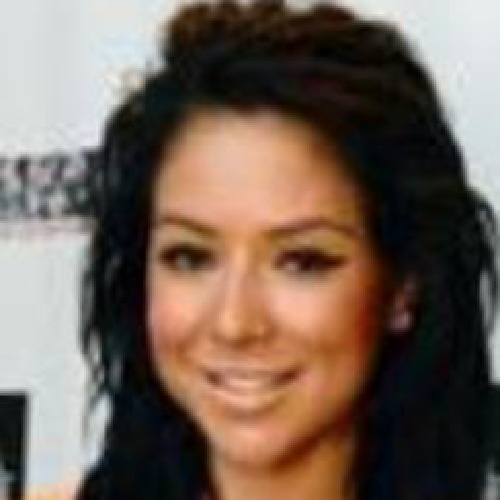} &
    \includegraphics[width=0.090\textwidth]{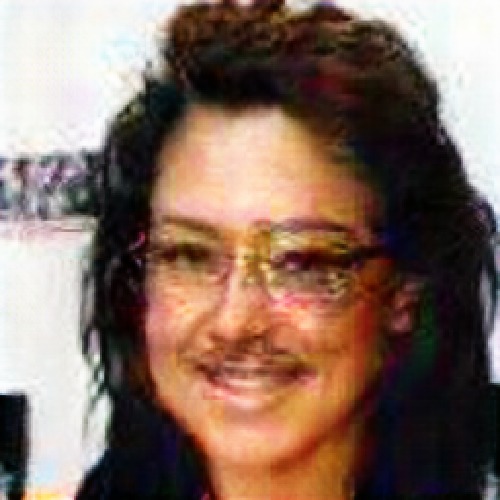} & 
    \includegraphics[width=0.090\textwidth]{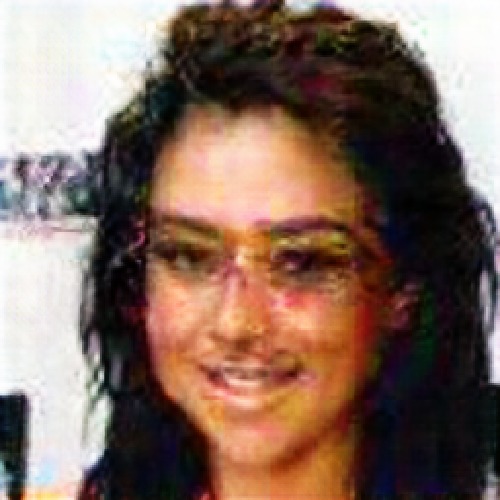} & &
    \includegraphics[width=0.090\textwidth]{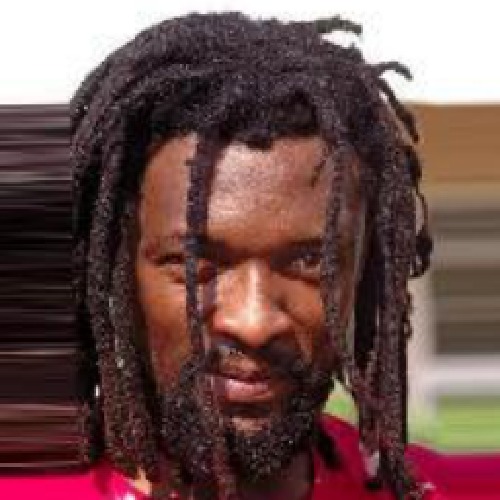} & 
    \includegraphics[width=0.090\textwidth]{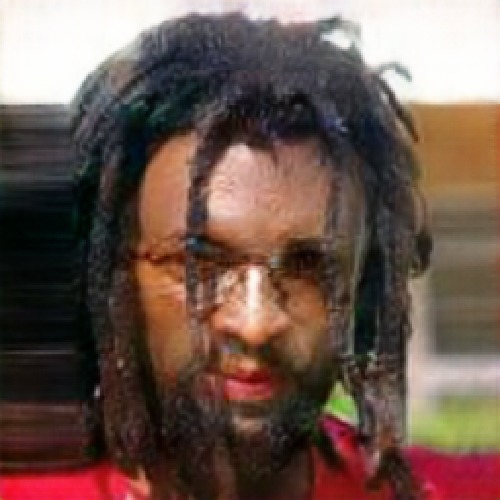} &
    \includegraphics[width=0.090\textwidth]{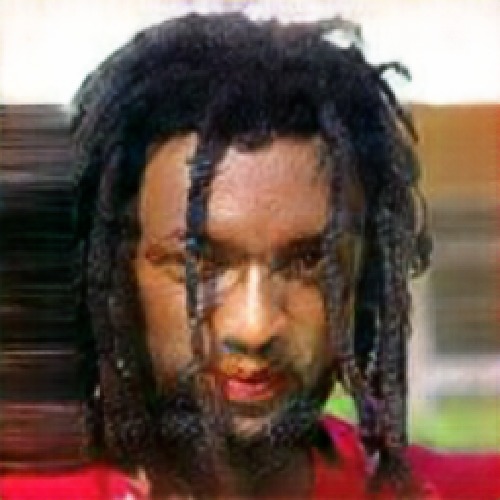} & &
    \includegraphics[width=0.090\textwidth]{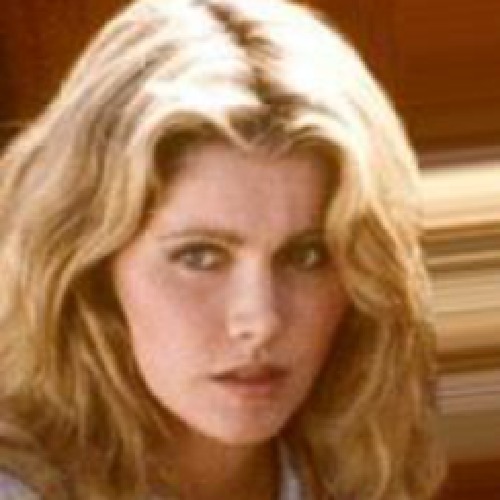} &
    \includegraphics[width=0.090\textwidth]{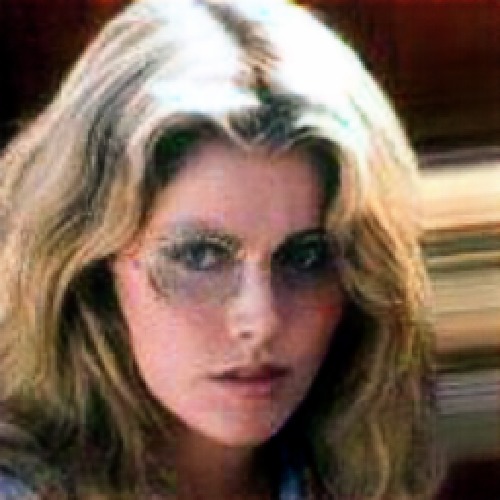} &
    \includegraphics[width=0.090\textwidth]{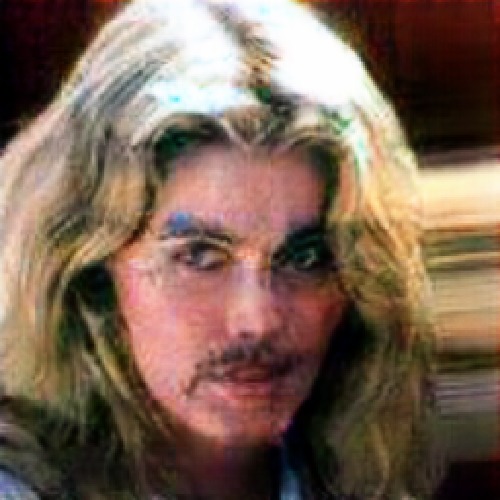} \\ 
    \includegraphics[width=0.090\textwidth]{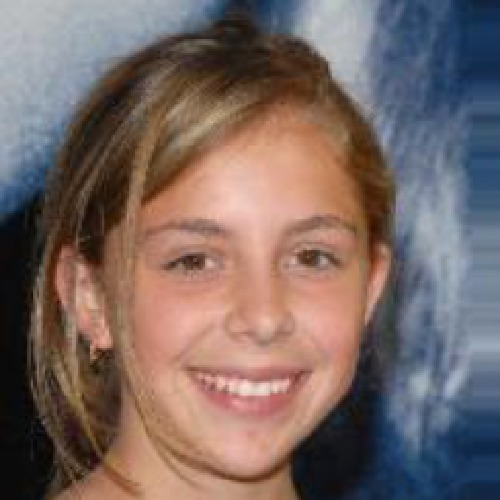} &
    \includegraphics[width=0.090\textwidth]{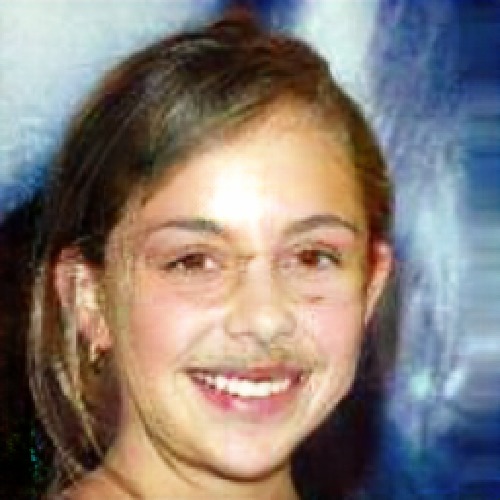} & 
    \includegraphics[width=0.090\textwidth]{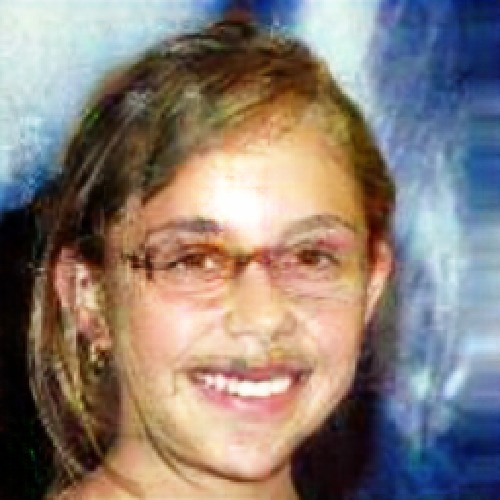} & &
    \includegraphics[width=0.090\textwidth]{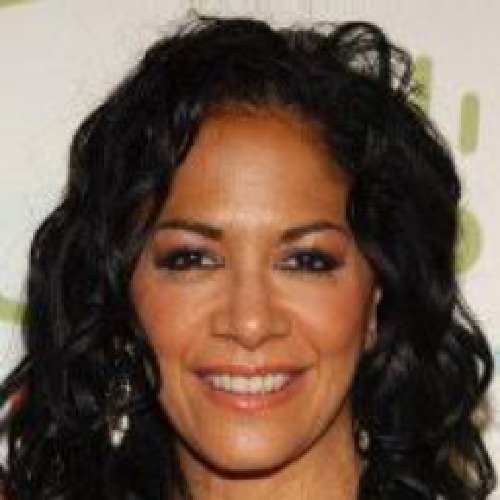} & 
    \includegraphics[width=0.090\textwidth]{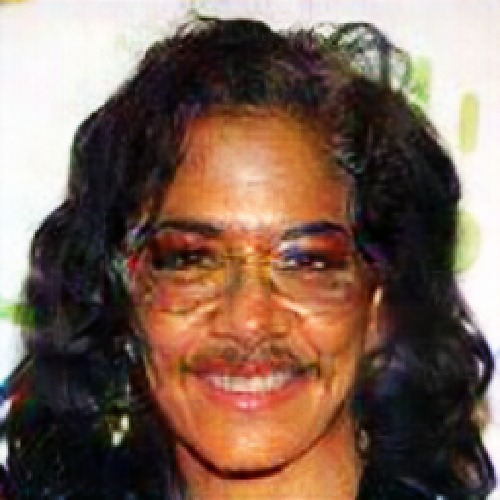} &
    \includegraphics[width=0.090\textwidth]{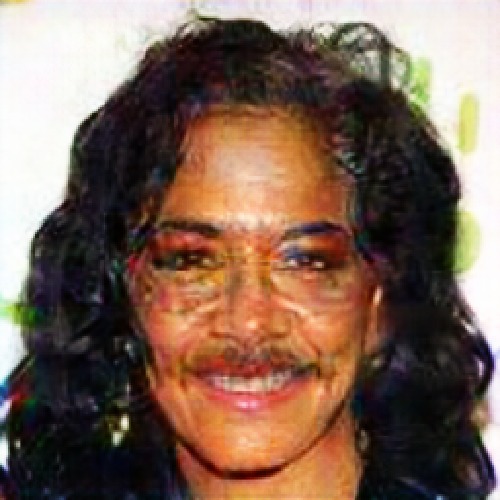} & &
    \includegraphics[width=0.090\textwidth]{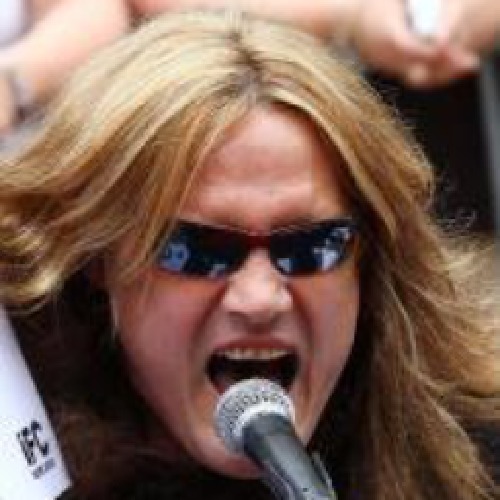} &
    \includegraphics[width=0.090\textwidth]{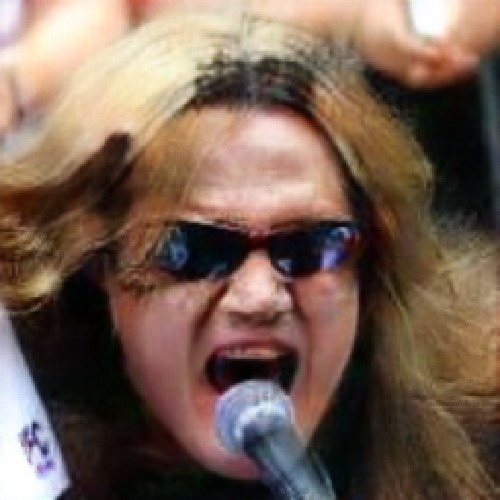} &
    \includegraphics[width=0.090\textwidth]{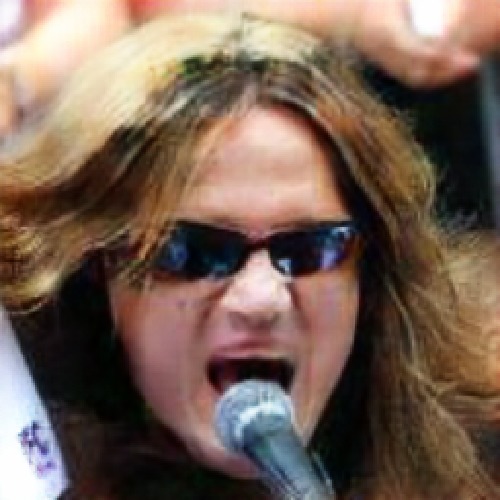} \\

    (a) &  (b) & (c) & & (d) & (e) & (f) & & (g) & (h) & (i) 
    \end{tabular}
    
    \caption{\sl Semantic adversarial examples generated with multiple attribute implementation using Adversarial AttGAN. The first, fourth and seventh columns contain the original images. We show adversarial examples generated under the attributes: (b),(e) and (h) Eyeglasses-Mustache-Age-Pale Skin-Young-Black Hair and (c),(f) and (i)Eyeglasses-Mustache-Pale skin-Age-Bushy eyebrows-Black hair. The quality of the images produced by the Adversarial AttGAN are sharper than those produced by the Adversarial Fader Networks. }
    \label{fig:attgan1}
\end{figure}
\endgroup

\begingroup
\begin{figure}[htp]
    \centering
    \setlength{\tabcolsep}{1pt}
    \renewcommand{\arraystretch}{0.5}
    \begin{tabular}{c c c c || c c c c ||  c c c }
    \includegraphics[width=0.090\textwidth]{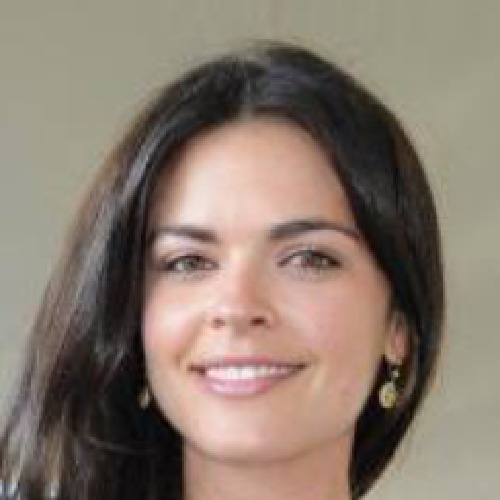} &
    \includegraphics[width=0.090\textwidth]{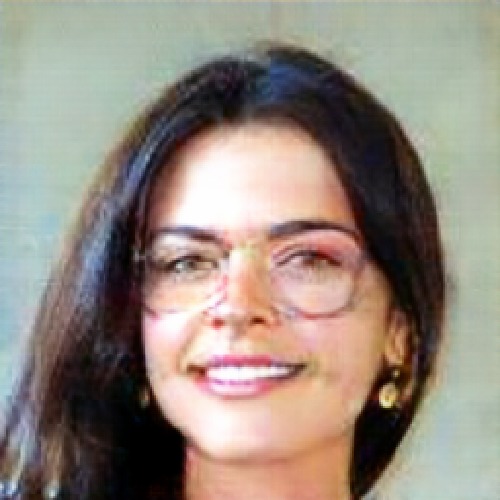} & 
    \includegraphics[width=0.090\textwidth]{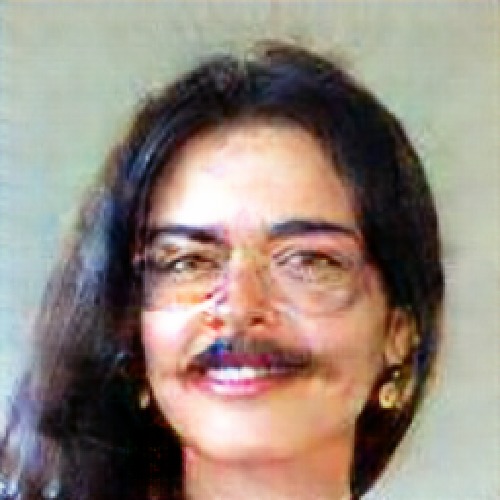} & &
    \includegraphics[width=0.090\textwidth]{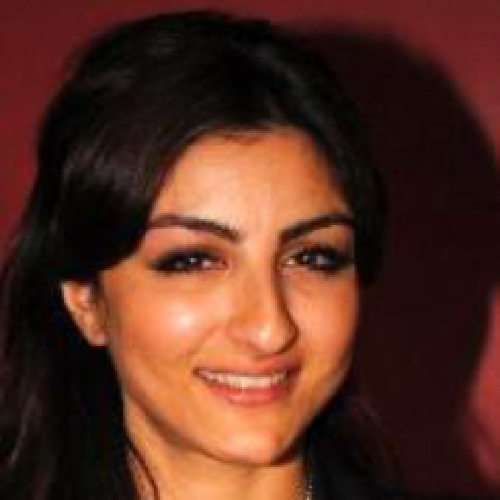} & 
    \includegraphics[width=0.090\textwidth]{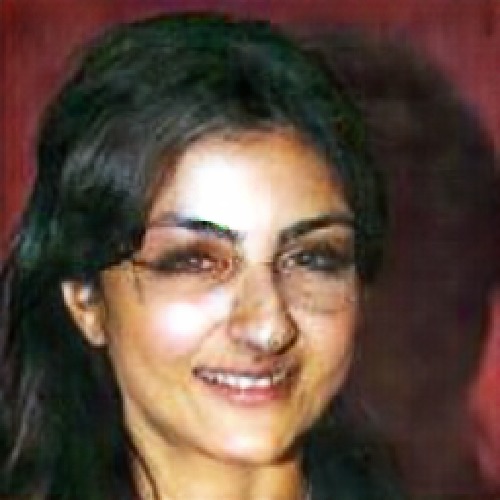} &
    \includegraphics[width=0.090\textwidth]{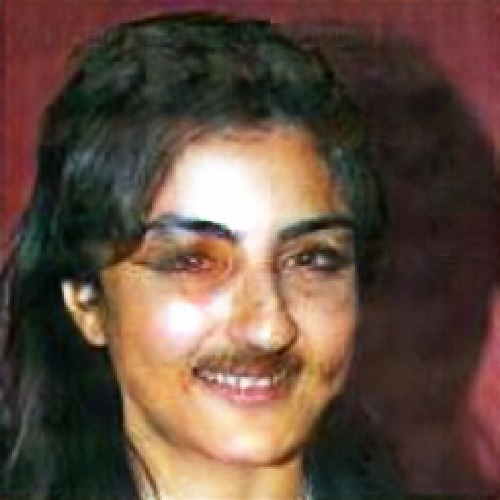} & &
    \includegraphics[width=0.090\textwidth]{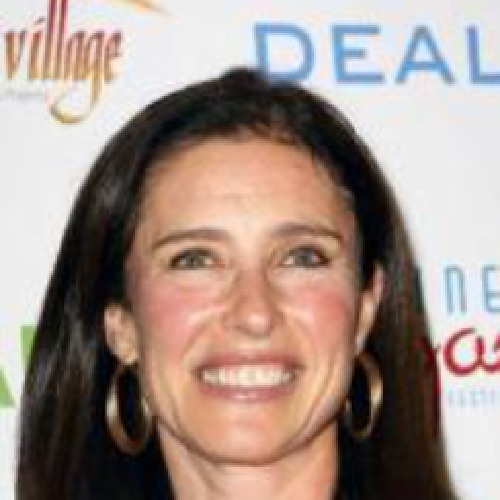} &
    \includegraphics[width=0.090\textwidth]{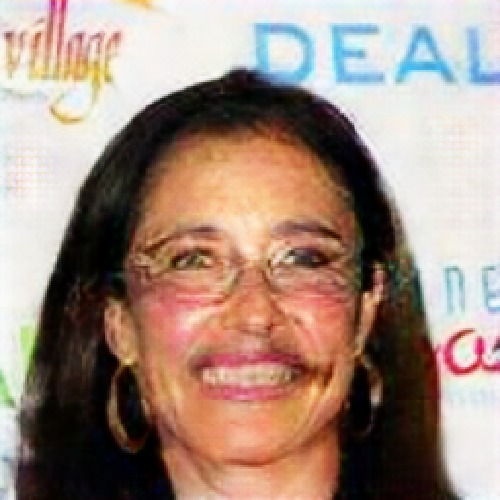} &
    \includegraphics[width=0.090\textwidth]{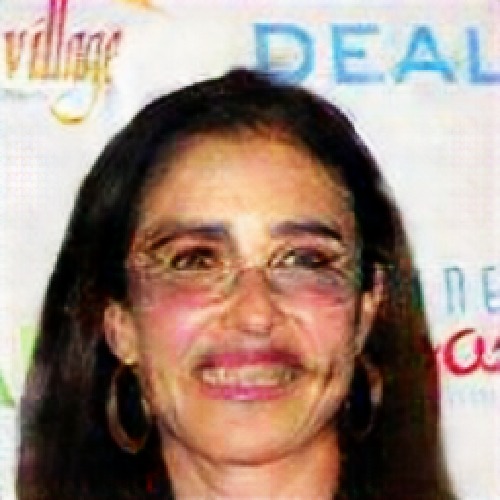} \\ 
    \includegraphics[width=0.090\textwidth]{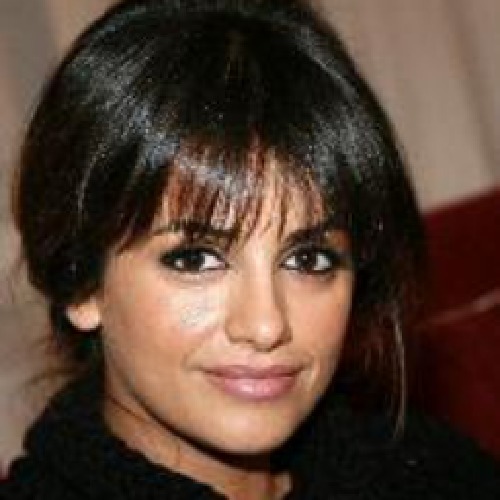} &
    \includegraphics[width=0.090\textwidth]{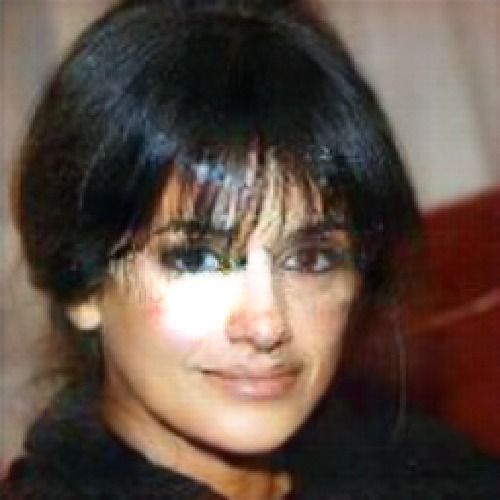} & 
    \includegraphics[width=0.090\textwidth]{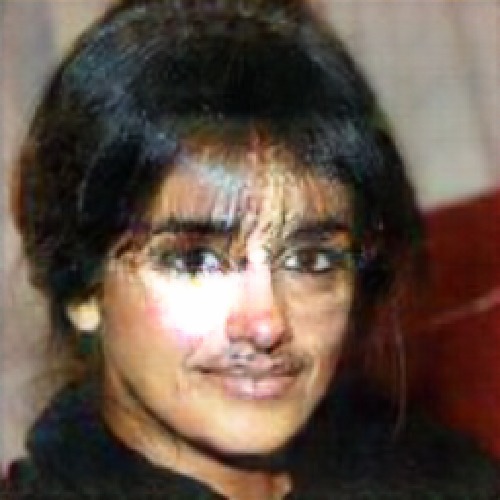} & &
    \includegraphics[width=0.090\textwidth]{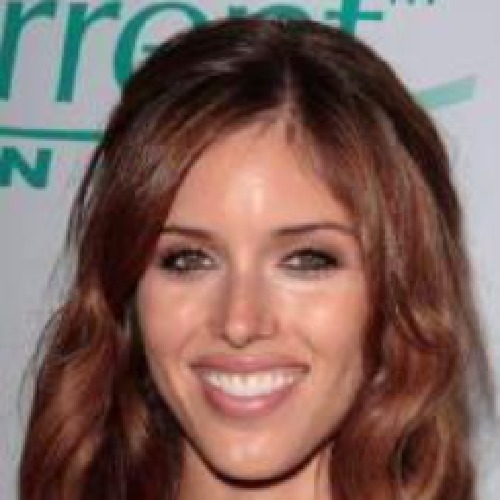} & 
    \includegraphics[width=0.090\textwidth]{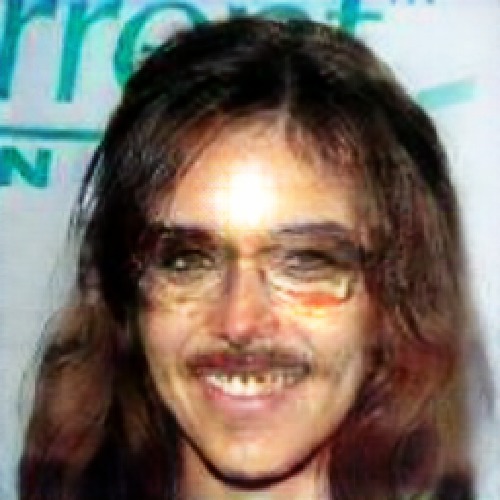} &
    \includegraphics[width=0.090\textwidth]{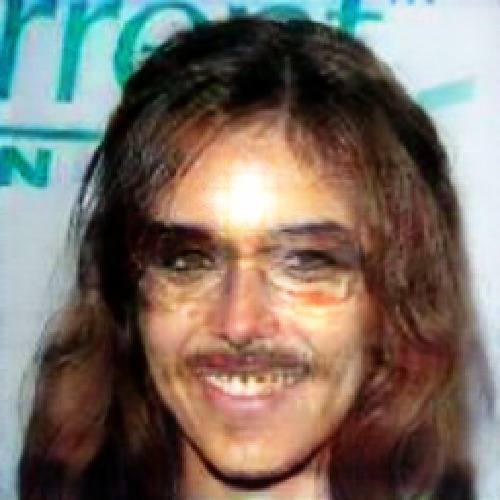} & &
    \includegraphics[width=0.090\textwidth]{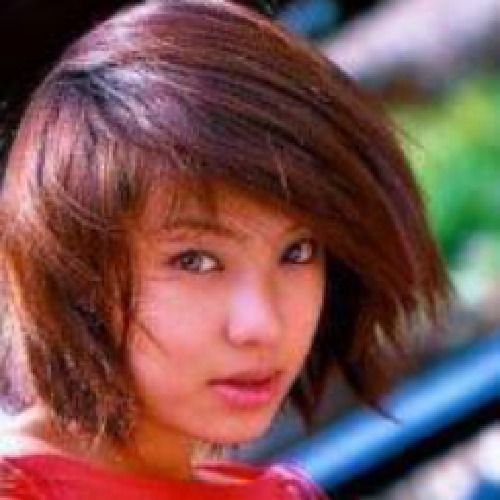} &
    \includegraphics[width=0.090\textwidth]{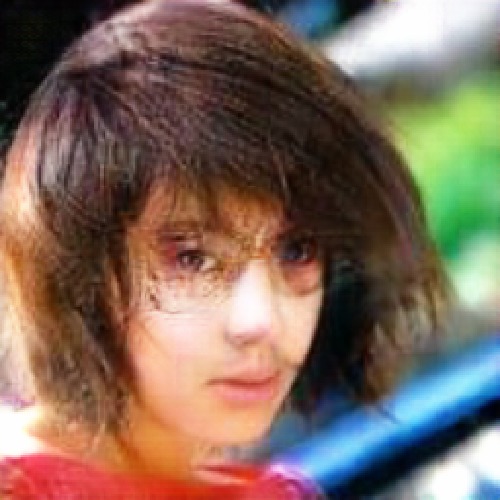} &
    \includegraphics[width=0.090\textwidth]{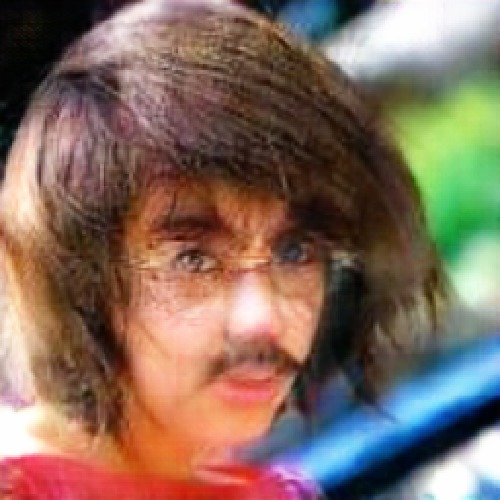} \\ 
    \includegraphics[width=0.090\textwidth]{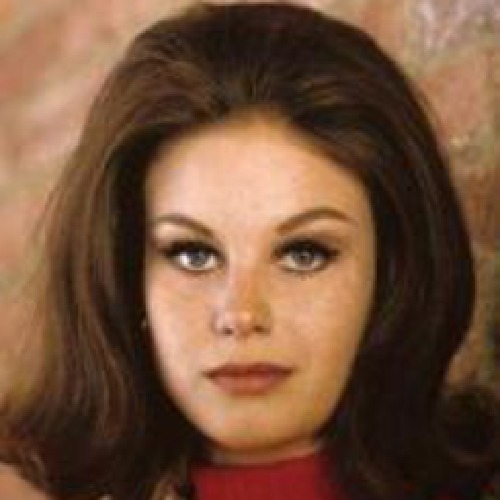} &
    \includegraphics[width=0.090\textwidth]{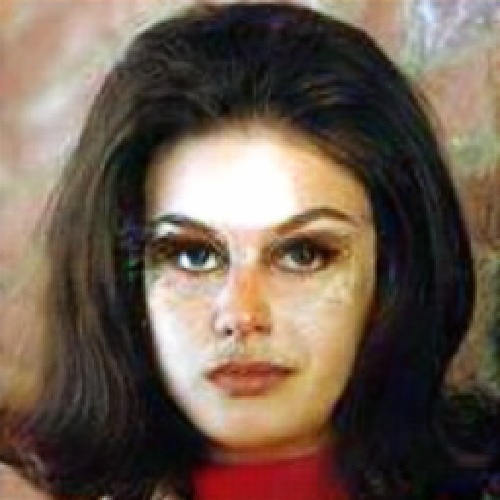} & 
    \includegraphics[width=0.090\textwidth]{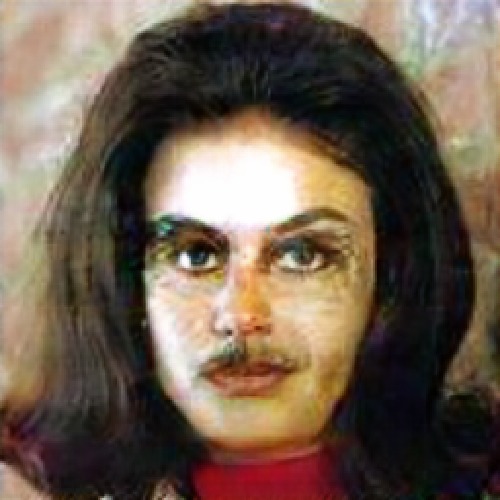} & &
    \includegraphics[width=0.090\textwidth]{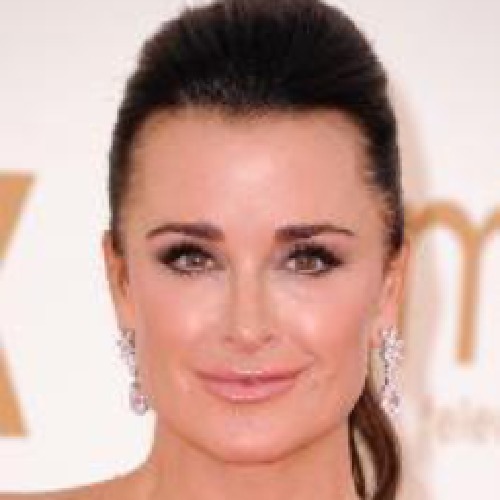} & 
    \includegraphics[width=0.090\textwidth]{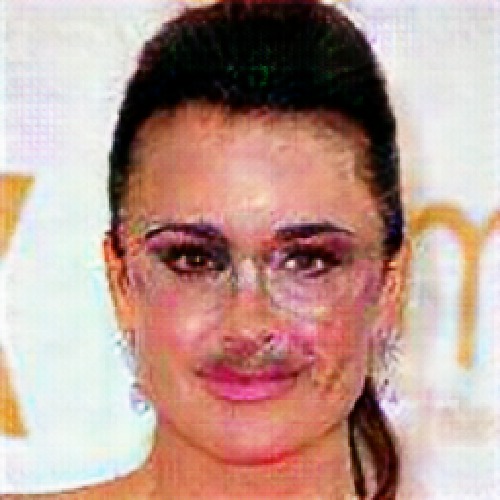} &
    \includegraphics[width=0.090\textwidth]{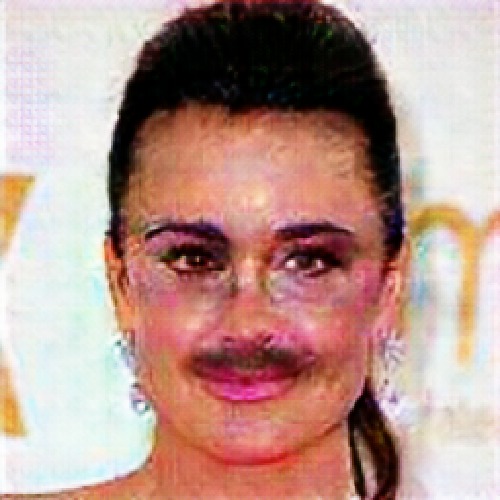} & &
    \includegraphics[width=0.090\textwidth]{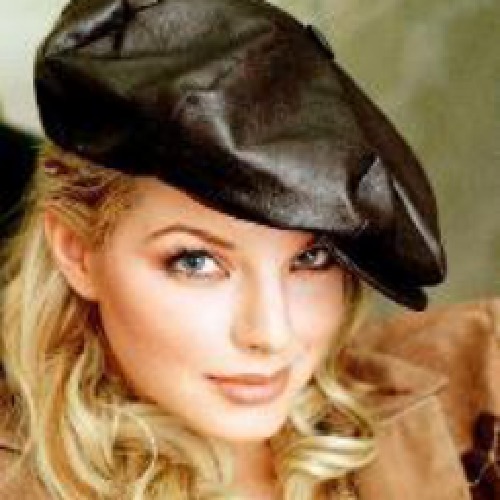} &
    \includegraphics[width=0.090\textwidth]{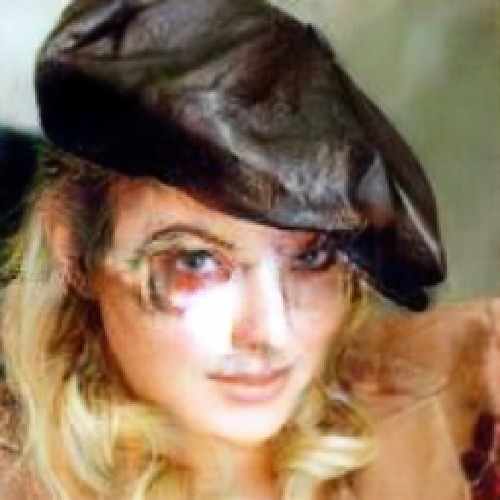} &
    \includegraphics[width=0.090\textwidth]{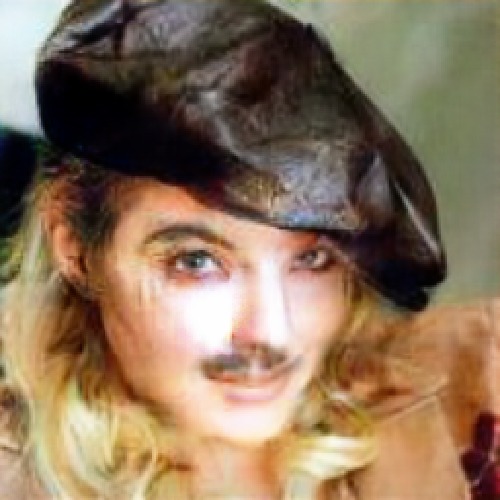} \\ 
    \includegraphics[width=0.090\textwidth]{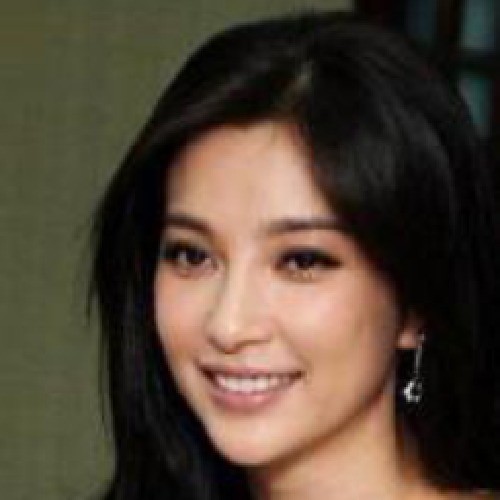} &
    \includegraphics[width=0.090\textwidth]{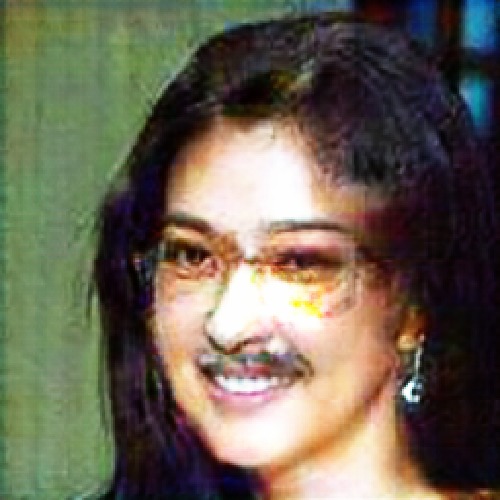} & 
    \includegraphics[width=0.090\textwidth]{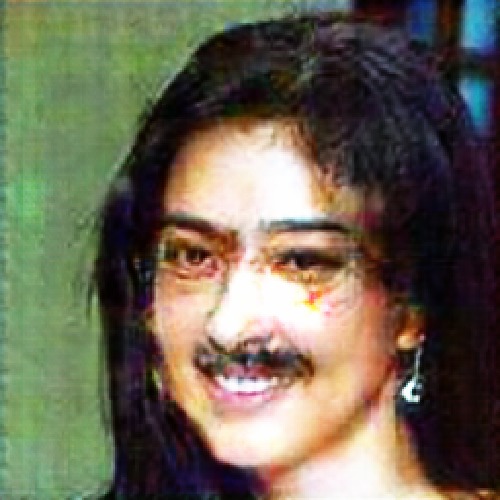} & &
    \includegraphics[width=0.090\textwidth]{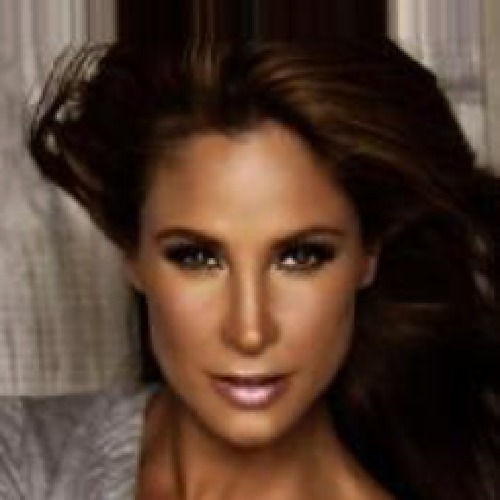} & 
    \includegraphics[width=0.090\textwidth]{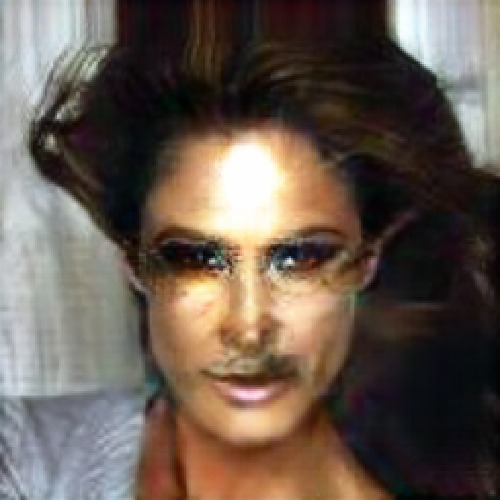} &
    \includegraphics[width=0.090\textwidth]{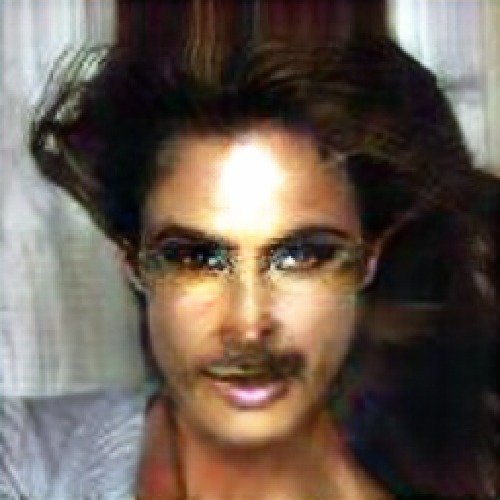} & &
    \includegraphics[width=0.090\textwidth]{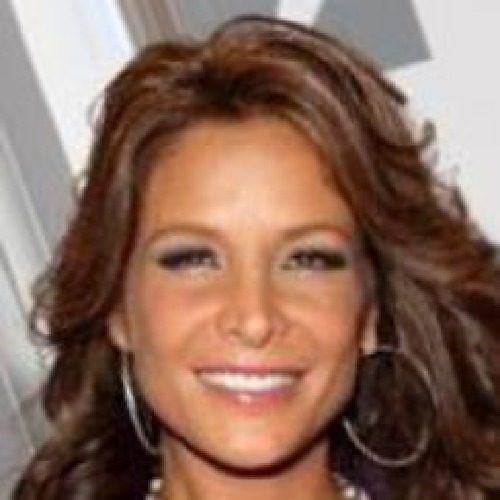} &
    \includegraphics[width=0.090\textwidth]{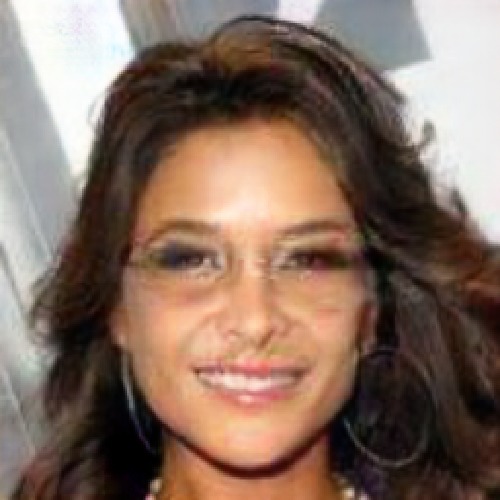} &
    \includegraphics[width=0.090\textwidth]{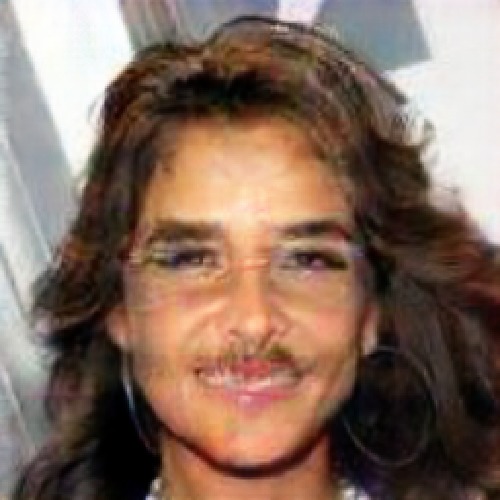} \\ 
    \includegraphics[width=0.090\textwidth]{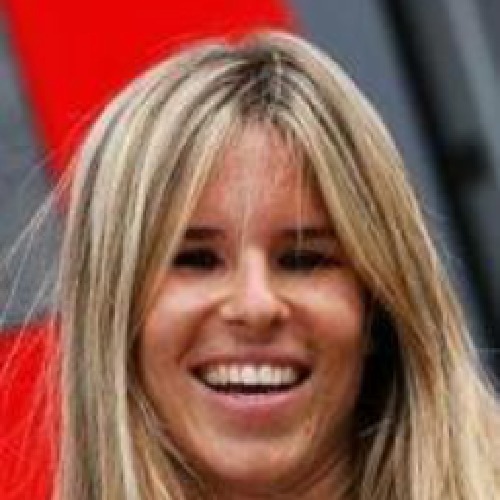} &
    \includegraphics[width=0.090\textwidth]{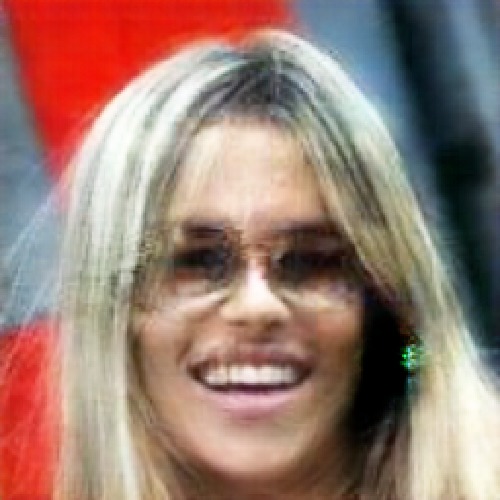} & 
    \includegraphics[width=0.090\textwidth]{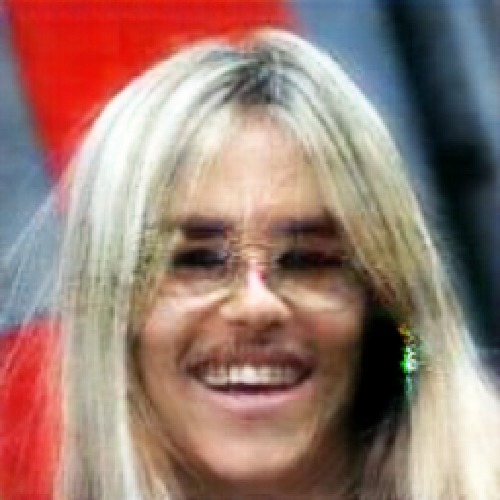} & &
    \includegraphics[width=0.090\textwidth]{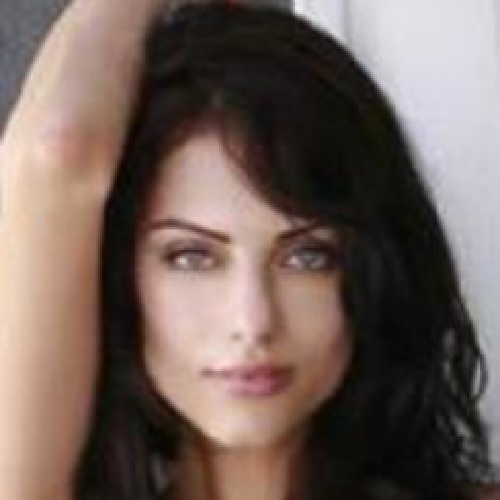} & 
    \includegraphics[width=0.090\textwidth]{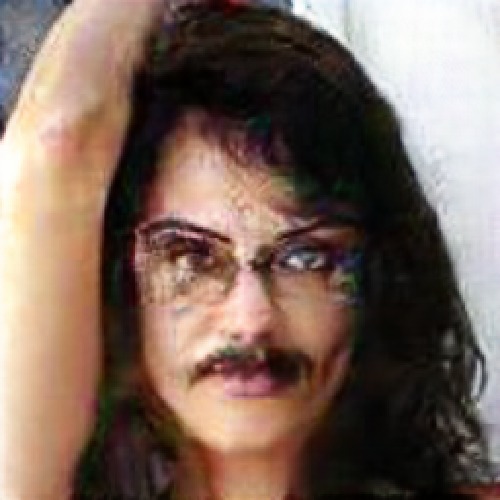} &
    \includegraphics[width=0.090\textwidth]{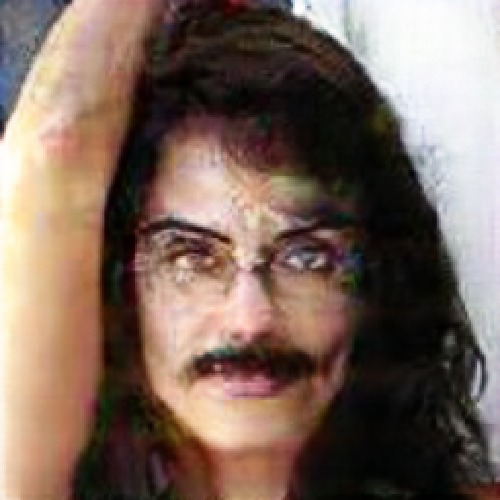} & &
    \includegraphics[width=0.090\textwidth]{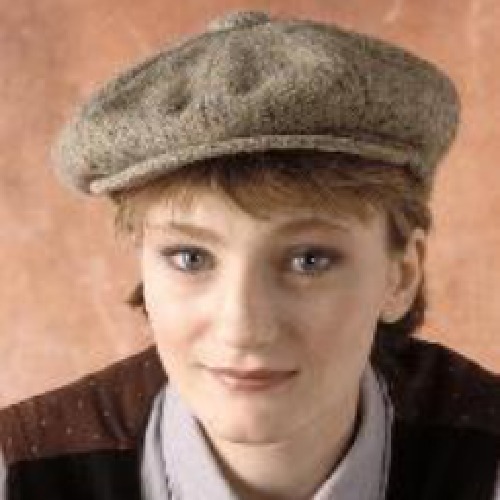} &
    \includegraphics[width=0.090\textwidth]{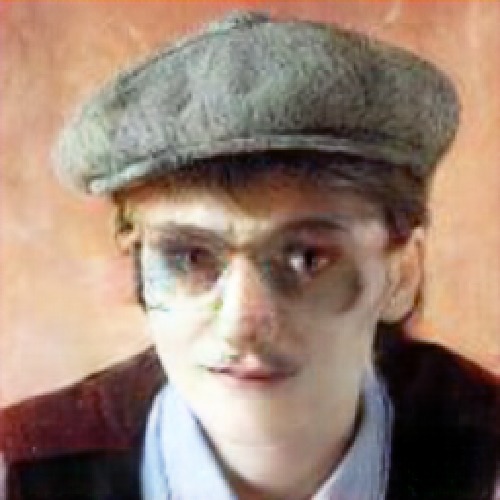} &
    \includegraphics[width=0.090\textwidth]{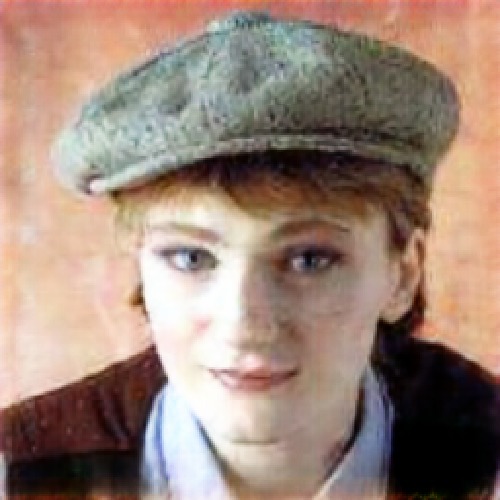} \\ 
    \includegraphics[width=0.090\textwidth]{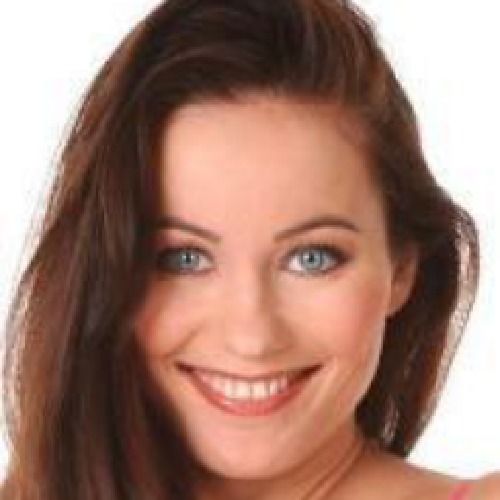} &
    \includegraphics[width=0.090\textwidth]{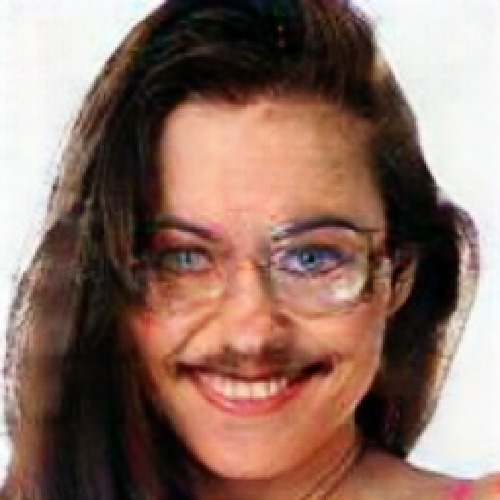} & 
    \includegraphics[width=0.090\textwidth]{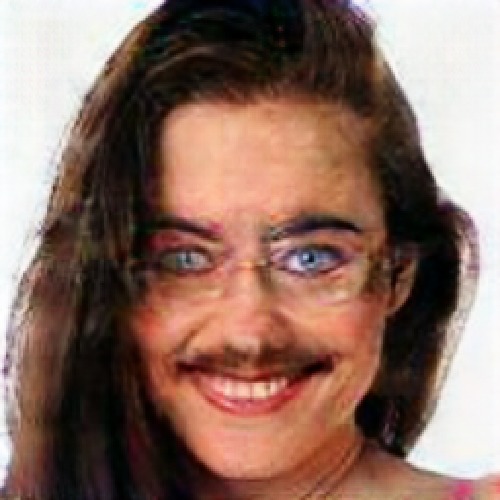} & &
    \includegraphics[width=0.090\textwidth]{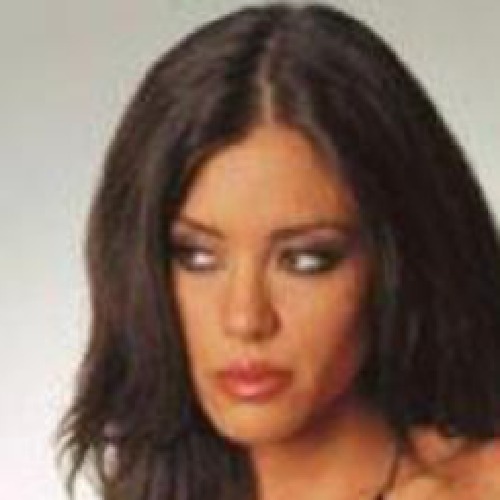} & 
    \includegraphics[width=0.090\textwidth]{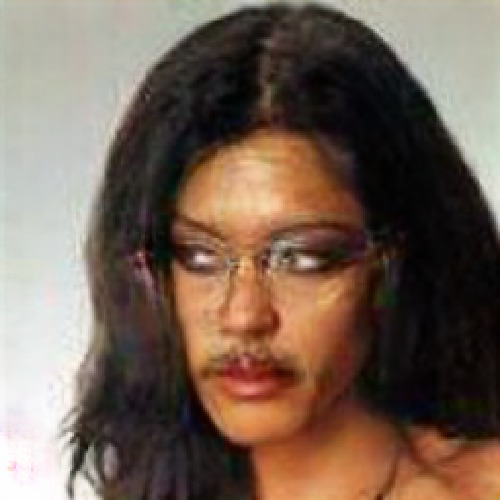} &
    \includegraphics[width=0.090\textwidth]{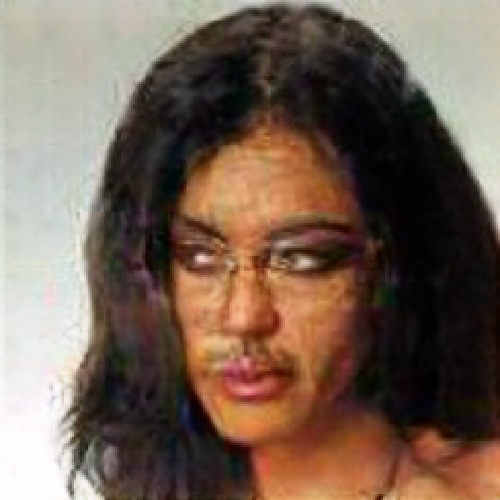} & &
    \includegraphics[width=0.090\textwidth]{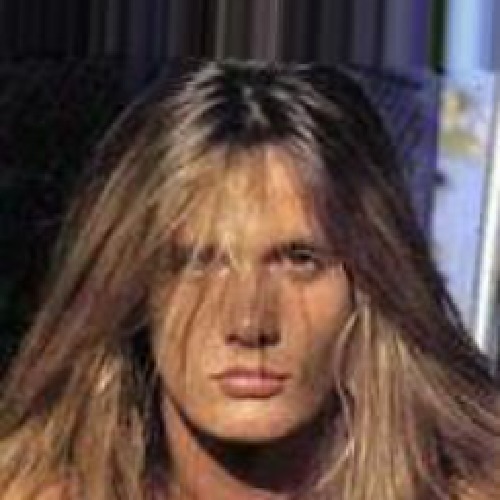} &
    \includegraphics[width=0.090\textwidth]{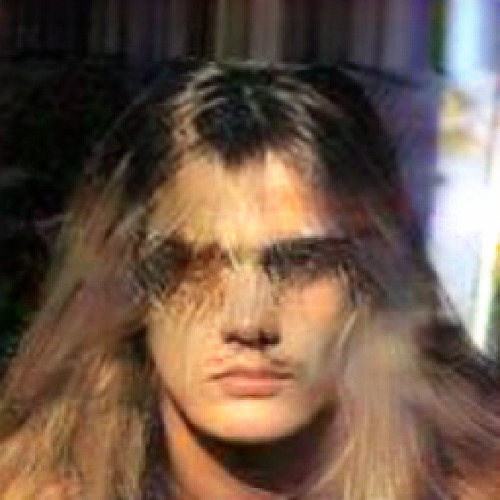} &
    \includegraphics[width=0.090\textwidth]{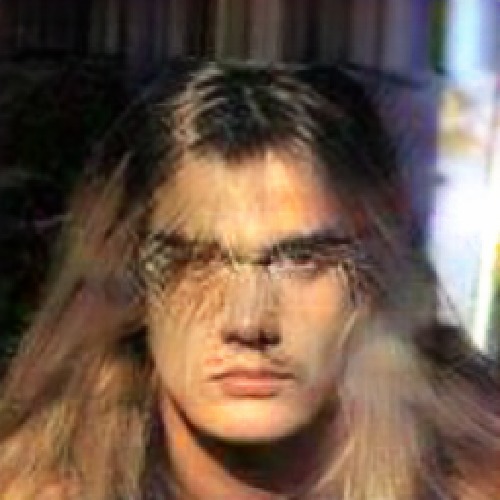} \\ 
    \includegraphics[width=0.090\textwidth]{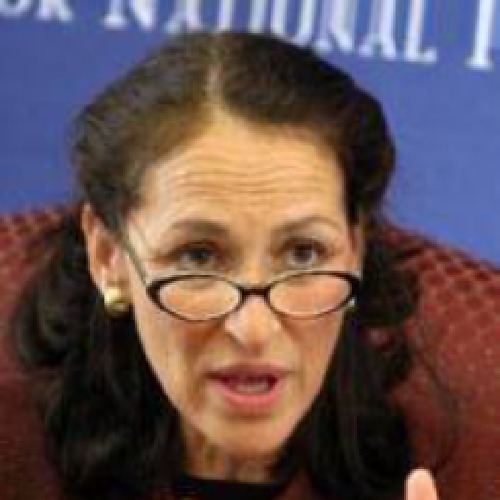} &
    \includegraphics[width=0.090\textwidth]{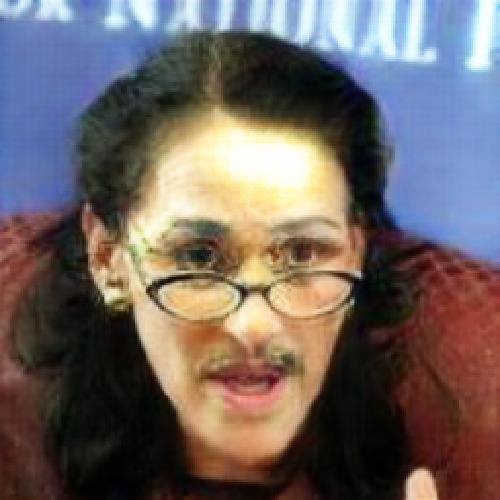} & 
    \includegraphics[width=0.090\textwidth]{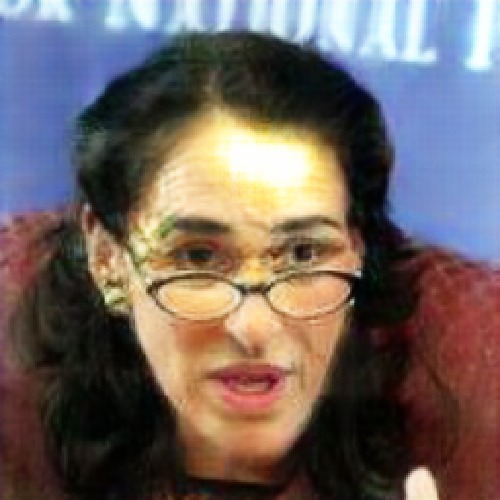} & &
    \includegraphics[width=0.090\textwidth]{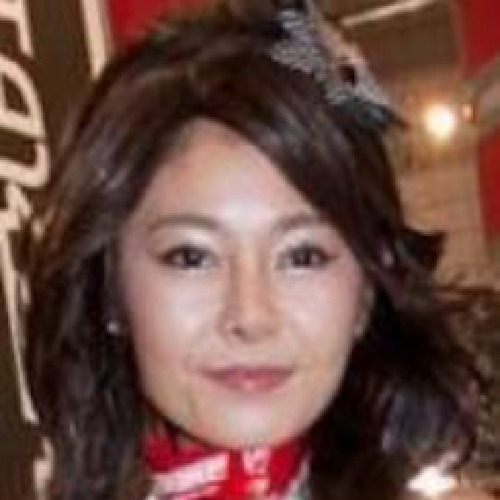} & 
    \includegraphics[width=0.090\textwidth]{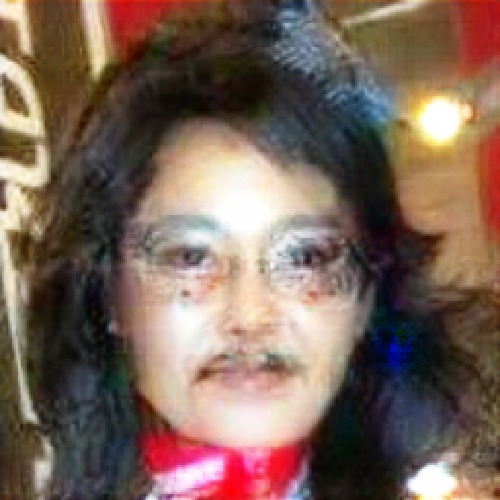} &
    \includegraphics[width=0.090\textwidth]{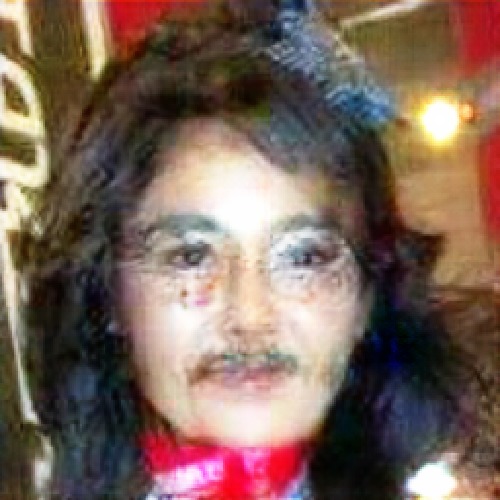} & &
    \includegraphics[width=0.090\textwidth]{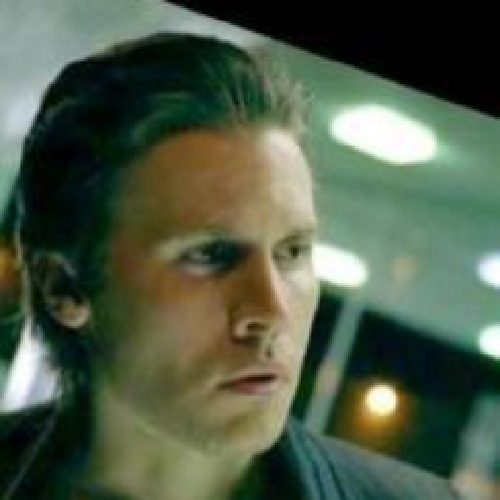} &
    \includegraphics[width=0.090\textwidth]{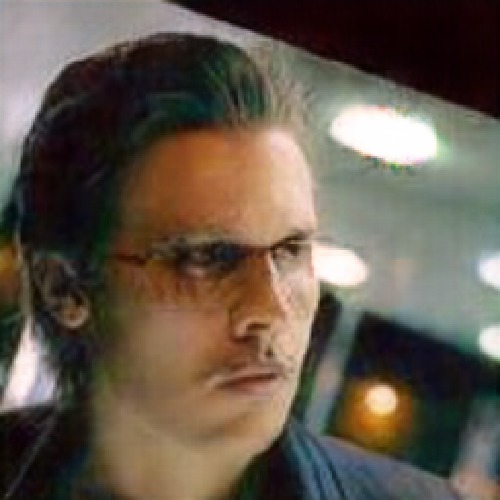} &
    \includegraphics[width=0.090\textwidth]{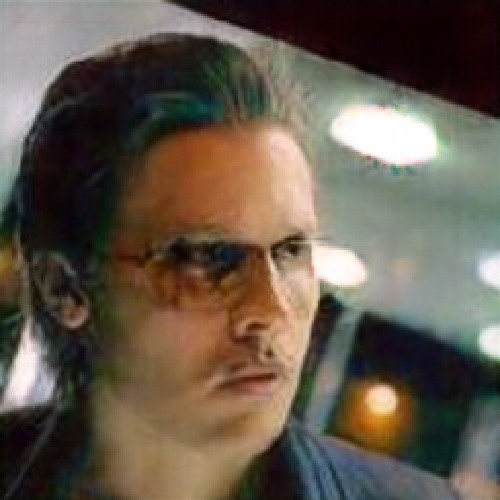} \\ 
    \includegraphics[width=0.090\textwidth]{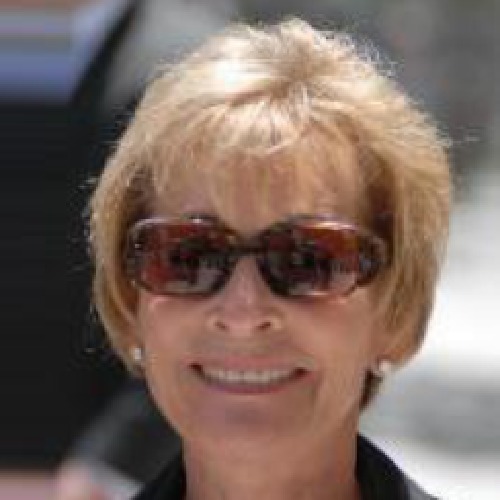} &
    \includegraphics[width=0.090\textwidth]{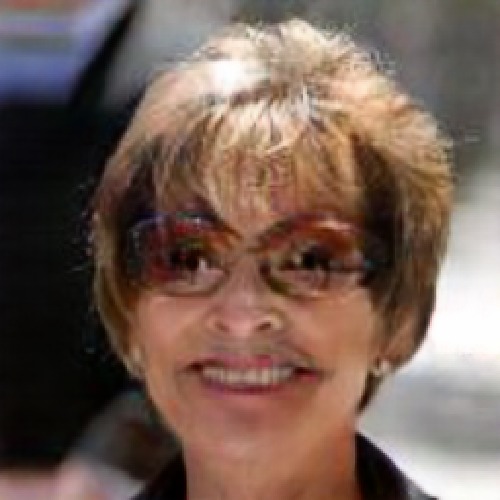} & 
    \includegraphics[width=0.090\textwidth]{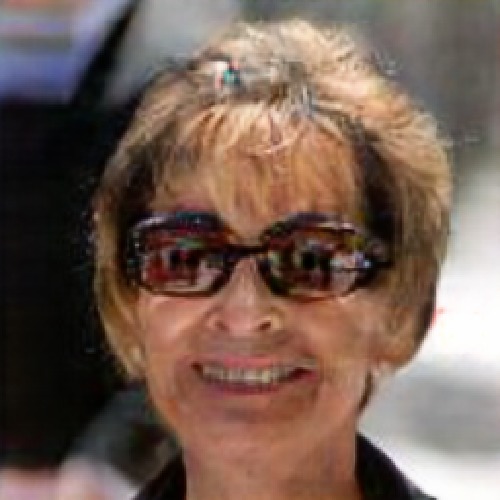} & &
    \includegraphics[width=0.090\textwidth]{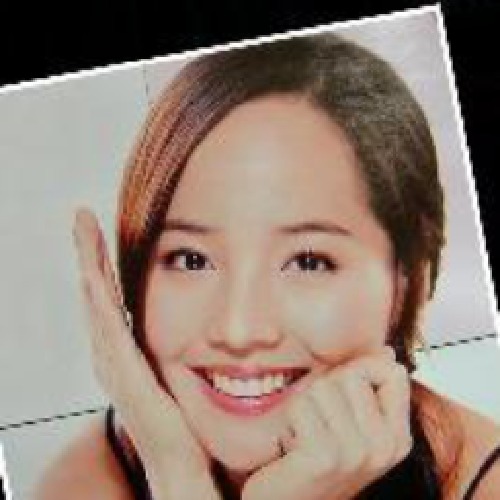} & 
    \includegraphics[width=0.090\textwidth]{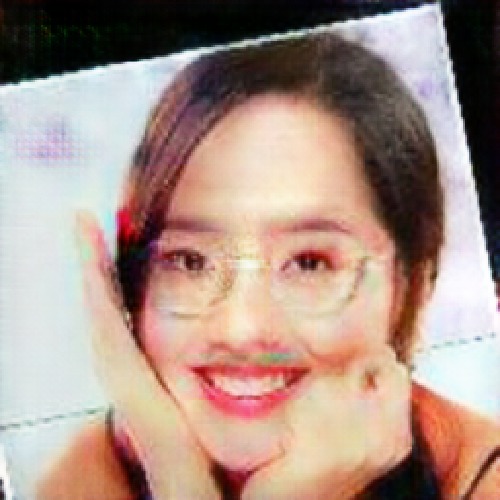} &
    \includegraphics[width=0.090\textwidth]{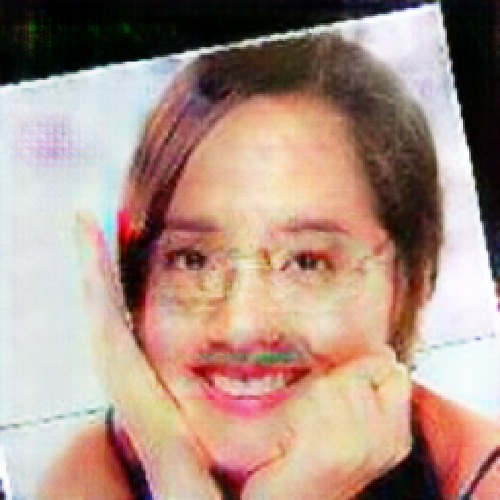} & &
    \includegraphics[width=0.090\textwidth]{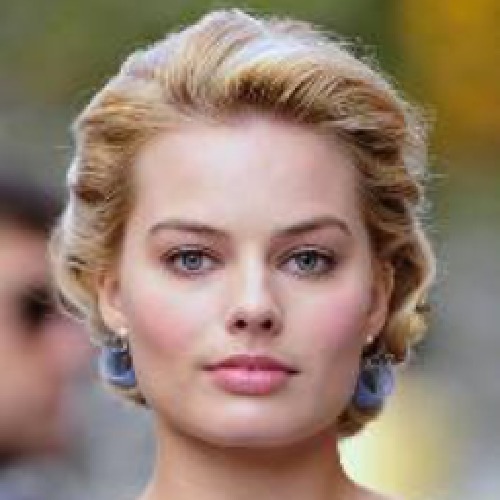} &
    \includegraphics[width=0.090\textwidth]{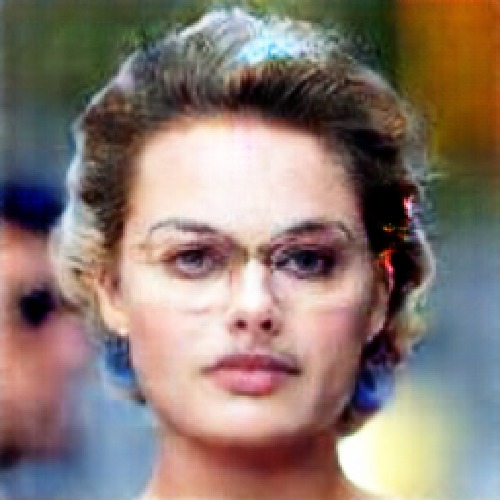} &
    \includegraphics[width=0.090\textwidth]{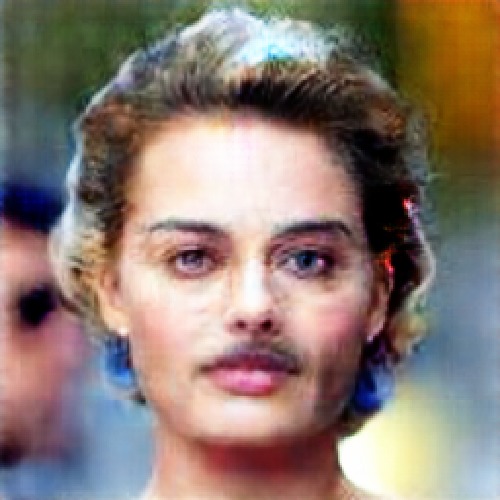} \\ 
    \includegraphics[width=0.090\textwidth]{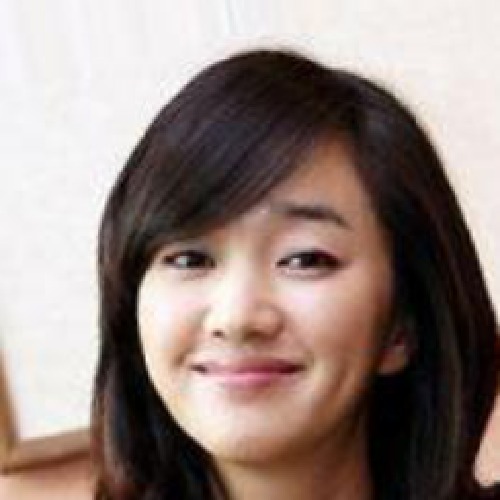} &
    \includegraphics[width=0.090\textwidth]{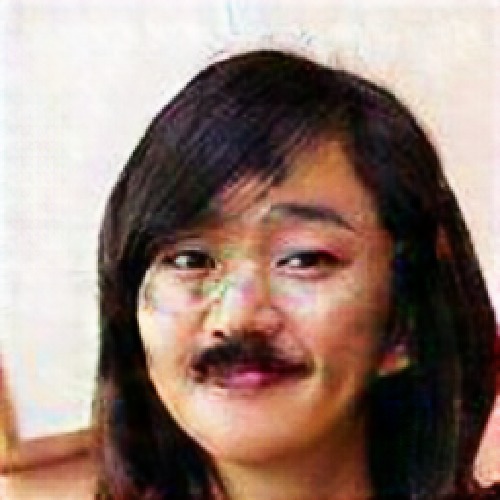} & 
    \includegraphics[width=0.090\textwidth]{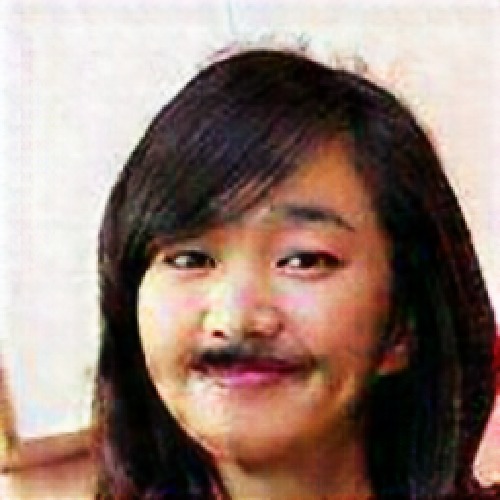} & &
    \includegraphics[width=0.090\textwidth]{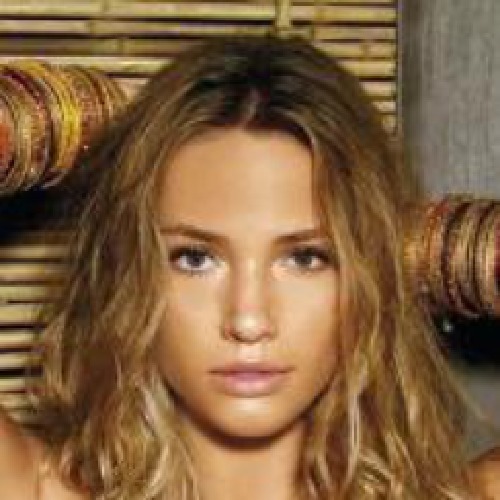} & 
    \includegraphics[width=0.090\textwidth]{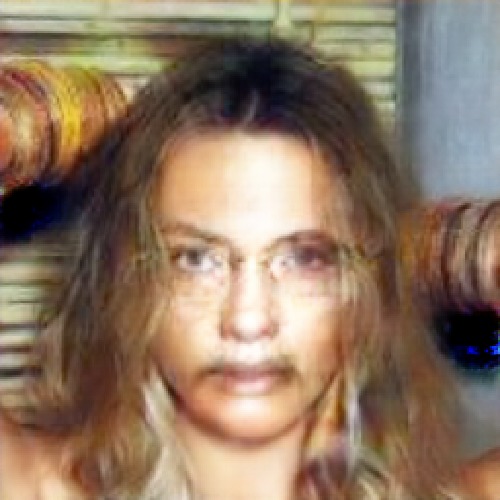} &
    \includegraphics[width=0.090\textwidth]{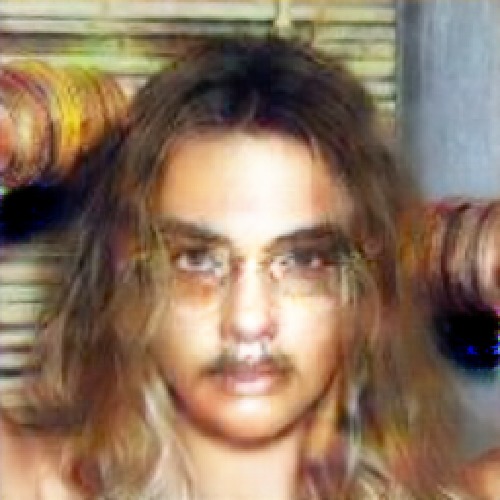} & &
    \includegraphics[width=0.090\textwidth]{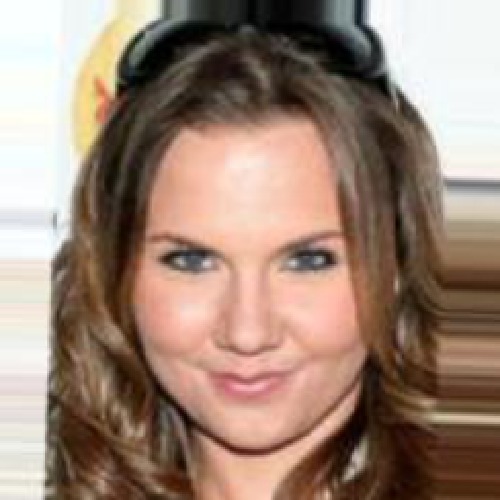} &
    \includegraphics[width=0.090\textwidth]{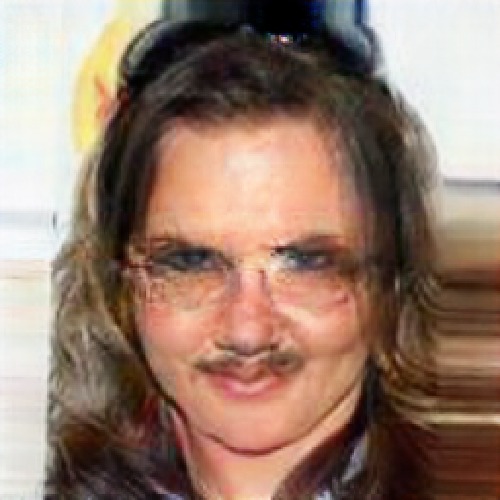} &
    \includegraphics[width=0.090\textwidth]{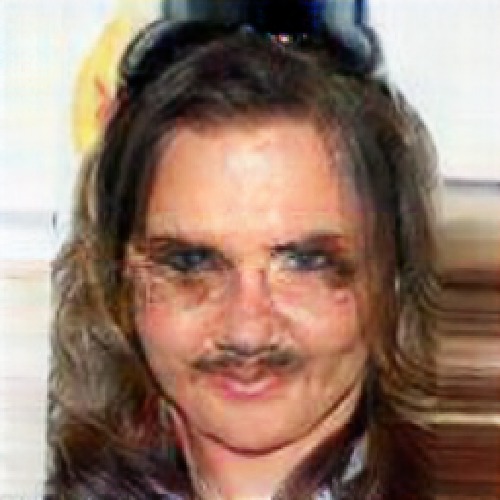} \\ 
    \includegraphics[width=0.090\textwidth]{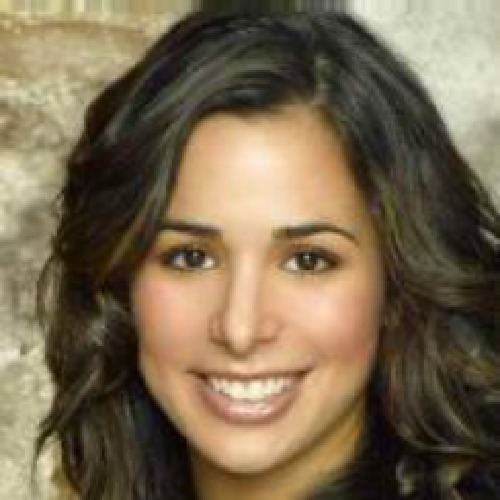} &
    \includegraphics[width=0.090\textwidth]{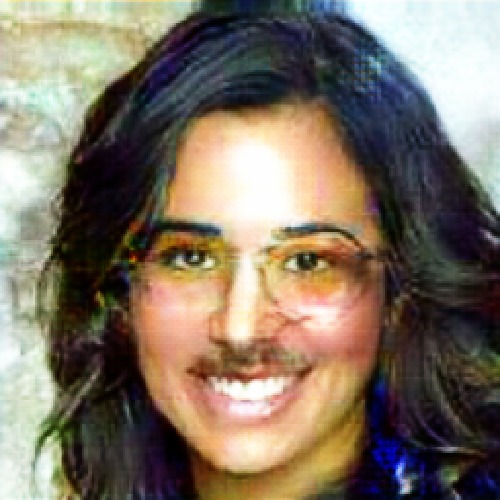} & 
    \includegraphics[width=0.090\textwidth]{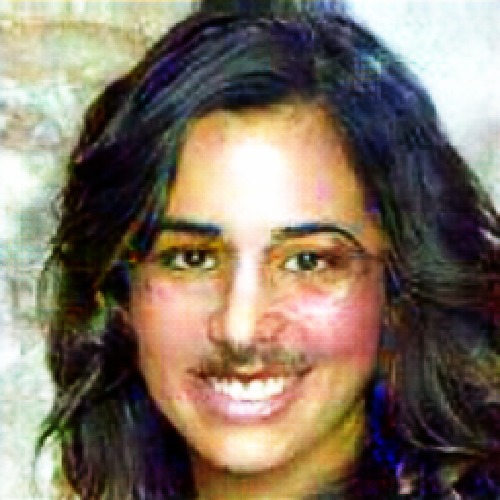} & &
    \includegraphics[width=0.090\textwidth]{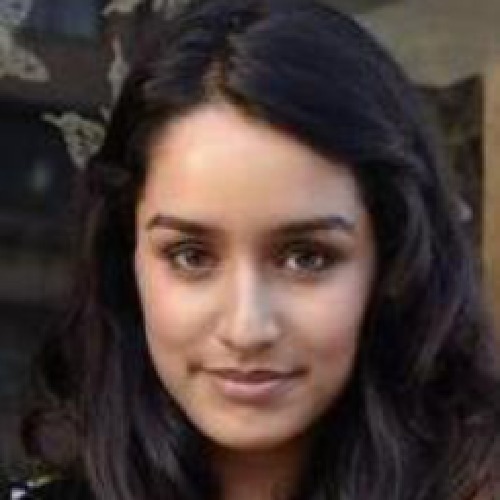} & 
    \includegraphics[width=0.090\textwidth]{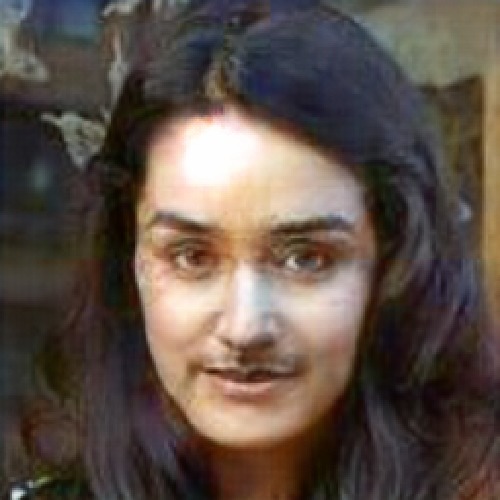} &
    \includegraphics[width=0.090\textwidth]{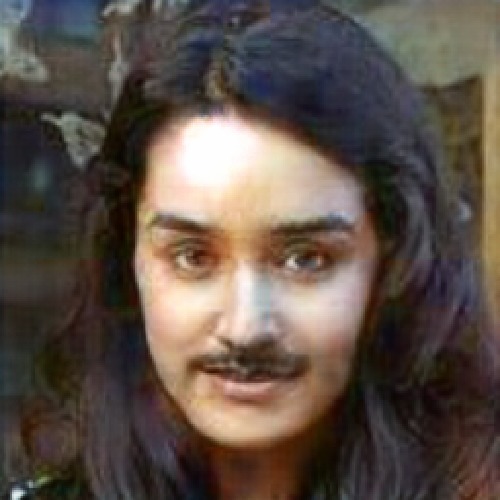} & &
    \includegraphics[width=0.090\textwidth]{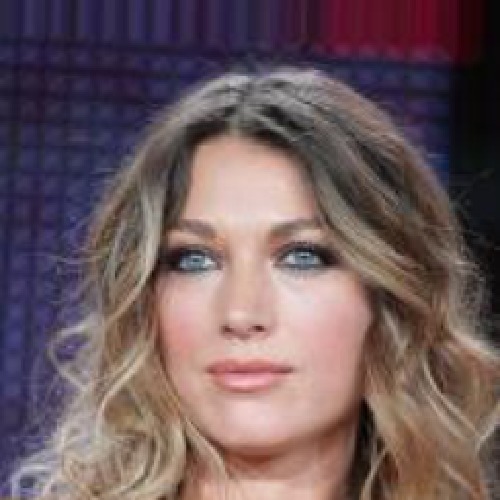} &
    \includegraphics[width=0.090\textwidth]{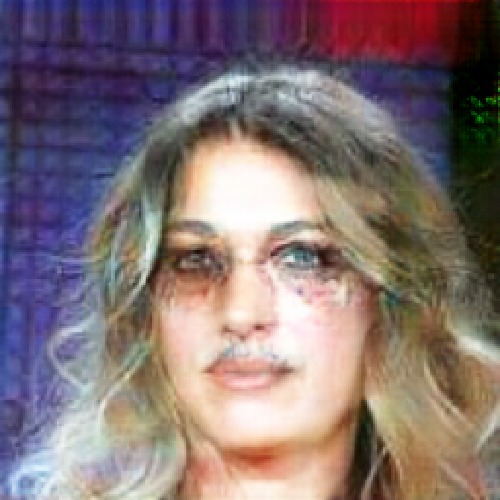} &
    \includegraphics[width=0.090\textwidth]{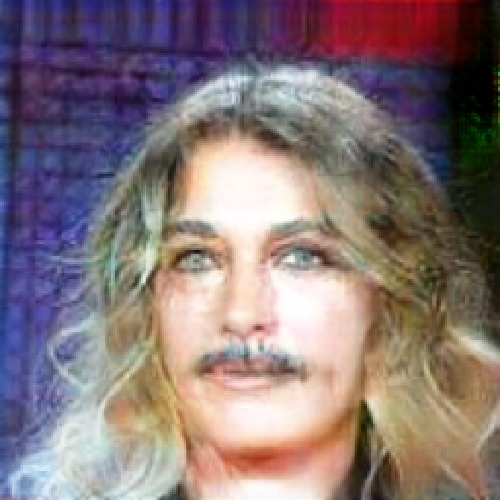} \\ 
    \includegraphics[width=0.090\textwidth]{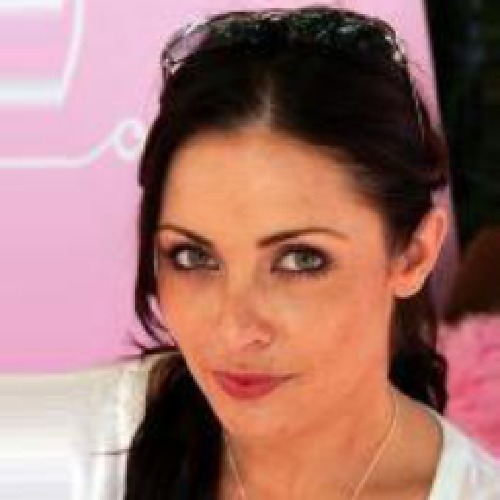} &
    \includegraphics[width=0.090\textwidth]{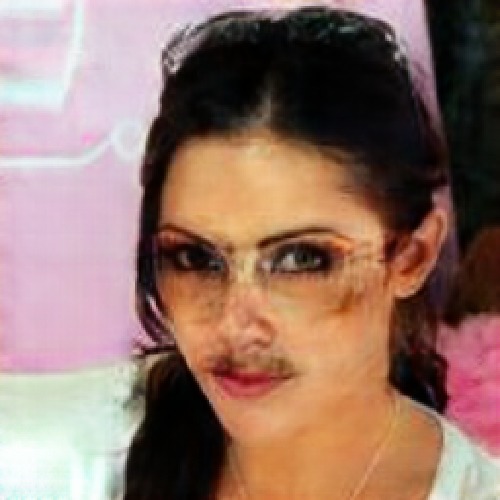} & 
    \includegraphics[width=0.090\textwidth]{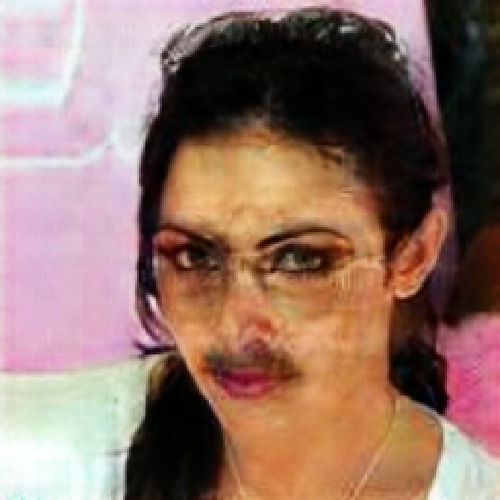} & &
    \includegraphics[width=0.090\textwidth]{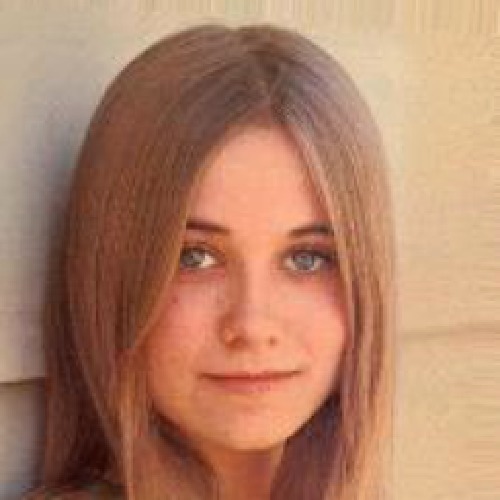} & 
    \includegraphics[width=0.090\textwidth]{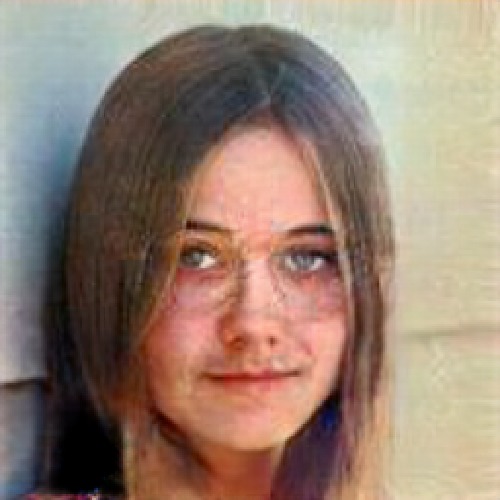} &
    \includegraphics[width=0.090\textwidth]{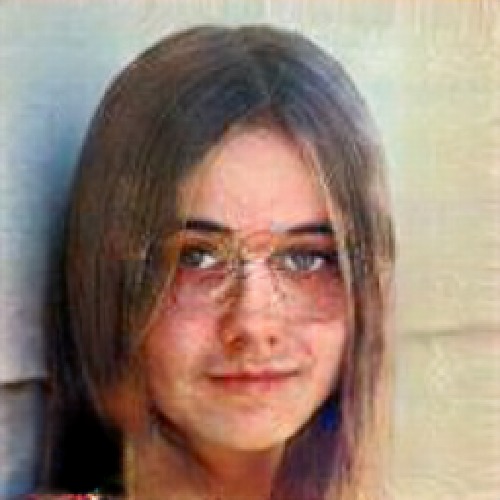} & &
    \includegraphics[width=0.090\textwidth]{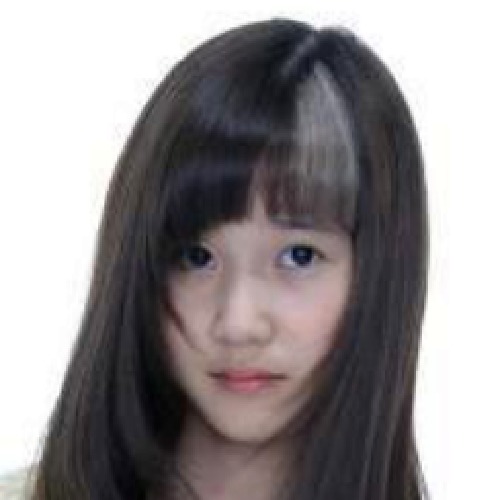} &
    \includegraphics[width=0.090\textwidth]{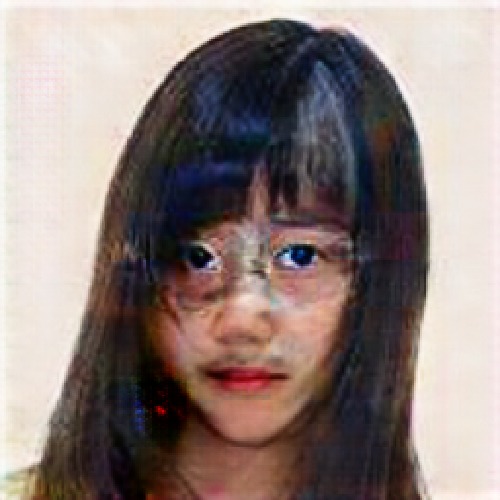} &
    \includegraphics[width=0.090\textwidth]{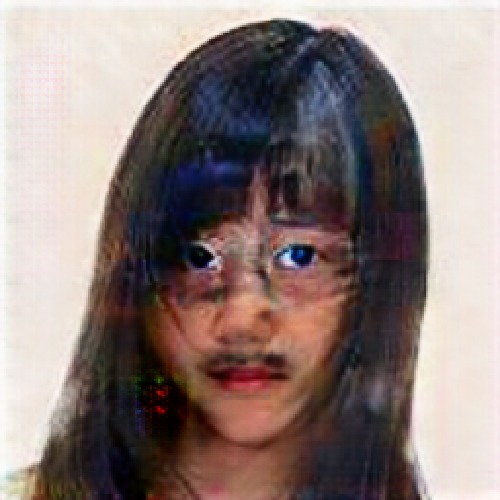} \\ 
    \includegraphics[width=0.090\textwidth]{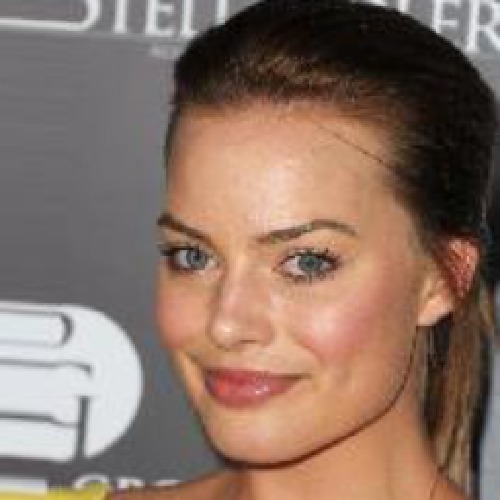} &
    \includegraphics[width=0.090\textwidth]{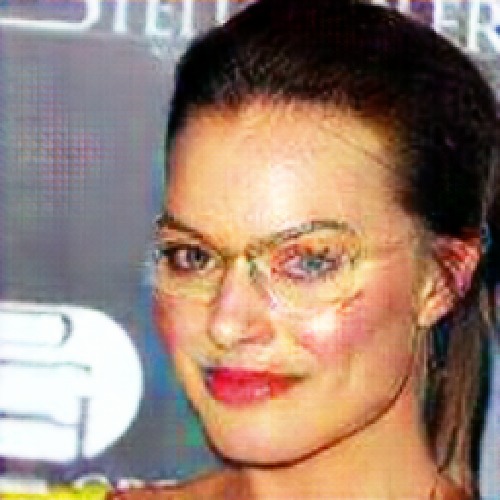} & 
    \includegraphics[width=0.090\textwidth]{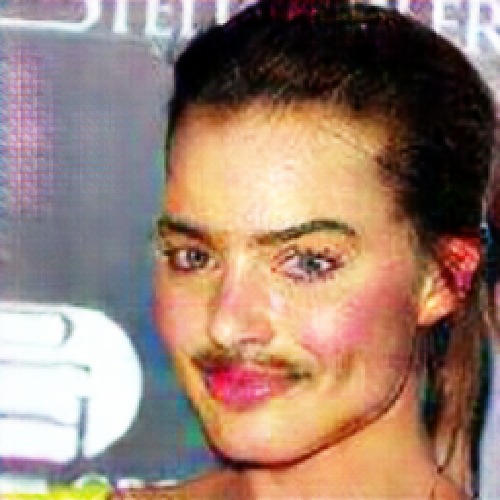} & &
    \includegraphics[width=0.090\textwidth]{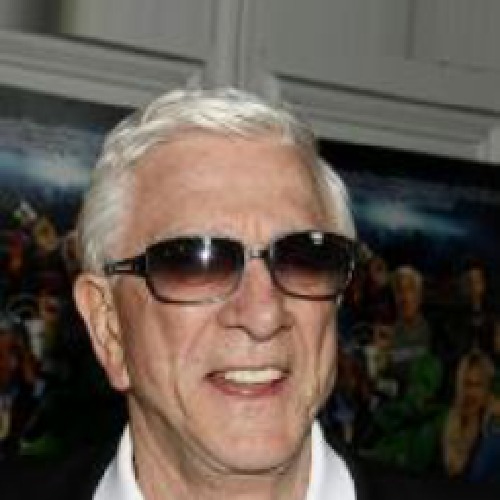} & 
    \includegraphics[width=0.090\textwidth]{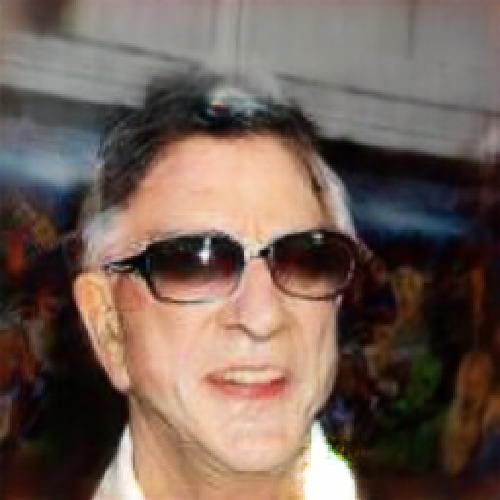} &
    \includegraphics[width=0.090\textwidth]{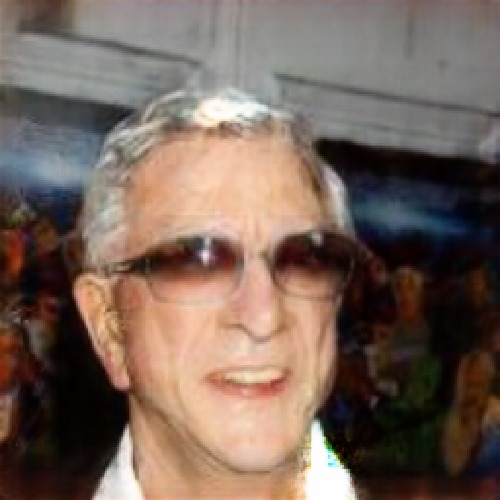} & &
    \includegraphics[width=0.090\textwidth]{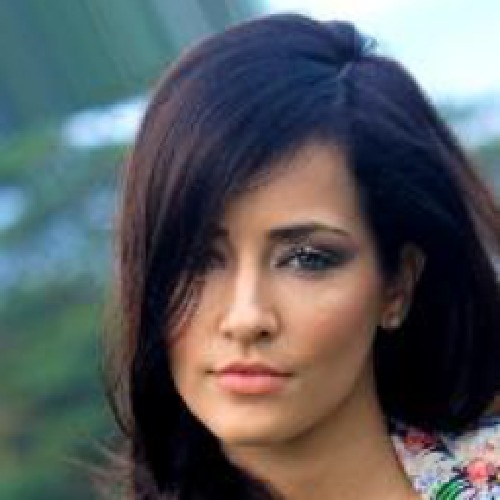} &
    \includegraphics[width=0.090\textwidth]{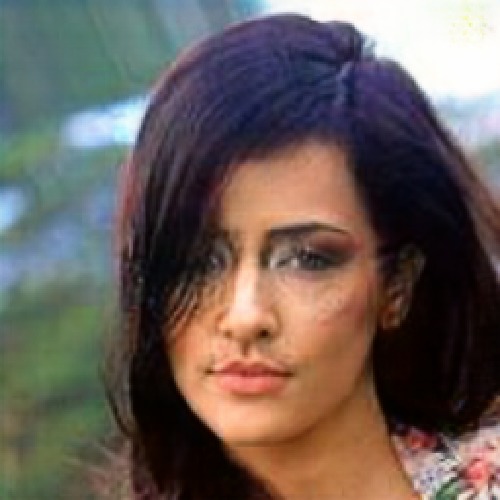} &
    \includegraphics[width=0.090\textwidth]{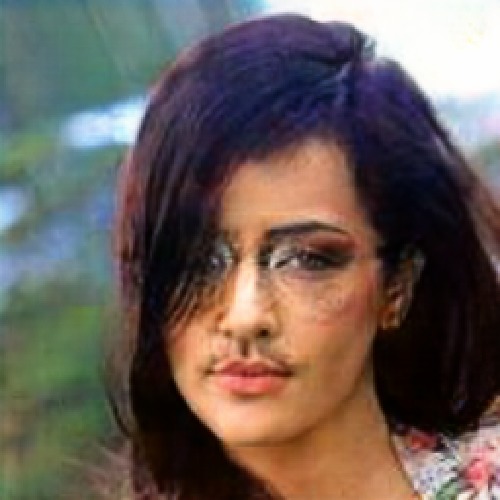} \\

    (a) &  (b) & (c) & & (d) & (e) & (f) & & (g) & (h) & (i) 
    \end{tabular}
    
    \caption{\sl Semantic adversarial examples generated with multiple attribute implementation using Adversarial AttGAN. The first, fourth and seventh columns contain the original images. We show adversarial examples generated under the attributes: (b),(e) and (h) Eyeglasses-Mustache-Age-Pale Skin-Young-Black Hair and (c),(f) and (i)Eyeglasses-Mustache-Pale skin-Age-Bushy eyebrows-Black hair. The quality of the images produced by the Adversarial AttGAN are sharper than those produced by the Adversarial Fader Networks.}
    \label{fig:attgan2}
\end{figure}
\endgroup

\begingroup
\begin{figure}
    \centering
    \setlength{\tabcolsep}{5pt}
    \renewcommand{\arraystretch}{0.5}
    \begin{tabular}{c c c  || c c c  || c c }
    \includegraphics[width=0.090\textwidth]{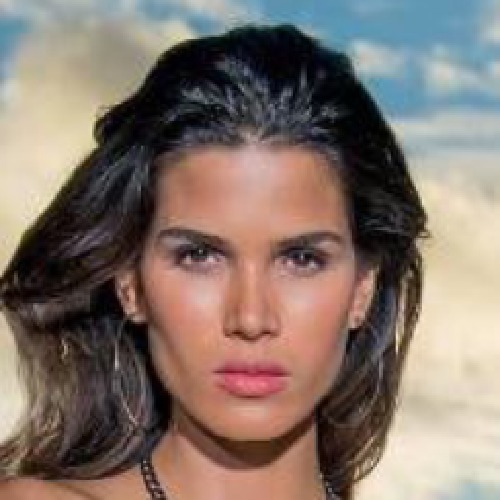} &
    \includegraphics[width=0.090\textwidth]{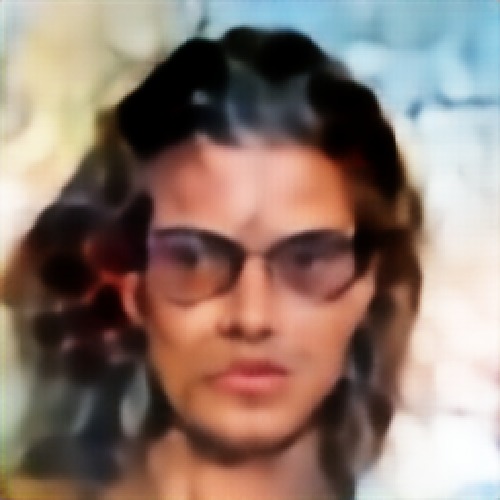} & &
    \includegraphics[width=0.090\textwidth]{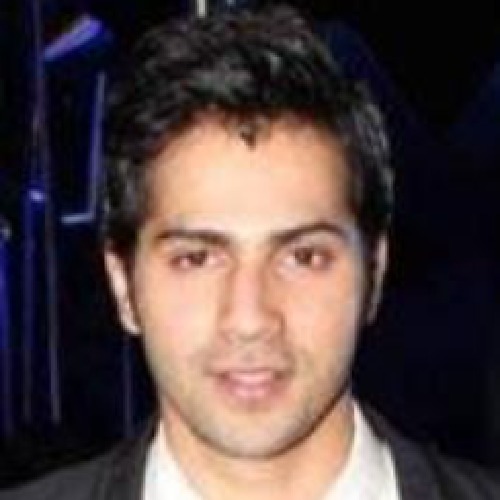} & 
    \includegraphics[width=0.090\textwidth]{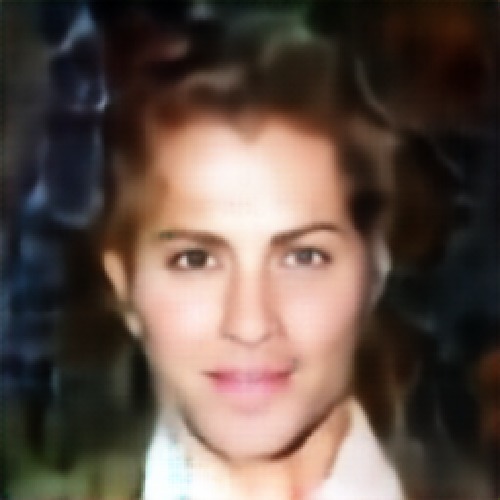} & &
    \includegraphics[width=0.090\textwidth]{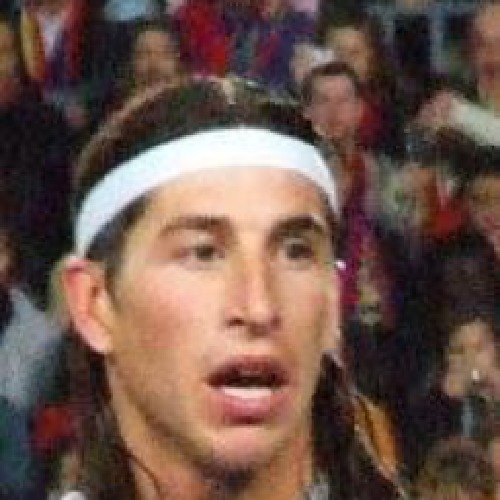} & 
    \includegraphics[width=0.090\textwidth]{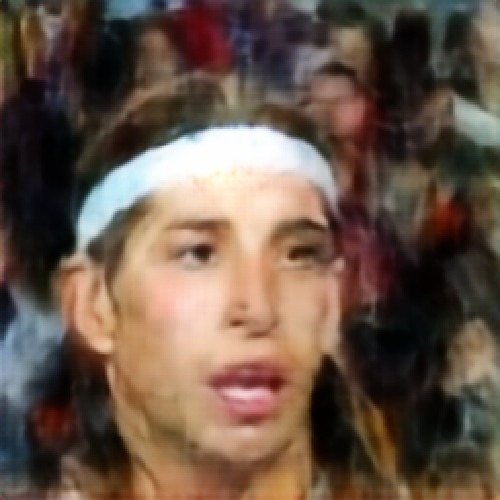} \\
    \includegraphics[width=0.090\textwidth]{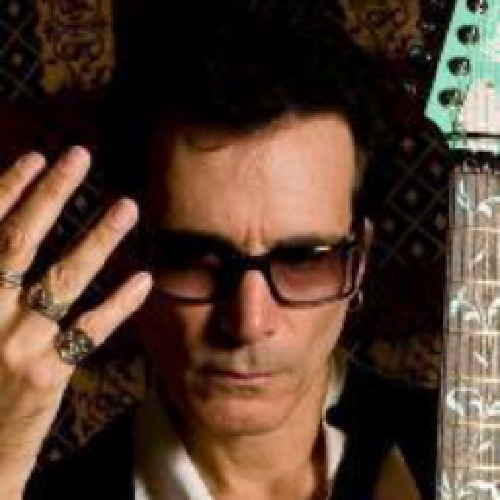} &
    \includegraphics[width=0.090\textwidth]{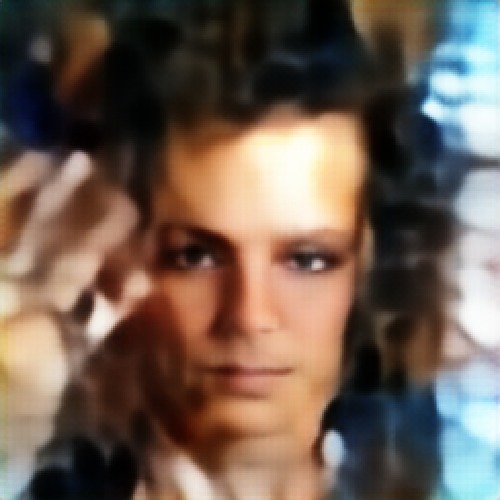}l & &
    \includegraphics[width=0.090\textwidth]{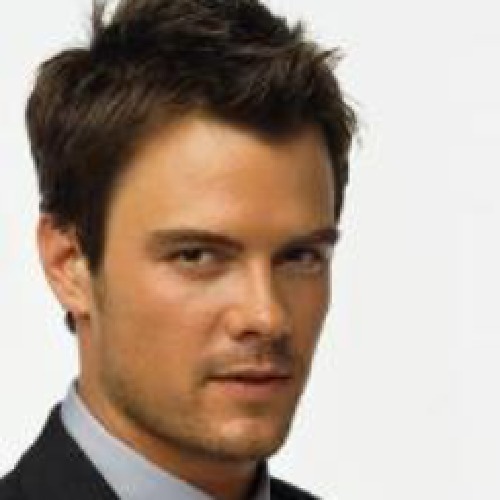} & 
    \includegraphics[width=0.090\textwidth]{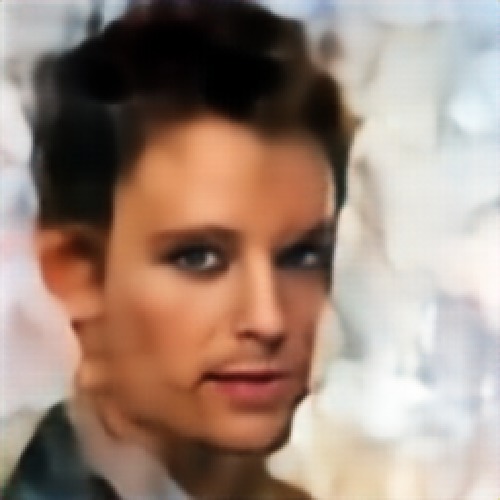} & &
    \includegraphics[width=0.090\textwidth]{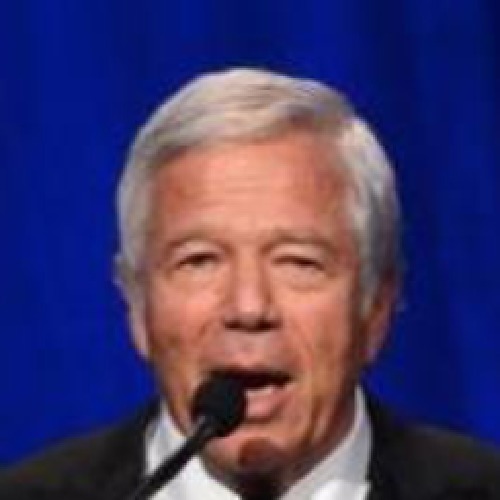} & 
    \includegraphics[width=0.090\textwidth]{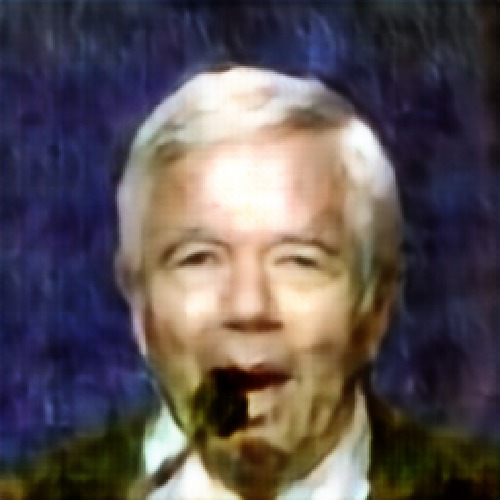} \\
    \includegraphics[width=0.090\textwidth]{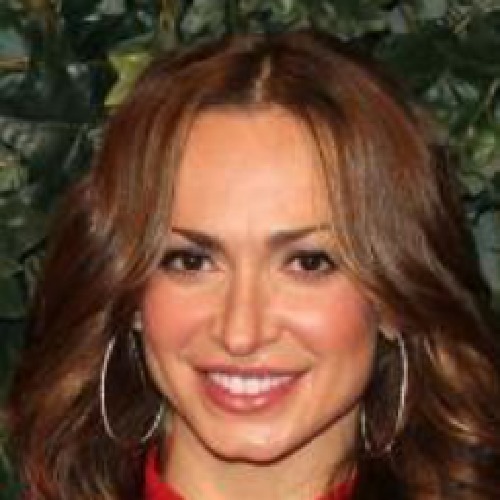} &
    \includegraphics[width=0.090\textwidth]{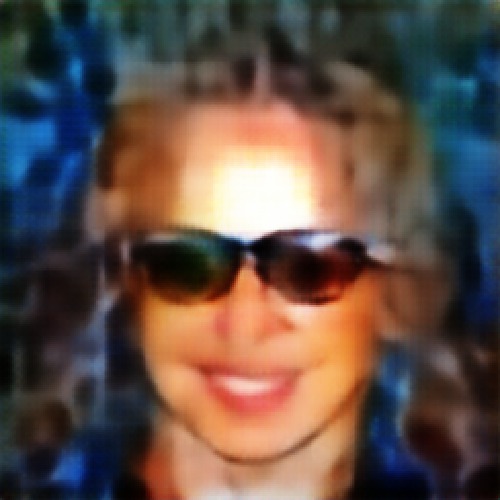} & &
    \includegraphics[width=0.090\textwidth]{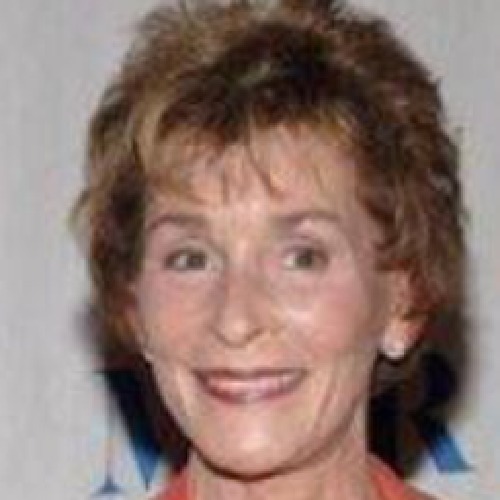} & 
    \includegraphics[width=0.090\textwidth]{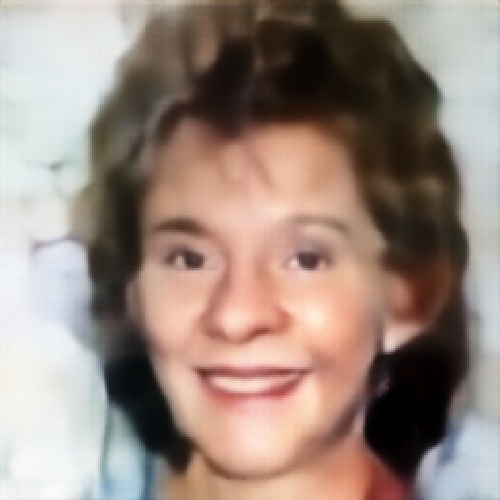} & &
    \includegraphics[width=0.090\textwidth]{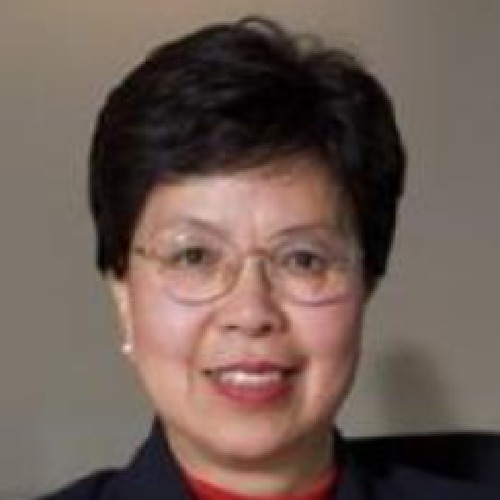} & 
    \includegraphics[width=0.090\textwidth]{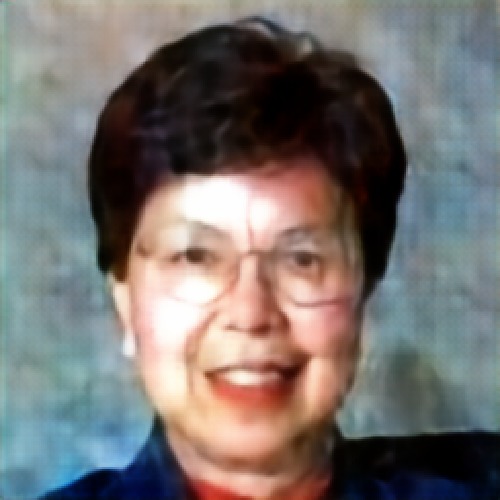} \\
    \includegraphics[width=0.090\textwidth]{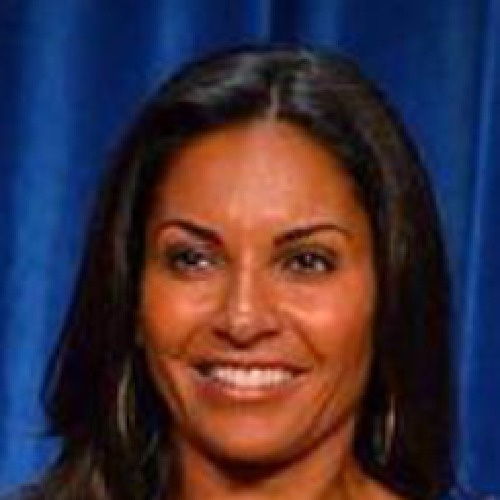} &
    \includegraphics[width=0.090\textwidth]{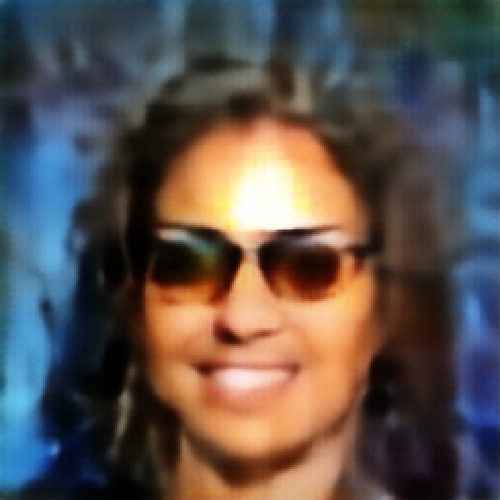} & &
    \includegraphics[width=0.090\textwidth]{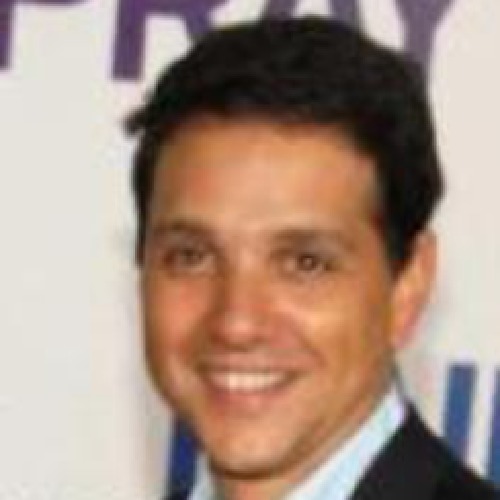} & 
    \includegraphics[width=0.090\textwidth]{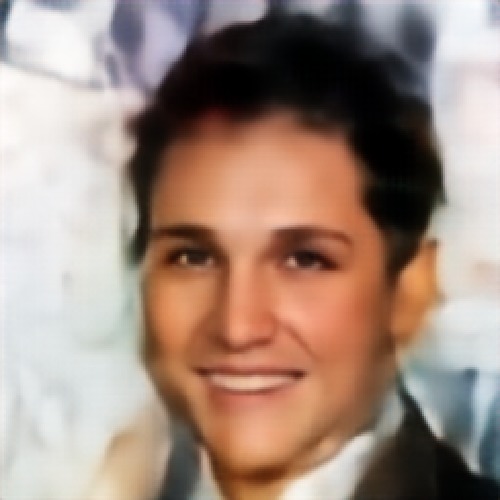} & &
    \includegraphics[width=0.090\textwidth]{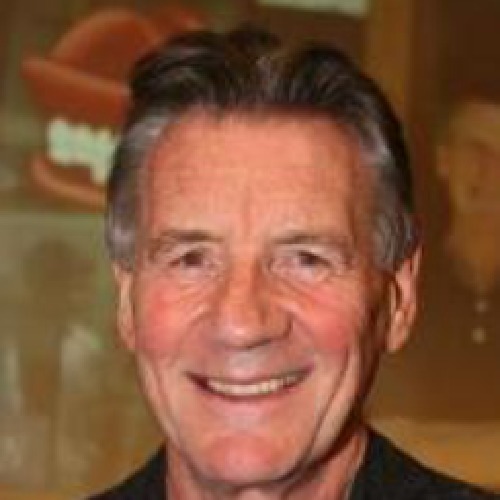} & 
    \includegraphics[width=0.090\textwidth]{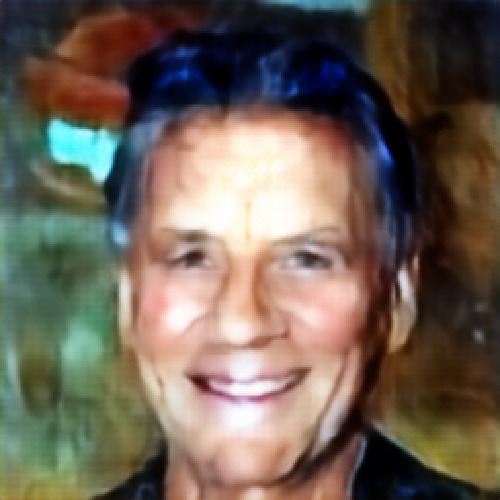} \\
    \includegraphics[width=0.090\textwidth]{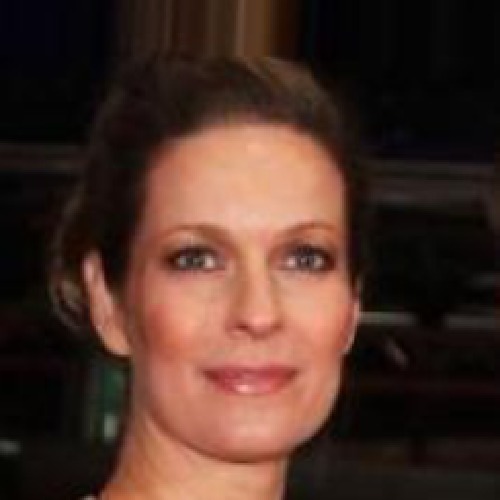} &
    \includegraphics[width=0.090\textwidth]{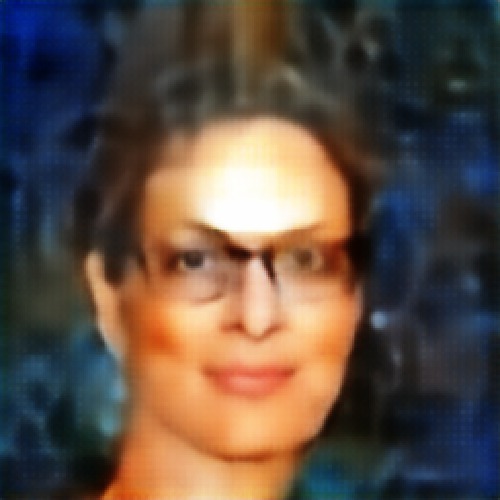} & &
    \includegraphics[width=0.090\textwidth]{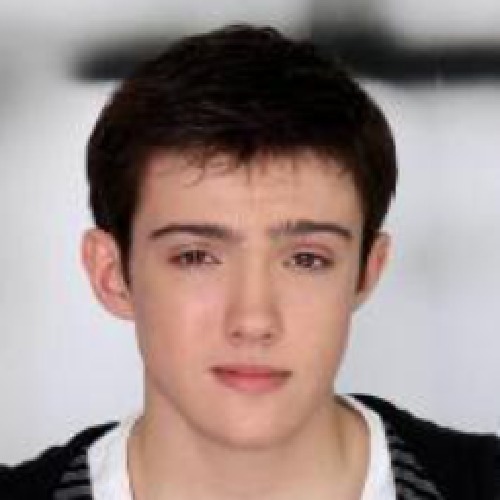} & 
    \includegraphics[width=0.090\textwidth]{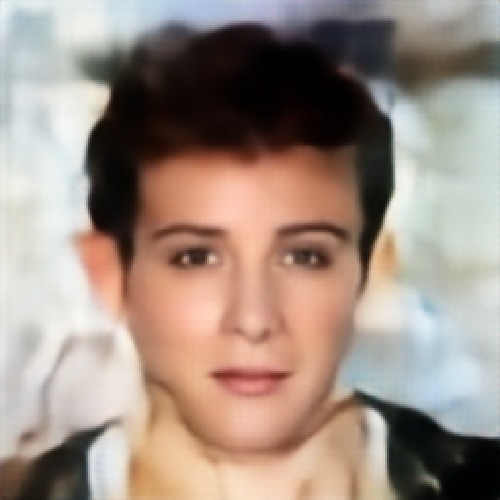} & &
    \includegraphics[width=0.090\textwidth]{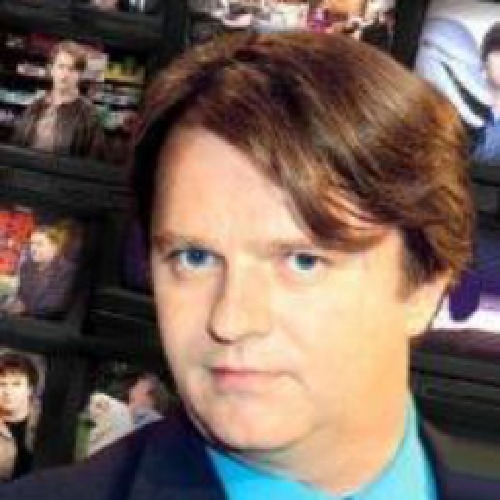} & 
    \includegraphics[width=0.090\textwidth]{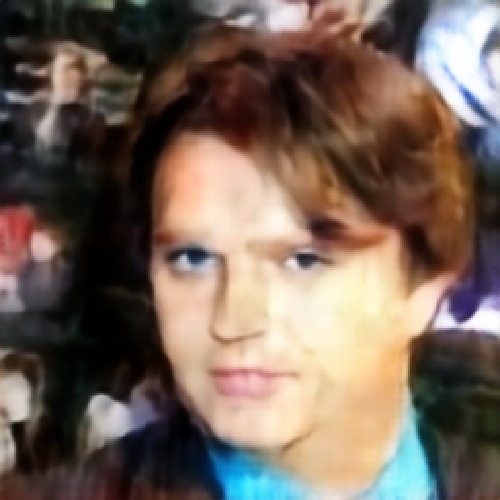} \\
    \includegraphics[width=0.090\textwidth]{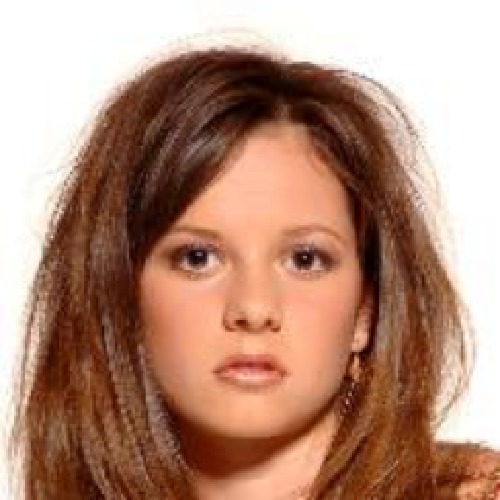} &
    \includegraphics[width=0.090\textwidth]{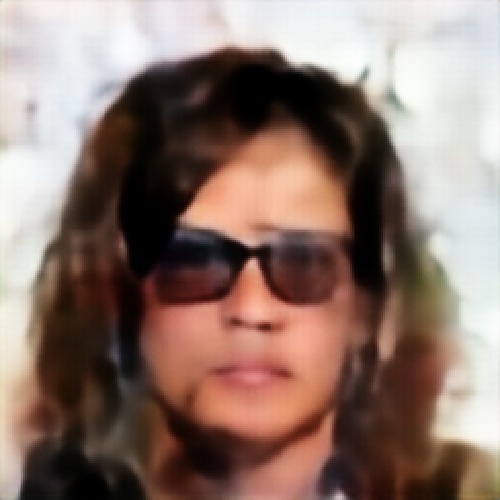} & &
    \includegraphics[width=0.090\textwidth]{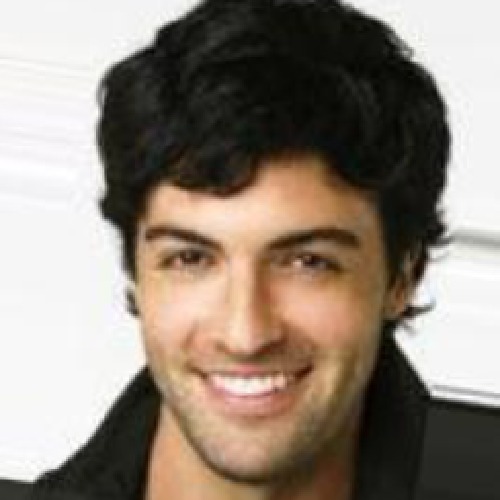} & 
    \includegraphics[width=0.090\textwidth]{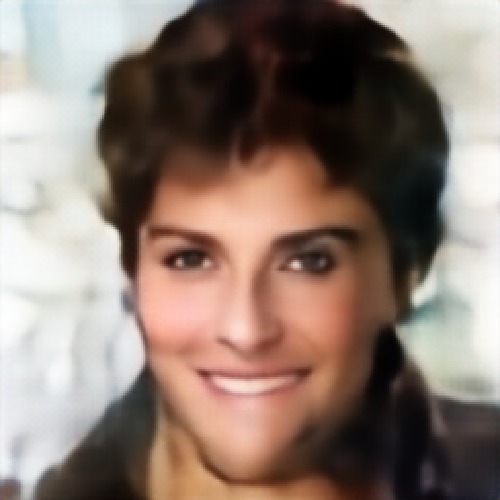} & &
    \includegraphics[width=0.090\textwidth]{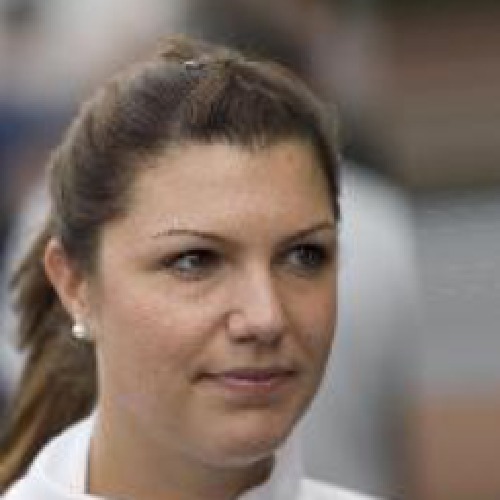} & 
    \includegraphics[width=0.090\textwidth]{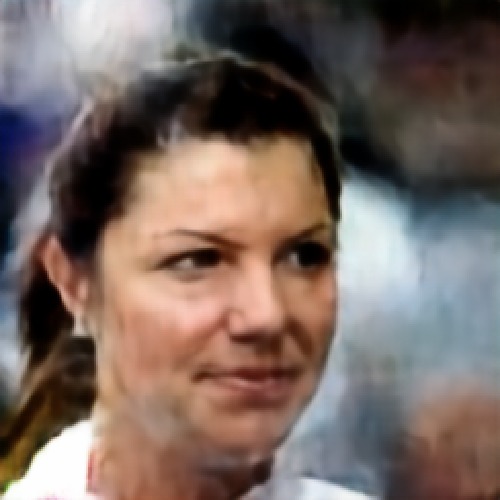} \\
    \includegraphics[width=0.090\textwidth]{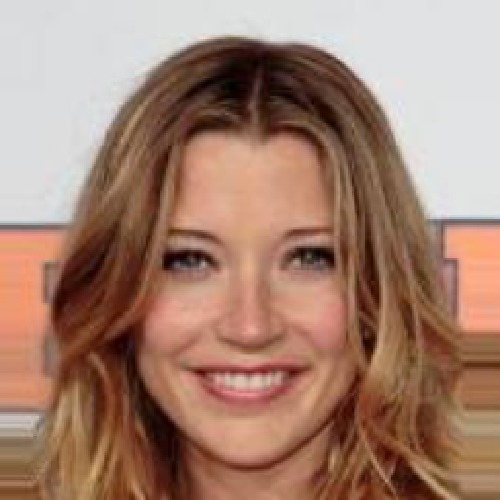} &
    \includegraphics[width=0.090\textwidth]{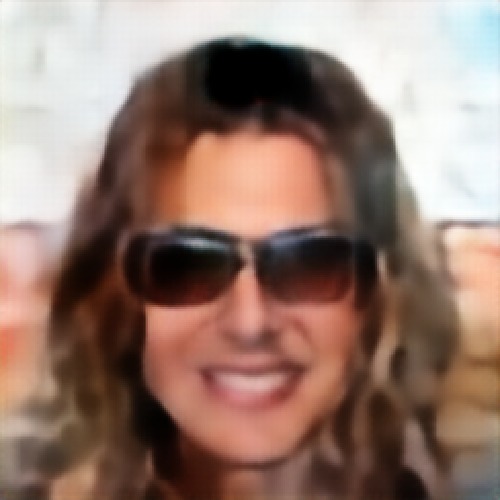} & &
    \includegraphics[width=0.090\textwidth]{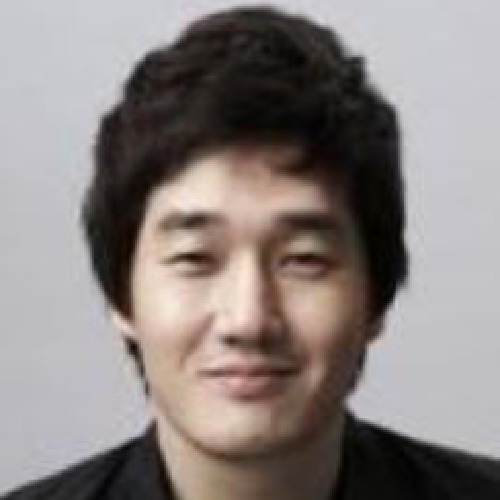} & 
    \includegraphics[width=0.090\textwidth]{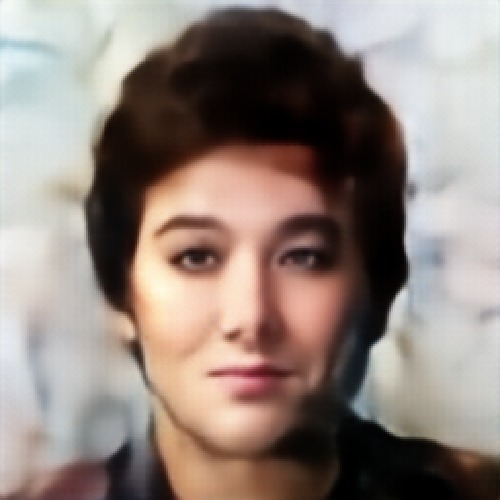} & &
    \includegraphics[width=0.090\textwidth]{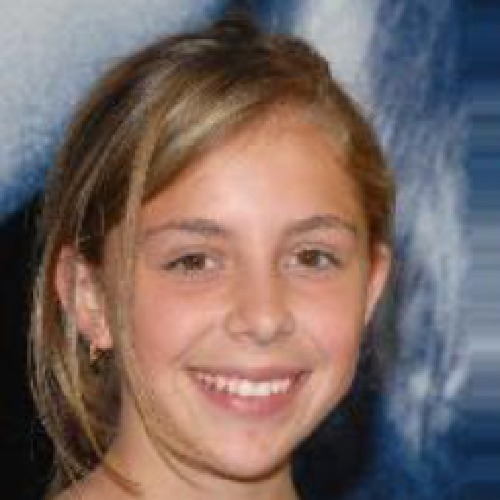} & 
    \includegraphics[width=0.090\textwidth]{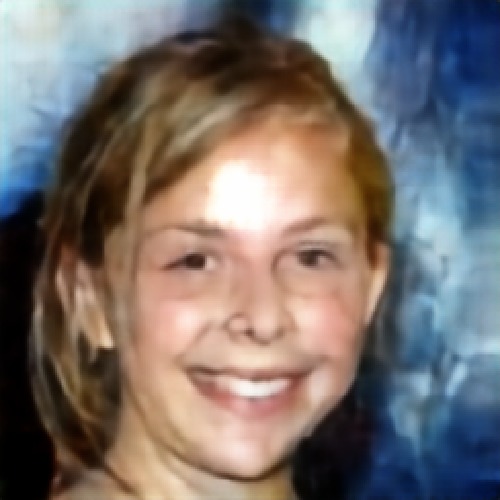} \\
    \includegraphics[width=0.090\textwidth]{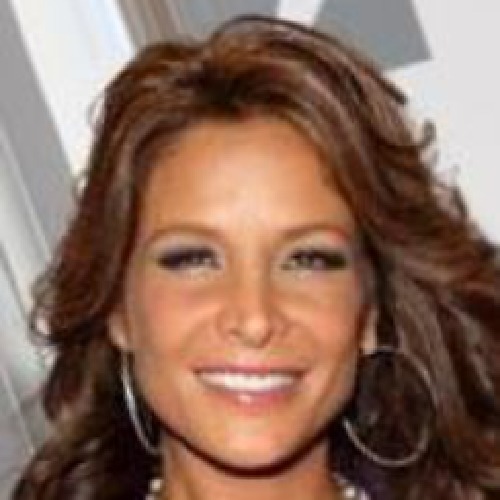} &
    \includegraphics[width=0.090\textwidth]{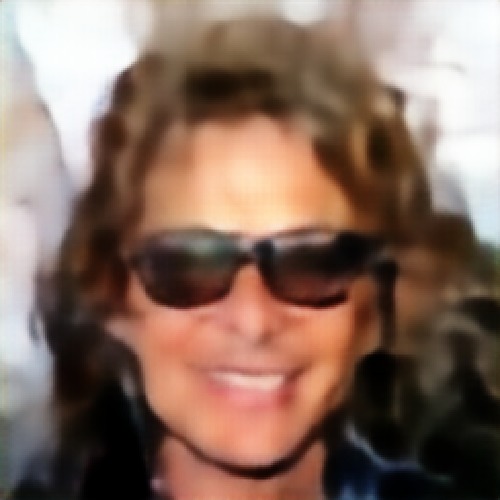} & &
    \includegraphics[width=0.090\textwidth]{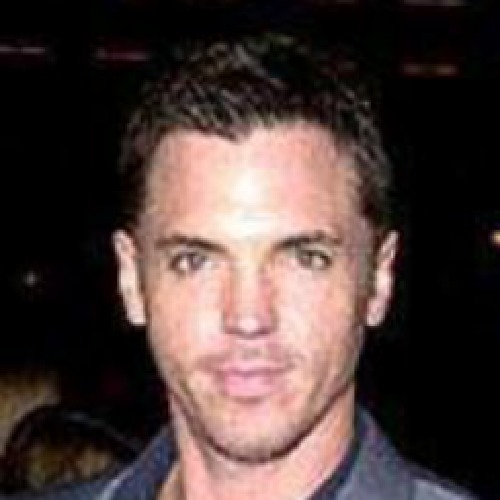} & 
    \includegraphics[width=0.090\textwidth]{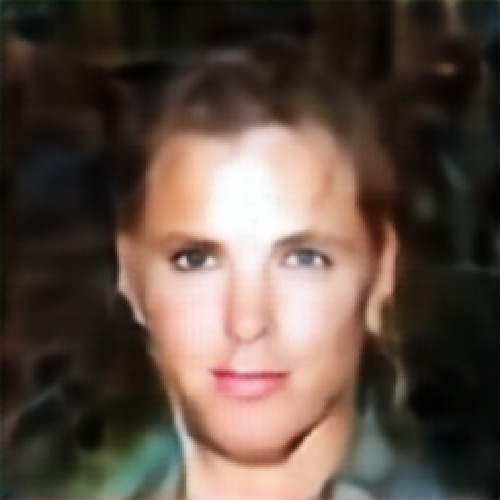} & &
    \includegraphics[width=0.090\textwidth]{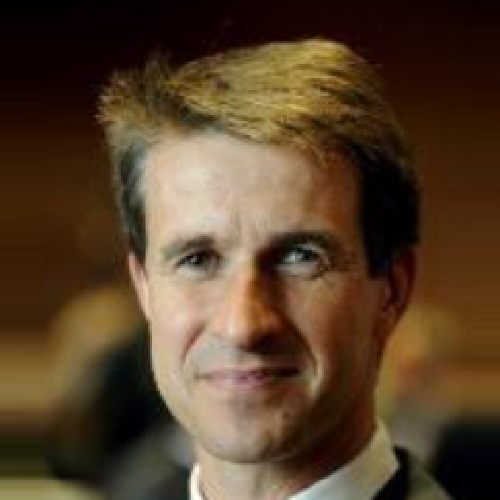} & 
    \includegraphics[width=0.090\textwidth]{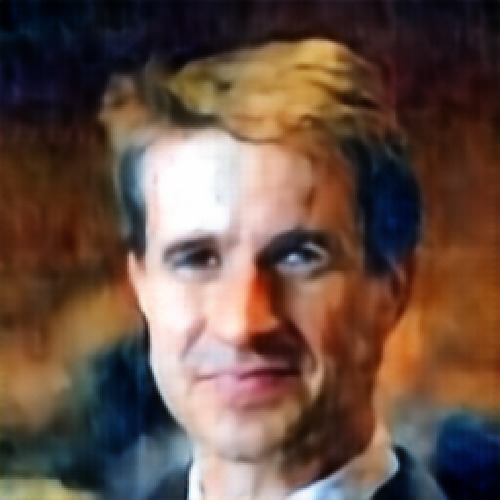} \\
    \includegraphics[width=0.090\textwidth]{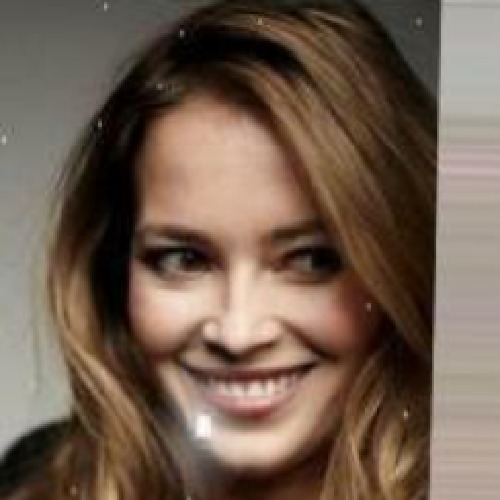} &
    \includegraphics[width=0.090\textwidth]{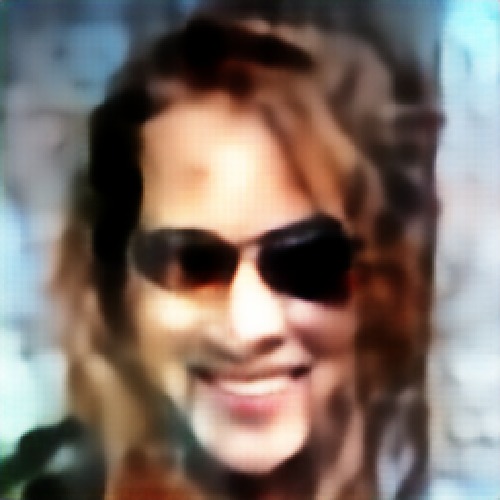} & &
    \includegraphics[width=0.090\textwidth]{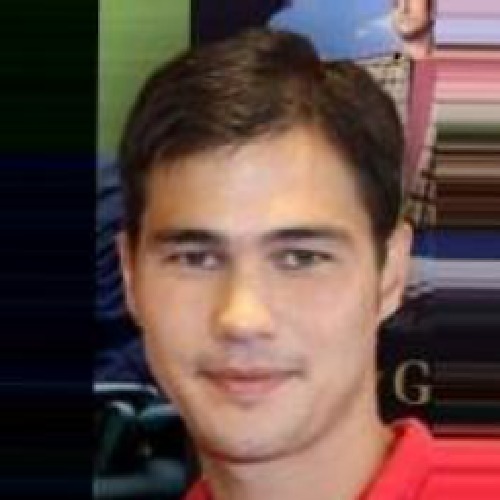} & 
    \includegraphics[width=0.090\textwidth]{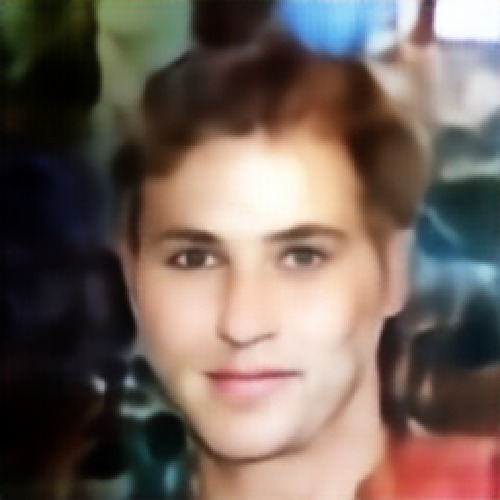} & &
    \includegraphics[width=0.090\textwidth]{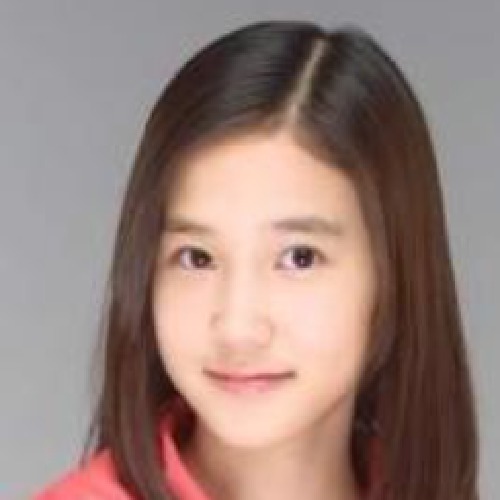} & 
    \includegraphics[width=0.090\textwidth]{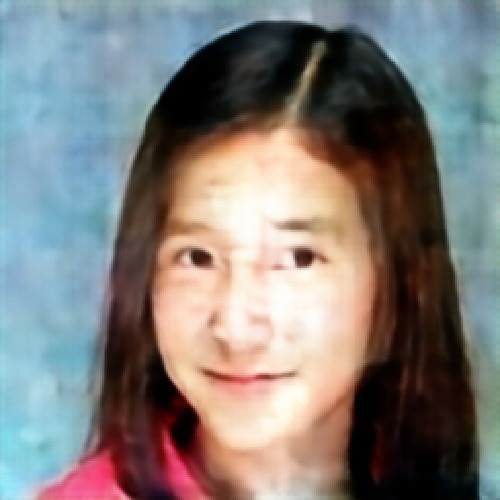} \\
    \includegraphics[width=0.090\textwidth]{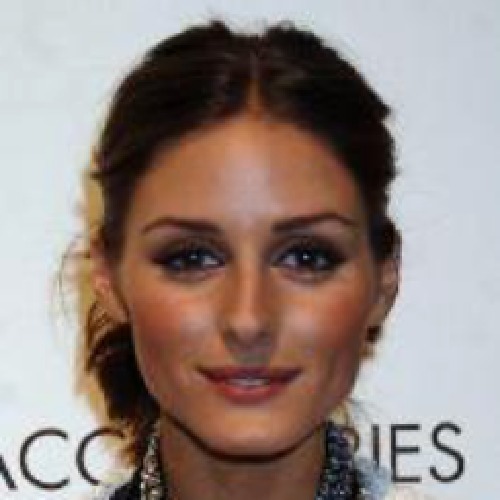} &
    \includegraphics[width=0.090\textwidth]{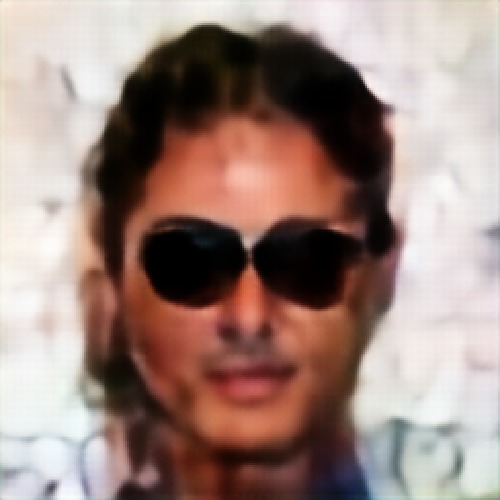} & &
    \includegraphics[width=0.090\textwidth]{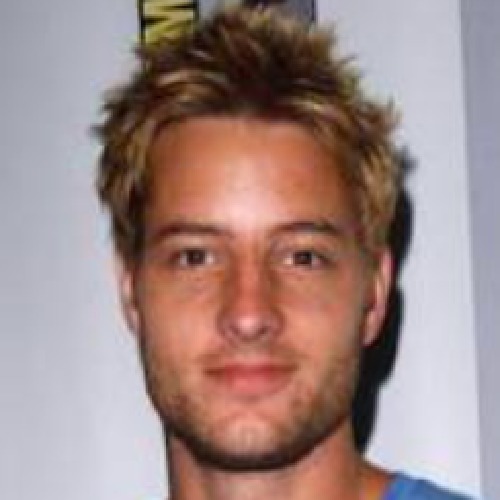} & 
    \includegraphics[width=0.090\textwidth]{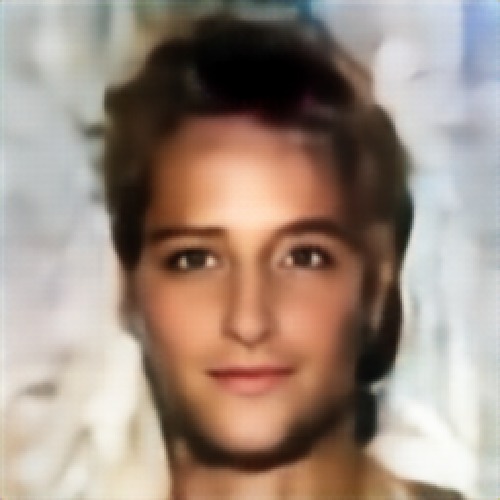} & &
    \includegraphics[width=0.090\textwidth]{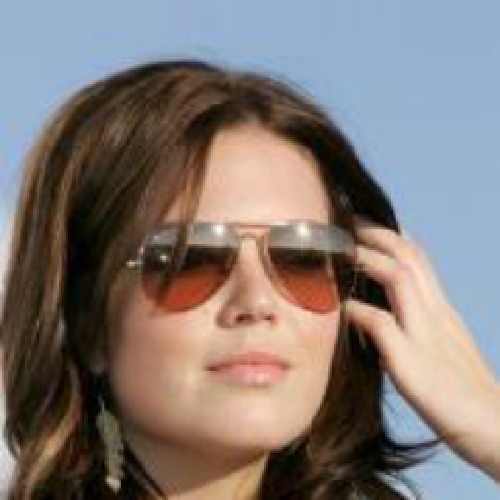} & 
    \includegraphics[width=0.090\textwidth]{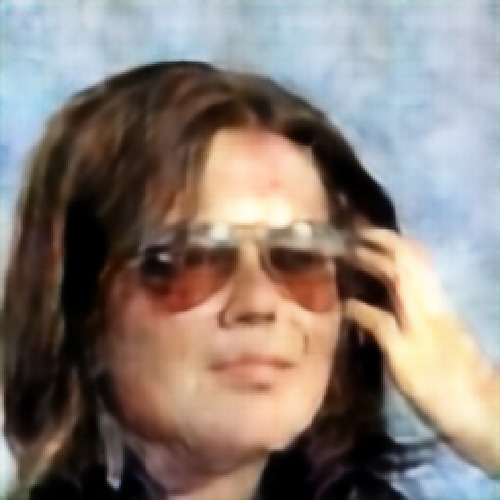} \\
    \includegraphics[width=0.090\textwidth]{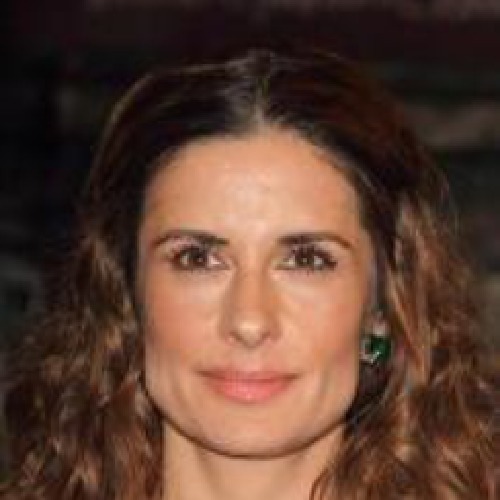} &
    \includegraphics[width=0.090\textwidth]{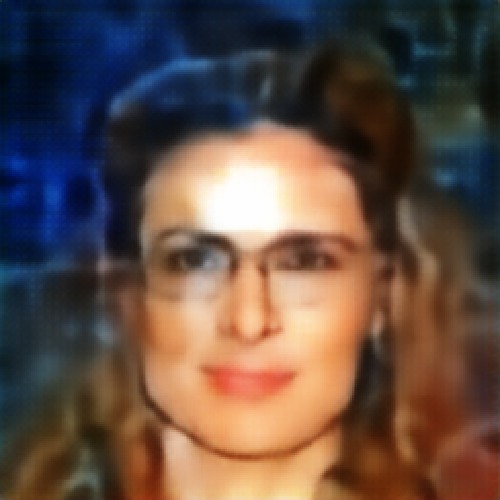} & &
    \includegraphics[width=0.090\textwidth]{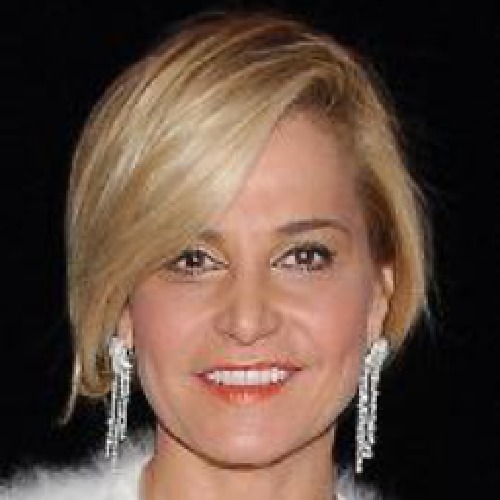} & 
    \includegraphics[width=0.090\textwidth]{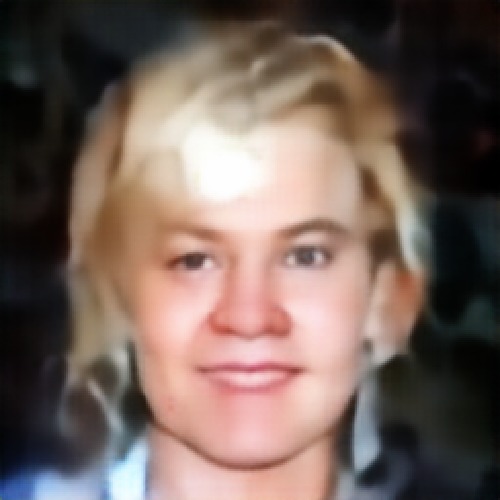} & &
    \includegraphics[width=0.090\textwidth]{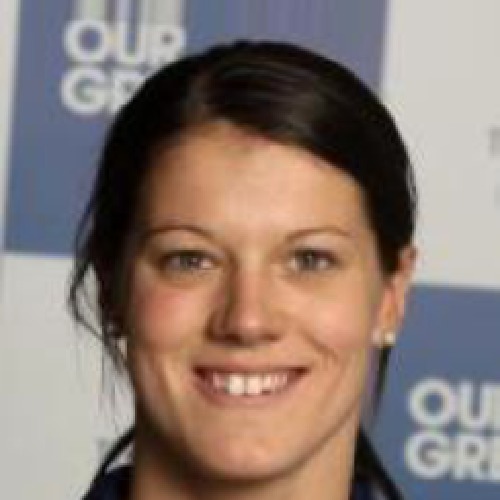} & 
    \includegraphics[width=0.090\textwidth]{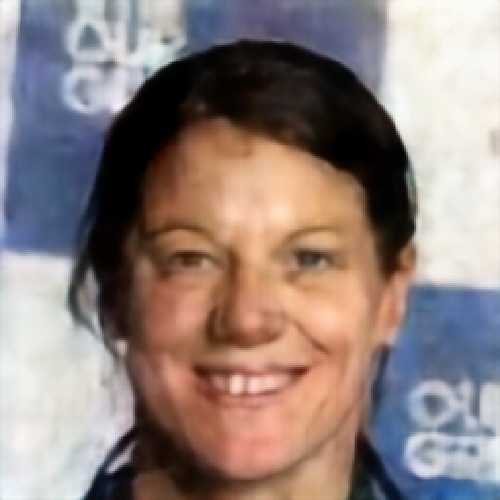} \\
    \includegraphics[width=0.090\textwidth]{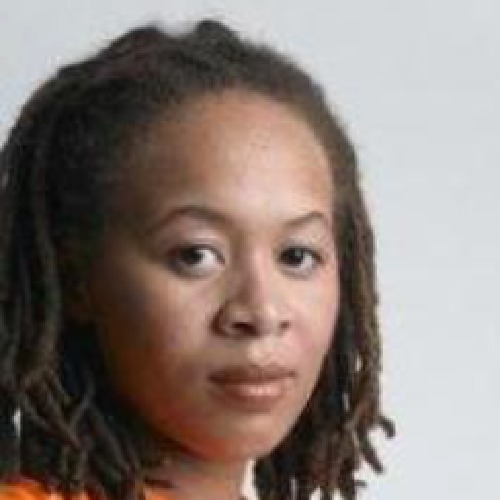} &
    \includegraphics[width=0.090\textwidth]{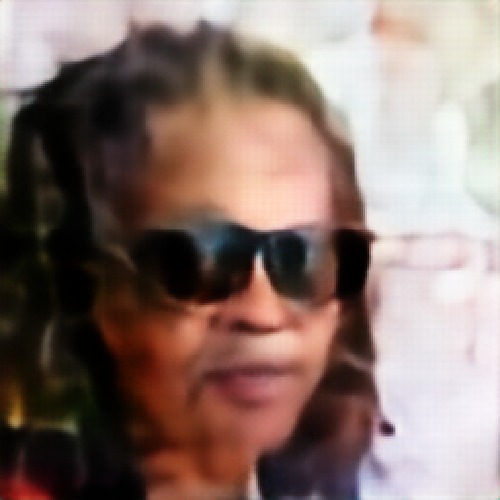} & &
    \includegraphics[width=0.090\textwidth]{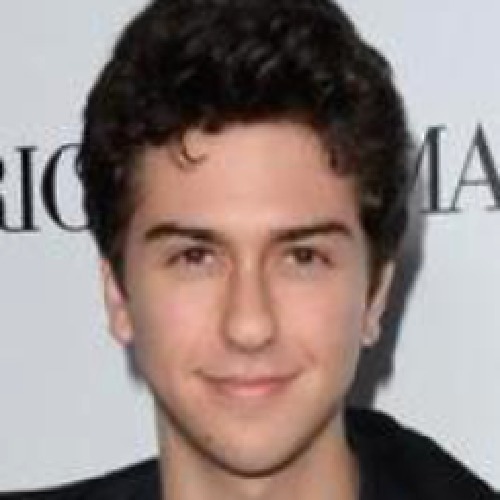} & 
    \includegraphics[width=0.090\textwidth]{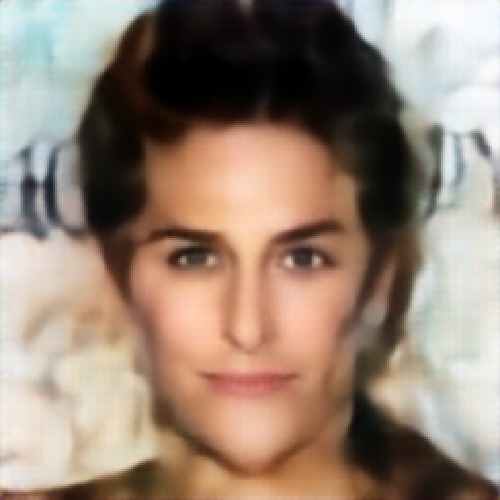} & &
    \includegraphics[width=0.090\textwidth]{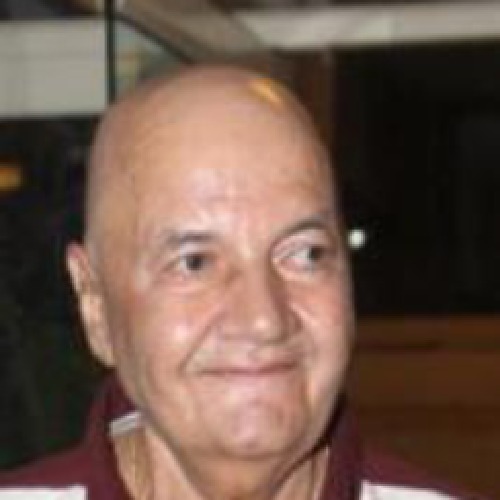} & 
    \includegraphics[width=0.090\textwidth]{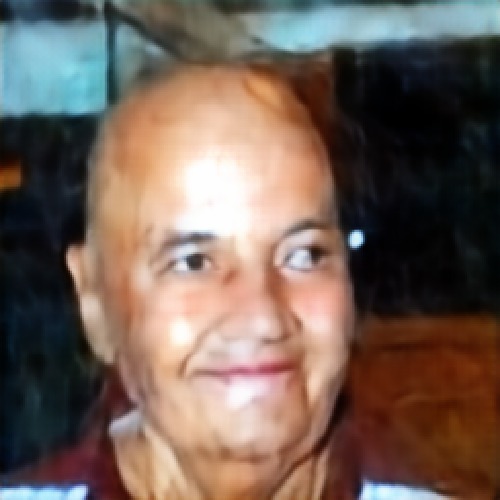} \\

    (a) &  (b) & & (c) & (d) & & (e) & (f)  
    \end{tabular}
    
    \caption{\sl Semantic adversarial examples generated with Single attribute implementation using Adversarial Fader Networks. Columns (a),(c) and (e) contain the original images. We show adversarial examples generated under the attributes: (b),(d) and (f) Eyeglasses, Nose shape and Age respectively.}
    \label{fig:fn_single}
\end{figure}
\endgroup

\begingroup
\begin{figure}[htp]
    \centering
    \setlength{\tabcolsep}{1pt}
    \renewcommand{\arraystretch}{0.5}
    \begin{tabular}{c c c || c c c || c c c || c c c || c c }
    \includegraphics[width=0.09\textwidth]{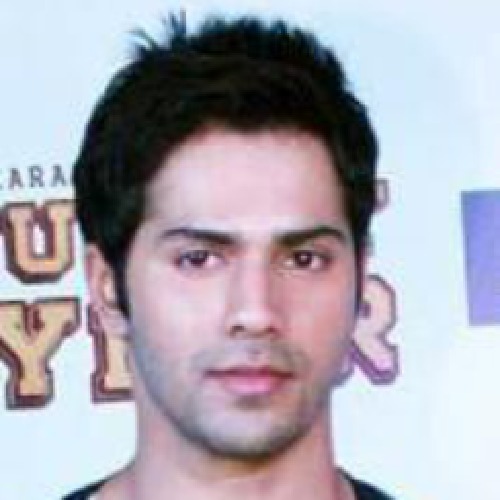}&
    \includegraphics[width=0.09\textwidth]{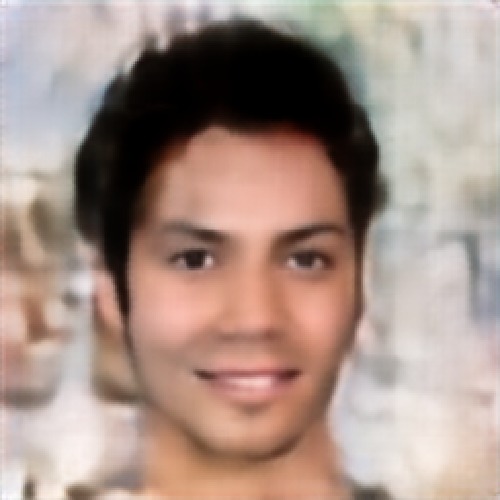} & &
    \includegraphics[width=0.09\textwidth]{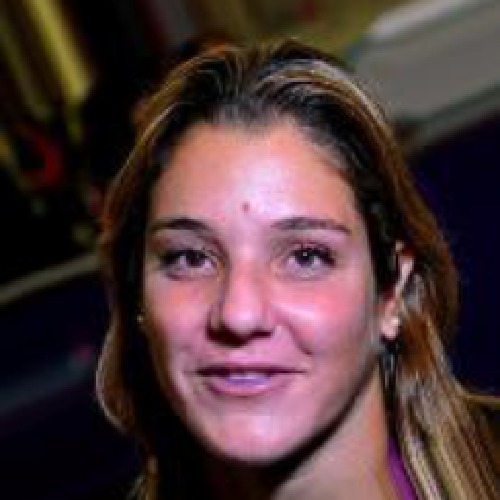}&
    \includegraphics[width=0.09\textwidth]{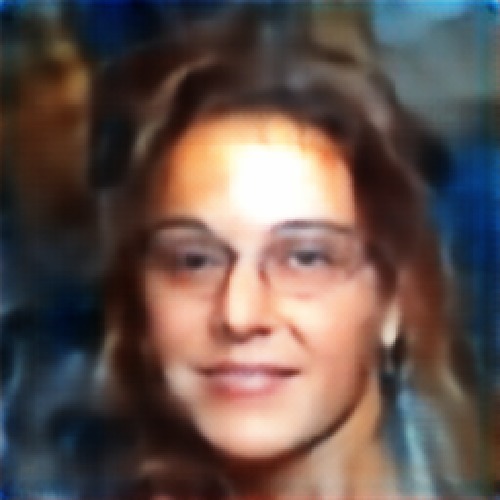} & &
    \includegraphics[width=0.09\textwidth]{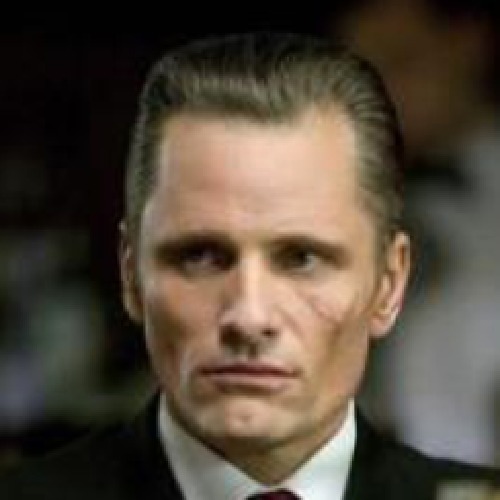}&
    \includegraphics[width=0.09\textwidth]{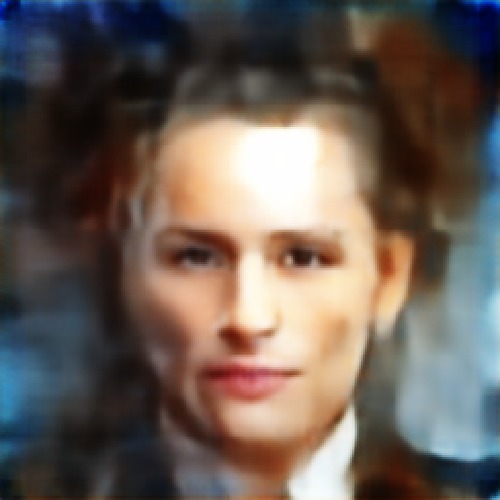}& &
    \includegraphics[width=0.09\textwidth]{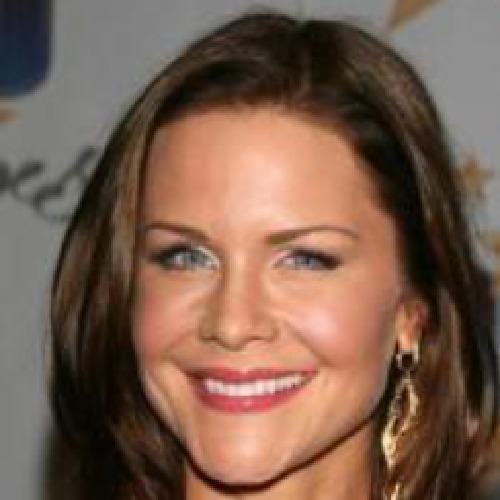}&
    \includegraphics[width=0.09\textwidth]{images/Appendix_Images/fadernet/seq_glasses_young_nose/3_orig.jpg} & &
    \includegraphics[width=0.09\textwidth]{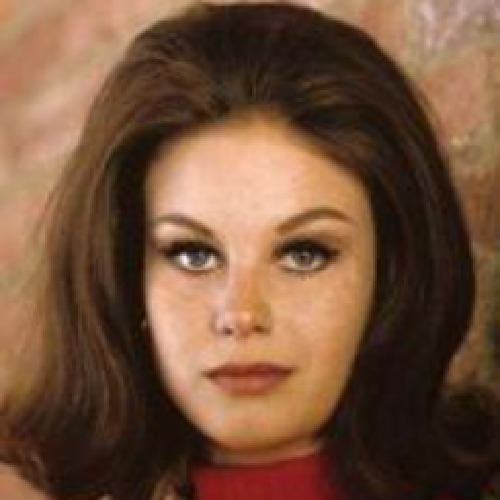}&
    \includegraphics[width=0.09\textwidth]{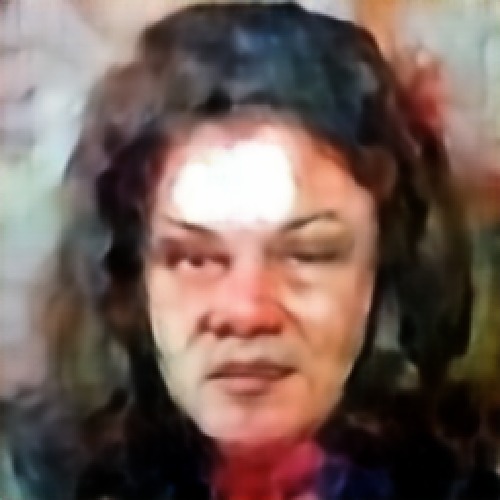}  \\
    \includegraphics[width=0.09\textwidth]{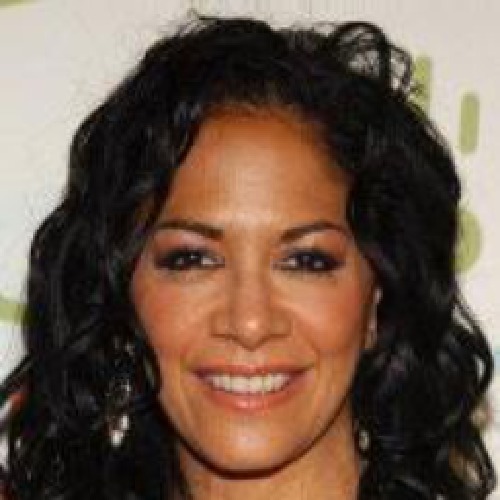}&
    \includegraphics[width=0.09\textwidth]{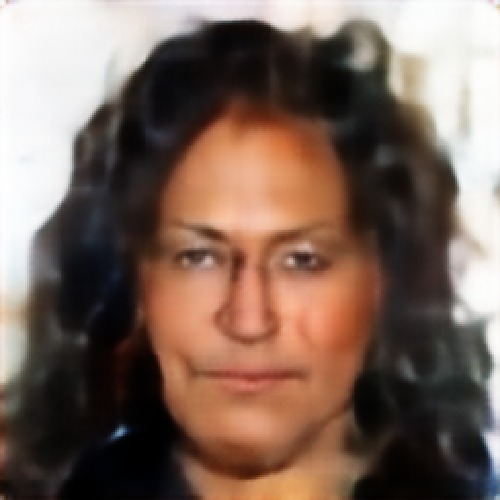} & &
    \includegraphics[width=0.09\textwidth]{images/Appendix_Images/fadernet/6.jpg}&
    \includegraphics[width=0.09\textwidth]{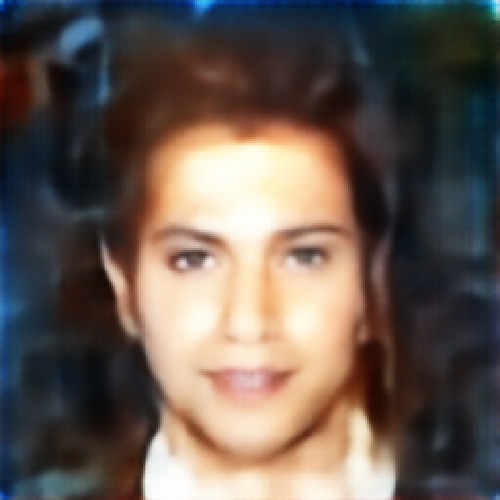} & &
    \includegraphics[width=0.09\textwidth]{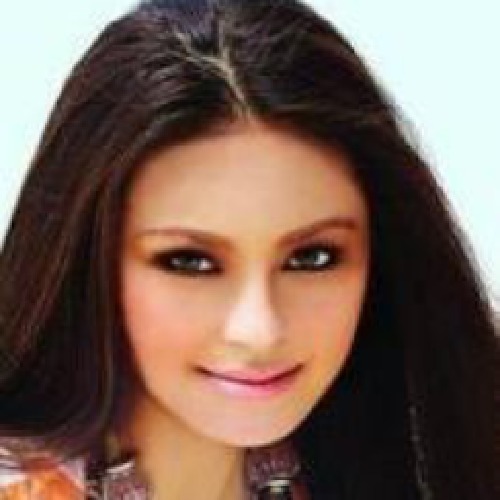}&
    \includegraphics[width=0.09\textwidth]{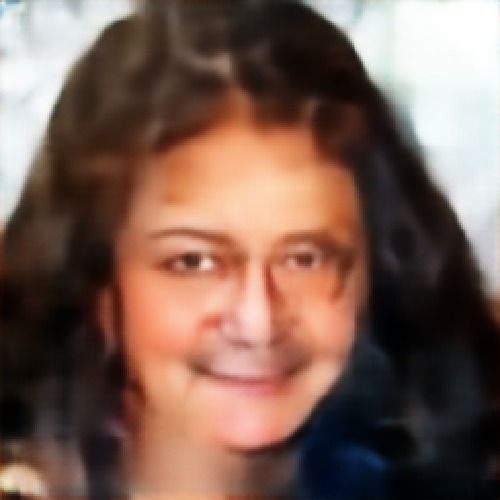} & &
    \includegraphics[width=0.09\textwidth]{images/Appendix_Images/fadernet/seq_glasses_young_nose/3_orig.jpg}&
    \includegraphics[width=0.09\textwidth]{images/Appendix_Images/fadernet/seq_glasses_young_nose/3_orig.jpg} & &
    \includegraphics[width=0.09\textwidth]{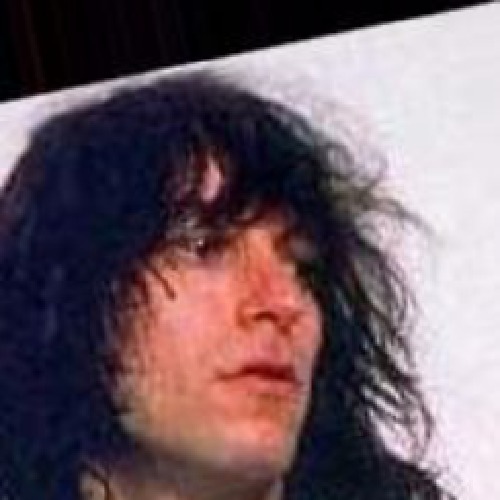}&
    \includegraphics[width=0.09\textwidth]{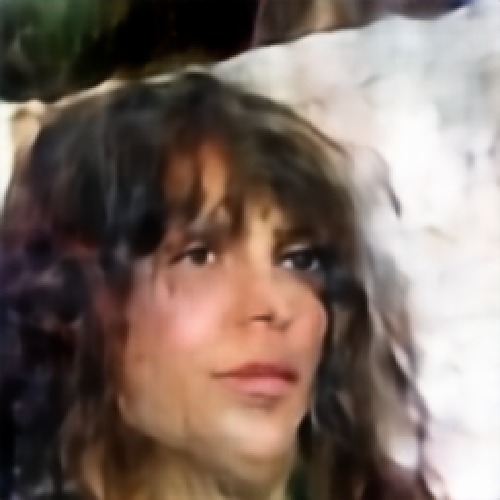}  \\
      \includegraphics[width=0.09\textwidth]{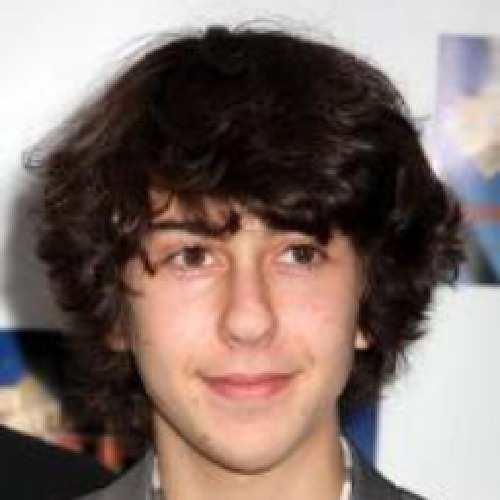}&
    \includegraphics[width=0.09\textwidth]{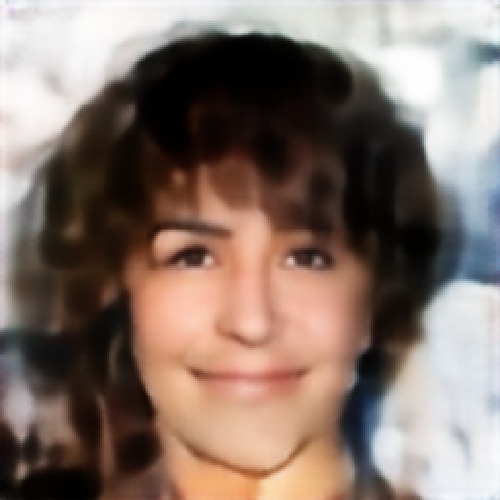} & &
    \includegraphics[width=0.09\textwidth]{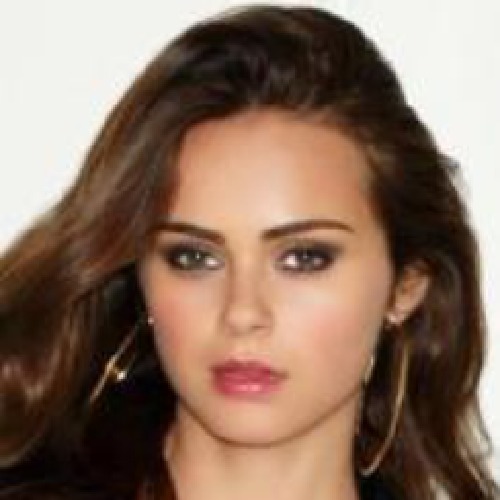}&
    \includegraphics[width=0.09\textwidth]{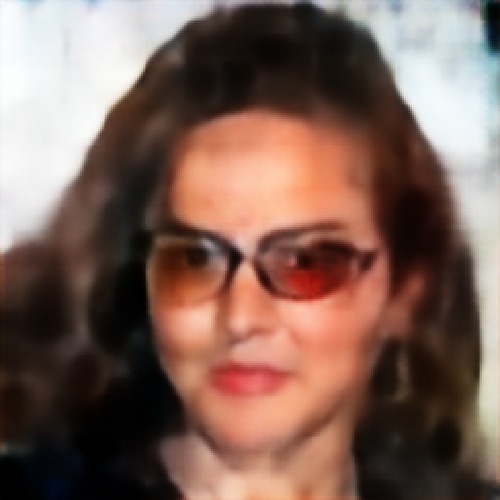} & &
    \includegraphics[width=0.09\textwidth]{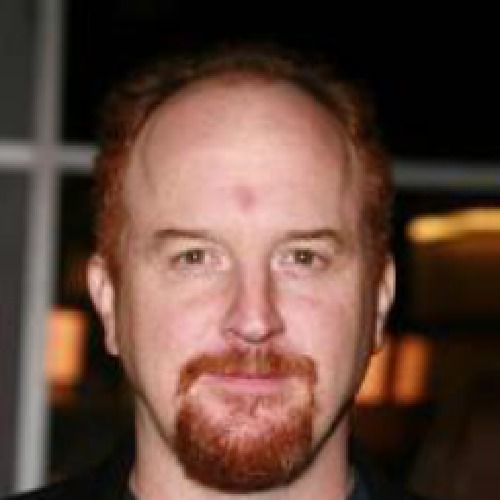}&
    \includegraphics[width=0.09\textwidth]{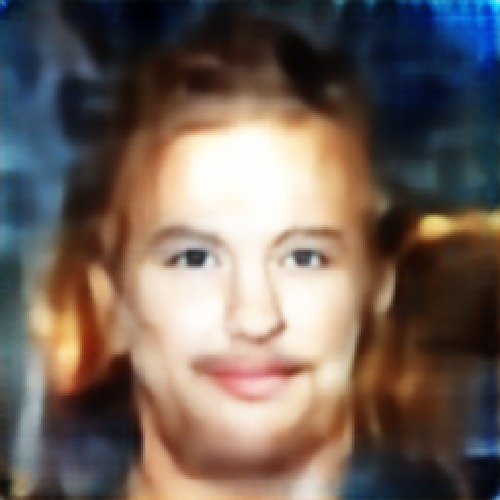} & &
   \includegraphics[width=0.09\textwidth]{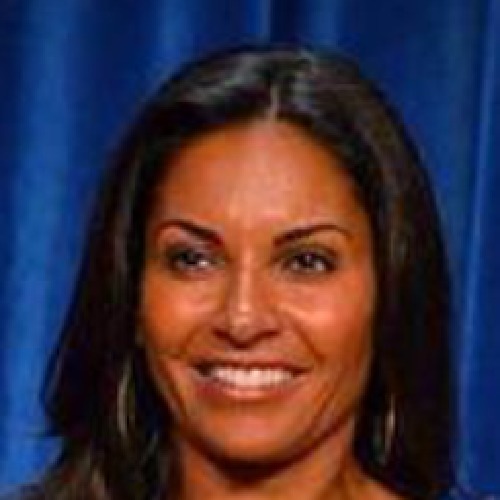}&
    \includegraphics[width=0.09\textwidth]{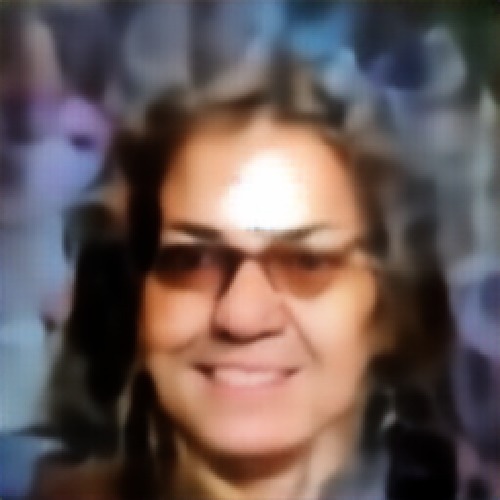} & &
    \includegraphics[width=0.09\textwidth]{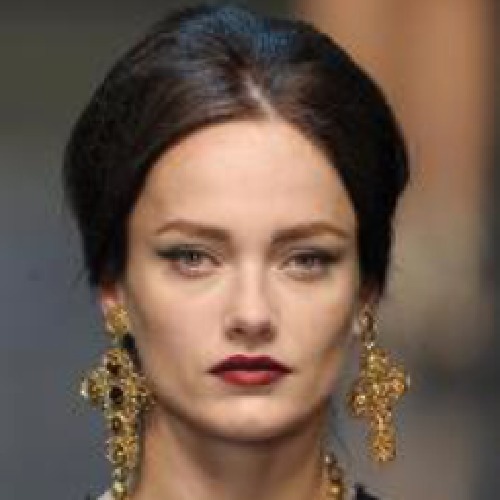}&
    \includegraphics[width=0.09\textwidth]{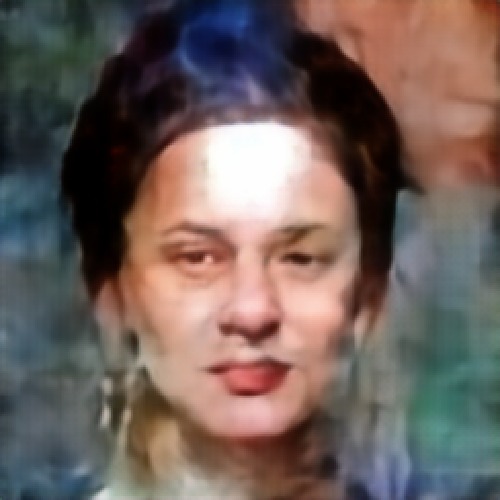}  \\
      \includegraphics[width=0.09\textwidth]{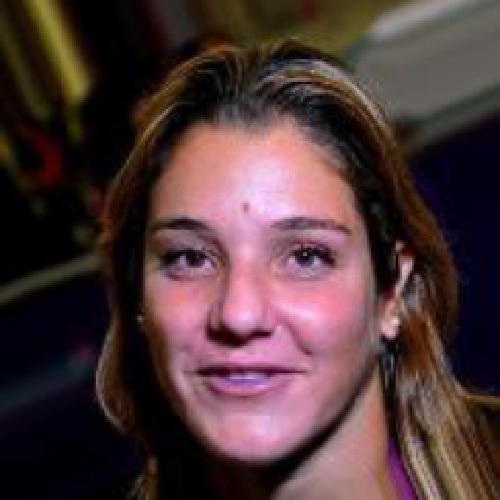}&
    \includegraphics[width=0.09\textwidth]{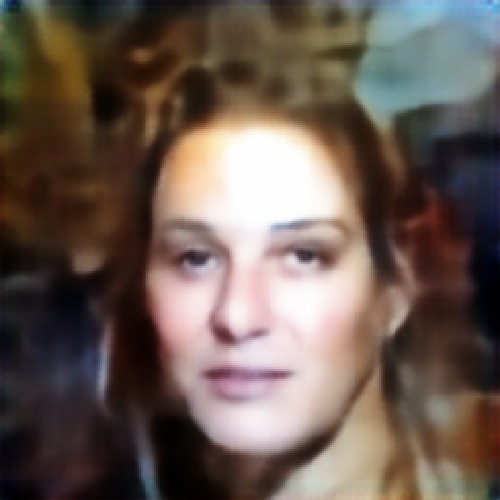} & &
    \includegraphics[width=0.09\textwidth]{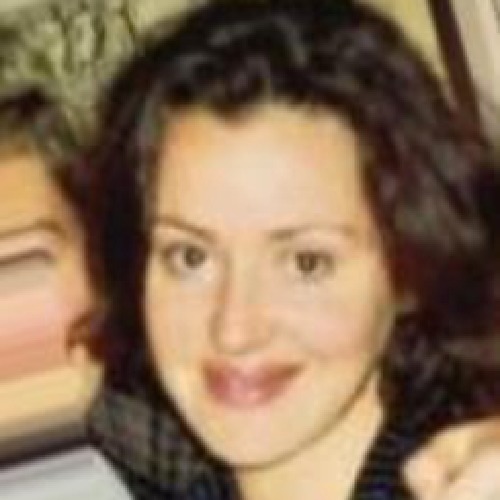}&
    \includegraphics[width=0.09\textwidth]{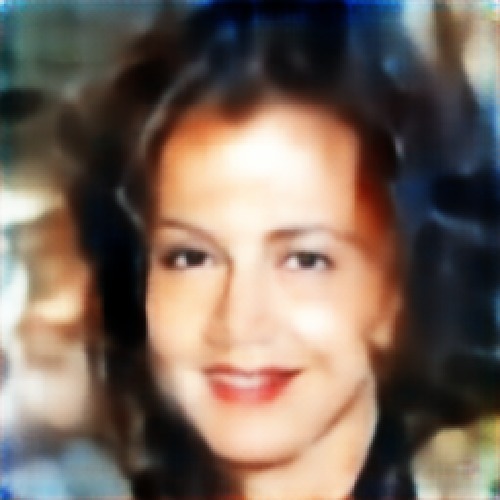} & &
    \includegraphics[width=0.09\textwidth]{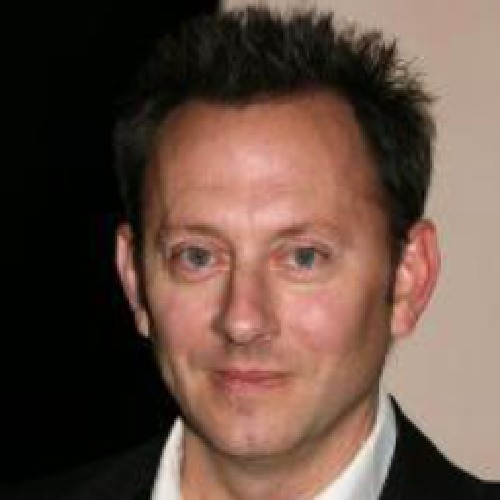}&
    \includegraphics[width=0.09\textwidth]{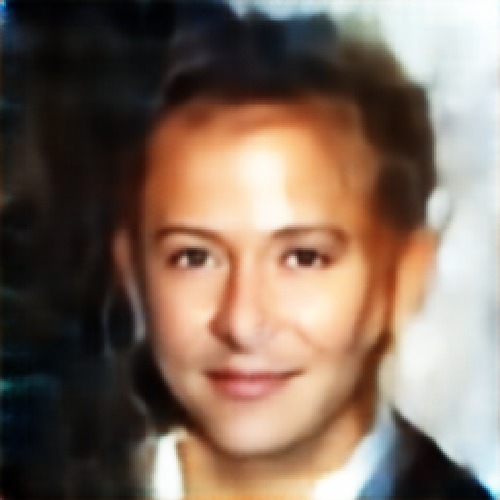} & &
\includegraphics[width=0.09\textwidth]{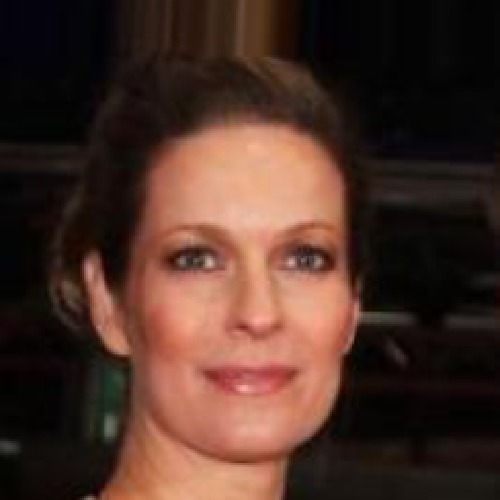}&
    \includegraphics[width=0.09\textwidth]{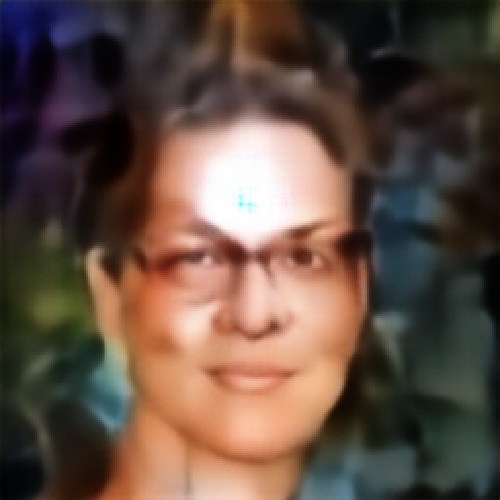} & &
    \includegraphics[width=0.09\textwidth]{images/Appendix_Images/fadernet/190.jpg}&
    \includegraphics[width=0.09\textwidth]{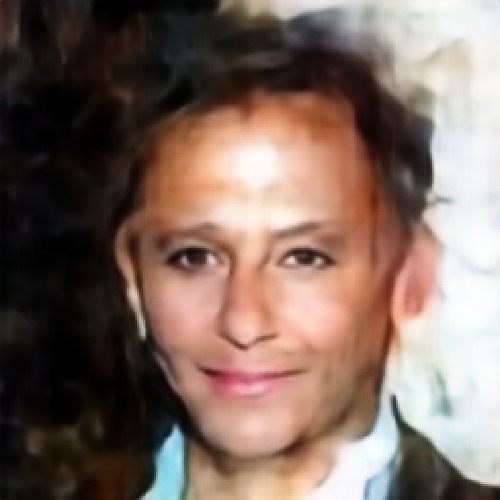}  \\
      \includegraphics[width=0.09\textwidth]{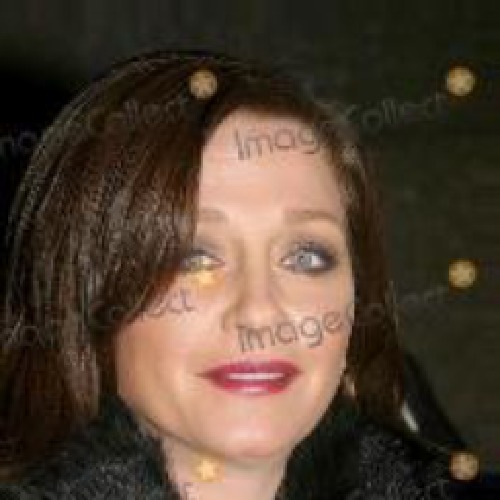}&
    \includegraphics[width=0.09\textwidth]{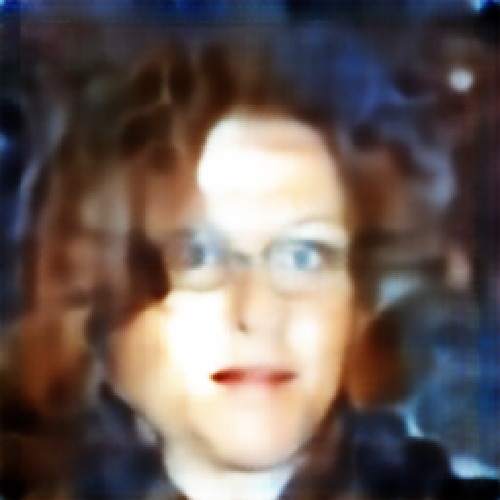} & &
    \includegraphics[width=0.09\textwidth]{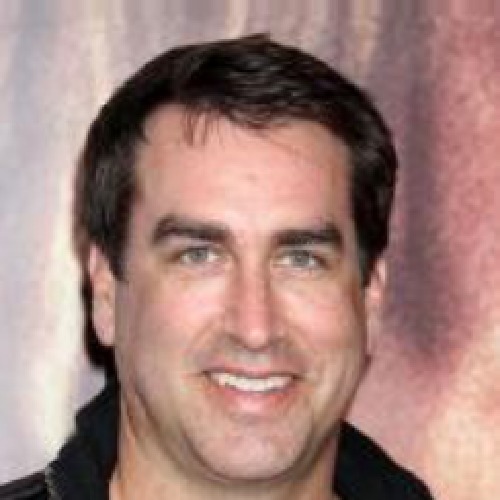}&
    \includegraphics[width=0.09\textwidth]{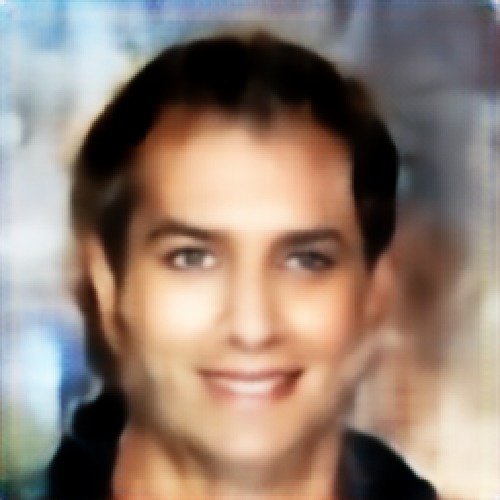} & &
    \includegraphics[width=0.09\textwidth]{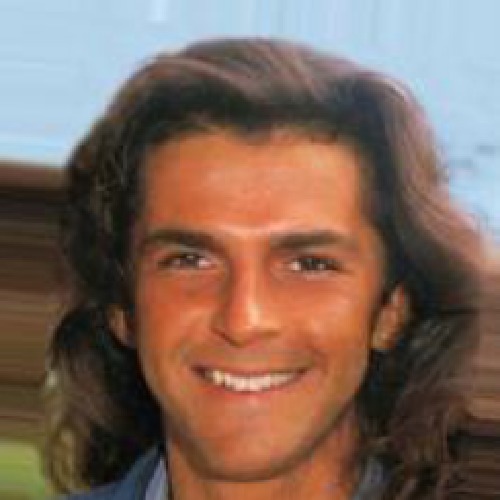}&
    \includegraphics[width=0.09\textwidth]{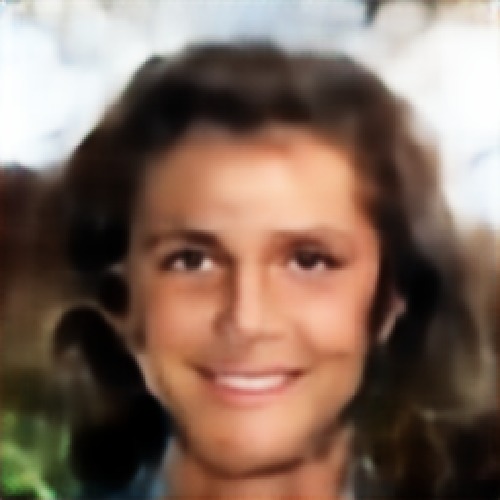} & &
 \includegraphics[width=0.09\textwidth]{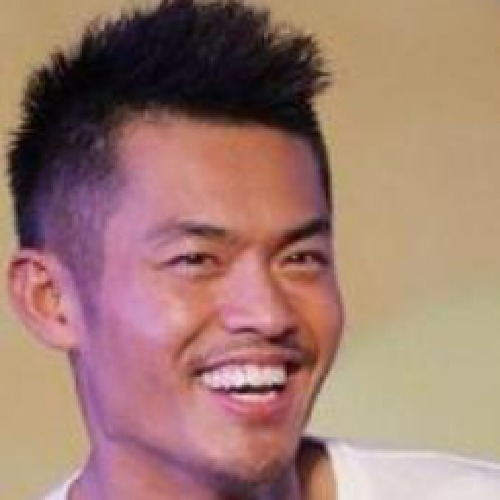}&
    \includegraphics[width=0.09\textwidth]{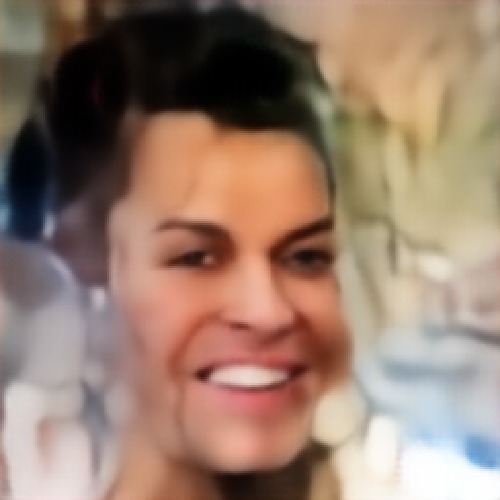} & &
    \includegraphics[width=0.09\textwidth]{images/Appendix_Images/fadernet/174.jpg}&
    \includegraphics[width=0.09\textwidth]{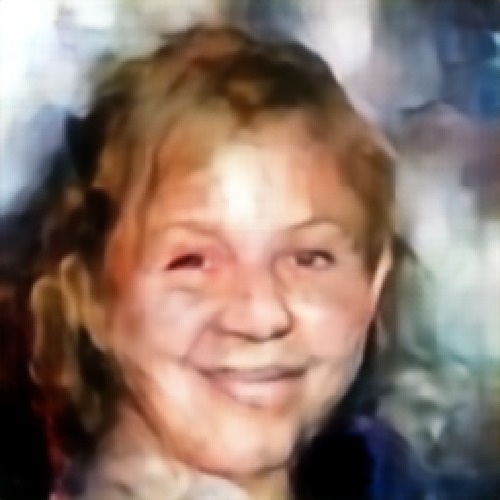}  \\
      \includegraphics[width=0.09\textwidth]{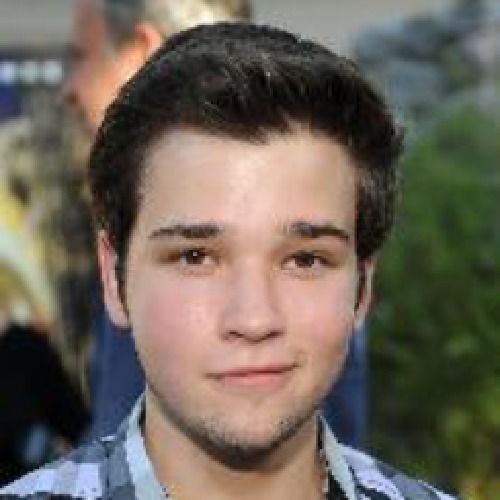}&
    \includegraphics[width=0.09\textwidth]{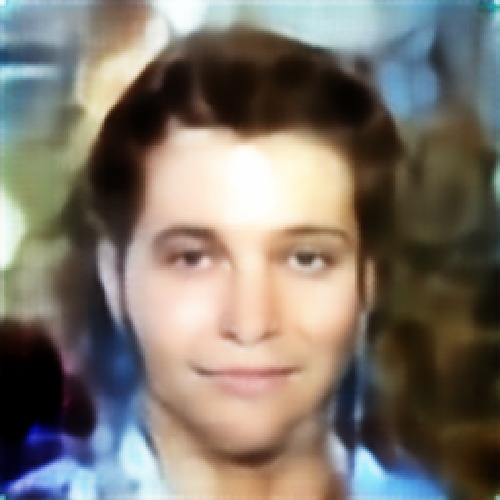} & &
    \includegraphics[width=0.09\textwidth]{images/Appendix_Images/fadernet/69.jpg}&
    \includegraphics[width=0.09\textwidth]{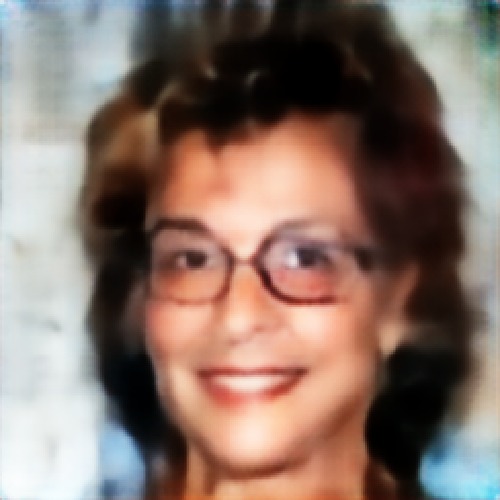} & &
    \includegraphics[width=0.09\textwidth]{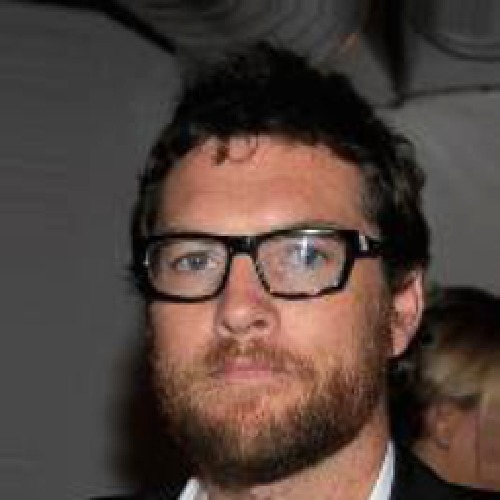}&
    \includegraphics[width=0.09\textwidth]{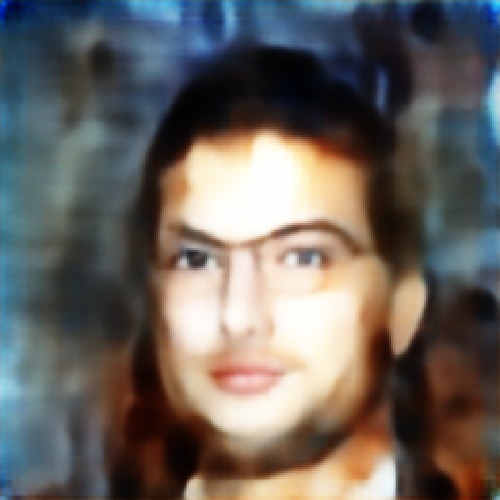} & &
  \includegraphics[width=0.09\textwidth]{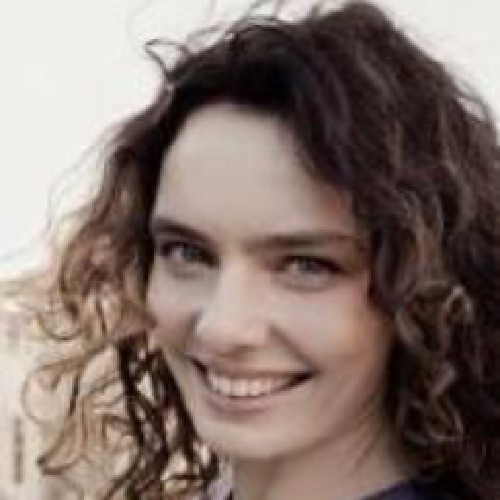}&
    \includegraphics[width=0.09\textwidth]{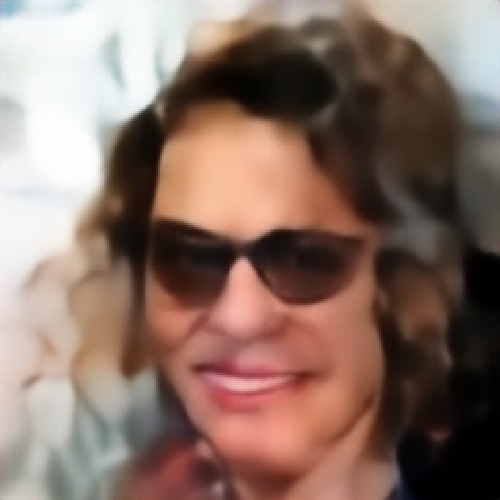} & &
    \includegraphics[width=0.09\textwidth]{images/Appendix_Images/fadernet/152.jpg}&
    \includegraphics[width=0.09\textwidth]{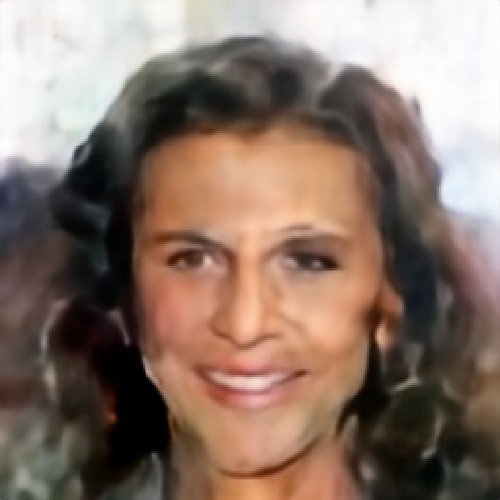}  \\
      \includegraphics[width=0.09\textwidth]{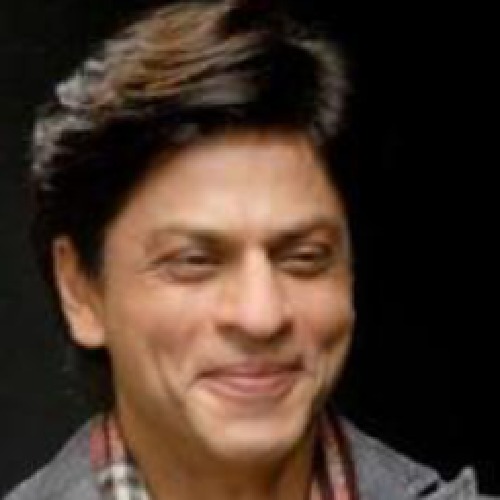}&
    \includegraphics[width=0.09\textwidth]{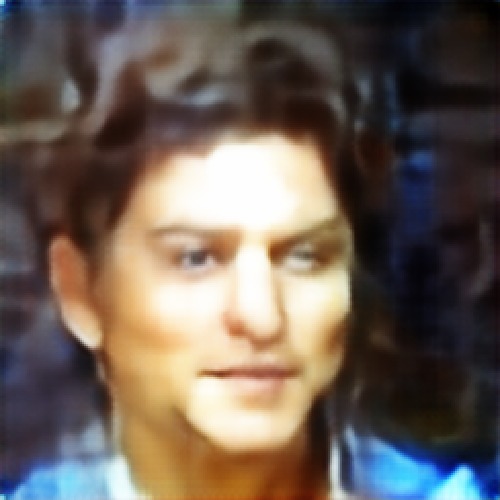} & &
    \includegraphics[width=0.09\textwidth]{images/Appendix_Images/fadernet/118.jpg}&
    \includegraphics[width=0.09\textwidth]{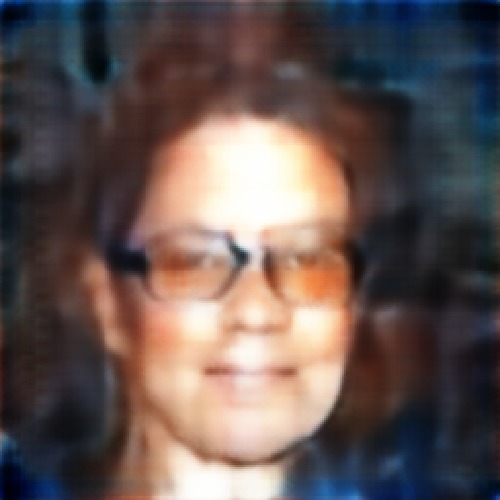} & &
    \includegraphics[width=0.09\textwidth]{images/Appendix_Images/fadernet/237.jpg}&
    \includegraphics[width=0.09\textwidth]{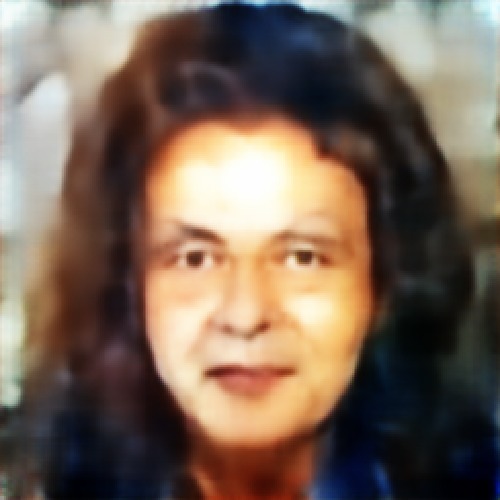} & &
   \includegraphics[width=0.09\textwidth]{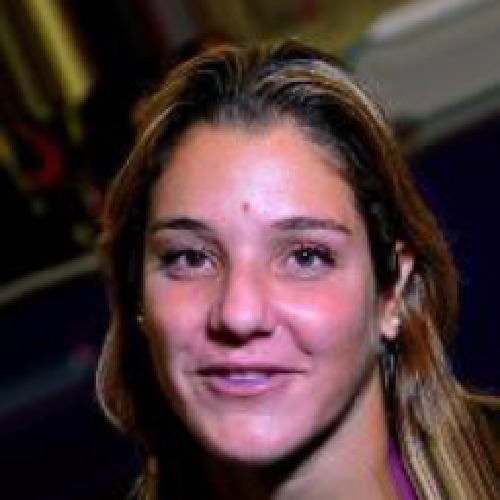}&
    \includegraphics[width=0.09\textwidth]{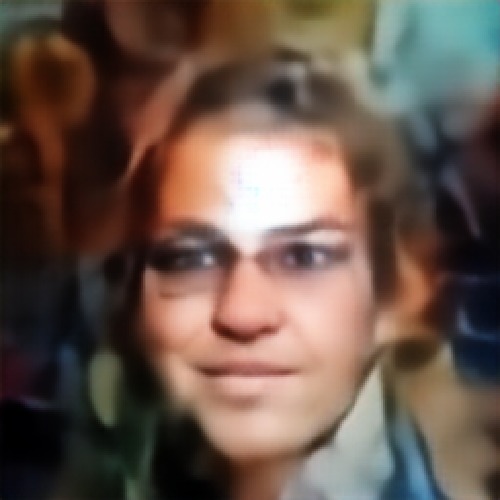} & &
    \includegraphics[width=0.09\textwidth]{images/Appendix_Images/fadernet/107.jpg}&
    \includegraphics[width=0.09\textwidth]{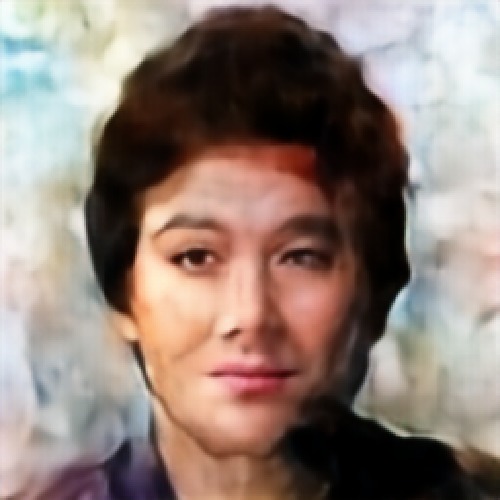}  \\
      \includegraphics[width=0.09\textwidth]{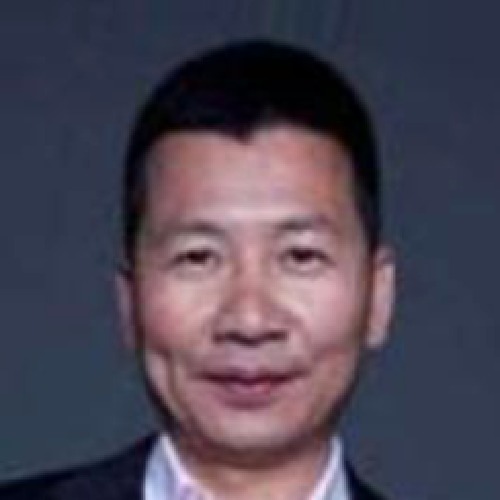}&
    \includegraphics[width=0.09\textwidth]{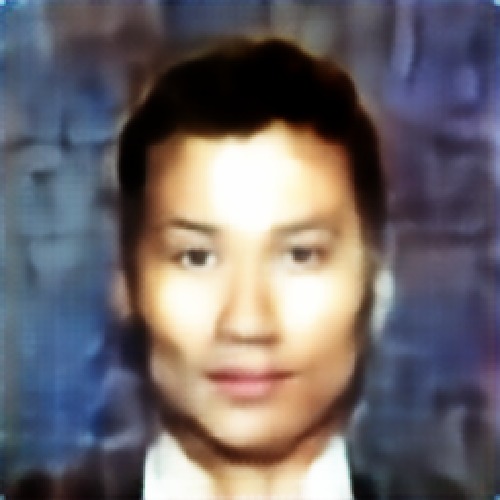} & &
    \includegraphics[width=0.09\textwidth]{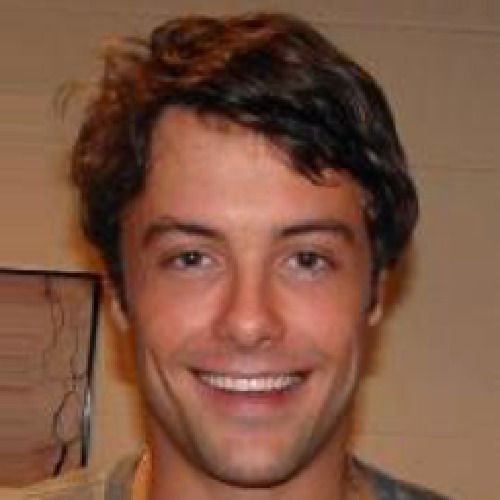}&
    \includegraphics[width=0.09\textwidth]{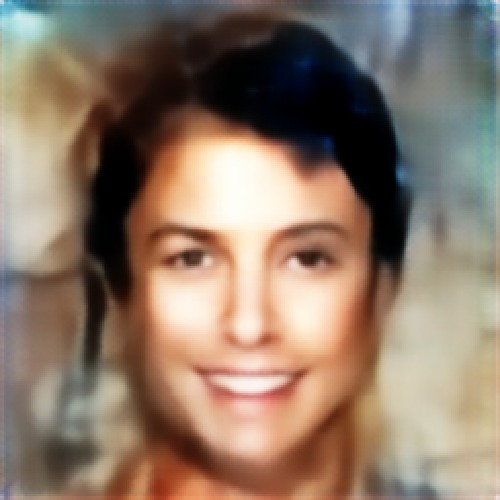} & &
    \includegraphics[width=0.09\textwidth]{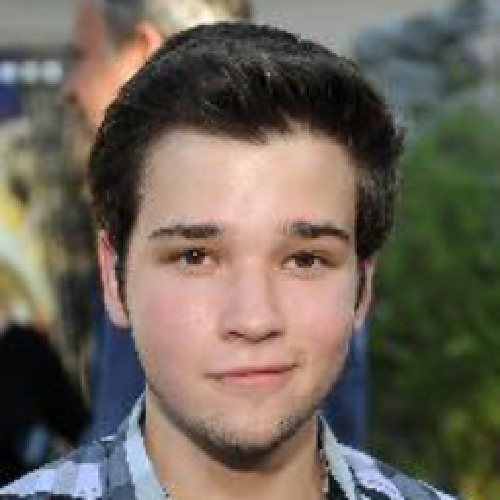}&
    \includegraphics[width=0.09\textwidth]{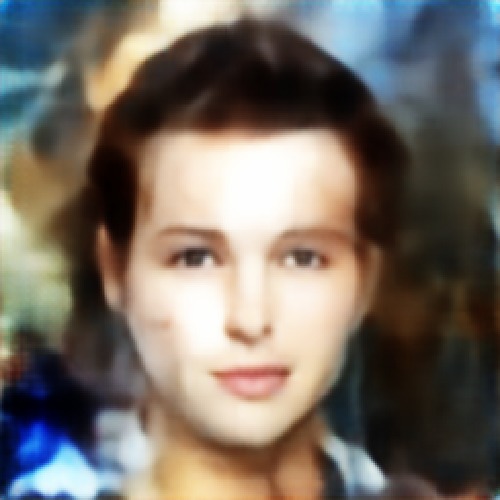} & &
\includegraphics[width=0.09\textwidth]{images/Appendix_Images/fadernet/seq_glasses_young_nose/153_orig.jpg}&
    \includegraphics[width=0.09\textwidth]{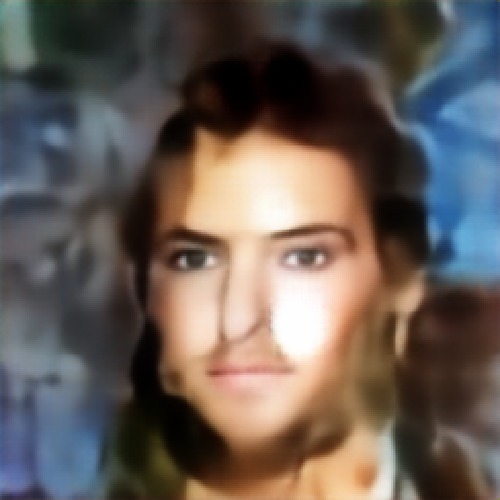} & &
    \includegraphics[width=0.09\textwidth]{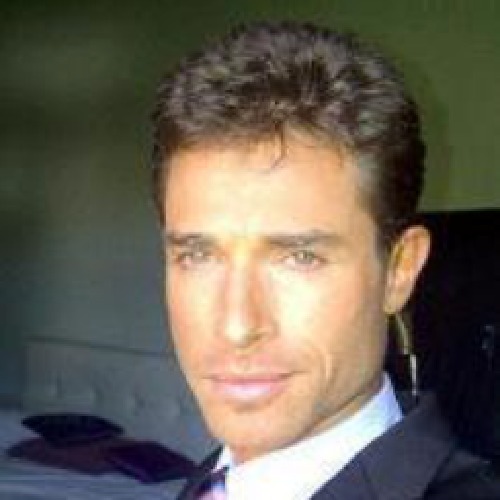}&
    \includegraphics[width=0.09\textwidth]{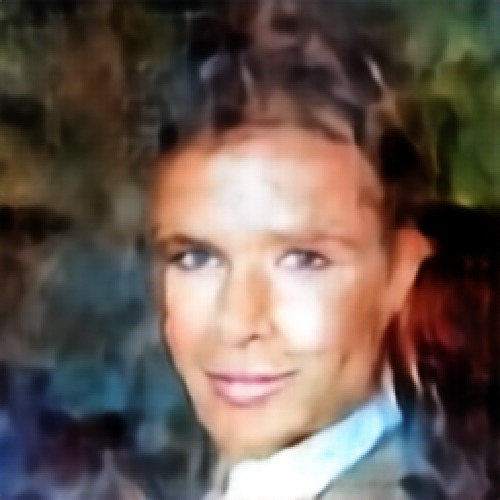}  \\
      \includegraphics[width=0.09\textwidth]{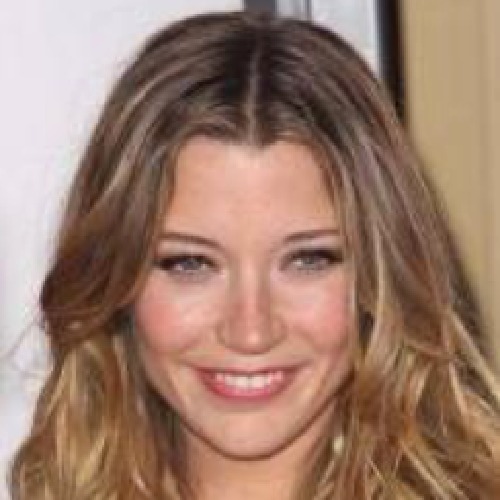}&
    \includegraphics[width=0.09\textwidth]{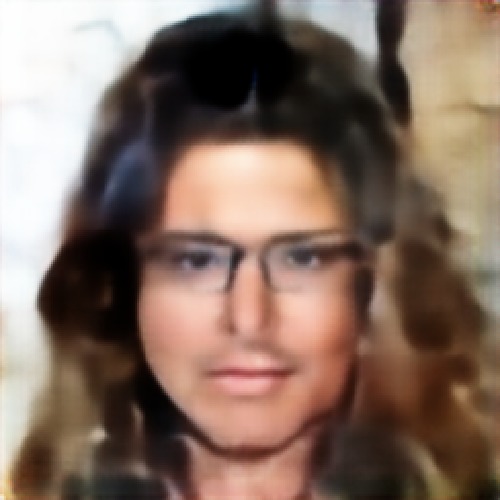} & &
    \includegraphics[width=0.09\textwidth]{images/Appendix_Images/fadernet/140.jpg}&
    \includegraphics[width=0.09\textwidth]{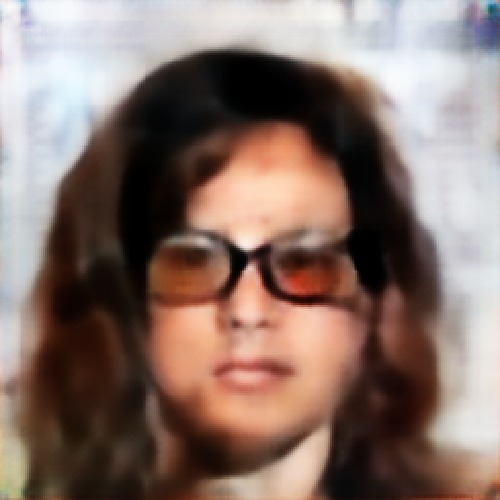} & &
    \includegraphics[width=0.09\textwidth]{images/Appendix_Images/fadernet/215.jpg}&
    \includegraphics[width=0.09\textwidth]{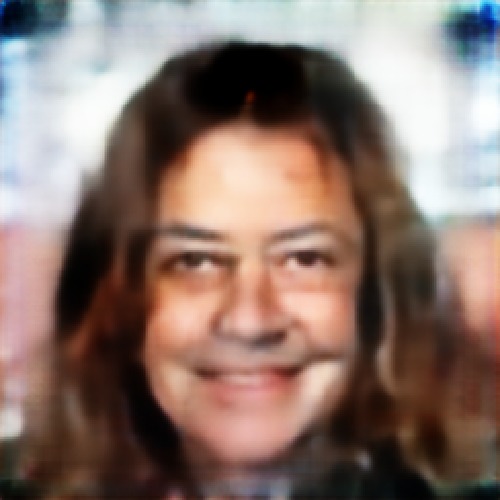} & &
\includegraphics[width=0.09\textwidth]{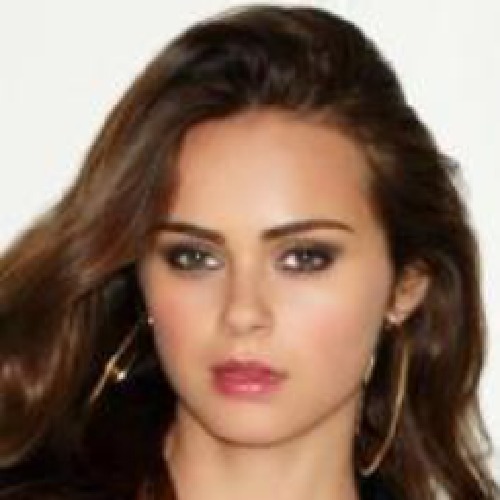}&
    \includegraphics[width=0.09\textwidth]{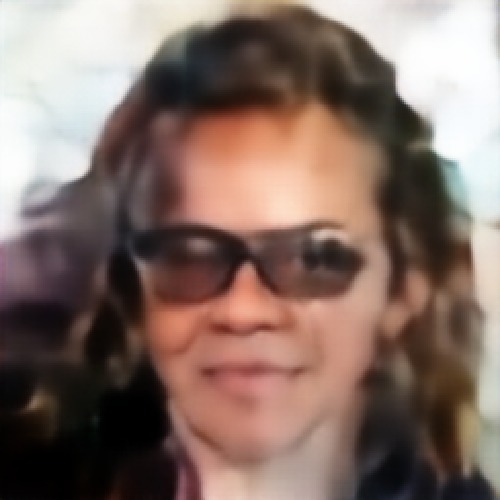} & &
    \includegraphics[width=0.09\textwidth]{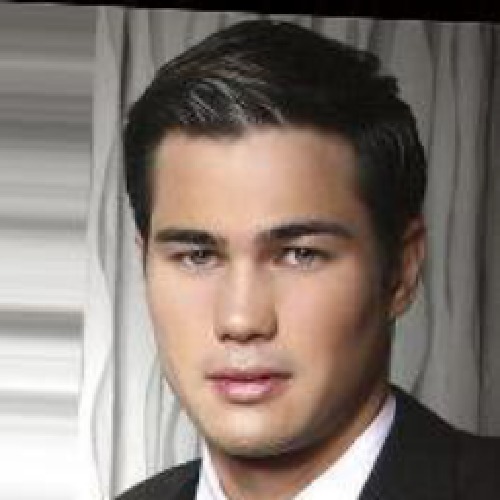}&
    \includegraphics[width=0.09\textwidth]{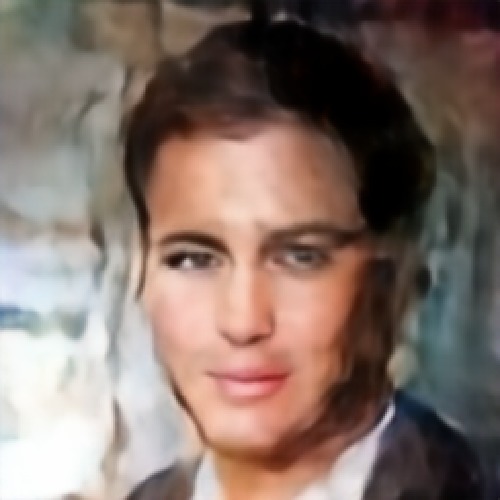}  \\
      \includegraphics[width=0.09\textwidth]{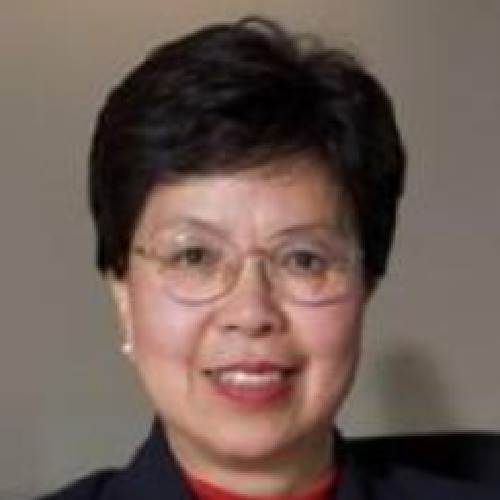}&
    \includegraphics[width=0.09\textwidth]{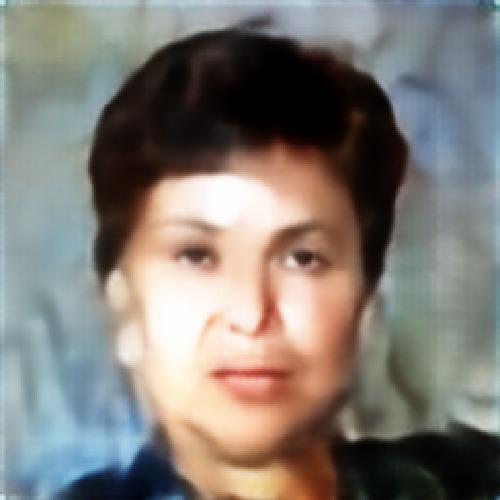} & &
    \includegraphics[width=0.09\textwidth]{images/Appendix_Images/fadernet/192.jpg}&
    \includegraphics[width=0.09\textwidth]{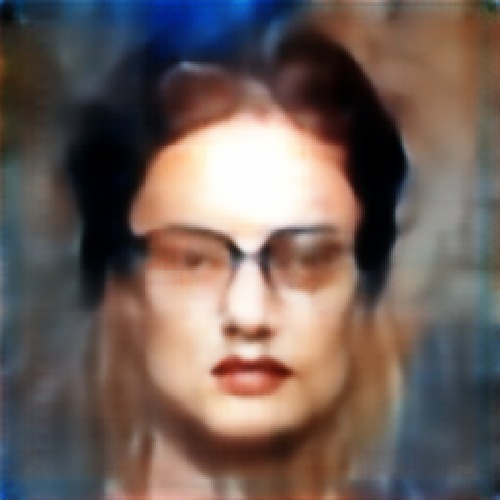} & &
    \includegraphics[width=0.09\textwidth]{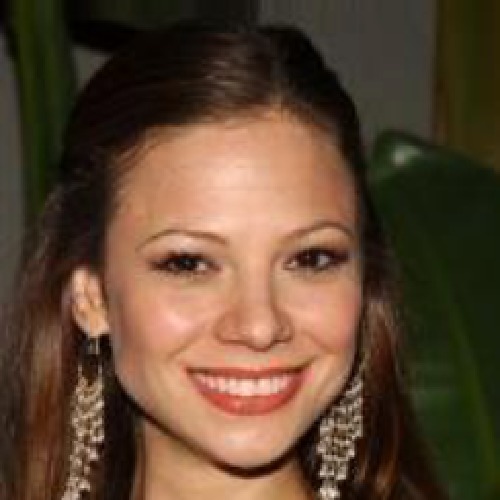}&
    \includegraphics[width=0.09\textwidth]{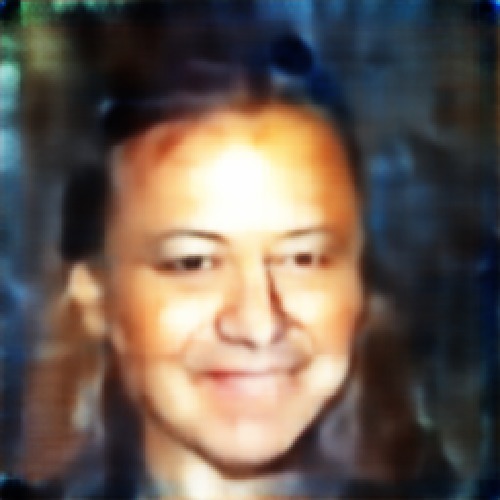} & &
\includegraphics[width=0.09\textwidth]{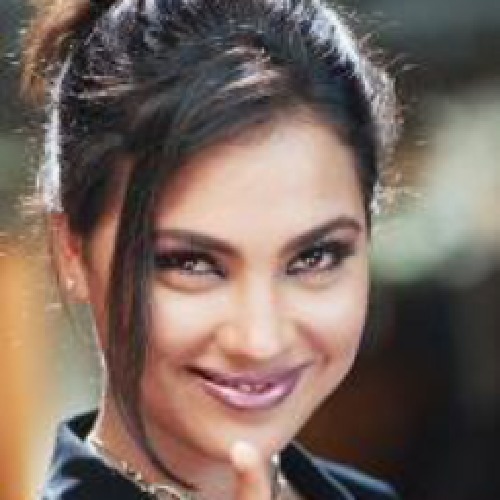}&
    \includegraphics[width=0.09\textwidth]{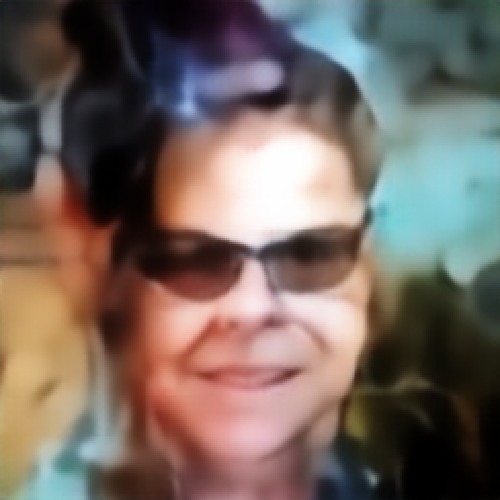} & &
    \includegraphics[width=0.09\textwidth]{images/Appendix_Images/fadernet/34.jpg}&
    \includegraphics[width=0.09\textwidth]{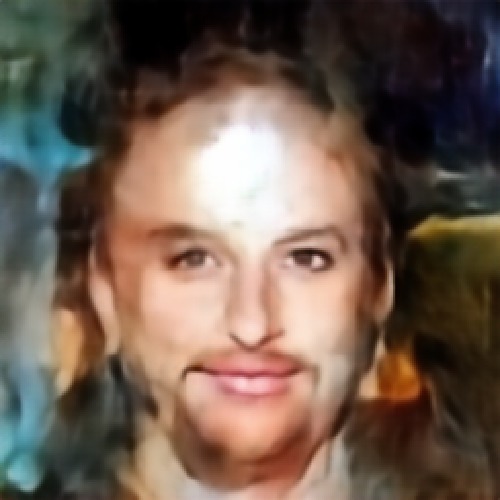}  \\
      \includegraphics[width=0.09\textwidth]{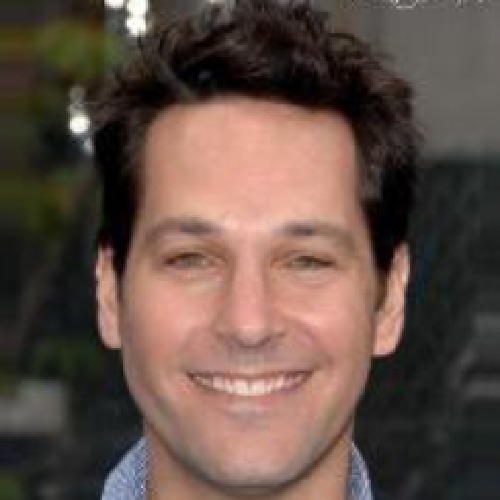}&
    \includegraphics[width=0.09\textwidth]{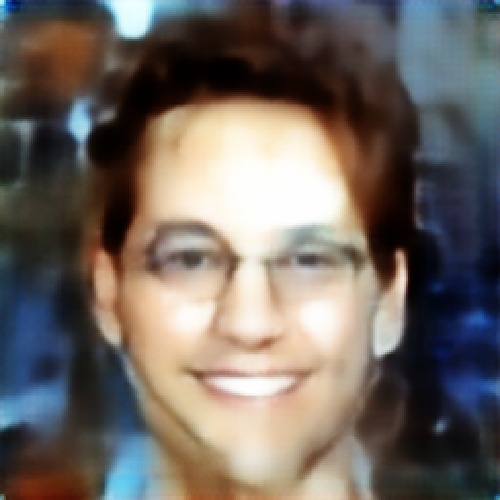} & &
    \includegraphics[width=0.09\textwidth]{images/Appendix_Images/fadernet/194.jpg}&
    \includegraphics[width=0.09\textwidth]{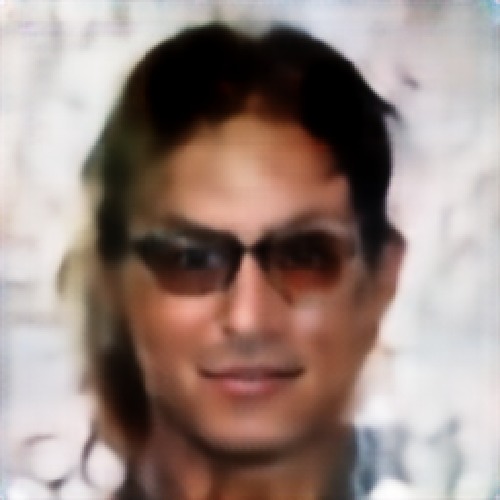} & &
    \includegraphics[width=0.09\textwidth]{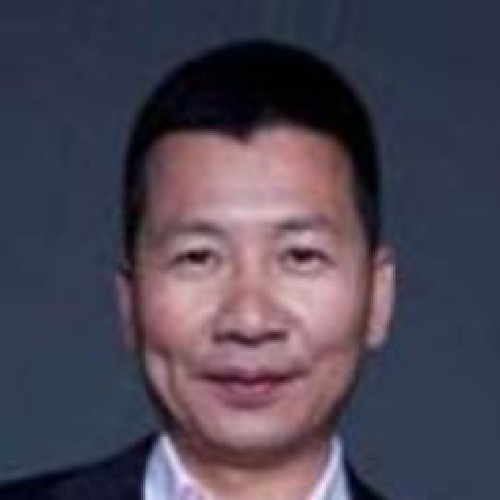}&
    \includegraphics[width=0.09\textwidth]{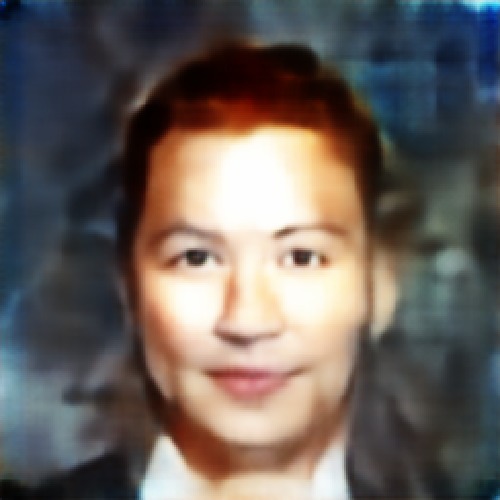} & &
\includegraphics[width=0.09\textwidth]{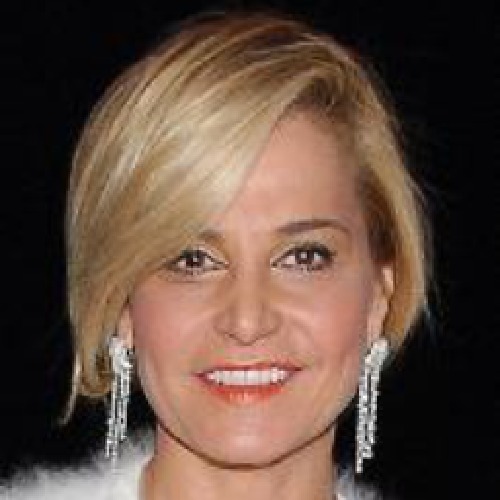}&
    \includegraphics[width=0.09\textwidth]{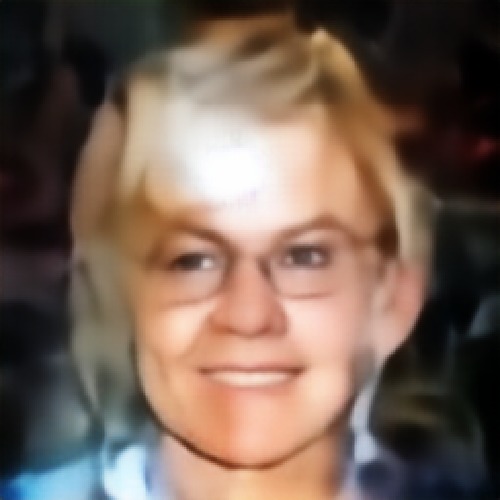} & &
    \includegraphics[width=0.09\textwidth]{images/Appendix_Images/fadernet/33.jpg}&
    \includegraphics[width=0.09\textwidth]{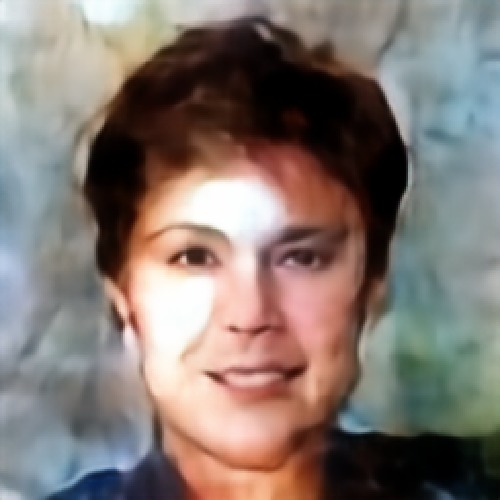}  \\
      \includegraphics[width=0.09\textwidth]{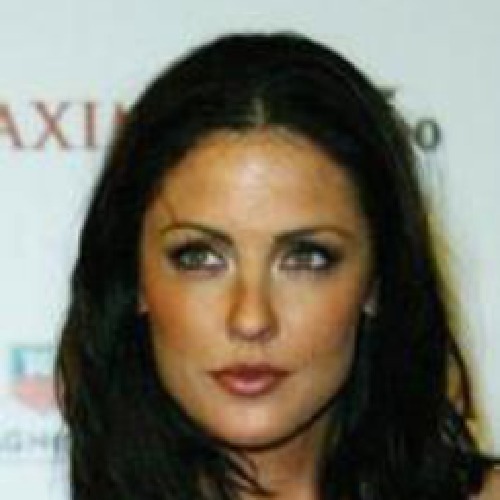}&
    \includegraphics[width=0.09\textwidth]{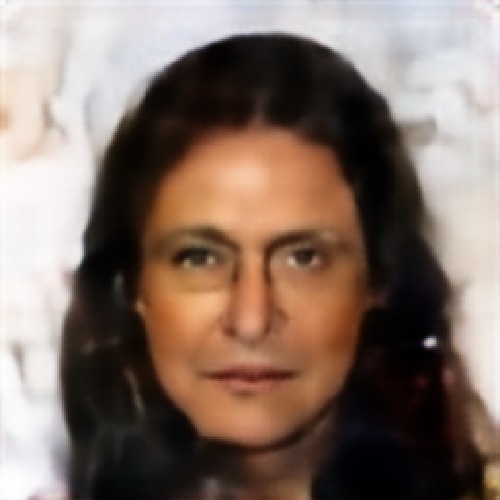} & &
    \includegraphics[width=0.09\textwidth]{images/Appendix_Images/fadernet/215.jpg}&
    \includegraphics[width=0.09\textwidth]{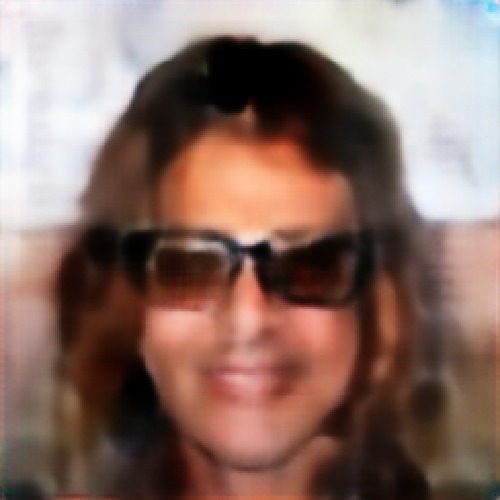} & &
    \includegraphics[width=0.09\textwidth]{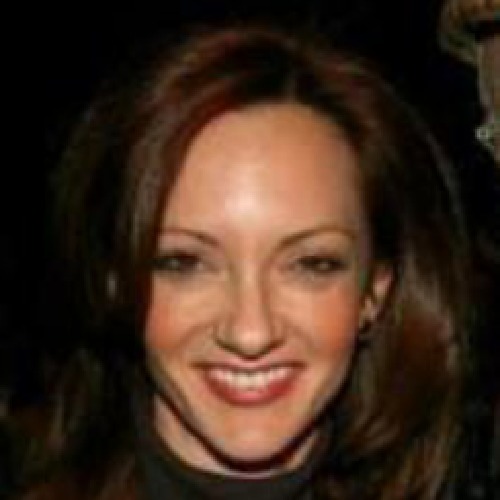}&
    \includegraphics[width=0.09\textwidth]{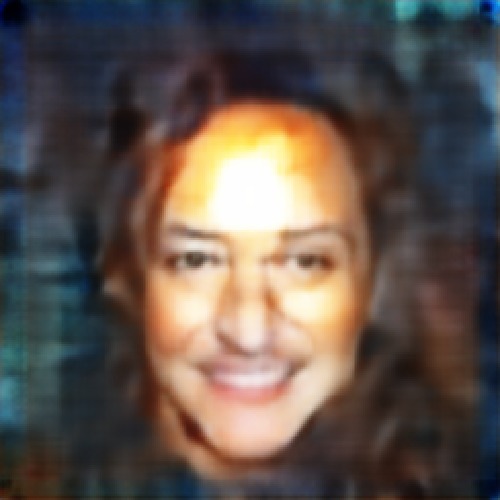} & &
\includegraphics[width=0.09\textwidth]{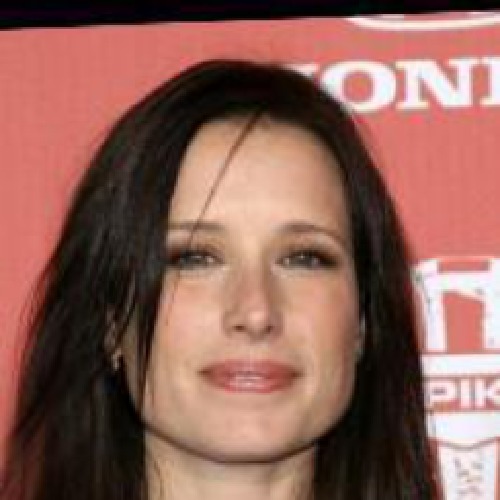}&
    \includegraphics[width=0.09\textwidth]{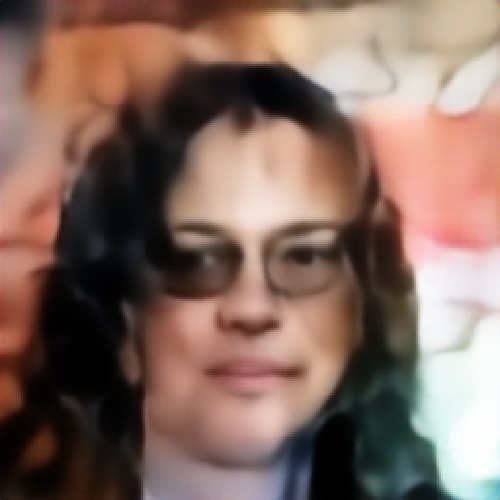} & &
    \includegraphics[width=0.09\textwidth]{images/Appendix_Images/fadernet/21.jpg}&
    \includegraphics[width=0.09\textwidth]{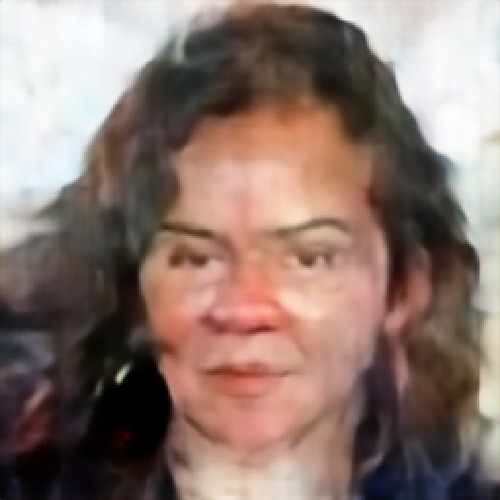} \\
    (a) & (b) & & (c) & (d) & & (e) & (f) & & (g) & (h) & & (i) & (j)
    \end{tabular}
    \caption{\sl Semantic adversarial examples for Multi-attribute and Cascaded Adversarial Fader network attacks. Columns (a), (c), (e), (g), (i) are original images. Columns (b): Multi-attribute Eyeglasses,Age,Smile, (d): Multi-attribute Pale Skin,Eyeglasses,Chubbiness, (f): Multi-attribute Age,Chubbiness,Pale Skin, (h): Cascaded Eyeglasses-Age-Nose shape, (j): Cascaded Nose shape-Narrow Eyes-Age }
\end{figure}
\endgroup

\begingroup
\begin{figure}[htp]
	\centering
	\setlength{\tabcolsep}{1pt}
	\renewcommand{\arraystretch}{0.5}
	\begin{tabular}{c c c || c c c || c c c || c c  }
		\includegraphics[width=0.10\textwidth]{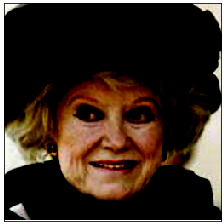}&
		\includegraphics[width=0.10\textwidth]{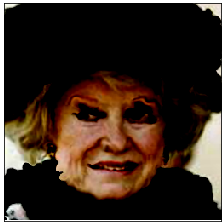} & &
		\includegraphics[width=0.10\textwidth]{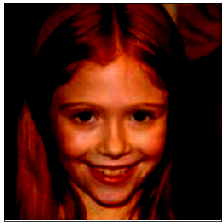}&
		\includegraphics[width=0.10\textwidth]{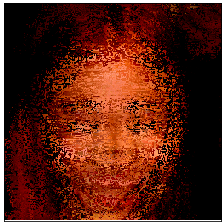} & &
		\includegraphics[width=0.10\textwidth]{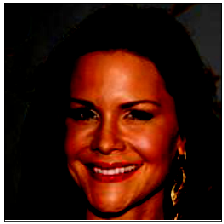}&
		\includegraphics[width=0.10\textwidth]{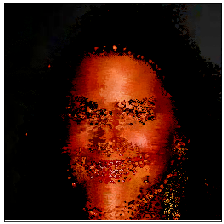}& &
		\includegraphics[width=0.10\textwidth]{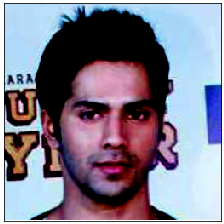}&
		\includegraphics[width=0.10\textwidth]{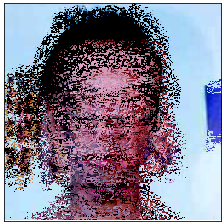}
		\\
		\includegraphics[width=0.10\textwidth]{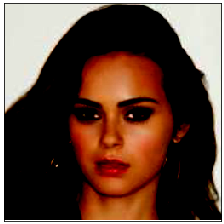}&
		\includegraphics[width=0.10\textwidth]{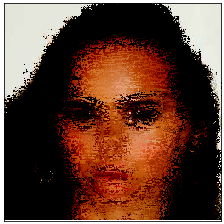} & &
		\includegraphics[width=0.10\textwidth]{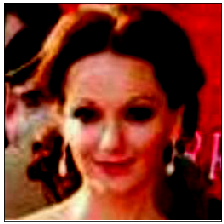}&
		\includegraphics[width=0.10\textwidth]{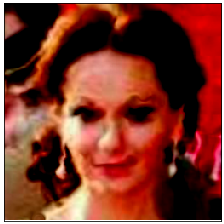} & &
		\includegraphics[width=0.10\textwidth]{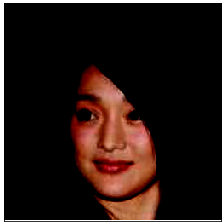}&
		\includegraphics[width=0.10\textwidth]{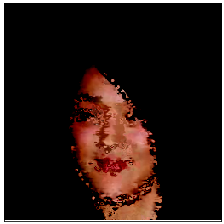}& &
		\includegraphics[width=0.10\textwidth]{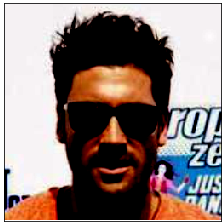}&
		\includegraphics[width=0.10\textwidth]{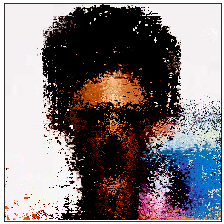}
		\\
		\includegraphics[width=0.10\textwidth]{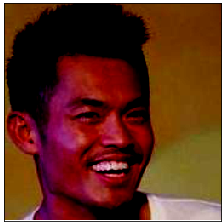}&
		\includegraphics[width=0.10\textwidth]{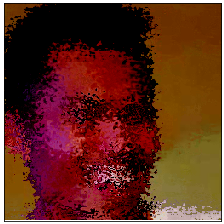} & &
		\includegraphics[width=0.10\textwidth]{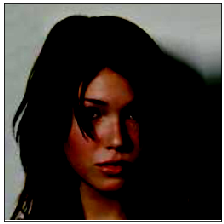}&
		\includegraphics[width=0.10\textwidth]{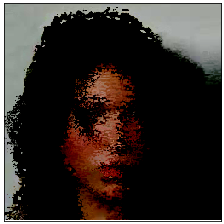} & &
		\includegraphics[width=0.10\textwidth]{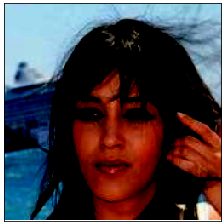}&
		\includegraphics[width=0.10\textwidth]{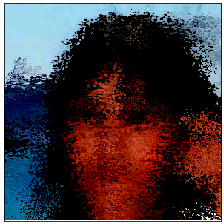}& &
		\includegraphics[width=0.10\textwidth]{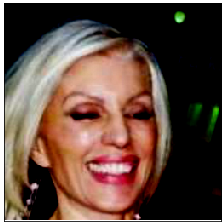}&
		\includegraphics[width=0.10\textwidth]{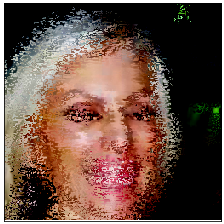}
		\\
		(a) & (b) & & (c) & (d) & & (e) & (f) & & (g) & (h)
	\end{tabular}
	\caption{\sl Columns (a), (c), (e), (g) show original images and columns (b), (d), (f) and (h) show the corresponding adversarial images produced by implementing the algorithm from Xiao \etal \cite{Xiao2018SpatiallyTA}. As can be clearly seen although the adversarial examples are missclassified by the deep gender classifier, the produced adversarial images are not semantically valid.} 
\end{figure}
\endgroup

\begingroup
\begin{figure}[htp]
	\centering
	\setlength{\tabcolsep}{1pt}
	\renewcommand{\arraystretch}{0.5}
	\begin{tabular}{c c c || c c c || c c c || c c  }
		\includegraphics[width=0.10\textwidth]{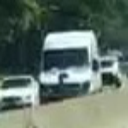}&
		\includegraphics[width=0.10\textwidth]{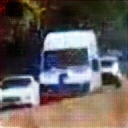} & &
		\includegraphics[width=0.10\textwidth]{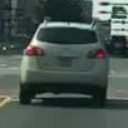}&
		\includegraphics[width=0.10\textwidth]{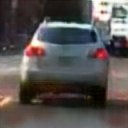} & &
		\includegraphics[width=0.10\textwidth]{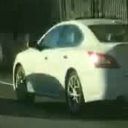}&
		\includegraphics[width=0.10\textwidth]{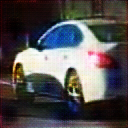}& &
		\includegraphics[width=0.10\textwidth]{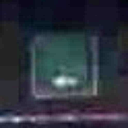}&
		\includegraphics[width=0.10\textwidth]{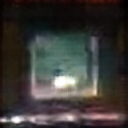}
		\\
		\includegraphics[width=0.10\textwidth]{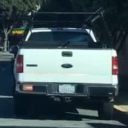}&
		\includegraphics[width=0.10\textwidth]{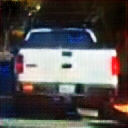} & &
		\includegraphics[width=0.10\textwidth]{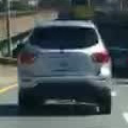}&
		\includegraphics[width=0.10\textwidth]{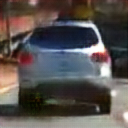} & &
		\includegraphics[width=0.10\textwidth]{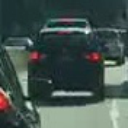}&
		\includegraphics[width=0.10\textwidth]{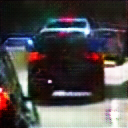} & &
		\includegraphics[width=0.10\textwidth]{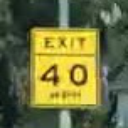}&
		\includegraphics[width=0.10\textwidth]{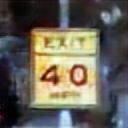}
		\\
		\includegraphics[width=0.10\textwidth]{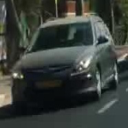}&
		\includegraphics[width=0.10\textwidth]{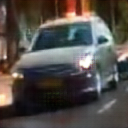} & &
		\includegraphics[width=0.10\textwidth]{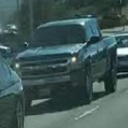}&
		\includegraphics[width=0.10\textwidth]{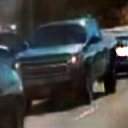} & &
		\includegraphics[width=0.10\textwidth]{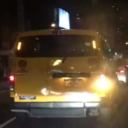}&
		\includegraphics[width=0.10\textwidth]{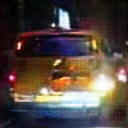} & &
		\includegraphics[width=0.10\textwidth]{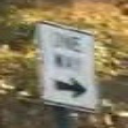}&
		\includegraphics[width=0.10\textwidth]{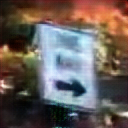}
		\\
		\includegraphics[width=0.10\textwidth]{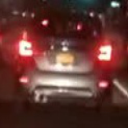}&
		\includegraphics[width=0.10\textwidth]{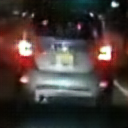} & &
		\includegraphics[width=0.10\textwidth]{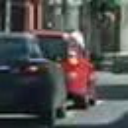}&
		\includegraphics[width=0.10\textwidth]{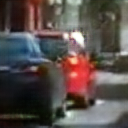} & &
		\includegraphics[width=0.10\textwidth]{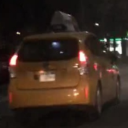}&
		\includegraphics[width=0.10\textwidth]{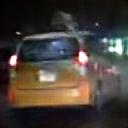} & &
		\includegraphics[width=0.10\textwidth]{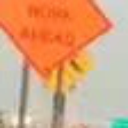}&
		\includegraphics[width=0.10\textwidth]{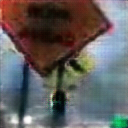}
		\\
		\includegraphics[width=0.10\textwidth]{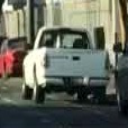}&
		\includegraphics[width=0.10\textwidth]{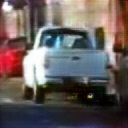} & &
		\includegraphics[width=0.10\textwidth]{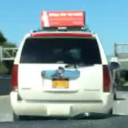}&
		\includegraphics[width=0.10\textwidth]{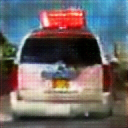} & &
		\includegraphics[width=0.10\textwidth]{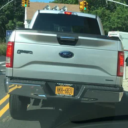}&
		\includegraphics[width=0.10\textwidth]{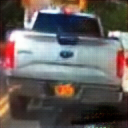} & &
		\includegraphics[width=0.10\textwidth]{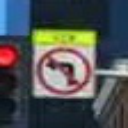}&
		\includegraphics[width=0.10\textwidth]{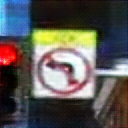}
		\\
		\includegraphics[width=0.10\textwidth]{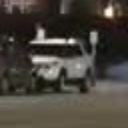}&
		\includegraphics[width=0.10\textwidth]{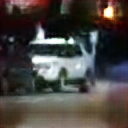} & &
		\includegraphics[width=0.10\textwidth]{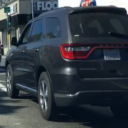}&
		\includegraphics[width=0.10\textwidth]{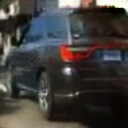} & &
		\includegraphics[width=0.10\textwidth]{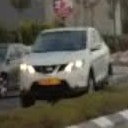}&
		\includegraphics[width=0.10\textwidth]{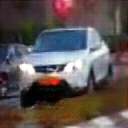} & &
		\includegraphics[width=0.10\textwidth]{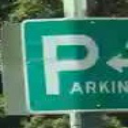}&
		\includegraphics[width=0.10\textwidth]{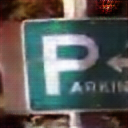}
		\\
		\includegraphics[width=0.10\textwidth]{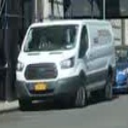}&
		\includegraphics[width=0.10\textwidth]{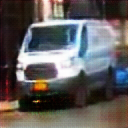} & &
		\includegraphics[width=0.10\textwidth]{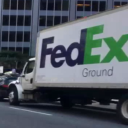}&
		\includegraphics[width=0.10\textwidth]{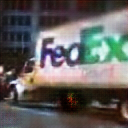} & &
		\includegraphics[width=0.10\textwidth]{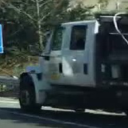}&
		\includegraphics[width=0.10\textwidth]{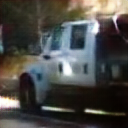} & &
		\includegraphics[width=0.10\textwidth]{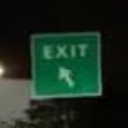}&
		\includegraphics[width=0.10\textwidth]{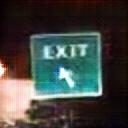}
		\\
		(a) & (b) & & (c) & (d) & & (e) & (f) & & (g) & (h)
	\end{tabular}
	\caption{\sl Semantic adversarial examples produced by Attribute GAN trained on Time of Day labels from BDD dataset\cite{bdd100k}. Columns (a), (c), (e), (g) are original images. Rows (1) through (6) and columns (a) to (f) show adversarial examples of cars getting miss-classified as traffic signs or trucks. Column (h) shows adversarial examples of traffic signs being miss-classified as cars. Row 7, columns (b) and (f) shows examples as trucks getting miss-classified as cars. Row 7, column d shows an adversarial example of a truck getting miss-classified as a traffic sign.}
\end{figure}
\endgroup

\end{document}